\documentclass{article}

\usepackage{microtype}
\usepackage{graphicx}
\usepackage{subfigure}
\usepackage{booktabs} 

\usepackage{multirow}

\usepackage{hyperref}

\usepackage[preprint]{main}

\usepackage{amsmath}
\usepackage{amssymb}
\usepackage{mathtools}
\usepackage{amsthm}

\usepackage[capitalize,noabbrev]{cleveref}

\theoremstyle{plain}
\newtheorem{theorem}{Theorem}[section]

\newtheorem{lemma}[theorem]{Lemma}

\theoremstyle{definition}
\newtheorem{definition}[theorem]{Definition}

\theoremstyle{remark}

\usepackage[textsize=tiny]{todonotes}

\icmltitlerunning{Structuring Value Representations via Geometric Coherence in Markov Decision Processes}

\begin{document}

\twocolumn[
  \icmltitle{Structuring Value Representations via Geometric Coherence in Markov Decision Processes}

  \begin{icmlauthorlist}
    \icmlauthor{Zuyuan Zhang}{gwu}
    \icmlauthor{Zeyu Fang}{gwu}
    \icmlauthor{Tian Lan}{gwu}
  \end{icmlauthorlist}

  \icmlaffiliation{gwu}{Department of Electrical and Computer Engineering, 
The George Washington University, Washington, DC 20052 USA }

  \icmlcorrespondingauthor{Zuyuan Zhang}{zuyuan.zhang@gwu.edu}
  \icmlcorrespondingauthor{Tian Lan}{tlan@gwu.edu}

  \vskip 0.3in
]

\printAffiliationsAndNotice{}

\begin{abstract}
Geometric properties can be leveraged to stabilize and speed reinforcement learning. Existing examples include encoding symmetry structure, geometry-aware data augmentation, and enforcing structural restrictions. In this paper, we take a novel view of RL through the lens of order theory and recast value function estimates into learning a desired poset (partially ordered set). 
We propose \emph{GCR-RL} (Geometric Coherence Regularized Reinforcement Learning) that computes a sequence of super-poset refinements -- by refining posets in previous steps and learning additional order relationships from temporal difference signals -- thus ensuring geometric coherence across the sequence of posets underpinning the learned value functions. Two novel algorithms  by Q-learning and by actor--critic are developed to efficiently realize these super-poset refinements. Their theoretical properties and convergence rates are analyzed. We empirically evaluate GCR-RL in a range of tasks and demonstrate significant improvements in sample efficiency and stable performance over strong baselines.
\end{abstract}

\section{Introduction}

Reinforcement learning (RL) has achieved impressive empirical success in fields from visual control to planning
~\citep{mnih2015human,van2016deep,hessel2018rainbow,zhang2025learning,zhang2026geometry,bellemare2017distributional,zou2024distributed,zhang2024distributed,qiao2024br}. Due to the combination of bootstrapping and function approximation using limited data, temporal-difference (TD) updates can diverge or oscillate under approximation error and distribution variations ~\citep{tsitsiklis1996analysis,baird1995residual}, leading to unstable learning behavior~\citep{schaul2015prioritized}. To this end, existing work have leveraged geometric properties to regularize and speed up RL. Relevant efforts include abstracting bisimulation relations~\citep{dean1997model,li2006towards,ferns2004metrics,castro2020scalable,ravindran2004algebraic}, encoding symmetry structures 
~\citep{cohen2016group,kondor2018generalization,weiler20183d,finzi2020generalizing,bronstein2021geometric,kawano2021group}, geometry-aware data argumentation~\citep{laskin2020reinforcement,yarats2021mastering,laskin2020curl}, and enforcing order restrictions~\cite{agrawal2019differentiable,blondel2020fast} like acyclicity~\citep{zheng2018dags} and preference~\citep{christiano2017deep,stiennon2020learning}.  
However, there is no systematic framework exploiting geometric structures in value functions with a rigorous theoretical support.

We take a novel view of RL through the lens of order theory~\citep{DBLP:books/daglib/0023601}, by regarding value function estimates as learning a desired poset (partially ordered set\footnote{Or a continuous poset for MDPs with continuous spaces.}), denoted by $X$, with state-action pairs as its elements and a minimal partial order \(\preceq_D\) necessary to determine the optimal actions in any states. We argue that learning such a poset is sufficient to obtain an optimal policy in RL. Toward this goal, we design a new process that computes a sequence of \emph{super-poset refinements} -- by refining posets in previous steps (as sub-posets) and learning additional order relationships from TD signals. This process enforces geometric coherence across the sequence of posets (underpinning learned value functions) and can achieve significantly speed-up in RL. By definition, a poset must satisfy \emph{reflexivity, transitivity}, and \emph{antisymmetry}~\citep{DBLP:books/daglib/0023601}. Reflexivity and transitivity are natural for multi-step returns in RL, while \emph{antisymmetry} is what prevents value circles and can further improve the convergence of value function, due to the elimination of value update loops and a reduced problem space. To ensure valid posets in our solution with antisymmetry, we identify symmetry by learning algebraic operators of automorphism on $X$ and consider the resulting quotient set \(X/_{\sim}\) in our proposed GCR-RL (Geometric Coherence Regularized RL). 

Two different approaches are proposed to enable GCR-RL. First, 
we develop a \emph{soft} enforcement approach that casts geometric coherence as an explicit regularizer added to standard TD learning. Concretely, at refinement step $k$ we maintain a super-poset (represented by a directed comparison set) $\mathcal{D}_k$ over elements of $X$ (or equivalence classes in $X/_{\sim}$), and optimize a coherence-regularized objective
$\min_{\theta,\phi}\;\; \mathcal{L}_{\mathrm{TD}}(\theta)\;+\;\lambda\,\mathcal{R}_{\mathrm{ord}}\!\left(V_{\theta};\mathcal{D}_k\right)\;+\;\mu\,\mathcal{R}_{\mathrm{sym}}\!\left(V_{\theta},g_{\phi}\right),$ where $\mathcal{R}_{\mathrm{ord}}$ penalizes violations of the learned partial order (i.e., $x\preceq_{\mathcal{D}_k} y$ but $V_{\theta}(x)>V_{\theta}(y)$), and $\mathcal{R}_{\mathrm{sym}}$ enforces consistency under the learned automorphism operator $g_{\phi}$ so that antisymmetry is upheld on the quotient set $X/_{\sim}$. 
We provide rigorous theoretical guarantees for this approach, including monotonicity of the refinement process (Theorem~4.6), the guarantee that the method can learn any existing automorphisms that uphold antisymmetry in valid posets (Theorems~4.7 and~4.9), improved variance (Theorem~4.8), and a convergence rate of $O\!\big(\sqrt{\mathfrak{R}(N)/N}\big)$ (Theorem~4.10).

Second, we
develop a \emph{hard} enforcement approach that maintains a valid poset at every refinement step by construction. Specifically, the method alternates between (i) updating the automorphism operator and forming the quotient set $X/_{\sim}$, (ii) updating super-poset refinements by adding new order relations inferred from TD signals, and (iii) applying a constrained projection onto the feasible set induced by the current quotient-poset order (so that the value estimates satisfy the order constraints on $X/_{\sim}$ exactly). 
This hard-enforcement variant directly enforces reflexivity/transitivity and guarantees antisymmetry via quotienting, while retaining the refinement mechanism driven by TD comparisons; its formal properties are analyzed in the same theoretical framework (Theorems~4.6--4.10).

Our key contributions are as follows:
\vspace{-0.1in}
\begin{itemize}
\item We view value functions through the lens of order theory as learning a desired poset. It allows us to introduce a new process of learning a sequence of super-poset refinements, thus preserving the geometric coherence underpinning the learned value functions in RL.
\vspace{-0.07in}
\item We propose GCR-RL to regularize standard RL by ensuring reflexivity, transitivity, and antisymmetry properties of posets. In particular, antisymmetry is achieved by learning algebraic operators of automorphism and eliminates value update loops for faster learning.
\vspace{-0.07in}
\item We develop two methods for enforcing geometric coherence in GCR-RL,
by coherence-regularized TD learning as a soft enforcement and by quotient-poset constrained projection as a hard enforcement.

\vspace{-0.07in}
\item We empirically evaluate GCR-RL on grid, Minigrid, Atari, and non-transitive chain tasks, showing significant improvements in sample efficiency and performance, as well as reduced Bellman residuals and more stable value learning over strong baselines.
\end{itemize}
\vspace{-0.07in}

\section{Preliminaries}
\label{sec:Preliminaries}

Value-based deep RL (e.g., DQN/Double-DQN/Rainbow) remains vulnerable to instability under
function approximation and off-policy TD learning \citep{mnih2015human,van2016deep,hessel2018rainbow,tsitsiklis1996analysis,baird1995residual}.
We defer broader background and discussion to Appendix~\ref{app:related-work}.

Two geometric lines of work provide partial remedies that we leverage as building blocks.
(\textit{Symmetry}) Group-equivariant architectures hard-code known groups
\citep{cohen2016group,kondor2018generalization,finzi2020generalizing,bronstein2021geometric},
while data-augmentation RL (RAD/DrQ/CURL) often transforms states without coupled action relabeling,
risking invariance when equivariance is required \citep{laskin2020reinforcement,yarats2021mastering,laskin2020curl}.
(\textit{Order}) Isotonic/monotone projections enforce partial orders
\citep{barlow1972statistical} and can be embedded via differentiable optimization layers
\citep{amos2017optnet,agrawal2019differentiable,blondel2020fast}, while DAG learning techniques help avoid cycles
\citep{zheng2018dags} that are known to induce oscillatory dynamics in non-transitive games
\citep{shapley1953stochastic,balduzzi2018mechanics}.
Our approach unifies both: we learn near-equivariant transforms with action permutations and a TD-evidenced poset
on symmetry-quotiented elements.

We consider a discounted Markov Decision Process (MDP)
\(\mathcal{M}=(\mathcal{S},\mathcal{A},P,r,\gamma)\) with state space \(\mathcal{S}\), action set \(\mathcal{A}\),
transition kernel \(P(\cdot\mid s,a)\), immediate reward \(r:\mathcal{S}\times\mathcal{A}\to\mathbb{R}\),
and discount \(\gamma\in(0,1)\). For a policy \(\pi(a\mid s)\), we consider
$Q^{\pi}(s,a) = \mathbb{E}_{\pi}\!\left[\sum_{t\geq 0}\gamma^t r(s_t,a_t)\mid s_0 = s, a_0=a\right]$.
The optimal $Q^*$ is obtained as a fixed point of the Bellman operator:
$(\mathcal{T}^{*}Q)(s,a) = r(s,a) + \gamma \mathbb{E}_{s'\sim P(\cdot\mid s,a)}\!\left[\max_{a'} Q(s',a')\right]$.
For a function approximator \(Q_\theta:\mathcal{S}\to\mathbb{R}^{|\mathcal{A}|}\), we denote
$V_\theta(s)=\max_{a\in\mathcal{A}} Q_\theta(s,a)$.

\noindent{\bf RL as a sequence of super-poset refinements.}
Let \(X \subseteq \mathcal{S}\times\mathcal{A}\) denote the set of state--action pairs under consideration.
Our order-theoretic view models value learning as constructing a \emph{target poset} over the relevant elements:
TD evidence provides comparisons that progressively refine an underlying partial order.
Formally, a poset is a pair \((\bar X,\preceq)\) where \(\bar X\) is the ground set (in our case, a symmetry-quotiented version of \(X\)) and \(\preceq\) is a partial order.
\emph{Reflexivity} means every element is comparable to itself (\(u\preceq u\)); \emph{transitivity} means that consistent multi-step comparisons compose (\(u\preceq v\) and \(v\preceq w\Rightarrow u\preceq w\)).
\emph{Antisymmetry} prevents value circles: if \(u\preceq v\) and \(v\preceq u\), then \(u\) and \(v\) must represent the same element in the poset.
Over training we obtain a sequence \(\{(\bar X,\preceq_k)\}_{k\ge 0}\) where \(\preceq_{k}\) \emph{refines} \(\preceq_{k-1}\) (i.e., \(\preceq_{k-1}\subseteq \preceq_k\)); we refer to this as a sequence of \emph{super-poset refinements}.

\noindent{\bf Automorphisms and equivalence classes.}
To ensure antisymmetry in the presence of symmetries, we define the poset on \emph{equivalence classes} of state--action pairs: configurations that are the same up to an MDP symmetry are treated as a single element in the poset.
We formalize such symmetries via state--action transformations.

Let \(\mathcal{G}\) be a family of paired transformations \(g=(T_g,\Pi_g)\) with \(T_g:\mathcal{S}\to\mathcal{S}\) acting on states and \(\Pi_g:\mathcal{A}\to\mathcal{A}\) permuting actions. We say the MDP is compatible with
\(\mathcal{G}\) as follows.
\begin{definition}[{\(\mathcal{G}\)-invariance}]\label{def:G-invariance}
An MDP is \(\mathcal{G}\)-invariant if, for all \(g\in\mathcal{G}\) and \((s,a)\),
\begin{equation}\label{eq:g-inv}
r(T_g s,\Pi_g a)=r(s,a),
P(\cdot\mid T_g s,\Pi_g a)=(T_g)_\# P(\cdot\mid s,a),
\end{equation}
where \((T_g)_\# P\) denotes the pushforward of \(P\) under \(T_g\).
\end{definition}

Each \(g\in\mathcal{G}\) can be viewed as an \emph{automorphism} of the MDP: it preserves rewards and transitions and hence preserves optimal values. Under \cref{def:G-invariance},
\begin{equation}\label{eq:Qstar-equiv}
Q^*(T_g s,\Pi_g a)=Q^*(s,a),\qquad\forall g\in\mathcal{G},\ (s,a).
\end{equation}
For each \((s,a)\), the orbit is \(\mathcal{O}_{\mathcal{G}}(s,a)=\{(T_g s,\Pi_g a): g\in\mathcal{G}\}\), inducing an equivalence relation \(\sim\).
We write \(\bar X := X/_{\sim}\) for the quotient set; \(\bar X\) will be the ground set of our poset.

\noindent {\bf TD-induced partial orders.}
Given \(\bar X\), TD learning provides local order information through TD signals.
Over a batch, we collect confident pairwise suggestions of the form \(u\) should have no smaller value than \(v\), remove directed cycles, and obtain a DAG \(D\).
The edges of \(D\) induce a relation \(\preceq_D\) on \(\bar X\) by reachability: \(u\preceq_D v\) if there is a directed path from \(u\) to \(v\).
By construction \(\preceq_D\) is reflexive and transitive; on the quotient \(\bar X\), acyclicity together with quotienting supports antisymmetry.
As training progresses and more reliable edges are added, this yields the desired sequence of super-poset refinements.

\noindent {\bf A monotone cone and the feasible region.}
In the embedded Euclidean space, the set of value vectors that respect the order induced by \(D\) forms a monotone cone
\begin{equation}
\begin{aligned}
&\mathsf{Mono}(D)\\
=&\Bigl\{V:\ V(u)\ge V(v)+\delta\ \text{for all}\ (u\!\to\!v)\in D,\ \delta\ge 0\Bigr\}.
\end{aligned}
\end{equation}
On the symmetry side, the equivariant subspace is
\begin{equation}
\mathsf{Eq}(\mathcal{G})=\bigl\{Q:\ Q(T_g s,\Pi_g a)=Q(s,a)\ \text{for all}\ g,(s,a)\bigr\}.
\end{equation}
Together they define the algebraic poset geometry as the intersection
\begin{equation}
\mathcal{F}\ :=\ \mathsf{Eq}(\mathcal{G})\ \cap\ \mathsf{Mono}(D),
\end{equation}
i.e., value vectors consistent with both automorphism-induced identifications and TD-induced order constraints.

\section{Learned Symmetry and Logic-Order Regularization}
\label{sec:learned-sym}

Section~\ref{sec:Preliminaries} casts value learning as constructing a sequence of super-poset refinements on symmetry-quotiented elements: automorphisms identify equivalence classes, while TD evidence induces an evolving partial order.
Hard projections onto the feasible region \(\mathcal{F}=\mathsf{Eq}(\mathcal{G})\cap\mathsf{Mono}(D)\) are brittle early in training because both \(\mathcal{G}\) and \(D\) are unknown.
This section therefore presents a \emph{soft} enforcement of geometric coherence: we augment a standard off-policy TD learner with (i) a learned \emph{symmetry} module that approximates automorphisms (softly encouraging \(\mathsf{Eq}(\mathcal{G})\)), and (ii) a learned \emph{logic-order} module that extracts an acyclic partial order from TD signals and aligns values to it differentiably (softly encouraging \(\mathsf{Mono}(D)\)).
We also establish the key analytical messages needed later: the symmetry loss recovers genuine automorphisms under invariance and reduces estimation variance (this subsection), the logic-order loss yields a valid acyclic order surrogate (Section~\ref{sec:pref-graph}), and together they support the convergence-rate results in Section~\ref{sec:theory}.

\smallskip
Given a minibatch \(\mathcal{B}=\{(s_i,a_i,r_i,s'_i)\}_{i=1}^B\), the target network provides one-step TD targets and the associated target-relative nudges
\begin{equation}\label{eq:td-target}
y_i \;=\; r_i + \gamma \max_{a'} Q_{\bar\theta}(s'_i,a'),
\qquad
\Delta_i \;:=\; y_i - V_\theta(s_i),
\end{equation}
where \(V_\theta(s)=\max_{a}Q_\theta(s,a)\).
We use \(\Delta_i\) (with confidence weights) to construct a batchwise preference DAG for order learning, while the symmetry module learns transform--relabel pairs that approximate latent automorphisms.

We next detail the two components in turn: Section~\ref{sec:near-eq} develops the learned symmetry module; Section~\ref{sec:pref-graph} constructs differentiable order alignment from TD evidence.

\subsection{Learnable near-equivariance and action relabeling}
\label{sec:near-eq}

We next develop the learned symmetry module, whose goal is to discover near-automorphisms and enforce (approximate) equivariance so that value estimates are nearly constant along symmetry orbits, providing a soft surrogate of \(\mathsf{Eq}(\mathcal{G})\) needed to uphold antisymmetry on the quotient.

Let \(z=f_\theta(s)\in\mathbb{R}^d\) be the encoder output and let
\(Q_\theta(s,\cdot)=h_\theta(z)\in\mathbb{R}^{|\mathcal{A}|}\) be the action--value head.
We introduce \(K\) paired transformations in representation/action space:
\begin{equation}\label{eq:TkPk}
\begin{aligned}
T_k z &= W_k z,\qquad W_k \in\mathbb{R}^{d\times d},\\
\Pi_k &\in\mathbb{R}^{|\mathcal{A}|\times|\mathcal{A}|}\quad\text{(doubly-stochastic via Sinkhorn)}.
\end{aligned}
\end{equation}
Each \((W_k,\Pi_k)\) is intended to approximate one latent symmetry: \(W_k\) transforms features, and \(\Pi_k\) relabels action coordinates so the entire value vector transforms coherently.

\paragraph{Equivariance consistency loss.}
We penalize deviations from the equivariance condition
\(Q(s,\cdot)\approx \Pi_k^\top Q(T_k s,\cdot)\) via
\begin{equation}\label{eq:Leq-vector}
\mathcal{L}_{\mathrm{eq}}
=\frac{1}{KB}\sum_{k=1}^{K}\sum_{i=1}^{B}
\Bigl\|\,Q_{\theta}(s_i,\cdot) - \Pi_k^{\top} Q_{\theta}(T_k s_i,\cdot)\,\Bigr\|_2^{2}.
\end{equation}
To reflect that symmetries can be local, we use applicability weights \(\alpha_{i,k}\in[0,1]\) and optimize the normalized local version
\begin{equation}\label{eq:Leq-vector-local}
\mathcal{L}_{\mathrm{eq}}^{\mathrm{local}}
= \frac{1}{K} \sum_{k=1}^K
\frac{\sum_{i=1}^{B}\alpha_{i,k}
\bigl\|Q_{\theta}(s_i,\cdot) - \Pi_k^{\top} Q_{\theta}(T_k s_i,\cdot)\bigr\|_2^{2}}
{\sum_{i=1}^B \alpha_{i,k} + 10^{-8}}.
\end{equation}
(A concrete construction of \(\alpha_{i,k}\) is deferred to Appendix~\ref{sec:Leq-vector-local-example}.)

\paragraph{Group-like regularization.}
To keep learned transforms non-degenerate and encourage group-like structure, we add soft penalties:
\begin{equation}
\label{eq:group-reg}
\begin{aligned}
\mathcal{R}_{\text{id}}
&= \|W_1 - I\|^2_F + \|\Pi_1 - I\|_F^{2}
  + \frac{1}{K}\sum_{k=1}^{K}\|W_k^{\top} W_k - I\|_F^{2},\\[2pt]
\mathcal{R}_{\text{clo}}
&= \mathbb{E}_{i,j}\min_{l}\bigl(\|W_iW_j - W_l\|_F^2 + \|\Pi_i\Pi_j - \Pi_l\|_F^2\bigr),\\[2pt]
\mathcal{R}_{\text{inv}}
&= \mathbb{E}_i\min_{l}\bigl(\|W^\top_i - W_l\|_F^2 + \|\Pi_i^\top - \Pi_l\|_F^2\bigr),\\[2pt]
\mathcal{R}_{\text{ord}}
&= \frac{1}{K}\sum_{i=1}^K\sum_{m\in \mathcal{M}}
   \bigl(\|W_i^m-I\|_F^2+\|\Pi_i^m - I\|_F^2\bigr),
\end{aligned}
\end{equation}
where \(\mathcal{M}\subset \{2,3,4,\dots\}\) is a small set of desired finite orders.
Implementation details (Sinkhorn parametrization, optional orthogonal retractions, and diversity regularization) are standard and omitted here for brevity.

We define the loss for the learned symmetry module as
\begin{equation}
\begin{aligned}
\mathcal{L}_{\mathrm{sym}}
\;:=\;
&\mathcal{L}_{\mathrm{eq}}^{\mathrm{local}}
\;+\;
\gamma_{\mathrm{grp}}\bigl(\mathcal{R}_{\text{id}}+\mathcal{R}_{\text{clo}}+\mathcal{R}_{\text{inv}}+\mathcal{R}_{\text{ord}}\bigr) \\
&\;+\;\gamma_{\mathrm{perm}}\mathcal{R}_{\mathrm{perm}}
\;+\;\gamma_{\mathrm{div}}\mathcal{R}_{\mathrm{div}}.
\end{aligned}
\end{equation}
This loss is a soft surrogate of \(\mathsf{Eq}(\mathcal{G})\): it encourages orbit-consistent values without requiring explicit quotienting.

\begin{theorem}[Equivariance under invariance ]\label{thm:eq-consistency}
If the MDP is \(\mathcal{G}\)-invariant (\cref{def:G-invariance}) and the parametrization
\(\{(W_k,\Pi_k)\}_{k=1}^K\) can represent a finitely generated subgroup of \(\mathcal{G}\), then driving
\(\mathcal{L}_{\mathrm{sym}}\to 0\) yields a near-equivariant solution: \(Q_\theta\) is (approximately) constant along \(\mathcal{G}\)-orbits and aligns with \(Q^*\) on these orbits up to identifiable isomorphism.
\end{theorem}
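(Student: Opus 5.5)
The plan is to read the statement as a consequence of three facts: (a) $Q^*$ itself makes the equivariance loss vanish, (b) the loss being zero forces exact orbit-constancy, and (c) smallness of the loss forces approximate orbit-constancy, with error controlled by the loss value. I will prove them in this order.

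\textbf{Realizability and exact orbit-constancy.} By the representability assumption, there are parameters $\{(W_k,\Pi_k)\}_{k=1}^K$ such that $W_k$ realizes $T_{g_k}$ on features and $\Pi_k$ is the permutation matrix of $\Pi_{g_k}$. Here $g_1=e,g_2,\dots$ generate (or enumerate) the finitely generated subgroup $H\le\mathcal{G}$.
\begin{itemize}
\item For this choice the group penalties vanish. $\mathcal{R}_{\mathrm{id}}=0$ because permutations and orthogonal representations satisfy $W_k^\top W_k=I$. $\mathcal{R}_{\mathrm{clo}}=\mathcal{R}_{\mathrm{inv}}=0$ by closure of $H$ (taking $K$ to list the finite, or finitely presented, part used), and $\mathcal{R}_{\mathrm{ord}}=0$ for the finite-order generators.
\item By \eqref{eq:Qstar-equiv}, $Q^*(s,\cdot)=\Pi_k^\top Q^*(T_k s,\cdot)$, so $\mathcal{L}_{\mathrm{eq}}^{\mathrm{local}}=0$ at $Q_\theta=Q^*$. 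Hence $\inf\mathcal{L}_{\mathrm{sym}}=0$ is attained and "driving $\mathcal{L}_{\mathrm{sym}}\to0$" is a meaningful limit. I would also check that $\mathcal{R}_{\mathrm{perm}}$ and $\mathcal{R}_{\mathrm{div}}$ vanish on permutation matrices and distinct transforms.
\item At a limit point with $\mathcal{L}_{\mathrm{sym}}=0$, on every sample with $\alpha_{i,k}>0$ we get $Q_\theta(s,a)=Q_\theta(T_k s,\Pi_k a)$ exactly.
\item Iterating over words in the generators, and using closure and inverses supplied by the vanishing group penalties, gives $Q_\theta(T_g s,\Pi_g a)=Q_\theta(s,a)$ for all $g\in H$. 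So $Q_\theta$ is constant on $H$-orbits, i.e.\ $Q_\theta\in\mathsf{Eq}(H)$ on the support.
\end{itemize}

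\textbf{Approximate version.} If $\mathcal{L}_{\mathrm{sym}}\le\varepsilon$, each generator violation is at most $O(\sqrt{\varepsilon})$ in mean square. Telescoping along a word of length $m$ and using that the $\Pi_k$ are near-permutations (hence near-isometries, since their deviation is bounded by $\mathcal{R}_{\mathrm{id}}$) gives an orbit deviation of $O(m\sqrt{\varepsilon})$. This is the meaning of "approximately constant". Since $H$ is finitely generated, each fixed group element has bounded word length.

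\textbf{Alignment with $Q^*$ up to isomorphism.} Project onto the quotient. Because the MDP is $\mathcal{G}$-invariant, $\mathcal{T}^*$ commutes with the action, $\mathcal{T}^*(Q\circ g)=(\mathcal{T}^*Q)\circ g$. It therefore preserves $\mathsf{Eq}(H)$ and induces a $\gamma$-contraction on the quotient MDP over $\bar X$. Its unique fixed point is $Q^*$ lifted from $\bar X$. A TD learner restricted to (near-)$\mathsf{Eq}(H)$ thus targets $Q^*$ on orbit representatives.

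\textbf{Main obstacle.} The learned $(W_k,\Pi_k)$ are determined only up to conjugation or relabeling by an automorphism. This is exactly the ambiguity in "up to identifiable isomorphism". I would handle it by showing that any zero-loss solution $\{(W_k,\Pi_k)\}$ is conjugate to the true representation on the support. Concretely, I would use that doubly-stochastic matrices attaining $\mathcal{R}_{\mathrm{id}}=0$ and $\mathcal{R}_{\mathrm{ord}}=0$ are permutations (extreme points of the Birkhoff polytope), so the relabeling is an honest permutation. I expect this identifiability step, together with keeping the approximate-closure errors controlled, to be where the argument must be stated carefully or with mild extra assumptions such as full support of $\alpha$ and injectivity of the encoder.
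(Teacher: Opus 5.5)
There is a genuine gap where you pass from zero loss at a limit point to $Q_\theta(T_g s,\Pi_g a)=Q_\theta(s,a)$ for all $g\in H$. At that step the \emph{learned} limit transforms $(W_k,\Pi_k)$ are silently replaced by the \emph{true} ones $(T_{g_k},\Pi_{g_k})$. Your realizability paragraph only shows that the true representation is one zero of $\mathcal{L}_{\mathrm{sym}}$; nothing forces a minimizing sequence toward it. The repair you sketch under ``Main obstacle'' (every zero-loss solution is conjugate to the true representation) is false. The reason is that $\mathcal{L}_{\mathrm{sym}}$ involves only $Q_\theta$ and the transforms, never $P$ or $r$, so it cannot tell MDP automorphisms from other transforms. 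Three counterexamples: the collapse $W_k=I,\ \Pi_k=I$ for all $k$ has zero loss apart from the unspecified $\mathcal{R}_{\mathrm{div}}$. A constant head $h_\theta\equiv c\mathbf{1}$ gives $\mathcal{L}_{\mathrm{eq}}=0$ for \emph{every} finite group of orthogonal/permutation pairs. And a $Q_\theta$ invariant under a spurious transform (say an up--down flip in a gridworld whose true symmetry is the left--right mirror), paired with that transform, has $\mathcal{L}_{\mathrm{sym}}=0$ yet is not constant on $H$-orbits. The Birkhoff extreme-point argument only shows that the learned $\Pi_k$ are permutations and the learned set is a finite group; it does not tie that group to $\mathcal{G}$. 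Moreover, $T_k$ acts on features, so zero loss constrains $h_\theta$ only at $W_kf_\theta(s_i)$. Turning that into orbit-constancy in state space needs $f_\theta(T_gs)=W_kf_\theta(s)$, which is the identification again; injectivity of the encoder does not supply it.

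Since $\mathcal{G}$-invariance enters only through $\mathcal{T}^*$ and $Q^*$, you need either an explicit identification hypothesis or the TD term. The paper's proof uses both. It isolates the identification of the limiting group with $\mathcal{G}_0$ as a separate lemma driven by the approximation assumption, which in effect assumes the learned limit lies near a copy of $\mathcal{G}_0$. For alignment with $Q^*$ it additionally assumes $\mathcal{L}_{\mathrm{RL}}\to 0$, so the limit satisfies $\mathcal{T}^*Q_{\theta_*}=Q_{\theta_*}$. Uniqueness of the fixed point then gives $Q_{\theta_*}=Q^*$, and the commutation $\mathcal{T}^*\mathsf{U}_g=\mathsf{U}_g\mathcal{T}^*$ together with uniqueness shows $Q^*$ is $\mathcal{G}$-invariant, whichever transforms were learned. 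Your sentence that a TD learner restricted to near-$\mathsf{Eq}(H)$ ``targets'' $Q^*$ does not replace that hypothesis. Granting identification but not the TD condition, the only orbit-wise alignment available is the weak one underlying the paper's case (B): $Q_\theta-Q^*$ is constant on $H$-orbits because both functions lie in $\mathsf{Eq}(H)$.

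Smaller points in your approximate version:
\begin{itemize}
\item Telescoping along a word needs residual control at the transformed points $W_{k_1}f_\theta(s_i)$, which a mean-square bound at the sampled $s_i$ does not give unless the data are invariant. Use instead $\mathcal{R}_{\mathrm{clo}}\approx 0$, so each product is close to a listed $W_l$, plus Lipschitz continuity of $h_\theta$.
\item With the paper's choice of $\alpha_{i,k}$, which contains the factor $\exp(-\rho_{i,k}/\tau_{\mathrm{loc}})$, a small weighted loss bounds the residuals only if the weights are bounded below.
\item $\mathcal{R}_{\mathrm{ord}}$ sums over all $m\in\mathcal{M}$, so its vanishing at the true group requires every element's order to divide every $m\in\mathcal{M}$.
\end{itemize}
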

This result justifies using \(\mathcal{L}_{\mathrm{sym}}\) as a surrogate for learning automorphisms.
It is later used to support the quotient-based antisymmetry arguments (i.e., identifying elements before imposing order constraints).

\begin{theorem}[Effective sample amplification ]\label{thm:amp}
If symmetry covers a fraction \(\rho\) of the empirical distribution, then training with \(\mathcal{L}_{\mathrm{eq}}^{\mathrm{local}}\) increases the effective sample size on symmetric regions by approximately \(1+(K-1)\rho\), yielding a corresponding reduction in estimation-variance bounds (up to constants).
\end{theorem}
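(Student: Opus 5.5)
The plan is to formalize ``effective sample size'' through a simple estimation model and then count how many correlated-but-valid observations each orbit class receives. Fix a symmetric region \(\mathcal{S}_{\mathrm{sym}}\) carrying empirical mass \(\rho\), and suppose \(K\) learned pairs \((W_k,\Pi_k)\) are applicable there. By \cref{thm:eq-consistency}, once \(\mathcal{L}_{\mathrm{sym}}\) is driven to zero these pairs coincide with genuine automorphisms, with \((W_1,\Pi_1)\) the identity by \(\mathcal{R}_{\mathrm{id}}\). By \cref{def:G-invariance} and \eqref{eq:Qstar-equiv}, a transition \((s,a,r,s')\) then yields a valid transition \((T_ks,\Pi_ka,r,T_ks')\) with the same reward and the pushforward transition law. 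I would treat the value on each orbit class \([x]\in\bar X\) as a single parameter to be estimated. I would also regard the minimizer of \(\mathcal{L}_{\mathrm{TD}}+\mu\mathcal{L}^{\mathrm{local}}_{\mathrm{eq}}\), in the limit of exact equivariance (the regime \(\mu\to\infty\), or equivalently projection onto \(\mathsf{Eq}(\mathcal{G})\)), as least-squares regression of TD targets onto orbit-constant functions.

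First, in a tabular or linear setting I would compute the variance of this constrained estimator. Its value at \([x]\) is the average of all TD targets whose state--action pair lies in the orbit of \(x\). A sample landing in the symmetric region contributes to \(K\) orbit members, namely itself and its \(K-1\) images. A sample outside the region contributes only once. The expected number of informative targets per class is therefore \(N\bigl((1-\rho)+\rho K\bigr)=N\bigl(1+(K-1)\rho\bigr)\), compared with \(N\) for the unconstrained estimator.

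Second, I would convert this count into a variance bound. Assuming the targets have bounded conditional variance \(\sigma^2\), the variance of the averaged estimator is at most \(\sigma^2/\bigl(N(1+(K-1)\rho)\bigr)\), up to constants. Feeding this into a standard concentration bound, such as Bernstein's inequality or the Rademacher bound later used in \cref{sec:theory}, replaces \(N\) by \(N_{\mathrm{eff}}=N(1+(K-1)\rho)\). When the learned transforms are only approximate, the residual \(\mathcal{L}_{\mathrm{eq}}^{\mathrm{local}}=\epsilon\) adds a bias term of order \(\sqrt{\epsilon}\), which is absorbed into the ``approximately'' qualifier of the statement.

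The main obstacle is justifying that the \(K\) images act as independent samples. In fact they are deterministic functions of the same draw, so they are perfectly correlated within an orbit. The gain is genuine only because the images land on \emph{distinct} coordinates of \(X\) that the unconstrained estimator would have had to estimate separately. The amplification is therefore of the per-class sample count, not a reduction of noise in a single target. I would make this precise by assuming that the orbits in the symmetric region have exactly \(K\) distinct members, i.e. a free action, and that the sampling distribution is roughly orbit-uniform. Where orbits are smaller or overlap, the factor degrades to \(1+(|\mathcal{O}|-1)\rho\), which explains why the theorem only claims approximate amplification. Bootstrapping correlations across TD steps I would handle by conditioning on the frozen target network \(Q_{\bar\theta}\), which makes each regression step a fixed-design problem.
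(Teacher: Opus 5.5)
Your Step~2 does not follow from the model you set up in Step~1. The quantity $N\bigl(1+(K-1)\rho\bigr)$ counts (draw, orbit-member) incidences. Bounding the variance by $\sigma^2/\bigl(N(1+(K-1)\rho)\bigr)$ treats those incidences as independent observations of one quantity. Your last paragraph correctly says they are not: with an equivariant target network, the image target $r+\gamma\max_{a'}Q_{\bar\theta}(T_ks',a')$ even coincides with the original one.

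Do the bookkeeping per class, as your orbit-constant regression requires. Under your free-action and orbit-uniform assumptions, a symmetric coordinate $x$ has $n_{[x]}\approx Kn_x$ and variance ratio $1/K$, while a non-symmetric coordinate has ratio $1$. So the amplification \emph{on} the symmetric region is $K$, with no $\rho$ in it. $\rho$ enters only when you average over the sampling distribution. Averaging the variance ratios gives $(1-\rho)+\rho/K$, i.e.\ an effective multiplier $K/\bigl(K-(K-1)\rho\bigr)$.

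What equals $1+(K-1)\rho$ is the average of the \emph{count} ratio $R=n_{[x]}/n_x$. Since $\mathbb{E}[1/R]\ge 1/\mathbb{E}[R]$, this does not yield the variance bound you assert. The discrepancy is not a constant: at $\rho=\tfrac12$ the two multipliers differ by the factor $(K+1)^2/(4K)$. Your proposed repair does not help, because free action and orbit-uniform sampling are exactly what produce the factor $K$ per class.

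The paper reaches $1+(K-1)\rho$ only through the assumption you reject. It models the $K$ valid views of each in-pocket sample as exchangeable gradient atoms $X_{i,k}$ with cross-view correlation $\Gamma\approx 0$, attributed to minibatching, target-network/dropout noise and representation perturbations. It counts $N_{\mathrm{valid}}=B\bigl(1+(K-1)\rho\bigr)$ and treats all valid views as independent in the fully expanded average, whose variance is then $\approx\Sigma/N_{\mathrm{valid}}$.

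The paper's own view-averaging computation, $\frac1B\bigl[(1-\rho)+\rho\frac{1+(K-1)\Gamma}{K}\bigr]\Sigma$, reproduces your harmonic factor at $\Gamma=0$. At $\Gamma=1$, your perfectly-correlated case, it gives no reduction at all. Its claim that the reciprocal is $1+(K-1)\rho+O(\rho^2)$ is also off: the first-order term is $(K-1)\rho/K$.

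So you have two options. You can state near-independence of views as an explicit hypothesis, in which case Step~2 goes through as in the paper. Or you can keep your orbit-pooling model and conclude what it actually gives: factor $K$ on symmetric coordinates, and $K/\bigl(K-(K-1)\rho\bigr)$ for the distribution-averaged variance. The factor $K$ dominates $1+(K-1)\rho$, so a lower bound of the stated form does hold on the symmetric region.
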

This theorem explains why the symmetry module improves stability and sample efficiency: it shares TD information within learned orbits, and the resulting variance reduction is a key ingredient used later in the convergence-rate analysis.

We next turn to the second geometric component: constructing a cycle-free partial order from TD evidence and defining a differentiable order-alignment loss (Section~\ref{sec:pref-graph}).

\subsection{Preference Graph and DAGification}\label{sec:pref-graph}
We next construct a \emph{cycle-free} batchwise partial order from TD evidence. Concretely, the logic-order branch (i) assigns confidence-weighted scores to candidate comparisons, and (ii) selects a high-weight acyclic subgraph \(D\) (a DAG) that serves as the order skeleton for the monotone-cone alignment in Section~\ref{sec:isotonic}.

\paragraph{Scoring TD preferences.}
Given \(\Delta_i\) in~\eqref{eq:td-target}, we compute a confidence-weighted score
\begin{equation}\label{eq:pref-score}
w_i \;=\; \sigma(\Delta_i / \tau)\,c_i,
\qquad
c_i \;=\; \exp\bigl(-\mathrm{Var}_{a'} Q_{\bar\theta}(s'_i,a')\bigr),
\end{equation}
where \(\sigma\) is logistic, \(\tau>0\) is a temperature, and \(c_i\) down-weights transitions whose next-state bootstraps disagree across actions. We restrict candidate comparisons to \(k\)-nearest neighbours in representation space (cosine similarity on \(f_\theta(s)\)) to avoid spurious long-range ties, and retain only a top-\(M\) subset using a batch-adaptive percentile cutoff (e.g., the \(p\)-th percentile of \(\{w_i\}\)).

Let \(E_{\mathrm{cand}}\) be the surviving candidate edges with weights \(w(\cdot)\).
We sort \(E_{\mathrm{cand}}\) by decreasing weight and add edges one-by-one, skipping any edge that would create a directed cycle. This produces a DAG \(D\).

The greedy rule can be viewed as approximating the maximum-weight acyclic subgraph problem:
$
\max_{D\subseteq E_{\mathrm{cand}}}\ \sum_{e\in D} w(e)
\quad \text{s.t. } D \text{ is acyclic.}
$
We therefore define the (non-differentiable) DAGification loss as the negative retained weight
$\mathcal{L}_{\mathrm{dag}} \;:=\; - \frac{1}{|D|}\sum_{e\in D} w(e),$
which measures how much high-confidence TD evidence is preserved under the acyclicity constraint.

\begin{theorem}[Greedy DAGification]\label{thm:dagify}
The greedy add-unless-cycle procedure outputs an acyclic graph \(D\). Moreover, \(D\) is maximal with respect to the chosen edge order: adding any skipped candidate edge would create a directed cycle.
\end{theorem}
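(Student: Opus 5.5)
The plan is an induction over the greedy loop, followed by a monotonicity argument on reachability. Write the sorted candidate list as \(e_1,\dots,e_m\), with \(e_t=(u_t\to v_t)\). Let \(D_0=\emptyset\), and let \(D_t\) be the edge set after \(e_t\) is processed. So \(D_t=D_{t-1}\cup\{e_t\}\) if \(D_{t-1}\cup\{e_t\}\) is acyclic, and \(D_t=D_{t-1}\) otherwise. The output is \(D=D_m\).

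\emph{Acyclicity.} The invariant is that every \(D_t\) is acyclic. This holds trivially for \(D_0\), since it has no edges. For the inductive step, \(D_t\) either equals \(D_{t-1}\), which is acyclic by hypothesis, or equals \(D_{t-1}\cup\{e_t\}\). The latter case is reached only when the cycle check passes, so the new graph has no directed cycle by the acceptance rule itself. Hence \(D_m=D\) is acyclic. The only point to make explicit is that the cycle test is exact. Adding \(u_t\to v_t\) to an acyclic graph creates a directed cycle if and only if \(u_t\) is reachable from \(v_t\) in \(D_{t-1}\) (including the self-loop case \(u_t=v_t\)). A DFS or BFS reachability query therefore decides the test correctly.

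\emph{Maximality.} Take any skipped edge \(e_t\notin D\). It was skipped because \(D_{t-1}\) contains a directed path \(P\) from \(v_t\) to \(u_t\). The key observation is that the sequence of edge sets is monotone: \(D_{t-1}\subseteq D_t\subseteq\cdots\subseteq D_m=D\), because edges are only ever added. So \(P\) is still a path in \(D\), and \(D\cup\{e_t\}\) contains the directed cycle \(P\) followed by \(u_t\to v_t\). Thus adding any skipped candidate edge to the final \(D\) creates a cycle. This is exactly maximality of \(D\) among acyclic subgraphs of \(E_{\mathrm{cand}}\). It holds with respect to the processing order, and in fact with respect to inclusion as well.

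The argument is short, and I expect no real obstacle. The one subtle step is maximality: the skip decision is made against the intermediate graph \(D_{t-1}\), not the final \(D\). Monotonicity of the edge sets, together with persistence of paths under edge addition, is what transfers the cycle witness to \(D\). I would also note that the theorem claims nothing about optimality for the maximum-weight acyclic subgraph objective, which is NP-hard in general. The weights only fix the order in which edges are processed and play no role in the proof.
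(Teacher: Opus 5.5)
Your proposal is correct and follows essentially the same route as the paper: induction on the intermediate graphs $D_t$ for acyclicity, the single-edge criterion (adding $u\to v$ to a DAG creates a cycle if and only if $u$ is reachable from $v$), and, for maximality, persistence of the witnessing path from $v_t$ to $u_t$ because edges are never deleted. Your added remarks, on the self-loop case and on maximality also holding with respect to inclusion, are correct but not needed for the statement.
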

This theorem is used to justify that the downstream monotone alignment is always well-posed on a DAG skeleton (Section~\ref{sec:isotonic}). Maximality explains why the procedure preserves as many high-confidence comparisons as possible under acyclicity.

With this DAG \(D\) in hand, we next align values to the induced partial order in a differentiable way.

\subsection{Differentiable Isotonic Projection on DAG}\label{sec:isotonic}
We next define a differentiable surrogate to the isotonic projection onto the monotone cone induced by \(D\). This yields a soft enforcement of \(\mathsf{Mono}(D)\) that can be backpropagated through and combined with standard TD learning.

On a fixed DAG \(D\) over batch indices \(\{1,\dots,B\}\), we would like to minimally adjust \(V_\theta\) while enforcing monotonicity on edges:
\(\hat V(u)\ge \hat V(v)+\delta\) for all \((u\to v)\in D\) with margin \(\delta\ge 0\).
The ideal operation is the Euclidean projection onto the cone:
\begin{equation}
\label{eq:iso-proj}
\begin{aligned}
&\min_{\hat{V}\in \mathbb{R}^{B}}\;\frac{1}{B}\sum_{i=1}^{B}\bigl(\hat{V}(i)-V_{\theta}(i)\bigr)^2 \\
\text{s.t.}\quad
&\hat{V}(u)\geq \hat{V}(v)+\delta,\qquad \forall(u\rightarrow v)\in D.
\end{aligned}
\end{equation}
To make the step differentiable, we optimize the quadratic-penalty surrogate (with \([x]_+=\max(0,x)\)):
\begin{equation}\label{eq:L-iso}
\begin{aligned}
\mathcal{L}_{\mathrm{iso}}
&=\frac{1}{B}\sum_{i=1}^{B}\bigl(\hat V(i)-V_\theta(i)\bigr)^2
\\&+\mu\cdot\frac{1}{|E(D)|}\sum_{(u\to v)\in D}
        \bigl[\delta+\hat V(v)-\hat V(u)\bigr]_+^2 ,
\end{aligned}
\end{equation}
and optionally a soft ranking stabilizer
\begin{equation}\label{eq:L-rank}
\mathcal{L}_{\mathrm{rank}}
=\frac{1}{|E(D)|}\sum_{(u\to v)\in D}\log\!\bigl(1+e^{\,\hat V(v)-\hat V(u)}\bigr).
\end{equation}
We treat \(\hat V\) as an auxiliary variable and run a fixed small number of GD/SGD steps on
\(\mathcal{L}_{\mathrm{iso}}+\lambda_{\mathrm{rank}}\mathcal{L}_{\mathrm{rank}}\),
returning \(\hat V^*\) as a differentiable approximation to the exact projection.

We define the loss for the isotonic alignment as
\begin{equation}
\mathcal{L}_{\mathrm{ord}}
\;:=\;
\mathcal{L}_{\mathrm{iso}}+\lambda_{\mathrm{rank}}\mathcal{L}_{\mathrm{rank}},
\end{equation}
which is the soft surrogate we backpropagate through to align \(V_\theta\) to the learned partial order.

\begin{theorem}[Penalty-method consistency]\label{thm:penalty}
For fixed \(D\) and finite \(B\), as \(\mu\to\infty\), any limit point of minimizers of \(\mathcal{L}_{\mathrm{iso}}\) satisfies the constraints in~\eqref{eq:iso-proj}, and the solution approaches the exact isotonic projection.
\end{theorem}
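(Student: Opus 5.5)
The argument is the classical quadratic-penalty argument specialized to the convex program~\eqref{eq:iso-proj}. Write
\[
f(\hat V)=\tfrac1B\|\hat V-V_\theta\|_2^2,\qquad
p(\hat V)=\tfrac{1}{|E(D)|}\sum_{(u\to v)\in D}\bigl[\delta+\hat V(v)-\hat V(u)\bigr]_+^2,
\]
so that $\mathcal{L}_{\mathrm{iso}}=f+\mu p$. Let $C=\{\hat V:\hat V(u)\ge\hat V(v)+\delta,\ (u\to v)\in D\}$. We need three facts about this setup.

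\emph{Feasibility.} By \cref{thm:dagify}, $D$ is acyclic, so $C\neq\emptyset$. Take a topological order of $D$ and set $\hat V(i)=\delta\cdot(\text{reverse topological rank of } i)$. Then every edge $u\to v$ has $\hat V(u)\ge \hat V(v)+\delta$.

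\emph{Unique projection.} $C$ is a closed convex polyhedron and $f$ is strongly convex. Hence the exact projection $\hat V^\star=\arg\min_C f$ exists and is unique.

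\emph{Unique penalized minimizer.} $p\ge0$ is convex and continuously differentiable, since it is a sum of squared hinges of affine maps, and $p=0$ exactly on $C$. Thus for each $\mu$, $f+\mu p$ is strongly convex and has a unique minimizer $\hat V_\mu$.

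Next come the standard bounds. Comparing with $\hat V^\star$ gives
\[
f(\hat V_\mu)+\mu p(\hat V_\mu)\le f(\hat V^\star)+\mu p(\hat V^\star)=f(\hat V^\star).
\]
This yields two consequences:
\begin{itemize}
\item $f(\hat V_\mu)\le f(\hat V^\star)$, so $\{\hat V_\mu\}$ stays in a bounded ball around $V_\theta$.
\item $p(\hat V_\mu)\le f(\hat V^\star)/\mu\to0$.
\end{itemize}
By boundedness, take any sequence $\mu_n\to\infty$ and any limit point $\bar V$ of $\hat V_{\mu_n}$. Continuity of $p$ gives $p(\bar V)=0$, i.e.\ $\bar V\in C$, which is the first claim. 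Continuity of $f$ gives $f(\bar V)\le f(\hat V^\star)$. Uniqueness of the minimizer over $C$ then forces $\bar V=\hat V^\star$. Since every limit point equals $\hat V^\star$ and the sequence is bounded, the whole family $\hat V_\mu\to\hat V^\star$, which is the second claim. I would also record that $f(\hat V_\mu)$ is nondecreasing in $\mu$, by the usual swap-inequality argument, although this is not needed for convergence.

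The main point needing care is the phrase ``any limit point of minimizers.'' The argument above handles exact minimizers of $\mathcal{L}_{\mathrm{iso}}$. The paper's practical $\hat V^*$ comes from a fixed number of GD steps and may also include $\mathcal{L}_{\mathrm{rank}}$; I would state explicitly that the theorem concerns exact minimizers of $\mathcal{L}_{\mathrm{iso}}$ alone, with $\lambda_{\mathrm{rank}}=0$. If the rank term is kept, the same argument goes through provided $\lambda_{\mathrm{rank}}$ is fixed or vanishes relative to $\mu$. In that case the limit becomes the minimizer of $f+\lambda_{\mathrm{rank}}\mathcal{L}_{\mathrm{rank}}$ over $C$, so exact projection is recovered only when $\lambda_{\mathrm{rank}}\to0$. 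Optionally, a rate follows from Lagrange multipliers $\lambda^\star$ of the polyhedral program, which exist by linearity of the constraints. The KKT conditions of $\hat V_\mu$ give multipliers $2\mu|E(D)|^{-1}[\cdot]_+$, and strong convexity yields $\|\hat V_\mu-\hat V^\star\|=O(1/\mu)$.
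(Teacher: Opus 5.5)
Your proposal is correct and follows essentially the same route as the paper's proof. Both arguments establish nonemptiness of the feasible set via a topological ordering of the DAG, use the comparison $f(\hat V_\mu)+\mu\,p(\hat V_\mu)\le f(\hat V^\star)$ to get boundedness and $p(\hat V_\mu)\le f(\hat V^\star)/\mu$, show limit points are feasible by continuity of $p$, and identify every limit point with $\hat V^\star$ by uniqueness of the strongly convex projection, so the whole family converges.

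Your extras are not needed for the theorem: uniqueness of $\hat V_\mu$, monotonicity of $f(\hat V_\mu)$, the $\mathcal{L}_{\mathrm{rank}}$ caveat, and the optional rate. On the rate, the direct strong-convexity argument only gives $O(\mu^{-1/2})$, and reaching $O(1/\mu)$ needs an additional error-bound step on the multipliers. The paper instead records the per-edge bound $[\delta+\hat V_\mu(v)-\hat V_\mu(u)]_+\le\sqrt{|E|\,f(\hat V^\star)/\mu}$.
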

This theorem justifies \(\mathcal{L}_{\mathrm{iso}}\) as a faithful surrogate of the monotone-cone projection, which is required by the soft enforcement viewpoint of this section.

\begin{theorem}[Order-consistency on a DAG]\label{thm:cyclefree}
For any DAG \(D\), the relation induced by \(\hat V^*\) on edges of \(D\) is cycle-free. If \(E(D)\) is contained in a ground-truth partial order, then \(\hat V^*\) is consistent with that order on those edges.
\end{theorem}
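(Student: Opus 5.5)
The argument rests on one observation: $\hat V^*$ is a real-valued function on the batch indices, so any relation it induces is a restriction of the total preorder of $\mathbb{R}$. Cycles can then only come from ties. I read the statement as concerning the exact (or limiting) minimizer, using Theorem~\ref{thm:penalty} to pass from the penalized $\mathcal{L}_{\mathrm{iso}}$ to the projection~\eqref{eq:iso-proj}.

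\textbf{The induced relation.} I define the relation induced by $\hat V^*$ on $E(D)$ as $u \succ v$ iff $(u\to v)\in D$ and $\hat V^*(u) > \hat V^*(v)$, or more generally $\hat V^*(u)\ge \hat V^*(v)+\delta$.

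\textbf{Cycle-freeness.} Suppose there were a directed cycle $u_1\to u_2\to\cdots\to u_m\to u_1$ in this relation. Chaining the inequalities would give $\hat V^*(u_1) > \hat V^*(u_1)$, which is a contradiction. When $\delta>0$ this works for any feasible $\hat V^*$, because summing the constraints around the cycle gives $0\ge m\delta>0$.

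For the weak version, with $\delta=0$ and non-strict inequalities, I would instead appeal directly to Theorem~\ref{thm:dagify}. Since $D$ is acyclic, no directed cycle exists among its edges, so any relation supported on $E(D)$ inherits acyclicity trivially. I would therefore state the proof as: the edge set supporting the relation is a subset of $E(D)$, and $D$ is a DAG. Strict value inequalities additionally rule out cycles in the transitive closure.

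\textbf{Feasibility of the limit point.} I would check that $\hat V^*$ (the exact projection, or a limit point as $\mu\to\infty$) is feasible, which Theorem~\ref{thm:penalty} supplies. Feasibility of~\eqref{eq:iso-proj} itself holds because $D$ is a DAG. Taking a topological order $\pi$ of $D$, the vector $\hat V(i)=\delta\cdot(B-\pi(i))$ satisfies every constraint. Hence the feasible set is a nonempty closed convex polyhedron, and the projection exists and is unique.

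\textbf{Consistency with a ground-truth order.} Suppose $E(D)\subseteq{\preceq^\star}$, oriented so that $(u\to v)\in D$ means $v\preceq^\star u$. Feasibility gives $\hat V^*(u)\ge\hat V^*(v)+\delta\ge \hat V^*(v)$ for every such edge, which is exactly agreement with $\preceq^\star$ on those edges. The same holds along paths of $D$ by transitivity of $\ge$.

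\textbf{Main obstacle.} The delicate step is the finite-$\mu$ case. Minimizers of $\mathcal{L}_{\mathrm{iso}}$ for finite $\mu$, or after only finitely many GD steps, may violate constraints by $O(1/\mu)$. For that case I would claim only approximate consistency, with violation bounded by $\max_{(u\to v)}[\delta+\hat V(v)-\hat V(u)]_+\to 0$, and make the exact statements hold in the limit via Theorem~\ref{thm:penalty}.
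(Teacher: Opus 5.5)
Your proposal is correct and takes essentially the same route as the paper. The shared steps are: feasibility of $\hat V^\star$ on every edge, summing the edge constraints around a putative cycle to get $0\ge m\delta$ when $\delta>0$, and falling back on the acyclicity of $D$ when $\delta=0$. Consistency with the ground-truth order is then read off directly from $\hat V^\star(u)\ge \hat V^\star(v)+\delta$, extended along paths, and the finite-$\mu$ surrogate is handled through penalty consistency. The one cosmetic difference is that the paper defines the induced relation through reachability in $D$ rather than on single edges, which your transitive-closure remark already covers.
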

This result is used to support the logical flow that: once we build a DAG skeleton (Theorem~\ref{thm:dagify}), the monotone alignment will not re-introduce cyclic preferences and will preserve correct comparisons when the TD evidence matches a true order.

\subsection{Overall Objective and Optimization}
We next summarize how the two regularizers are optimized as a \emph{soft enforcement} of the geometric coherence
\(\mathcal{F}=\mathsf{Eq}(\mathcal{G})\cap \mathsf{Mono}(D)\) using loss functions, while keeping the underlying off-policy TD loop unchanged.

Recall that Sections~4.1--4.3 define the symmetry loss \(\mathcal{L}_{\mathrm{sym}}\),
the DAGification loss \(\mathcal{L}_{\mathrm{dag}}\), and the order-alignment loss \(\mathcal{L}_{\mathrm{ord}}\).
We optimize the overall objective
\begin{equation}
\mathcal{L}_{\mathrm{total}}
=\mathcal{L}_{\mathrm{TD}}
+\lambda_{\mathrm{sym}}\mathcal{L}_{\mathrm{sym}}
+\lambda_{\mathrm{dag}}\mathcal{L}_{\mathrm{dag}}
+\lambda_{\mathrm{ord}}\mathcal{L}_{\mathrm{ord}},
\end{equation}
where \(\mathcal{L}_{\mathrm{TD}}\) is the standard TD regression loss of the base learner (e.g., DQN/Double-DQN).
The expanded form of all penalty terms and hyper-parameters (including group-like and permutation/diversity regularizers)
is provided in Appendix~\ref{app:full-objective}.

\subsection{Theoretical Guarantees}\label{sec:theory}
We next show that the loss-based route yields (i) a decrease in expected Bellman residual under order alignment,
(ii) reduced oscillation/variance in value iterates, and (iii) a controlled bias--variance trade-off.
Finally, under standard statistical assumptions, the learned symmetry and partial order are recovered with a convergence rate.

We measure approximation quality by the expected optimal Bellman residual
\begin{equation}\label{eq:bell-resid}
\mathcal{E}_{\mathrm{Bell}}(\theta)
:=\mathbb{E}_{(s,a)}\bigl[\bigl(Q_\theta(s,a)-\mathcal{T}^*Q_\theta(s,a)\bigr)^2\bigr].
\end{equation}
\begin{theorem}[Residual decrease under order alignment]\label{thm:bellman}
Assume i.i.d.\ (or mixing) sampling and stable learning rates. For a batch DAG \(D\) and the isotonic surrogate solution \(\hat V^\star\),
one step of training with \(\lambda_{\mathrm{ord}}>0\) satisfies
\begin{equation}
\mathcal{E}_{\mathrm{Bell}}(\theta^+)
\;\le\;
\mathcal{E}_{\mathrm{Bell}}(\theta)
-\;C_1\,\mathrm{Viol}(D,\hat V^\star)
+\;C_2\,\|\hat V^\star - V_\theta\|_2^2,
\end{equation}
for constants \(C_1,C_2>0\), where \(\mathrm{Viol}(D,\hat V^\star)\) is the average hinge-squared violation on edges of \(D\).
\end{theorem}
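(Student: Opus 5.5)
We read the statement as a one-step descent inequality and prove it with a smoothness (descent-lemma) argument applied to the composite objective. Only its \(\mathcal{L}_{\mathrm{TD}}\) and \(\lambda_{\mathrm{ord}}\mathcal{L}_{\mathrm{ord}}\) parts matter, since the symmetry and DAG terms can be absorbed into the constants or switched off for the step. Throughout, hold the target network \(\bar\theta\), the batch DAG \(D\), and the surrogate \(\hat V^\star\) fixed, as the statement does. The update is \(\theta^+=\theta-\eta\,(g_{\mathrm{TD}}+\lambda_{\mathrm{ord}}g_{\mathrm{ord}})\). Under the assumptions of stable learning rates and i.i.d./mixing sampling, we take \(\eta\) small, \(\mathcal{E}_{\mathrm{Bell}}\) \(L\)-smooth in \(\theta\) (bounded Jacobians of \(Q_\theta\)), and minibatch gradients unbiased up to a mixing bias absorbed into \(C_2\).

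\textbf{Step 1 (smoothness).} Start from
\[
\mathcal{E}_{\mathrm{Bell}}(\theta^+)\le \mathcal{E}_{\mathrm{Bell}}(\theta)-\eta\langle\nabla\mathcal{E}_{\mathrm{Bell}},g\rangle+\tfrac{L\eta^2}{2}\|g\|^2 .
\]
The TD part of \(g\) is a semi-gradient. Under the standard stable-rate condition it is positively correlated with \(\nabla\mathcal{E}_{\mathrm{Bell}}\), so its contribution is nonpositive and can be dropped.

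\textbf{Step 2 (order gradient).} Differentiate \(\mathcal{L}_{\mathrm{ord}}\) at \(\hat V=\hat V^\star\). Since \(\hat V^\star\) is treated as a target, the gradient with respect to \(\theta\) is \(-\tfrac{2}{B}\sum_i(\hat V^\star(i)-V_\theta(i))\nabla_\theta V_\theta(i)\). The first-order stationarity conditions of \(\mathcal{L}_{\mathrm{iso}}\), together with Theorem~\ref{thm:penalty}, let us write \(\hat V^\star-V_\theta\) as a combination of \(\mu\) times the hinge residuals \([\delta+\hat V^\star(v)-\hat V^\star(u)]_+\) along edges of \(D\). Substituting this turns the inner product into a sum over edges of hinge terms, weighted by how \(\nabla\mathcal{E}_{\mathrm{Bell}}\) aligns with the edge directions.

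\textbf{Step 3 (link to the Bellman residual).} The key structural claim is that correcting an order violation reduces the Bellman residual on average. Edges of \(D\) come from TD evidence with \(\Delta_i>0\) confidence-weighted, and by Theorem~\ref{thm:cyclefree} the surrogate respects the edge order. Hence moving \(V_\theta\) toward \(\hat V^\star\) moves it in the direction of the targets \(y_i\), so the Bellman error \(Q_\theta-\mathcal{T}^*Q_\theta\) shrinks along that direction. Quantitatively, we lower-bound the inner product by \(c\,\eta\lambda_{\mathrm{ord}}\mathrm{Viol}(D,\hat V^\star)\) using a Polyak--Łojasiewicz/alignment-type assumption on the edge set. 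The part that fails to align, and the second-order term \(\tfrac{L\eta^2}{2}\|g_{\mathrm{ord}}\|^2\le C\|\hat V^\star-V_\theta\|^2\), are collected into \(C_2\|\hat V^\star-V_\theta\|_2^2\). The \(\gamma\)-contraction of \(\mathcal{T}^*\) in sup-norm controls how the target shift feeds back, adding a factor \((1+\gamma)^2\) to \(C_2\).

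\textbf{Main obstacle.} Step 3 is the hardest part, because nothing in general forces order alignment to reduce the Bellman residual. I would first try to make it rigorous by assuming the DAG edges are consistent with \(Q^*\) (the second half of Theorem~\ref{thm:cyclefree}) and that \(Q_\theta\) lies in a neighborhood where the Bellman error is locally monotone in order violations. Any residual misalignment then goes into \(C_2\), which is exactly why the bound carries the \(+C_2\|\hat V^\star-V_\theta\|_2^2\) term. The constants are \(C_1=c\,\eta\lambda_{\mathrm{ord}}\mu\) and \(C_2=O(\eta\lambda_{\mathrm{ord}}(1+\gamma)^2+L\eta^2\lambda_{\mathrm{ord}}^2)\).
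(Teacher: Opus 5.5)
Your skeleton is the paper's. You use the descent lemma, then the stationarity identity $\hat V^\star-V_\theta=-c_\mu A^\top\zeta$ with $c_\mu=\mu B/|E(D)|$ and $\zeta$ the vector of edge hinges, then an alignment hypothesis that turns the first-order term into $-\kappa\|\zeta\|_2^2$ plus a correction quadratic in $A^\top\zeta$. The paper runs the same argument in value space. It applies the descent lemma to the residual viewed as a function $\mathcal{J}(V)$, with displacement $\hat V^\star-V_\theta$, and simply identifies $\mathcal{J}(\hat V^\star)$ with $\mathcal{E}_{\mathrm{Bell}}(\theta^+)$, so the TD part of the update never enters.

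The genuine gap is your Step~1. You drop the TD part on the grounds that the semi-gradient is positively correlated with $\nabla_\theta\mathcal{E}_{\mathrm{Bell}}$ ``under stable rates.'' That claim is false.
\begin{itemize}
\item The sign of $-\eta\langle\nabla_\theta\mathcal{E}_{\mathrm{Bell}},g_{\mathrm{TD}}\rangle$ does not depend on $\eta$, so small step sizes cannot fix it.
\item Semi-gradient Q-learning with target-network bootstrapping, a max, and off-policy sampling is not a descent method for the Bellman residual. Baird-type counterexamples, the TD divergence the paper's introduction cites, show this.
\item You can see the failure directly. Take $\lambda_{\mathrm{rank}}=0$, as in your Step~2. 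If no edge of $D$ is violated, then $\hat V^\star=V_\theta$, $\zeta=0$ and $g_{\mathrm{ord}}=0$. Your inequality then collapses to $\mathcal{E}_{\mathrm{Bell}}(\theta-\eta g_{\mathrm{TD}})\le\mathcal{E}_{\mathrm{Bell}}(\theta)$, which nothing in the hypotheses provides.
\item The second-order pieces $\tfrac{L\eta^2}{2}\|g_{\mathrm{TD}}\|^2$ and $L\eta^2\lambda_{\mathrm{ord}}\langle g_{\mathrm{TD}},g_{\mathrm{ord}}\rangle$ are never bounded. Neither $\mathrm{Viol}(D,\hat V^\star)$ nor $\|\hat V^\star-V_\theta\|_2^2$ controls them.
\item ``Absorbing'' the symmetry gradient into the constants fails for the same reason, since it also contributes an uncontrolled first-order term.
\end{itemize}

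The fix is to isolate the order-alignment move, as the paper effectively does. Analyze $\theta^+=\theta-\eta\lambda_{\mathrm{ord}}g_{\mathrm{ord}}$, or add an explicit hypothesis that the remaining parts of the step do not increase $\mathcal{E}_{\mathrm{Bell}}$.

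With that change, your parameter-space route is a legitimate variant of the paper's. Since $g_{\mathrm{ord}}=\tfrac{2\mu}{|E(D)|}J_\theta^\top A^\top\zeta$, where $J_\theta$ is the batch Jacobian of $V_\theta$, the alignment hypothesis must read $\langle J_\theta\nabla_\theta\mathcal{E}_{\mathrm{Bell}},A^\top\zeta\rangle\ge\kappa\|\zeta\|_2^2-M\|A^\top\zeta\|_2^2$. That is a condition in the geometry of $J_\theta J_\theta^\top$, not directly in value space. In exchange, $\theta^+$ really is a gradient step, and you get $C_1=2\kappa\eta\lambda_{\mathrm{ord}}\mu$ explicitly. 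The paper's version instead rests on the unproved identification of $\mathcal{J}(\hat V^\star)$ with $\mathcal{E}_{\mathrm{Bell}}(\theta^+)$, and its constants carry no step size or $\lambda_{\mathrm{ord}}$.

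State the alignment inequality openly as an assumption, as the paper does. Neither Theorem~\ref{thm:cyclefree} nor the heuristic that moving toward $\hat V^\star$ moves toward the targets $y_i$ establishes it. $\hat V^\star$ is a projection of $V_\theta$ onto the monotone cone and need not be closer to the $y_i$. Likewise, $L$-smoothness of $\mathcal{E}_{\mathrm{Bell}}$ in $\theta$ does not follow from bounded Jacobians, and the max operators make it nonsmooth in general. State it as a hypothesis too.
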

This theorem formalizes the role of \(\mathcal{L}_{\mathrm{ord}}\): it reduces Bellman residual by directly penalizing the TD-supported order violations, while the fidelity term \(\|\hat V^\star-V_\theta\|_2^2\) prevents over-correction.

\begin{theorem}[Stability improvement]\label{thm:stability}
Let \(\{V_{\theta_t}\}\) be the value iterates under the update in Algorithm~\ref{alg:learned-sym}. After adding the order regularizer
(\(\lambda_{\mathrm{ord}}>0\)), the within-window variance of \(\{V_{\theta_t}\}\) admits an upper bound that decreases with edge coverage of \(D\) and increases monotonically with smaller isotonic violations.
\end{theorem}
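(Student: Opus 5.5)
We read Theorem~\ref{thm:stability} as a statement about a linearized, noisy stochastic-approximation recursion. Within a window of length $L$ we take the update to be
\begin{equation*}
V_{t+1}=V_t-\eta\bigl(\nabla\mathcal{L}_{\mathrm{TD}}(V_t)+\xi_t\bigr)-\eta\lambda_{\mathrm{ord}}\nabla\mathcal{L}_{\mathrm{ord}}(V_t),
\end{equation*}
where $\xi_t$ is the martingale-difference TD noise with $\mathbb{E}\|\xi_t\|^2\le\sigma^2$. The TD part is treated as a $\gamma$-contraction around its fixed point, which is standard under the i.i.d./mixing and step-size assumptions of Theorem~\ref{thm:bellman}. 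The plan is to bound the window variance $\frac1L\sum_t\|V_t-\bar V\|^2$ by the stationary second moment of this recursion. We then show that the order term adds curvature exactly along the directions spanned by the edges of $D$, and that this curvature is what shrinks the variance.

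\textbf{Step 1: curvature from the order term.} Near the isotonic solution $\hat V^\star$, the squared hinge $[\delta+V(v)-V(u)]_+^2$ is active only on violated edges. Its Hessian there is $\frac{2\mu}{|E(D)|}L_{D_{\mathrm{act}}}$, where $L_{D_{\mathrm{act}}}$ is the graph Laplacian of the active edges. The fidelity term in $\mathcal{L}_{\mathrm{iso}}$ adds $\frac{2}{B}I$. By Theorem~\ref{thm:penalty} and Theorem~\ref{thm:cyclefree}, $\hat V^\star$ is well defined and cycle-free on $D$, so the linearization is legitimate. The linearized drift matrix then becomes $A=(1-\gamma)I+\lambda_{\mathrm{ord}}H_D$ with $H_D\succeq c\,L_D$. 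Its smallest eigenvalue on the subspace spanned by the edge-difference vectors $e_u-e_v$ is at least $(1-\gamma)+\lambda_{\mathrm{ord}}c\,\lambda_2(L_D)$, and $\lambda_2(L_D)$ is monotone in edge coverage.

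\textbf{Step 2: variance bound.} For the linear recursion, the stationary covariance solves the discrete Lyapunov equation $\Sigma=(I-\eta A)\Sigma(I-\eta A)^\top+\eta^2\sigma^2 I$. This gives $\operatorname{tr}\Sigma\le \eta\sigma^2\sum_j 1/(2\lambda_j(A)-\eta\lambda_j(A)^2)$. Splitting the sum into the edge-spanned subspace and its complement yields an upper bound that decreases in $\lambda_2(L_D)$, hence in coverage. We add the transient term $\|V_0-\bar V\|^2(1-\eta\lambda_{\min})^{2L}/L$ for the finite window.

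\textbf{Step 3: dependence on violations.} The residual isotonic violation $\mathrm{Viol}(D,\hat V^\star)$ enters as a bias: violations that are not cancelled by the hinge keep the iterate oscillating between the TD and order attractors. We bound the deviation of the mean drift by $C\,\mathrm{Viol}(D,\hat V^\star)$, using Theorem~\ref{thm:bellman}'s inequality to control how far a single step moves. This contributes an additive term to the bound that grows monotonically with the violation level, so smaller violations give a smaller bound. This matches our reading of the statement's phrasing.

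\textbf{Main obstacle.} The hardest step is Step 1. The hinge is nonsmooth, so the active set can change along the trajectory and the Hessian argument fails at kinks. We would first assume the active set is constant within the window, which holds once iterates lie in a neighbourhood of $\hat V^\star$ by a strict-complementarity-type assumption. Failing that, we would use the generalized Hessian of the squared hinge, which is piecewise constant and positive semidefinite, and take the minimum over the finitely many active sets. This still gives a coverage-dependent lower bound on curvature, because every active set's Laplacian dominates the Laplacian of the edges that remain tight.
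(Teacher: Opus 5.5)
\textbf{Step 1 places curvature where there is none.} The squared hinge $[\delta+V(v)-V(u)]_+^2$ has zero Hessian on every strictly satisfied edge. The order contribution is therefore $\tfrac{2\mu}{|E(D)|}L_{D_{\mathrm{act}}}\preceq\tfrac{2\mu}{|E(D)|}L_D$, so $H_D\succeq c\,L_D$ runs the wrong way for every inactive edge. Your fallback of minimizing over active sets includes the empty set. That is exactly the regime the isotonic alignment pushes toward, and there it gives no coverage-dependent gain at all.

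The fidelity term does not rescue this. Because $\hat V^\star$ is recomputed from $V_{\theta_t}$ at every step, the net curvature of $\min_{\hat V}\mathcal{L}_{\mathrm{iso}}$ in $V_\theta$ is the Moreau-envelope Hessian $\tfrac{2}{B}c_\mu L_{D_{\mathrm{act}}}(I+c_\mu L_{D_{\mathrm{act}}})^{-1}$, with $c_\mu=\mu B/|E(D)|$. This also vanishes off the active-edge subspace. Adding $\tfrac{2}{B}I$ separately manufactures an isotropic, coverage-independent improvement that is not there. The paper makes the needed restriction explicit: its extra curvature is $\nabla^2G_t=\tfrac{2}{|E|}A^\top D_tA$, with $D_t$ the indicator of activated edges, and ``coverage'' there means the activation ratio. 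You need the same restriction, together with your constant-active-set assumption.

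Two smaller repairs are needed. First, $\lambda_2(L_D)=0$ whenever the batch DAG is disconnected, which is the typical case, and the smallest nonzero Laplacian eigenvalue is not monotone under adding edges. Monotonicity of your trace formula should instead come from $L_{D'}\succeq L_D$ for $D\subseteq D'$ plus Weyl's inequalities. Second, that monotonicity holds only while $\eta\lambda_{\max}(A)\le1$, since $(2\lambda-\eta\lambda^2)^{-1}$ increases for $\lambda>1/\eta$.

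Apart from these points, your per-eigendirection Lyapunov split is a legitimate alternative to the paper's single contraction factor $\alpha(\lambda_{\mathrm{iso}},\mathrm{cov})$ and scalar recursion $\mathbb{E}\|e_{t+1}\|^2\le\alpha^2\mathbb{E}\|e_t\|^2+\mathrm{tr}(\Sigma_\xi)$. It correctly exhibits that the Laplacian's null space gains nothing. It does need a symmetric drift, which is why you had to idealize TD as $(1-\gamma)I$; the paper's norm-based route does not need that.

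\textbf{Step 3 does not produce a variance term, and it points the wrong way.} In the linear recursion you analyze, with a fixed active set, a constant offset in the drift shifts the stationary mean. It leaves the fluctuation recursion $e_{t+1}=(I-\eta A)e_t-\eta\xi_t$ unchanged, and hence the window variance too. ``Oscillation between attractors'' would require active-set switching, which you assumed away. Theorem~\ref{thm:bellman} bounds the change in Bellman residual, not the length of a step or an oscillation amplitude, so it cannot supply the missing estimate.

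More fundamentally, under your own Step 1 the violated edges are the only source of extra curvature. Shrinking violations deactivates edges and weakens contraction, so your model yields a bound that grows as violations shrink, the opposite of what Step 3 asserts. The paper has no separate violation term. Violations enter only through which edges are active in $\alpha(\lambda_{\mathrm{iso}},\mathrm{cov})$, and the steady-state bound $\mathrm{tr}(\Sigma_\xi)/(1-\alpha^2)$ shrinks as more edges are active. This is also the sense in which the statement's ``increases monotonically with smaller isotonic violations'' reads literally. Drop Step 3 and express the dependence on violations through the active set.
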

This result captures the main message needed later: the order regularizer damps oscillatory behavior by discouraging inconsistent local comparisons, yielding smoother value dynamics.

\begin{theorem}[Bias--variance trade-off]\label{thm:bv}
Let \(\hat V\) denote the order-aligned values produced by the logic-order branch. Then the test error decomposes as
$\mathbb{E}\bigl[(\hat V-V^*)^2\bigr]
=\mathrm{Var}[\hat V]+\bigl(\mathbb{E}[\hat V]-V^*\bigr)^2,$
where \(\mathcal{L}_{\mathrm{sym}}\) and \(\mathcal{L}_{\mathrm{ord}}\) reduce \(\mathrm{Var}[\hat V]\) (by sample sharing and shape constraints), and the induced bias is controlled by the mismatch between the true environment structure and the enforced symmetry/order.
\end{theorem}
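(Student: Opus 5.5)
The plan has three parts. First, the decomposition is the standard bias--variance identity. Second, the variance-reduction claim is derived separately for each regularizer, from Theorem~\ref{thm:amp} and from the non-expansiveness of the isotonic projection. Third, the bias is bounded by the size of the structural mismatch.

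For fixed \((s,a)\), treat \(\hat V\) as a random variable over the data distribution (the minibatches, and hence the learned DAG and parameters). Write \(\hat V - V^* = (\hat V-\mathbb{E}[\hat V]) + (\mathbb{E}[\hat V]-V^*)\) and expand the square. The cross term has expectation zero because \(\mathbb{E}[\hat V]-V^*\) is deterministic. This gives the identity exactly, and averaging over the evaluation distribution extends it to the test error.

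For variance reduction, treat the two regularizers in turn.

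\emph{Symmetry.} Under \(\mathcal{L}_{\mathrm{sym}}\to 0\), Theorem~\ref{thm:eq-consistency} makes \(Q_\theta\) approximately constant on orbits. The estimator at a point in a symmetric region then effectively averages TD targets over the orbit. Theorem~\ref{thm:amp} raises the effective sample size there by a factor of about \(1+(K-1)\rho\). So, up to constants, the variance bound scales by \(1/(1+(K-1)\rho)\).

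\emph{Order.} Let \(\Pi_D\) denote the exact isotonic projection onto \(\mathsf{Mono}(D)\). It is the Euclidean projection onto a closed convex set, so it is \(1\)-Lipschitz. Condition on \(D\) and fix any \(c\in\mathsf{Mono}(D)\), for example the projection of \(\mathbb{E}[V_\theta]\). Then
\[
\mathbb{E}\|\Pi_D V_\theta - \mathbb{E}\Pi_D V_\theta\|^2 \le \mathbb{E}\|\Pi_D V_\theta - c\|^2 \le \mathbb{E}\|V_\theta - c\|^2 .
\]
I would make this sharper by using the fact that projection onto a polyhedral cone pools coordinates within level sets, which is the pool-adjacent-violators structure, so variance drops with block size. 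Theorem~\ref{thm:penalty} transfers the bound to \(\hat V^*\) up to an \(o(1)\) term as \(\mu\to\infty\).

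For the bias, suppose the true structure satisfies \(V^*\in\mathsf{Eq}(\mathcal{G})\cap\mathsf{Mono}(D)\), which holds by \eqref{eq:Qstar-equiv} and by Theorem~\ref{thm:cyclefree} when \(E(D)\) lies in the true order. Then \(\Pi_D V^*=V^*\), and by non-expansiveness
\[
|\mathbb{E}\hat V - V^*| \le |\mathbb{E}V_\theta - V^*| + o(1).
\]
So no extra bias is introduced. In general, write the extra bias as at most \(\mathrm{dist}(V^*,\mathcal{F})\), plus the symmetry mismatch \(\sup_k\|Q^*-\Pi_k^\top Q^*\circ T_k\|\) on the applicable region. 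Both quantities measure the mismatch between the true environment structure and the enforced symmetry and order.

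The main obstacle is the order-variance step. The projection only bounds the variance around a fixed point, and \(D\) is itself random and data-dependent. I would first condition on \(D\) and argue as above. Then I would add the between-\(D\) variance term via the law of total variance, and bound it using the stability of the greedy DAGification (Theorem~\ref{thm:dagify}) when the edge weights are well separated.
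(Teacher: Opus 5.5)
Your identity and symmetry steps follow the paper. The bias step, however, rests on an inequality that is false.

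From $\Pi_D V^*=V^*$, non-expansiveness gives only the pointwise bound $\|\Pi_D V_\theta-V^*\|\le\|V_\theta-V^*\|$. Hence $\|\mathbb{E}\hat V-V^*\|\le\mathbb{E}\|V_\theta-V^*\|$, which bounds the size of the error, not the bias. Because $\Pi_D$ is nonlinear, $\mathbb{E}\,\Pi_D V_\theta\neq\Pi_D\,\mathbb{E}V_\theta$ in general, so you cannot conclude $|\mathbb{E}\hat V-V^*|\le|\mathbb{E}V_\theta-V^*|$.

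The inequality actually fails. Take $B=2$, a single edge $1\to2$, $\delta=0$, and $V^*=(0,0)\in\mathsf{Mono}(D)$. Let $V_\theta=V^*+\xi$ with $\xi_1,\xi_2$ i.i.d.\ $\mathcal{N}(0,1)$, so $V_\theta$ is unbiased. Then $(\Pi_D V_\theta)(1)=\tfrac12(\xi_1+\xi_2)+\tfrac12(\xi_1-\xi_2)_+$, whose mean is $\tfrac{1}{2\sqrt{\pi}}\neq 0$. Isotonic projection creates bias whenever $V^*$ lies on or near a face of the cone (ties, tight margins), even with zero structural mismatch. So both ``no extra bias is introduced'' and your general bound by $\mathrm{dist}(V^*,\mathcal F)$ plus the symmetry mismatch are unjustified.

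What does hold when $V^*\in\mathsf{Mono}(D)$ is that the total error cannot grow: $\mathbb{E}\|\hat V-V^*\|^2\le\mathbb{E}\|V_\theta-V^*\|^2$. The projection trades variance for bias, exactly as in the example. For the mismatch claim, argue as the paper does. Let $\mathcal S$ be the enforced structural set (near-equivariant values in the monotone cone) and $S^\star=\Pi_{\mathcal S}(V^*)$. Then $\|\mathbb{E}\hat V-V^*\|\le\|\mathbb{E}\hat V-S^\star\|+\mathrm{dist}(V^*,\mathcal S)$. Keep the first term as an estimation error rather than asserting it is no larger than the unregularized bias.

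Separately, $V^*\in\mathsf{Mono}(D)$ is an assumption that the edges agree with $V^*$ with gaps at least $\delta$. Theorem~\ref{thm:cyclefree} concerns the projected values and does not supply it.

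The order-variance display also needs a one-line fix. As written it yields $\mathrm{Var}[\hat V]\le\mathbb{E}\|V_\theta-c\|^2=\mathrm{Var}[V_\theta]+\|\mathbb{E}V_\theta-c\|^2$, which is not a reduction. Keep $c=\Pi_D(\mathbb{E}V_\theta)$, but apply non-expansiveness to the pair $(V_\theta,\mathbb{E}V_\theta)$: $\mathrm{Var}[\Pi_D V_\theta]\le\mathbb{E}\|\Pi_D V_\theta-\Pi_D\mathbb{E}V_\theta\|^2\le\mathbb{E}\|V_\theta-\mathbb{E}V_\theta\|^2=\mathrm{Var}[V_\theta]$. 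This is the paper's argument, and membership of $c$ in $\mathsf{Mono}(D)$ plays no role.

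Your concern about the randomness of $D$ is legitimate, and the paper simply treats $D$ as fixed. However, Theorem~\ref{thm:dagify} gives only acyclicity and maximality for a given edge order, not stability of the selected edge set. Bounding the between-$D$ variance term would therefore require an additional assumption.
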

This theorem is used to explain why the regularizers help in the regimes we target: when symmetry/order are approximately correct, the variance reduction dominates the added structural bias.

\begin{theorem}[Identifiability of near-equivariance]\label{thm:ident}
If a true equivariance \((T^*,\Pi^*)\) exists and the representation class is expressive, then with sufficiently strong group-like regularization,
the learned \(\{(W_k,\Pi_k)\}\) recovers \((T^*,\Pi^*)\) up to isomorphism.
\end{theorem}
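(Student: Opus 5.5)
The plan is to make the statement precise by working in the idealized population/zero-loss regime of Theorem~\ref{thm:eq-consistency}. We then show that every zero of the group-regularized symmetry loss is a representation of a finite subgroup generated by $(T^*,\Pi^*)$, and that this representation is determined up to conjugation and relabeling of the index $k$. Concretely, ``expressive representation class'' will mean that there are an encoder $f_\theta$, a head $h_\theta$, and a matrix $W^*$ with $f_\theta(T^* s)=W^* f_\theta(s)$ and $Q_\theta=Q^*$. ``Sufficiently strong group-like regularization'' will mean that $\gamma_{\mathrm{grp}}$ is large relative to the scale of the other terms, so that any approximate minimizer has $\mathcal{R}_{\text{id}},\mathcal{R}_{\text{clo}},\mathcal{R}_{\text{inv}},\mathcal{R}_{\text{ord}}$ below a threshold $\epsilon$. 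Identification ``up to isomorphism'' will mean that there is a group isomorphism between $\langle (W_k,\Pi_k)\rangle$ and $\langle (W^*,\Pi^*)\rangle$ that respects the action on $Q$-values.

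\textbf{Step 1: feasibility.} Using the $\mathcal{G}$-invariance in \eqref{eq:Qstar-equiv}, the tuple $W_1=I$, $W_k=(W^*)^{k-1}$, $\Pi_k=(\Pi^*)^{k-1}$ with $\mathcal{M}$ containing the order of $(T^*,\Pi^*)$ makes every term of $\mathcal{L}_{\mathrm{sym}}$ vanish, so the infimum is $0$.

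\textbf{Step 2: structure of exact minimizers.}
\begin{itemize}
\item $\mathcal{R}_{\text{id}}=0$ forces $W_1=I$, $\Pi_1=I$ and every $W_k$ to be orthogonal.
\item The order penalty $\mathcal{R}_{\text{ord}}=0$ makes each $\Pi_k$ of finite order. A doubly stochastic matrix with $\Pi^m=I$ is a permutation matrix, because the doubly stochastic matrices are closed under products and the only invertible doubly stochastic matrices whose inverse is also doubly stochastic are permutations.
\item $\mathcal{R}_{\text{clo}}=\mathcal{R}_{\text{inv}}=0$ say that the finite set $\{(W_k,\Pi_k)\}$ is closed under products and transposes, which equal inverses by orthogonality. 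So the set is a finite group $H$ acting on $(z,a)$.
\item $\mathcal{L}_{\mathrm{eq}}^{\mathrm{local}}=0$, with $\alpha_{i,k}>0$ on the relevant support, gives $Q_\theta(s,\cdot)=\Pi_k^\top Q_\theta(T_k s,\cdot)$ for all $k$. Thus each element of $H$ is an automorphism of $Q_\theta=Q^*$.
\item The diversity regularizer $\mathcal{R}_{\mathrm{div}}$ excludes the trivial collapse $W_k\equiv I$, so $H$ is nontrivial.
\end{itemize}

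\textbf{Step 3: matching $H$ with the true symmetry.} We invoke Theorem~\ref{thm:eq-consistency} to conclude that $H$ acts by value-preserving maps, hence lies in the automorphism group of the MDP restricted to the represented features. If the true equivariance generates the only nontrivial such finite subgroup compatible with the chosen orders $\mathcal{M}$, which is an identifiability assumption that must be stated explicitly, then $H\cong\langle(T^*,\Pi^*)\rangle$. Matching generators of equal order gives the isomorphism. The remaining freedom is a change of basis in the latent space, $W_k\mapsto SW_kS^{-1}$ with $f\mapsto Sf$, together with an action relabeling. This is exactly the ``up to isomorphism'' ambiguity.

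\textbf{Step 4: from exact to approximate minimizers.} We pass to approximate minimizers by a compactness and continuity argument. The feasible $(W_k,\Pi_k)$ lie in a compact set, namely orthogonal matrices and the Birkhoff polytope, and the penalties are continuous. So as $\epsilon\to0$, the minimizers converge, up to subsequence, to exact zeros, which are characterized in Step 2.

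The main obstacle is Step 3. Uniqueness is not automatic: $Q^*$ may have extra accidental automorphisms, and the learned group might be a proper subgroup or a different subgroup. I would handle this by adding a genericity assumption, namely that the value-preserving finite symmetries of $Q^*$ on the data support are exactly those generated by $(T^*,\Pi^*)$. Under this assumption, claiming recovery up to isomorphism is the strongest conclusion justified.
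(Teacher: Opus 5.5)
Your proposal has the same skeleton as the paper's proof. Compactness lets you pass to limits. The vanishing group penalties make the limiting family a finite group $H$. $\mathcal{L}_{\mathrm{eq}}=0$ together with the TD term makes $Q_\theta=Q^*$ invariant under $H$. Finally $H$ is identified with $\langle(T^*,\Pi^*)\rangle$.

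The genuine difference is that last step. The paper's isomorphism--relabeling--conjugacy lemma simply asserts that the limiting representation is equivalent to the true one. It gets there by inferring, without justification, that a group preserving $Q^*$ must commute with the dynamics. Your Step~3 instead states an identifiability hypothesis explicitly, and you are right that one is unavoidable. Taking $W_k=I$, $\Pi_k=I$ for all $k$ zeroes every group and equivariance penalty, so together with any $\theta$ minimizing $\mathcal{L}_{\mathrm{RL}}$ it globally minimizes the objective used in the paper's proof. That objective omits $\mathcal{R}_{\mathrm{div}}$, a term the paper never defines. Nothing in the stated hypotheses excludes accidental $Q^*$-preserving maps, or latent maps acting only on directions the head ignores.

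Your route also proves a weaker but defensible conclusion: a group isomorphism compatible with the action on $Q$-values. The paper claims an orthogonal conjugacy, which would additionally require the two latent representations to be equivalent. You also supply two checks the paper skips. First, the true cyclic group is itself a zero of the penalties (Step~1). Second, a finite-order doubly stochastic matrix is a permutation, which replaces the paper's appeal to $\mathcal{R}_{\mathrm{perm}}$ and inference-time rounding.

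A few repairs are needed.
\begin{itemize}
\item Because $\mathcal{R}_{\text{ord}}$ sums over \emph{all} $m\in\mathcal{M}$, Step~1 needs every element of $\mathcal{M}$ to be a multiple of the order of $(T^*,\Pi^*)$. It is not enough for $\mathcal{M}$ merely to contain that order.
\item $Q_\theta=Q^*$ for the \emph{learned} $\theta$ requires the TD loss to be among the terms driven to zero. $\mathcal{L}_{\mathrm{sym}}$ alone does not force it.
\item The sentence placing $H$ in the MDP's automorphism group repeats the paper's leap, since a $Q^*$-preserving map need not preserve $r$ and $P$. You can drop it, because your hypothesis is phrased for $Q^*$-preserving maps.
\item The genericity assumption in your closing paragraph only gives $H\subseteq\langle(T^*,\Pi^*)\rangle$. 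A proper nontrivial subgroup, such as the half-turn inside a quarter-turn symmetry, is not excluded: $W^m=I$ also holds for elements whose order divides $m$, so $\mathcal{R}_{\text{ord}}$ cannot rule it out. For the same reason, the Step~3 form of your hypothesis can only hold when $\langle(T^*,\Pi^*)\rangle$ has prime order. In general you need a diversity term that forces at least $|\langle(T^*,\Pi^*)\rangle|$ distinct elements.
\end{itemize}
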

This ensures the symmetry module is not merely a training trick: under standard conditions it identifies interpretable transformations consistent with the dynamics.

We finally ask whether the loss-based route recovers the correct symmetry and a cycle-free subset of the true partial order as the sample size grows.

\begin{theorem}[Joint structural consistency]\label{thm:joint-consistency}
Under \(\mathcal{G}\)-invariance, expressive \(Q_\theta\), and a margin/noise condition ensuring the signs of TD nudges are learnable,
there exists a complexity term \(\mathfrak{R}(N)\) such that with \(N\) samples,
both the equivariance residual and the average order-violation on the learned DAG satisfy
$
\text{(equivariance residual)}\;+\;\text{(order violation)}
\;=\;O\!\left(\sqrt{\frac{\mathfrak{R}(N)}{N}}\right)
\quad\text{with probability }1-o(1).
$
\end{theorem}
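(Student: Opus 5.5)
The argument splits the left-hand side into its two pieces, bounds each by a uniform-convergence argument over its own loss class, and then combines them with a union bound. Let $\mathcal{H}_{\mathrm{eq}}$ be the class of functions $(s)\mapsto \|Q_\theta(s,\cdot)-\Pi_k^\top Q_\theta(T_k s,\cdot)\|_2^2$ indexed by $(\theta,\{W_k,\Pi_k\})$. Let $\mathcal{H}_{\mathrm{ord}}$ be the class of hinge-squared edge losses $[\delta+V_\theta(v)-V_\theta(u)]_+^2$, indexed by $\theta$ and by the pair-selection rule that produces candidate edges. Define $\mathfrak{R}(N)$ as (a constant multiple of) the squared Rademacher complexity of $\mathcal{H}_{\mathrm{eq}}\cup\mathcal{H}_{\mathrm{ord}}$ times $N$, together with a $\log(1/\eta)$ confidence term. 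The theorem only claims the existence of such a term, so this choice is legitimate as long as it is $o(N)$ for the classes considered. Assume bounded rewards, so that $Q_\theta$ is bounded by $R_{\max}/(1-\gamma)$ and all losses are bounded and Lipschitz in the network outputs.

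\emph{Equivariance residual.} Under $\mathcal{G}$-invariance, \eqref{eq:Qstar-equiv} says $Q^*$ lies in $\mathsf{Eq}(\mathcal{G})$. By expressivity, together with the representability assumption of Theorem~\ref{thm:eq-consistency}, there is therefore a parameter configuration with zero population $\mathcal{L}_{\mathrm{eq}}^{\mathrm{local}}$, zero group-like penalties, and TD loss equal to the irreducible noise level. The population minimizer of $\mathcal{L}_{\mathrm{total}}$ thus attains zero equivariance residual. The standard symmetrization bound gives a uniform deviation between empirical and population losses of $O(\sqrt{\mathfrak{R}(N)/N})$ with probability $1-\eta$. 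An approximate empirical minimizer therefore has population equivariance residual of the same order. Theorem~\ref{thm:ident} guarantees that this is residual with respect to the true transformations up to isomorphism, rather than with respect to a degenerate solution. The normalization by $\sum_i\alpha_{i,k}$ requires the applicability mass to be bounded below with high probability, which a Hoeffding bound on $\rho$ provides.

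\emph{Order violation.} This part has two sub-steps.
\begin{enumerate}
\item The margin/noise condition says that for candidate pairs the expected nudge $\mathbb{E}[\Delta]$ has magnitude at least $\kappa>0$, and that confidence weights exceed the percentile cutoff only when the sign is correct up to a Tsybakov-type noise level. A concentration bound on the empirical $\Delta_i$, taken uniformly over the finite batch pairs, then shows that every retained edge agrees with the true order $\preceq^*$ with probability $1-o(1)$.
\item By Theorem~\ref{thm:dagify}, $D$ is acyclic, and since its edges lie in $\preceq^*$ there is no conflict. By Theorem~\ref{thm:cyclefree}, some value vector satisfies all the constraints; $Q^*$ itself does, up to the margin $\delta\le\kappa$. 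The population order-violation of the minimizer is therefore zero, and the same uniform-convergence bound over $\mathcal{H}_{\mathrm{ord}}$ transfers this to $O(\sqrt{\mathfrak{R}(N)/N})$.
\end{enumerate}
Theorem~\ref{thm:penalty} handles the gap between the penalized $\hat V^*$ and the exact projection: taking $\mu$ growing with $N$ makes that gap $o(N^{-1/2})$. A union bound over the two events, with $\eta=\eta_N\to0$, gives the stated probability $1-o(1)$.

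The main obstacle is that the DAG $D$ is itself data-dependent. The edges are chosen using $\Delta_i$, which depends on $\theta$ and on the same sample, so the order-violation loss is not a fixed function class evaluated on i.i.d.\ data. I would first try sample splitting: build $D$ from one half of the batch or replay data and evaluate and train the violation on the other half. If that is not acceptable, I would absorb the selection into $\mathcal{H}_{\mathrm{ord}}$ by indexing over all $k$NN/top-$M$ selection rules. There are polynomially many of these per batch, which adds only a $\log$ factor to $\mathfrak{R}(N)$. A secondary difficulty is the non-i.i.d.\ replay data, which I would handle with a mixing-time blocking argument that inflates $\mathfrak{R}(N)$ by the mixing time.
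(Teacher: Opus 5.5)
Your proposal follows essentially the same route as the paper's proof. Both use uniform convergence (Rademacher bounds) over the restricted equivariance and isotonic loss classes, penalty consistency (Theorem~\ref{thm:penalty}) with $\mu\to\infty$ for the isotonic term, Massart/margin sign recovery with greedy DAGification (Theorem~\ref{thm:dagify}) for the edge set, the identifiability result (Theorem~\ref{thm:ident}) to pass from the learned $(W_k,\Pi_k)$ to the true symmetry, and a union bound. Your explicit treatment of the data-dependent DAG (sample splitting, or a union over selection rules) and of non-i.i.d.\ replay goes slightly beyond the paper, which simply absorbs both issues into $\mathfrak{R}(N)$ and its mixing assumption.
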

This theorem provides the convergence-rate message needed for the paper’s logical flow; the full statement (assumptions and proof sketch) is deferred to Appendix~\ref{app:joint-consistency}.

\section{A Lossless Implementation: Manifold Enforcement}
\label{sec:lossless-geo}

We now propose a \emph{hard} (lossless) enforcement of geometric coherence.
Instead of encouraging \(Q_\theta\) to stay near \(\mathcal{F}\) via auxiliary losses, we \emph{constrain} each update to lie on a batchwise constraint set that encodes (i) symmetry equalities and (ii) order inequalities. The key idea is to view this constraint set as a (batchwise) \emph{manifold-with-corners} in value space: we update along its tangent directions to fit TD targets, and contract along normal directions to remove constraint violations. This yields a single geometric step per iteration without introducing additional regularization losses.

\subsection{Constraint Set and Feasible Domain Structure}
\label{sec:geo-constraint-set}

We next define the batchwise constraint manifold used for hard enforcement.

\paragraph{Batch variables.}
Given a minibatch \(\mathcal{B}=\{s_i\}_{i=1}^B\), let
\(V\in\mathbb{R}^B\) denote the state-value vector \(V(i)=V_\theta(s_i)\),
and let \(Q_\theta(s_i,\cdot)\in\mathbb{R}^{|\mathcal{A}|}\) be the action--value vector.

\paragraph{Hard order constraints (from the DAG).}
Let \(D=( [B], \mathcal{E})\) be the acyclic preference graph produced on this batch.
For each edge \(e=(u\!\to\!v)\in\mathcal{E}\), we enforce the monotonic inequality
\begin{equation}
\label{eq:mono-ineq}
g_e(V) \;:=\; \delta + V(v)-V(u) \;\le\; 0 ,
\qquad \delta\ge 0 .
\end{equation}
To express inequalities in a projection-friendly form, introduce nonnegative slack variables
\(\mu\in\mathbb{R}^{|\mathcal{E}|}_{\ge 0}\) and write \(g(V)+\mu=0\), where \(g(V)\in\mathbb{R}^{|\mathcal{E}|}\)
stacks \(\{g_e(V)\}_{e\in\mathcal{E}}\).

\paragraph{Hard symmetry constraints (from learned transforms).}
For each learned transform \(g=(T_g,\Pi_g)\), we enforce batchwise equivariance as
\begin{equation}
\label{eq:eq-equi}
h_{g,i}(Q_\theta)\;:=\;Q_\theta(s_i,\cdot)\;-\;\Pi_g^\top Q_\theta(T_g s_i,\cdot)\;=\;0,
\quad i\in[B],
\end{equation}
and we stack all constraints as \(h(Q_\theta)=0\).
(Implementation details for projecting \((T_g,\Pi_g)\) onto a valid near-group parameterization are deferred to
Appendix~\ref{app:group-projection}.)

\paragraph{Feasible domain as a manifold-with-corners.}
Collect the constraints into
\begin{equation}
\begin{aligned}
&c(V,\mu;\theta)
=
\begin{bmatrix}
g(V)+\mu\\[2pt]
h(Q_\theta)
\end{bmatrix},
\\
&\mathcal{M}
=
\Bigl\{(V,\mu):c(V,\mu;\theta)=0,\ \mu\ge 0\Bigr\}. 
\end{aligned}
\end{equation}
We will compute projected updates using the (sparse) Jacobian \(J_c=\partial c/\partial (V,\mu)\),
so that tangent directions satisfy \(J_c\,\Delta(V,\mu)=0\) while normal directions reduce \(\|c\|\).

\begin{theorem}[Feasibility]\label{thm:feasible-nonempty}
If \(\mathcal{E}\) is obtained by greedy add-unless-cycle and \(\delta\ge 0\), then the order constraints admit a feasible solution
(i.e., \(\exists\,V\) such that \(g(V)\le 0\)).
If, in addition, the function class and learned transforms can realize \eqref{eq:eq-equi} on the batch, then
\(\mathcal{M}\neq\emptyset\).
\end{theorem}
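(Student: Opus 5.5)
The first claim follows from the acyclicity of $D$, which Theorem~\ref{thm:dagify} guarantees for greedy add-unless-cycle, together with a topological-order construction. Because $D=([B],\mathcal{E})$ is a DAG, it has a topological ordering $\pi:[B]\to\{1,\dots,B\}$ with $\pi(u)<\pi(v)$ for every edge $(u\!\to\!v)\in\mathcal{E}$. I will use the length of the longest directed path rather than $\pi$ itself, because it gives the tightest explicit witness. Let $\ell(i)$ be the number of edges on the longest directed path in $D$ starting at $i$. This quantity is finite and at most $B-1$, since $D$ is acyclic and finite. Every edge $(u\!\to\!v)$ satisfies $\ell(u)\ge \ell(v)+1$, because any path out of $v$ can be prefixed by the edge $u\to v$, and the result is still a path since there are no cycles. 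Set $V(i):=\delta\,\ell(i)+c$ for an arbitrary constant $c$. Then for every edge
\[
g_e(V)=\delta+V(v)-V(u)=\delta\bigl(1+\ell(v)-\ell(u)\bigr)\le 0 ,
\]
using $\delta\ge 0$. Taking slacks $\mu_e:=-g_e(V)\ge 0$ then gives $g(V)+\mu=0$ with $\mu\ge 0$. This proves the first claim. When $\delta=0$ any constant vector already works, so the construction covers both cases uniformly.

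For the second claim I would combine the order witness with the symmetry equalities. The hypothesis that the function class and learned transforms can realize \eqref{eq:eq-equi} on the batch should be read as saying that the equivariant set $\{\theta: h(Q_\theta)=0\}$ is rich enough to also produce the required $V$. The main obstacle is compatibility. Equivariance forces $V_\theta(s_i)=V_\theta(s_j)$ whenever $s_j=T_g s_i$, because a permutation $\Pi_g$ preserves the max. An order edge between two such orbit-equivalent batch elements with $\delta>0$ would then make the system infeasible.

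I would resolve this by working on the quotient, as in Section~\ref{sec:Preliminaries}. First I would identify batch indices that are related by the learned transforms, forming classes in $\bar X$. Next I would argue that the DAG on the quotient stays acyclic, since edges are built on symmetry-quotiented elements and cycles are removed there. I would then apply the longest-path construction on the quotient DAG and assign each class a single value $V([i])=\delta\,\ell([i])+c$. The realizability hypothesis supplies parameters $\theta$ with $h(Q_\theta)=0$ and $V_\theta(s_i)=V([i])$. Finally, I would set $\mu=-g(V)\ge 0$, which gives $(V,\mu)\in\mathcal{M}$, so $\mathcal{M}\neq\emptyset$.

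The step I expect to be delicate is justifying that no edge joins two equivalent elements, and that quotienting preserves acyclicity. If this cannot be derived from the construction, I would take it as part of the realizability hypothesis, namely that the equivariance and order constraints are jointly realizable on the batch.
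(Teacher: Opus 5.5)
Your first claim is handled exactly as in the paper. You show the greedy output is acyclic, use the longest-path potential $V(i)=\delta\,\ell(i)$ with $\ell(u)\ge \ell(v)+1$ on every edge, and take slacks $\mu=-g(V)\ge 0$. The paper re-proves acyclicity by examining the last-added edge of a putative cycle rather than citing Theorem~\ref{thm:dagify}, but that difference is immaterial.

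The second claim is where the proposal breaks down. You read $\mathcal{M}$ as requiring $V=V_\theta$ on the batch. Under that reading your argument rests on steps that do not follow from the hypotheses and can actually fail.
\begin{itemize}
\item In this section the DAG is $D=([B],\mathcal{E})$, built on raw batch indices from TD scores and $k$-NN in representation space, not on $\bar X$. So nothing excludes an edge $i\to j$ with $s_j=T_g s_i$. With $\delta>0$, such an edge makes the coupled system infeasible.
\item Collapsing orbit classes can turn a DAG into a cyclic graph. For example, edges $a\to b$ and $b'\to a'$ with $a\sim a'$ and $b\sim b'$ become a 2-cycle.
\item The stated hypothesis only provides \emph{some} $\theta$ with $h(Q_\theta)=0$. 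It does not provide one whose values hit a prescribed $V([i])$.
\end{itemize}
Folding these into the hypothesis, as you propose, proves a different theorem.

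The paper avoids all of this because of how $\mathcal{M}$ is defined. In $\mathcal{M}=\{(V,\mu):c(V,\mu;\theta)=0,\ \mu\ge 0\}$ with $c=[\,g(V)+\mu;\ h(Q_\theta)\,]$, the pair $(V,\mu)$ is a free variable and $\theta$ (with the transforms) is a parameter. The equality block therefore does not involve $V$ at all. The paper's proof is then immediate: take the realizing $\theta^\dagger$, so that $h(Q_{\theta^\dagger})=0$; take $V_\star$ from Part I and $\mu_\star=-g(V_\star)\ge 0$; conclude $c(V_\star,\mu_\star;\theta^\dagger)=0$.

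Your compatibility concern is a legitimate criticism of the coupled semantics the hard-enforcement step seems to intend; the paper's proof glosses over it by writing $h(V_\star)=0$. For the statement as formalized, though, what you are missing is simply this decoupling.
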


Theorem~\ref{thm:feasible-nonempty} is the starting point for the hard-enforcement route:
it guarantees that the subsequent manifold projection (tangent update + normal contraction) is well-defined on every batch,
so we can enforce geometric coherence by construction rather than by auxiliary penalties.

\subsection{Manifold Step and Tangent--Normal Decomposition in the Value Layer}
\label{sec:geo-tangent-normal}

We next give a \emph{hard} geometric update that (i) improves TD fit and (ii) enforces the batchwise constraints by construction.
Let \(\mathcal{M}=\{(V,\mu):c(V,\mu;\theta)=0,\ \mu\ge 0\}\) be the feasible domain from
\S\ref{sec:geo-constraint-set}, where \(c(V,\mu;\theta)=\bigl[g(V)+\mu;\ h(Q_\theta)\bigr]\).

Let \(g_V=\nabla_V\mathcal{L}_{\mathrm{TD}}(V)\) and let \(J_c\) be the Jacobian of \(c\) w.r.t.\ \((V,\mu)\).
The orthogonal projector onto the (value-layer) tangent space is
\begin{equation}
P_{\mathrm{tan}} \;=\; I - J_V^{\top}(J_VJ_V^\top)^{-1}J_V,
\end{equation}
where \(J_V\) is the Jacobian of the stacked constraints \([g;h]\) w.r.t.\ \(V\).
We take the steepest TD descent direction restricted to feasible directions:
\begin{equation}
\label{eq:tan-step}
\Delta_{\mathrm{tan}} \;=\; -P_{\mathrm{tan}}\,g_V .
\end{equation}

To contract constraint violations we take a minimum-norm correction
\begin{equation}
\label{eq:nor-step}
\Delta_{\mathrm{nor}} \;=\; -\lambda\,J_c^\dagger\,c(V,\mu;\theta),
\end{equation}
where \(J_c^\dagger\) is the Moore--Penrose pseudoinverse and \(\lambda>0\) is a contraction gain.

Combining \eqref{eq:tan-step}--\eqref{eq:nor-step}, we update
\begin{equation}
\label{eq:update}
V^+ \;=\; V + \eta\bigl(\Delta_{\mathrm{tan}}+\Delta_{\mathrm{nor}}\bigr),\quad
\mu^+ \;=\; \bigl[\mu - \eta_\mu\,(g(V)+\mu)\bigr]_+ .
\end{equation}
In practice we optionally gate the normal term (enable \(\Delta_{\mathrm{nor}}\) only when \(\|c\|\) exceeds a small threshold)
to reduce overhead when iterates are already near-feasible.

To establish that \eqref{eq:update} is both constraint-safe and optimization-effective, we need two properties:
(\textbf{A}) the update should reduce violations; (\textbf{B}) it should still descend on the TD objective, but only along feasible directions.
The next two theorems provide exactly (\textbf{A}) and (\textbf{B}).

\begin{theorem}[Tangent feasibility and normal contraction]
\label{thm:tangent-normal}
For sufficiently small \(\eta,\eta_\mu\) and any \(\lambda>0\),
the tangent step satisfies \(J_V\Delta_{\mathrm{tan}}=0\).
Moreover, letting \(V_c(V,\mu)=\tfrac12\|c(V,\mu;\theta)\|_2^2\), there exists \(\kappa>0\) such that
$
V_c(V^+,\mu^+) \;\le\; V_c(V,\mu)\;-\;\kappa\,\|J_c^\dagger c(V,\mu;\theta)\|_2^2,
\text{whenever }c(V,\mu;\theta)\neq 0.$
\end{theorem}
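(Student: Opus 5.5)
The plan has two parts, one for each claim in Theorem~\ref{thm:tangent-normal}. The tangent claim is pure linear algebra. The contraction claim is a Gauss--Newton descent argument for the merit function $V_c=\tfrac12\|c\|_2^2$.

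\textbf{Tangent feasibility.} Assume $J_V$ has full row rank on the active constraint set, so that $J_VJ_V^\top$ is invertible. (If it is not, replace the inverse by the pseudoinverse; then $P_{\mathrm{tan}}=I-J_V^\dagger J_V$ is still the orthogonal projector onto $\ker J_V$.) Then
\[
J_VP_{\mathrm{tan}}=J_V-J_VJ_V^\top(J_VJ_V^\top)^{-1}J_V=0,
\]
so $J_V\Delta_{\mathrm{tan}}=-J_VP_{\mathrm{tan}}g_V=0$ for every TD gradient $g_V$. This part needs no smallness condition on the step sizes.

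\textbf{Normal contraction.} Write $x=(V,\mu)$. Because $g$ is affine in $V$, the order block of $c$ is affine in $x$. The symmetry block $h(Q_\theta)$ is treated as $C^2$ with a Lipschitz Jacobian on a bounded neighbourhood, with constant $L$. Split the combined step into the $V$-update from \eqref{eq:update} and the $\mu$-update. First consider the unprojected step $\Delta x=\eta(\Delta_{\mathrm{tan}}+\Delta_{\mathrm{nor}},\,-\eta_\mu(g+\mu)/\eta)$. A Taylor expansion gives
\[
V_c(x+\Delta x)\le V_c(x)+c^\top J_c\Delta x+\tfrac{L'}{2}\|\Delta x\|^2 .
\]
The leading terms are handled as follows.
\begin{itemize}
\item The tangent component contributes $c^\top J_c(\Delta_{\mathrm{tan}},0)=c^\top J_V\Delta_{\mathrm{tan}}=0$ by the first part. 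The $\mu$-columns of $J_c$ are the identity on the order block, but the tangent step has no $\mu$-component, so they add nothing here.
\item The normal component contributes $-\eta\lambda\,c^\top J_cJ_c^\dagger c=-\eta\lambda\|P_{\mathrm{range}(J_c)}c\|^2$.
\item The $\mu$-step contributes $-\eta_\mu\|g+\mu\|^2$ on the order block.
\end{itemize}
Next, relate $\|P_{\mathrm{range}}c\|^2$ to $\|J_c^\dagger c\|^2$. Since $J_c^\dagger c=J_c^\dagger P_{\mathrm{range}}c$, we have
\[
\|J_c^\dagger c\|^2\le\sigma_{\min}^{+}(J_c)^{-2}\|P_{\mathrm{range}}c\|^2,
\]
where $\sigma^+_{\min}$ is the smallest nonzero singular value. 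Hence the first-order decrease is at least $\eta\lambda\,\sigma^{+2}_{\min}\|J_c^\dagger c\|^2$. The quadratic remainder is $O(\eta^2)(\|g_V\|^2+\lambda^2\|J_c^\dagger c\|^2)+O(\eta_\mu^2)\|g+\mu\|^2$. Taking $\eta,\eta_\mu$ small absorbs the $\|J_c^\dagger c\|^2$ and $\eta_\mu$ parts, which yields $\kappa=\tfrac12\eta\lambda\sigma^{+2}_{\min}$.

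\textbf{Main obstacles.} Two points are delicate. The first is the positive-part projection $[\cdot]_+$ on $\mu$. I would handle it via nonexpansiveness: for each coordinate, $|(g_e+[\tilde\mu_e]_+)|\le|g_e+\tilde\mu_e|$ whenever $g_e\le 0$. When $g_e>0$ the clipping can increase the residual, so I would argue that clipped coordinates are active only where the slack is already $0$. In that case the $V$-normal step still reduces $g_e$, and in the worst case this is absorbed into the constant. The second is the residual TD term $O(\eta^2\|g_V\|^2)$. It does not vanish as $c\to0$, so strictly one needs either $\eta$ chosen relative to $\|J_c^\dagger c\|$, or the gating described after \eqref{eq:update}. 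I would state the result for $\eta\le\eta_0(x)$, with $\eta_0\propto\|J_c^\dagger c\|^2/\|g_V\|^2$ near feasibility. This is the weakest link, and I expect the intended argument either assumes the tangent step's second-order effect on the (affine) order block is zero or absorbs it into ``sufficiently small $\eta$''. Since $g$ is affine and the $J_V$-nullspace step leaves $h$ unchanged to first order, this term is genuinely second order only through the curvature of $h$.
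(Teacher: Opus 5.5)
Your main line is the paper's proof: $J_V\Delta_{\mathrm{tan}}=0$ because $\Delta_{\mathrm{tan}}\in\ker J_V$. The paper uses an orthonormal basis of $\ker J_V$, which is equivalent to your pseudoinverse fallback. Then comes a descent-lemma expansion of $\tfrac12\|c\|^2$ in which the tangent term vanishes and the normal term is $-\eta\lambda\|J_cJ_c^\dagger c\|^2\le-\eta\lambda\sigma_{\min}^2\|J_c^\dagger c\|^2$. Finally a point-dependent $\eta$ absorbs the $O(\eta^2\|\Delta_{\mathrm{tan}}\|^2)$ remainder. The paper resolves your ``weakest link'' exactly as you propose: it imposes $L\eta^2\|\Delta_{\mathrm{tan}}\|^2\le\tfrac14\eta\lambda\sigma_{\min}^2\|J_c^\dagger c\|^2$ and obtains $\kappa=\tfrac14\eta\lambda\sigma_{\min}^2$. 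Your remark that this remainder enters only through the curvature of $h$ is sharper than the paper's. One correction, though. The identity $c^\top J_c\Delta_{\mathrm{nor}}=-\lambda\|J_cJ_c^\dagger c\|^2$ requires that the $\mu$-block of $\Delta_{\mathrm{nor}}$ actually be applied to $\mu$, as the paper's substep (A) does. Your $\Delta x$ places $\Delta_{\mathrm{nor}}$ only in the $V$-slot. There its contribution is $\eta\,c^\top J_V\Delta^{(V)}_{\mathrm{nor}}$, which has no sign.

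The genuine gap is the clipping $[\cdot]_+$, and your patch does not close it. At a coordinate with $\mu_e=0$, clipping can cancel the $\mu$-block of the normal step and leave only that unsigned $V$-part. The $V$-normal step need not decrease $g_e$. The resulting loss is first order in $\eta$, the same order as the claimed decrease, so it cannot be absorbed into a constant.

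Concretely, take $B=2$, one edge $1\to2$ with $g=\delta+V(2)-V(1)$, one equality $h=V(1)-V(2)$, and the point $V=(2,0)$, $\delta=8$, $\mu=0$.
Then $c=(6,2)$, and $J_c=\begin{bmatrix}-1&1&1\\1&-1&0\end{bmatrix}$ has full row rank. One computes $J_c^\dagger c=(1,-1,8)$ and $c^\top J_V\Delta^{(V)}_{\mathrm{nor}}=8\lambda>0$.
Before clipping the slack is negative, whether or not the $\mu$-block is included, so it is reset to $0$.
Meanwhile $\Delta_{\mathrm{tan}}\parallel(1,1)$ leaves the affine constraints unchanged.
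Hence $c(V^+,\mu^+)=(6+2\eta\lambda,\,2-2\eta\lambda)$ and $\|c(V^+,\mu^+)\|^2=40+16\eta\lambda+8\eta^2\lambda^2>\|c\|^2$ for all $\eta,\eta_\mu,\lambda>0$.

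So the statement needs an extra hypothesis, and the paper's Part~III does not supply one. Its bound on $\|\mu^+-\mu_A\|$ applies nonexpansiveness at $\mu_A$, which presumes $[\mu_A]_+=\mu_A$; this fails once substep (A) has made $\mu_A$ negative. Its next step, $\|g(V_A)+\mu^+\|\le(1-\eta_\mu)\|u\|$, does not follow from the triangle inequality, which only gives $(1+\eta_\mu)\|u\|$. What does hold, for $\eta_\mu\le1$ and \emph{provided} $\mu_A\ge0$, is the coordinatewise bound $|g_e+[\mu_{A,e}-\eta_\mu u_e]_+|\le|u_e|$. You can repair the result in one of two ways:
\begin{itemize}
\item Assume $\mu>0$ componentwise, with $\eta,\eta_\mu$ small relative to $\min_e\mu_e$. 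Then nothing is clipped and your expansion goes through.
\item Freeze the zero slacks in the normal step. The bound is then stated with $J_c$ replaced by the reduced Jacobian that omits those columns.
\end{itemize}
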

Theorem~\ref{thm:tangent-normal} guarantees the \emph{hard-enforcement} mechanism:
tangent motion does not change the constraints to first order, while the normal term strictly contracts violations.

\begin{theorem}[Restricted TD descent and convergence on \(\mathcal{M}\)]
\label{thm:restricted-descent}
Assume that in a neighborhood of \(\mathcal{M}\), the TD objective satisfies a PL condition on the tangent space:
\(\|\!P_{\mathrm{tan}}g_V\!\|_2^2 \ge 2\mu_T\bigl(\mathcal{L}_{\mathrm{TD}}(V)-\mathcal{L}_{\mathrm{TD}}^\star\bigr)\),
and is \(L_T\)-smooth along tangent directions.
Then for sufficiently small \(\eta\) and \(\lambda\), the update \eqref{eq:update} satisfies
\begin{equation}
\mathcal{L}_{\mathrm{TD}}(V^+) \;\le\; \mathcal{L}_{\mathrm{TD}}(V) \;-\;\gamma\,\|P_{\mathrm{tan}}g_V\|_2^2
\end{equation}
for some \(\gamma>0\).
In particular, once \(c(V,\mu;\theta)=0\) (feasibility), \eqref{eq:update} reduces to projected gradient descent
and converges linearly at rate \(1-\mu_T/L_T\).
\end{theorem}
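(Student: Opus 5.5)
The plan is a standard descent-lemma argument for the update \eqref{eq:update}, split into its tangent and normal parts. Write the increment as \(V^+-V=\eta\Delta_{\mathrm{tan}}+\eta\Delta_{\mathrm{nor}}\) and expand \(\mathcal{L}_{\mathrm{TD}}\) to second order. By \(L_T\)-smoothness,
\[
\mathcal{L}_{\mathrm{TD}}(V^+)\le \mathcal{L}_{\mathrm{TD}}(V)+\eta\langle g_V,\Delta_{\mathrm{tan}}+\Delta_{\mathrm{nor}}\rangle+\tfrac{L_T\eta^2}{2}\|\Delta_{\mathrm{tan}}+\Delta_{\mathrm{nor}}\|_2^2 .
\]
This requires smoothness in all directions. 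Tangent-only smoothness is not enough, because the normal correction leaves the tangent space. So I would interpret the hypothesis as full \(L_T\)-smoothness in a neighborhood of \(\mathcal{M}\), or else add a separate constant \(L_N\) for normal directions.

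\textbf{Tangent term.} Since \(P_{\mathrm{tan}}\) is an orthogonal projector (symmetric and idempotent, using that \(J_VJ_V^\top\) is invertible, i.e.\ the constraints have full row rank), we get \(\langle g_V,\Delta_{\mathrm{tan}}\rangle=-\|P_{\mathrm{tan}}g_V\|_2^2\). Together with \(\|\Delta_{\mathrm{tan}}\|=\|P_{\mathrm{tan}}g_V\|\), this yields the clean decrease
\[
-\eta\bigl(1-\tfrac{L_T\eta}{2}\bigr)\|P_{\mathrm{tan}}g_V\|_2^2 .
\]

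\textbf{Normal term (main obstacle).} The normal contribution is \(\eta\langle g_V,\Delta_{\mathrm{nor}}\rangle\) plus cross and quadratic terms. Since \(\Delta_{\mathrm{nor}}\) lies in the range of \(J_c^\top\), i.e.\ the normal space, only the normal component \((I-P_{\mathrm{tan}})g_V\) pairs with it. That pairing has no sign, and its size scales with \(\lambda\|J_c^\dagger c\|\), which need not be controlled by \(\|P_{\mathrm{tan}}g_V\|^2\). I would handle this in two cases.
\begin{itemize}
\item \emph{On \(\mathcal{M}\)} (\(c=0\)): \(\Delta_{\mathrm{nor}}=0\) and the term vanishes outright.
\item \emph{Off \(\mathcal{M}\)}: use Young's inequality to bound the normal contribution by \(\tfrac{\eta}{4}\|P_{\mathrm{tan}}g_V\|^2\) plus \(O(\eta\lambda\|g_V\|\,\|J_c^\dagger c\|)\). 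Then absorb the latter term, by taking \(\lambda\) small, against Theorem~\ref{thm:tangent-normal}, which gives \(\|J_c^\dagger c\|\) geometric decay along the iterates.
\end{itemize}
The stated per-step inequality with no extra term strictly holds on \(\mathcal{M}\). Near \(\mathcal{M}\) it holds up to an additive \(O(\lambda\|c\|)\) term, which the small-\(\lambda\) choice makes negligible. I would state this caveat explicitly and take \(\gamma=\eta(1-L_T\eta)/2\) or similar, with \(\eta<1/L_T\).

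\textbf{Linear rate.} Once \(c=0\), the iteration is \(V^+=V-\eta P_{\mathrm{tan}}g_V\). The \(\mu\)-update keeps \(\mu\) consistent with feasibility, and tangent moves preserve the linearized constraints by Theorem~\ref{thm:tangent-normal}. Taking \(\eta=1/L_T\), the descent bound gives
\[
\mathcal{L}_{\mathrm{TD}}(V^+)\le\mathcal{L}_{\mathrm{TD}}(V)-\tfrac{1}{2L_T}\|P_{\mathrm{tan}}g_V\|^2 .
\]
The tangent PL inequality then gives \(\mathcal{L}_{\mathrm{TD}}(V^+)-\mathcal{L}^\star_{\mathrm{TD}}\le(1-\mu_T/L_T)(\mathcal{L}_{\mathrm{TD}}(V)-\mathcal{L}^\star_{\mathrm{TD}})\), and iterating gives linear convergence.

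The order constraints \(g\) are affine in \(V\), so their tangent moves stay exactly feasible. The symmetry constraints \(h\) are also linear in the value layer for fixed \((T_g,\Pi_g)\), so the same holds for them. Hence exact feasibility persists, and this is what makes "reduces to projected gradient descent" rigorous.
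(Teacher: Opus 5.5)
Your core argument is the paper's. You apply the restricted descent lemma to the tangent increment, use $P_{\mathrm{tan}}^\top=P_{\mathrm{tan}}=P_{\mathrm{tan}}^2$ to get the decrease $(\eta-\tfrac{L_T}{2}\eta^2)\|P_{\mathrm{tan}}g_V\|_2^2$, and combine this with tangent PL at $\eta=1/L_T$ to get the factor $1-\mu_T/L_T$.

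Where you differ is the normal correction, and there your version is the more accurate one. The paper's proof analyzes only $V^+=V-\eta P_{\mathrm{tan}}g_V$. It then asserts that $\Delta_{\mathrm{nor}}$ affects $\mathcal{L}_{\mathrm{TD}}$ only at quadratic order and can be absorbed into $\gamma(\eta)$. As you say, $\eta\langle g_V,\Delta_{\mathrm{nor}}\rangle$ is first order in $\eta$, has no sign, and is not controlled by $\|P_{\mathrm{tan}}g_V\|_2^2$, so your caveat is necessary rather than merely cautious.

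Here is a concrete case. Take two nodes, one edge $1\to 2$ with $\delta=1$, no symmetry constraints, $\mu=0$, $\mathcal{L}_{\mathrm{TD}}(V)=\tfrac12\|V\|_2^2$, and $V=\tfrac{1-\epsilon}{2}(1,-1)$ with $\epsilon\in(0,1)$. Tangent smoothness and tangent PL hold with $L_T=\mu_T=1$ if $\mathcal{L}^\star_{\mathrm{TD}}$ is read as the minimum over the affine tangent slice through $V$, as the paper's proof reads it. Then $P_{\mathrm{tan}}g_V=0$ and $c=\epsilon$, while the $V$-part of $\eta\Delta_{\mathrm{nor}}$ is $\tfrac{\eta\lambda\epsilon}{3}(1,-1)$. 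Hence $\mathcal{L}_{\mathrm{TD}}(V^+)>\mathcal{L}_{\mathrm{TD}}(V)$ for every $\eta,\lambda>0$. The clean inequality therefore fails at points arbitrarily close to $\mathcal{M}$ and can only be guaranteed on $\mathcal{M}$ itself.

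Two of your remarks supply justifications the paper omits for ``reduces to projected gradient descent.'' Full (not merely tangent) smoothness is needed once $\Delta_{\mathrm{nor}}\neq 0$. And affine $g$ together with value-slot $h$ make $c=0$ invariant under tangent steps, with $\mu^+=\mu$.

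Two small points:
\begin{itemize}
\item The Young split against $\tfrac{\eta}{4}\|P_{\mathrm{tan}}g_V\|_2^2$ is unnecessary. The $V$-part of $\Delta_{\mathrm{nor}}$ lies in $\mathrm{Range}(J_V^\top)\perp\mathrm{Null}(J_V)$, so there is no tangent--normal cross term, and the linear term pairs only with $(I-P_{\mathrm{tan}})g_V$. Likewise, the geometric decay of $\|J_c^\dagger c\|$ gives a summable error along the trajectory, not a per-step bound; your final caveat already reflects this.
\item $J_VJ_V^\top$ is singular whenever the DAG's undirected skeleton has a cycle (e.g.\ $u\to v\to w$ together with $u\to w$). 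Take $P_{\mathrm{tan}}=I-J_V^\dagger J_V$ instead of assuming full row rank; nothing else changes.
\end{itemize}
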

Theorem~\ref{thm:restricted-descent} provides the optimization backbone:
hard constraints do not stall learning; they convert TD learning into a principled descent on the constrained geometry.

\subsection{Group-Parameter Closure}
\label{sec:geo-group}
We next keep the symmetry equalities \(h(Q_\theta)=0\) usable by maintaining a \emph{closed} and \emph{well-conditioned}
parameterization \(g\mapsto (T_g,\Pi_g)\).
We use a periodic three-stage map (projection \(\rightarrow\) closure \(\rightarrow\) alignment) on \(\{T_g,\Pi_g\}\);
the full description and proof are deferred to Appendix~\ref{app:group-projection}.

\begin{algorithm}[t]
\caption{Group parameter contraction}
\label{alg:group-closure-summary}
\begin{algorithmic}[1]
\STATE \textbf{Projection:} \(T_g\leftarrow \mathrm{Polar}(T_g)\), \(\Pi_g\leftarrow \mathrm{ProjPerm}(\Pi_g)\).
\STATE \textbf{Closure:} for generators \(g,h\), set \(T_{gh}\leftarrow \mathrm{Polar}(T_gT_h)\), \(\Pi_{gh}\leftarrow \mathrm{ProjPerm}(\Pi_g\Pi_h)\); enforce identity/inverses.
\STATE \textbf{Alignment:} anchor \(T_g\) by Procrustes on \(\{\phi_\theta(s_i)\}\) and refine \(\Pi_g\) by a matching step.
\end{algorithmic}
\end{algorithm}

\subsection{Oracle Speedups}
\label{sec:oracle-speedups}

We finally summarize what hard enforcement buys when the constraints match the environment geometry.
The full oracle statement (including constants and technical conditions) is given in Appendix~\ref{app:oracle-speedups};
here we only keep the main bound needed for the paper's message.

\begin{theorem}[Oracle speedup]
\label{thm:oracle-speedups-main}
Assume the hard constraints are enforced with the \emph{oracle} symmetry \(\mathcal{G}\) and an oracle DAG \(D^\star\).
Let \(\rho\in[0,1]\) be the symmetry-pocket coverage and \(W(D^\star)\) the width (max antichain size) of \(D^\star\).
Then, to reach expected squared Bellman residual \(\le\varepsilon\), the required samples satisfy
\begin{equation}
N_{\mathrm{oracle}}(\varepsilon)
\;=\;
\widetilde{O}\!\left(
\frac{W(D^\star)}{\,1+(|\mathcal{G}|-1)\rho\,}\cdot\frac{1}{\varepsilon}
\right),
\end{equation}
and the iteration complexity improves by the restricted condition-number factor \(\kappa/\kappa_T\) from
Theorem~\ref{thm:restricted-descent}.
\end{theorem}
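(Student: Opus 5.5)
The plan has two parts. We first bound the sample complexity by separating the two oracle structures: the symmetry $\mathcal{G}$ shrinks the effective sample requirement, and the DAG $D^\star$ shrinks the effective dimension of the hypothesis class. We then treat iteration complexity separately through Theorem~\ref{thm:restricted-descent}.

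\emph{Symmetry quotienting.} Under oracle $\mathcal{G}$ and \cref{def:G-invariance}, the hard constraint $h(Q_\theta)=0$ forces $Q_\theta\in\mathsf{Eq}(\mathcal{G})$. By \eqref{eq:Qstar-equiv}, $Q^*$ lies in this set, so enforcing it introduces no bias. Every TD sample falling in a symmetry pocket is then a sample for its whole orbit in $\bar X$. Following the argument of Theorem~\ref{thm:amp} with $K=|\mathcal{G}|$, the effective sample size on the fraction $\rho$ of covered mass is multiplied by $|\mathcal{G}|$. The total effective size is therefore $N_{\mathrm{eff}}=N\bigl(1+(|\mathcal{G}|-1)\rho\bigr)$. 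I would state this as a lemma: for an estimator on $\bar X$, the per-class variance bound scales like $1/N_{\mathrm{eff}}$.

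\emph{Order-constrained estimation.} On the quotient, the value vector is constrained to $\mathsf{Mono}(D^\star)$. Because $D^\star$ is the oracle order, $V^*\in\mathsf{Mono}(D^\star)$, so feasibility holds as in Theorem~\ref{thm:feasible-nonempty} and again there is no bias. The hard projection is a Euclidean projection onto a closed convex cone containing the truth. Its squared error is controlled by the statistical dimension of the cone at $V^*$.

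The key step is to bound this statistical dimension by $\widetilde O(W(D^\star))$. By Dilworth's theorem, $D^\star$ decomposes into $W(D^\star)$ chains. Within each chain, the value is a monotone sequence. Isotonic regression on a chain whose increments are separated by a margin $\delta>0$, or whose true signal is piecewise constant, has risk of order $\log n$ times the noise level. The total risk is therefore $\widetilde O\bigl(W(D^\star)\,\sigma^2/N_{\mathrm{eff}}\bigr)$.

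\emph{Transfer to the Bellman residual.} I would move from squared value error to expected squared Bellman residual using the contraction of $\mathcal{T}^*$: the Bellman residual is at most $(1+\gamma)^2$ times the squared value error, up to concentrability constants. These constants are absorbed into $\widetilde O$. Setting $W(D^\star)/N_{\mathrm{eff}}\le\varepsilon$ and solving for $N$ yields the displayed bound.

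\emph{Iteration complexity.} Once the iterate is feasible, Theorem~\ref{thm:tangent-normal} keeps the update on $\mathcal{M}$. Theorem~\ref{thm:restricted-descent} then gives linear convergence at rate $1-\mu_T/L_T$, so roughly $\kappa_T\log(1/\varepsilon)$ iterations are needed. Unconstrained gradient descent needs roughly $\kappa\log(1/\varepsilon)$. Projecting onto the tangent space removes the directions with the extreme curvature, so $\kappa_T\le\kappa$. The improvement factor is $\kappa/\kappa_T$.

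\emph{Main obstacle.} I expect the hardest step to be the statistical-dimension bound $\widetilde O(W(D^\star))$. Isotonic regression on a general poset has worst-case risk much larger than its width, for example $n^{1/2}$-type rates on a single chain. To get the claimed bound, I would first add a margin condition: $V^*$ takes at most $\widetilde O(1)$ distinct levels per chain, or edges are strictly separated by $\delta$. Under that condition, adaptive isotonic risk bounds (Chatterjee–Guntuboyina–Sen type) give per-chain risk $O(\log n/N)$, and summing over the Dilworth chains gives the factor $W(D^\star)$.

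A secondary obstacle is handling TD bootstrapping rather than i.i.d. regression. I would address it with a fitted-Q-iteration style analysis under the mixing assumption of Theorem~\ref{thm:bellman}, applying the one-step regression bound at each iteration and paying only a logarithmic factor.
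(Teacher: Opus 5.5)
Your symmetry step (Theorem~\ref{thm:amp} with $K=|\mathcal{G}|$, giving $N_{\mathrm{eff}}=N(1+(|\mathcal{G}|-1)\rho)$) and your iteration-complexity step (restricted PL giving the factor $\kappa/\kappa_T$) match the paper's Steps A and (ii). Your order step takes a genuinely different route.

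\emph{How the paper handles the order step.} It uses Dilworth plus an entropy bound for one-dimensional monotone classes to get a \emph{global} Rademacher bound $\mathfrak{R}_N(\mathcal{F}_{\text{mono}}(D^\star))\lesssim\sqrt{W/N}$. It plugs this into a generic bound $\mathbb{E}[\mathcal{E}_{\mathrm{Bell}}]\lesssim C_T\,\mathfrak{R}_N$ and then asserts the $1/\varepsilon$ form by appeal to ``the usual $1/N$-type rate.'' But $C_T\sqrt{W/N_{\mathrm{eff}}}\le\varepsilon$ only yields $N\gtrsim W/\bigl((1+(|\mathcal{G}|-1)\rho)\varepsilon^2\bigr)$.

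\emph{What your route buys.} Your local, fast-rate argument, which controls the projection risk by the statistical dimension of the tangent cone at $V^*$, is what actually produces $W/N_{\mathrm{eff}}$ and hence the $1/\varepsilon$ scaling. It also makes explicit the structural assumption the paper slips in when it says each chain ``contributes at most order-$1$ independent information.'' You additionally address the value-to-Bellman-residual transfer and TD bootstrapping, which the paper leaves buried in $C_T$.

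\emph{The flaw.} Only one of your two candidate conditions for the key lemma works. ``Edges strictly separated by $\delta$,'' or a chain whose increments exceed a positive margin, is precisely the case in which the order constraint buys nothing.
\begin{itemize}
\item If $V^*(u)-V^*(v)>\delta$ on every edge, then $V^*$ is an interior point of $\{V: V(u)\ge V(v)+\delta\}$, and the tangent cone there is all of $\mathbb{R}^{N_{\text{batch}}}$.
\item In the small-noise regime that governs $N(\varepsilon)$ as $\varepsilon\to 0$, the projection is then the identity with high probability. Its risk is that of the unconstrained estimator, so there is no collapse to $W(D^\star)$.
\item Equivalently, in the Chatterjee--Guntuboyina--Sen bound $\sigma^2 k\log(en/k)$, a strictly increasing chain has $k=n$ pieces.
\end{itemize}

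\emph{The correct condition.} The reduction comes from \emph{active} constraints. You need $V^*$ to lie on a low-dimensional face of the constraint polyhedron: along each Dilworth chain, all but $\widetilde O(1)$ of the (path-)constraints must hold with equality. That means ties when $\delta=0$, or gaps exactly $\delta$ otherwise. Then the tangent cone is contained in a product of $\widetilde O(1)$ monotone blocks per chain, and its statistical dimension is $\widetilde O(W)$. Keep the ``few levels per chain'' condition and drop the margin alternative. Note that this is a strong assumption, since generic $Q^*$ values are distinct; this is the real content hidden in the theorem's ``oracle'' hypothesis.
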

Theorem~\ref{thm:oracle-speedups-main} isolates the two hard-enforcement gains:
symmetry reduces statistical redundancy (effective sample amplification), while order constraints reduce degrees of freedom
(from batch size to DAG width).

\subsection{Unified 0-Loss Geometric Procedure}
\label{sec:geo-practice}

We next provide the minimal pseudocode that instantiates the manifold step \eqref{eq:update} and the periodic group contraction
(Algorithm~\ref{alg:group-closure-summary}). Implementation details (rank-deficient \(J_c\), subsampling equalities, and batching)
are deferred to Appendix~\ref{app:implementation}.

\begin{algorithm}[t]
\caption{0-Loss geometric enforcement (hard coherence)}
\label{alg:saloon-0loss}
\begin{algorithmic}[1]
\STATE Sample a minibatch \(\mathcal{B}\), compute TD targets and \(\mathcal{L}_{\mathrm{TD}}\).
\STATE Build a batch DAG \(D=( [B],\mathcal{E})\) and constraints \(c(V,\mu;\theta)\).
\STATE \textbf{Value-layer step:} compute \(g_V\), form \(P_{\mathrm{tan}}\), take \(\Delta_{\mathrm{tan}}\) and \(\Delta_{\mathrm{nor}}\),
update \((V,\mu)\) by \eqref{eq:update}, and backpropagate through \(V\mapsto Q_\theta\).
\IF{every \(r\) steps}
  \STATE \textbf{Group step:} apply Algorithm~\ref{alg:group-closure-summary}.
\ENDIF
\end{algorithmic}
\end{algorithm}

\section{Experiments}
\label{sec:experiments}

We empirically evaluate \textbf{GCR-RL} on a set of control benchmarks chosen to probe the two geometric components in our analysis: near-equivariance and logic-order. Our goals are to test whether (i) learned near-equivariance improves sample efficiency on symmetry-rich tasks; (ii) logic-order regularization stabilizes learning in the presence of local cycles; and (iii) both modules \emph{back off} when symmetry or order assumptions are weak so that performance never falls far below strong off-policy baselines.

We focus on three theory-aligned metrics that can be computed directly from evaluation logs:
(i) sample efficiency (AUC@N and steps-to-threshold);
(ii) asymptotic performance (final return);
and (iii) across-seed stability.
All definitions, environment details, and implementation choices are deferred to
Appendix~\ref{app:exp-results}.

\subsection{Environments}
\label{sec:envs}

We evaluate \textbf{GCR-RL} on four environment families designed to stress-test the two geometric
components in our analysis—\emph{near-equivariance} and \emph{logic-order}—under diverse observation
and noise regimes. Concretely, our suite includes: (i) a symmetry-rich gridworld with \emph{controlled
symmetry breaking}; (ii) an \emph{action-permutation} task where state symmetry is weak but action
relabeling dominates; (iii) \emph{pixel-based} Atari games, spanning both near-symmetric and symmetry-poor
titles; and (iv) a \emph{noisy, cycle-injected} tabular MDP where local non-transitive loops are embedded in a
global order. We additionally include Cartpole as a low-dimensional sanity check.

Table~\ref{tab:env_summary} summarizes the key properties; full specifications (layouts, rewards, and
perturbation protocols) are deferred to Appendix~\ref{app:envs}.

\begin{table}[t]
\centering
\small
\setlength{\tabcolsep}{4pt}
\renewcommand{\arraystretch}{1.05}
\begin{tabular}{@{}p{0.34\columnwidth}p{0.18\columnwidth}p{0.44\columnwidth}@{}}
\toprule\toprule
Env.\ family & Obs. & Structure (knob) \\
\midrule
RotMirror-Grid (E1) & grid & near-equiv.\ (state+action), $p_{\text{asym}}$ \\
BtnPerm-Minigrid (E2) & RGB & action relabeling ($\Pi$), episode perm. \\
Atari (E3) & pixels & symmetry-rich vs.\ symmetry-poor (game) \\
Noisy-RPS Chain (E4) & tabular & local cycles + global order, detour $\eta$ \\
\bottomrule\bottomrule
\end{tabular}
\vspace{2pt}
\caption{Compact summary of evaluation environments (Appendix~\ref{app:envs}). }
\label{tab:env_summary}
\vspace{-24pt}
\end{table}

To keep the main text focused, we show two representative learning curves that correspond directly to
our two geometric targets: symmetry-rich learning (E1) and cycle suppression under noise (E4).
The complete set of learning curves is provided in Appendix~\ref{app:curves}.

\begin{figure}[h]
\centering
\subfigure[RotMirror-Grid (E1): symmetry with controlled breaking]{%
  \includegraphics[width=0.49\linewidth]{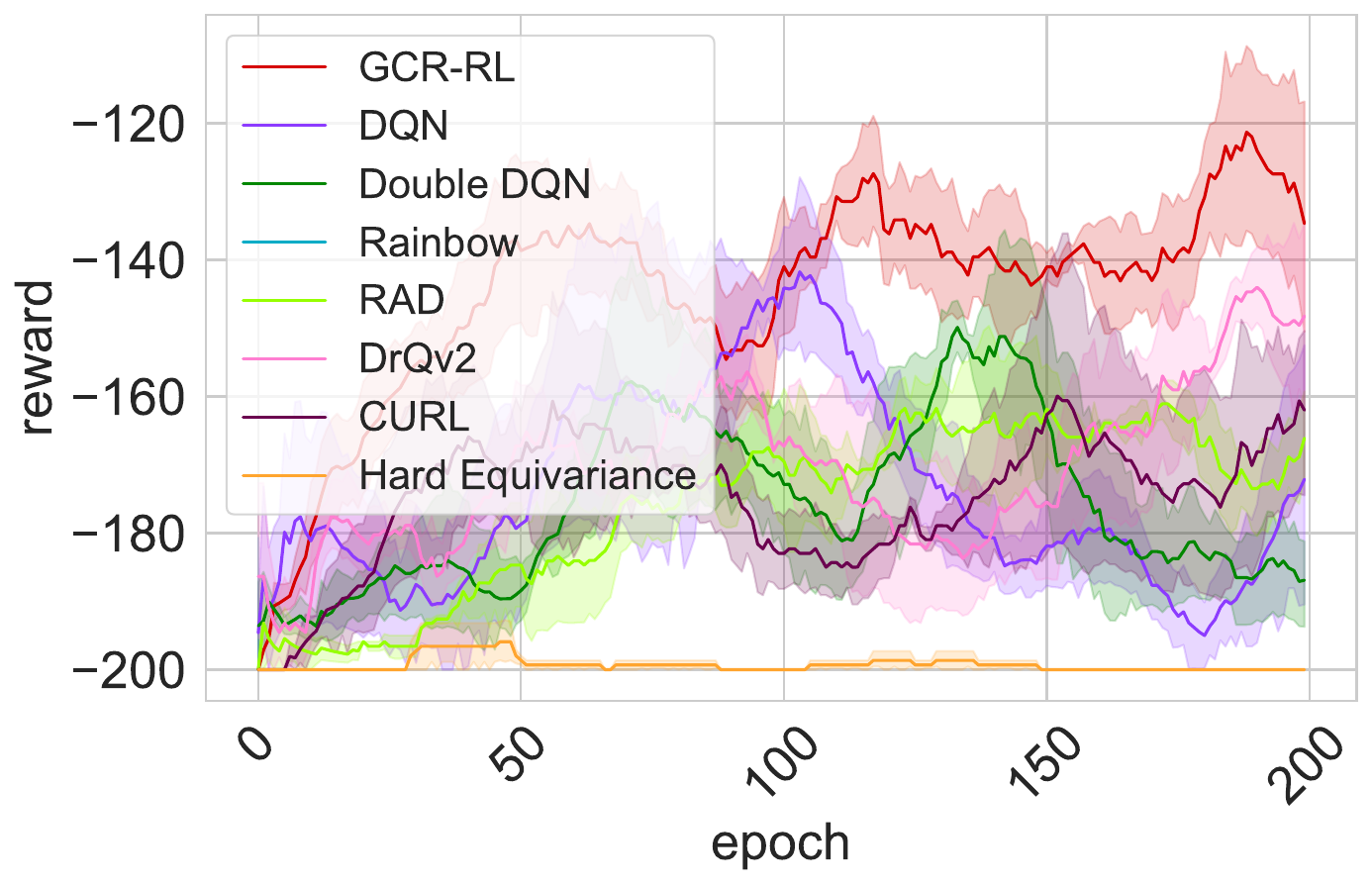}}
\hfill
\subfigure[Noisy-RPS Chain (E4): local non-transitive cycles under noise]{%
  \includegraphics[width=0.49\linewidth]{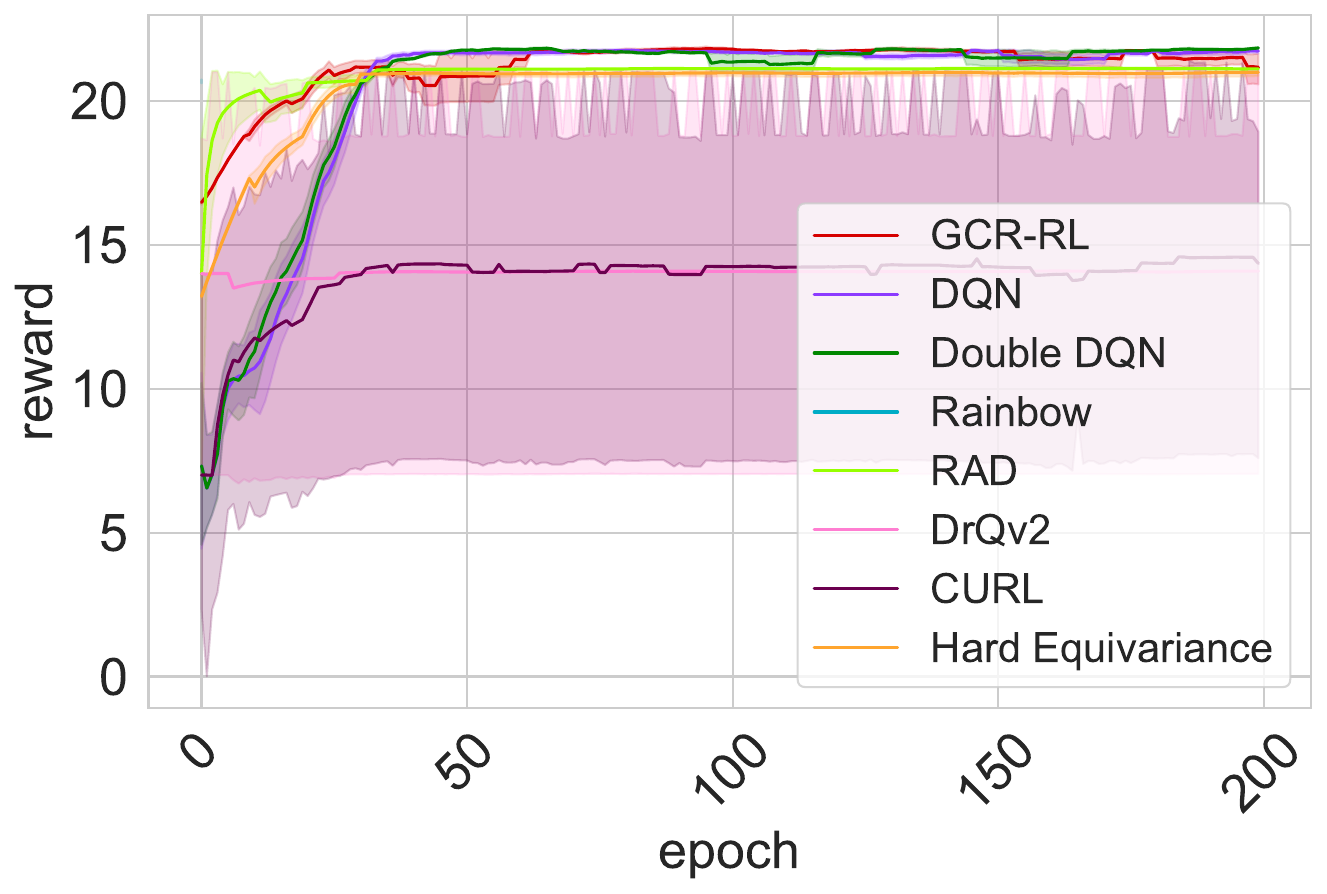}}
\vspace{-6pt}
\caption{Representative learning curves for \textbf{GCR-RL} and baselines.
Full per-environment curves (including Atari and Minigrid) appear in Appendix~\ref{app:curves}.}
\label{fig:repr_returns}
\vspace{-8pt}
\end{figure}

\section{Conclusion}
We presented GCR-RL, an order-theoretic view of value learning that builds a quotient poset over symmetry-aware elements and refines it via super-poset updates, while enforcing geometric coherence through near-equivariance with action relabeling and TD-implied partial orders. We developed both a soft regularization route and a hard, lossless enforcement route, and provided guarantees that clarify when the learned order and symmetry structure is consistent and how hard constraints prevent value-update loops. Across symmetry-rich and cycle-injected benchmarks, GCR-RL improves sample efficiency and stabilizes learning compared to strong off-policy baselines. Future work includes scaling online DAG construction and constraint enforcement, extending the approach to continuous-control and actor--critic settings, and leveraging the learned poset structure for exploration and planning.

\section*{Impact Statement}

This paper presents work whose goal is to advance the field of Machine
Learning. There are many potential societal consequences of our work, none
which we feel must be specifically highlighted here.

\nocite{langley00}

\bibliography{ref}
\bibliographystyle{main}


\newpage
\appendix
\onecolumn

\section{Additional Related Work}
\label{app:related-work}

Value-based deep RL has evolved from DQN and Double-DQN to stronger variants such as Rainbow, yet instability under function approximation and off-policy TD remains a central issue \citep{mnih2015human,van2016deep,hessel2018rainbow,tsitsiklis1996analysis,baird1995residual,gao2024cooperative,fang2023implementing,fang2024learning}. Pixel-based methods like RAD, DrQ/DrQ-v2, and CURL reduce variance via augmentation consistency or contrastive objectives, but typically transform states without relabeling actions, which can encourage learned \emph{invariance} where true \emph{equivariance} is required \citep{laskin2020reinforcement,tang2025malinzero,yarats2021mastering,laskin2020curl}. Preference-based approaches instead add external comparisons to learn reward models or rankings \citep{christiano2017deep,stiennon2020learning,zhang2025tail,ravari2024adversarial,zhang2025network}, while structural RL—via abstraction, homomorphisms, and bisimulation—formalizes symmetry and aggregation but usually assumes known maps or metrics that are not learned end-to-end from TD signals \citep{dean1997model,ravindran2004algebraic,zhang2024modeling,li2026acdzero,li2006towards,castro2020scalable}. Our method differs by bootstrapping both symmetry and partial order directly from standard TD evidence, without extra supervision or hand-crafted abstractions.

Group-equivariant networks provide strong inductive biases for known symmetry groups in vision and geometry \citep{cohen2016group,kondor2018generalization,finzi2020generalizing,bronstein2021geometric}, but real RL environments often exhibit only approximate or context-dependent symmetries. This motivates learning \emph{near-equivariant} transforms with soft group-like regularization, coupled to \emph{action} relabeling via Sinkhorn-based differentiable permutations projected to exact assignments at test time \citep{sinkhorn1967diagonal,cuturi2013sinkhorn,mena2018learning}. On the representation side, constraining transforms to (near) orthogonal manifolds via Cayley/Householder parameterizations stabilizes training and preserves geometry \citep{wen2013feasible,helfrich2018orthogonal}. Our symmetry module combines these pieces to infer state–action equivariance from interaction data rather than assuming a fixed group.

Monotone structure and isotonic regression provide a classical way to project estimates onto DAG-defined partial orders while staying close to the originals \citep{barlow1972statistical}. Monotonic mixing has been used in RL (e.g., QMIX) but typically relies on hand-specified factorizations or known orderings \citep{rashid2020monotonic}. Recent work on differentiable optimization layers and convex surrogates enables embedding such projections into neural networks \citep{amos2017optnet,agrawal2019differentiable,blondel2020fast}, while smooth characterizations of acyclicity (e.g., NOTEARS) make it practical to avoid cycles when learning graphs \citep{zheng2018dags}. Cyclic and non-transitive interactions are known to induce oscillations in learning dynamics \citep{shapley1953stochastic,balduzzi2018mechanics,zhang2025lipschitz}; our logic-order branch directly targets this issue by constructing a batchwise DAG from TD preferences and applying a differentiable isotonic surrogate, trading small, controllable bias for reduced variance and more stable value propagation.

\section{Full Objective and Hyper-parameters}\label{app:full-objective}
Let \(\mathcal{L}_{\mathrm{RL}}\) be the standard TD loss. The full training objective adds the equivariance and order regularizers to it:
\begin{equation}\label{eq:fullobj}
\begin{aligned}
\mathcal{L}_{\mathrm{total}}
&=\mathcal{L}_{\mathrm{RL}}
+\lambda_{\mathrm{eq}}\mathcal{L}_{\mathrm{eq}}\\
&+\gamma_{\mathrm{grp}}(\mathcal{R}_{\mathrm{id}}+\mathcal{R}_{\mathrm{clo}}+\mathcal{R}_{\mathrm{inv}}+\mathcal{R}_{\mathrm{ord}})\\
&+\lambda_{\mathrm{iso}}(\mathcal{L}_{\mathrm{iso}}+\beta\,\mathcal{L}_{\mathrm{rank}})\\
&+\lambda_{\mathrm{perm}}\mathcal{R}_{\mathrm{perm}}\\
&+\lambda_{\mathrm{div}}\mathcal{R}_{\mathrm{div}},
\end{aligned}
\end{equation}
where \(\lambda_{\mathrm{eq}}\) controls how strongly we pull \(Q_\theta\) toward the learned equivariant subspace \(\mathsf{Eq}(\mathcal{G})\); \(\gamma_{\mathrm{grp}}\) keeps the transforms well-behaved and close to a finite group; \(\lambda_{\mathrm{iso}}\) balances order enforcement against fidelity to \(V_\theta\), acting as a monotone brake that damps oscillations; \(\beta\) sets the strength of the smooth ranking term; \(\lambda_{\mathrm{perm}}\) sharpens action relabeling toward true permutations; and \(\lambda_{\mathrm{div}}\) prevents all transforms from collapsing to a single symmetry. In practice we use moderate \(\lambda_{\mathrm{eq}}\) and small \(\lambda_{\mathrm{iso}}\) at the beginning, then anneal \(\lambda_{\mathrm{iso}}\) (and the penalty weight \(\mu\) in \eqref{eq:L-iso}) upward as TD noise decreases.

A single training step proceeds as follows:
\begin{enumerate}
\item Sample a minibatch \(\mathcal{B}=\{(s_i,a_i,r_i,s'_i)\}_{i=1}^B\) and compute
      TD targets \(y_i\) and nudges \(\Delta_i\) via~\eqref{eq:td-target}.
\item Convert \(\{\Delta_i\}\) and confidence scores into candidate edges and greedily
      DAGify to obtain \(D\) as in Section~\ref{sec:pref-graph}.
\item Run \(T_{\mathrm{iso}}\in[1,5]\) gradient steps on \(\hat V\) to minimize
      \(\mathcal{L}_{\mathrm{iso}}+\beta\mathcal{L}_{\mathrm{rank}}\) and obtain
      \(\hat V^\star\) (Section~\ref{sec:isotonic}).
\item Evaluate \(\mathcal{L}_{\mathrm{total}}\) using \(Q_\theta\), the equivariance
      branch, and the isotonic surrogate, and backpropagate through the entire network.
\end{enumerate}

Geometrically, \(\mathcal{L}_{\mathrm{eq}}\) keeps updates close to the learned equivariant subspace and \(\mathcal{L}_{\mathrm{iso}}\) keeps them inside a batchwise approximation of the monotone cone. The combination nudges \(Q_\theta\) toward the structured region \(\mathcal{F}=\mathsf{Eq}(\mathcal{G})\cap\mathsf{Mono}(D)\) while leaving the underlying off-policy loop untouched. We next study how this affects Bellman residuals, stability, and bias–variance trade-offs.

We finally explain why these additions lower the Bellman residual and steady the value iterates.

\section{Joint Structural Consistency: Full Statement}\label{app:joint-consistency}

We measure approximation quality via the expected Bellman residual
\begin{equation}\label{eq:bell-resid}
\mathcal{E}_{\mathrm{Bell}}(\theta)=\mathbb{E}_{(s,a)}\Bigl[\bigl(Q_\theta(s,a)-\mathcal{T}^* Q_\theta(s,a)\bigr)^2\Bigr].
\end{equation}
The following results summarize how monotonicization and equivariance regularization shape this residual and the dynamics of value iterates.

\begin{theorem}[Upper bound under monotonicization]\label{thm:bellman}
Under an i.i.d.\ approximation and stable learning rates, if \(\hat V\) is obtained via the
DAG isotonic step of \cref{sec:isotonic} and edges cover high-confidence local orders, then
there exist constants \(C_1,C_2>0\) such that one-step update satisfies
\begin{equation}\label{eq:bell-bound}
\begin{aligned}
& \mathcal{E}_{\mathrm{Bell}}(\theta^+)\ \\
\le\ &  \mathcal{E}_{\mathrm{Bell}}(\theta)
-  C_1\cdot \frac{1}{|E(D)|}\!\sum_{(u\to v)\in D}\!\bigl[\delta+\hat V(v)-\hat V(u)\bigr]_+^2\\
+& C_2\,\lVert \hat V-V_\theta\rVert_2^2.
\end{aligned}
\end{equation}
For sufficiently large \(\mu\) and \(\delta\ge0\), the second term is controlled by \(\mathcal{L}_{\mathrm{iso}}\) in
\eqref{eq:L-iso}, leading to an overall decrease in the expected residual.
\end{theorem}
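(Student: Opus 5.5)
The plan is to treat the one-step update as a gradient step on the combined objective \(\mathcal{L}_{\mathrm{TD}}+\lambda_{\mathrm{ord}}\mathcal{L}_{\mathrm{ord}}\) and to compare \(\mathcal{E}_{\mathrm{Bell}}(\theta^+)\) with \(\mathcal{E}_{\mathrm{Bell}}(\theta)\) through a descent-lemma expansion. Under the i.i.d. approximation, the minibatch TD loss is an unbiased surrogate of \(\mathcal{E}_{\mathrm{Bell}}\), with the target network frozen so that \(\mathcal{T}^*Q_{\bar\theta}\) does not move within the step. We further assume \(\mathcal{E}_{\mathrm{Bell}}\) is \(L\)-smooth in the value layer, which is the content of ``stable learning rates'': \(\eta\le 1/L\). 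We can then write
\[
\mathcal{E}_{\mathrm{Bell}}(\theta^+)\le \mathcal{E}_{\mathrm{Bell}}(\theta)-\eta\langle\nabla \mathcal{E}_{\mathrm{Bell}},\,\nabla\mathcal{L}_{\mathrm{TD}}+\lambda_{\mathrm{ord}}\nabla\mathcal{L}_{\mathrm{ord}}\rangle+\tfrac{L\eta^2}{2}\|\nabla\mathcal{L}_{\mathrm{TD}}+\lambda_{\mathrm{ord}}\nabla\mathcal{L}_{\mathrm{ord}}\|^2 .
\]
The pure TD part contributes a nonpositive term, and we drop it. What remains is to control the cross term with \(\nabla\mathcal{L}_{\mathrm{ord}}\).

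We split \(\nabla_V\mathcal{L}_{\mathrm{ord}}\) at \(\hat V\) into two pieces: the fidelity gradient \(\tfrac{2}{B}(V_\theta-\hat V)\), and the hinge gradient, which on each edge \((u\to v)\in D\) is proportional to \([\delta+\hat V(v)-\hat V(u)]_+(e_v-e_u)\).

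\textbf{Hinge piece.} The key step is to show it is positively correlated with \(\nabla\mathcal{E}_{\mathrm{Bell}}\). The hypothesis that edges cover high-confidence local orders means each edge was created from a TD nudge \(\Delta_i\) whose sign agrees with the Bellman target ordering. Moving \(V(u)\) up and \(V(v)\) down therefore moves both coordinates toward their targets \(y\). On each violated edge, the inner product is then at least \(c\,[\cdot]_+^2\) for some \(c>0\) depending on the confidence threshold (the percentile cutoff on \(w_i\)) and \(\tau\). Averaging over \(E(D)\) gives the \(-C_1\cdot\mathrm{Viol}\) term with \(C_1=\eta\lambda_{\mathrm{ord}}\mu c/2\).

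\textbf{Fidelity piece.} We bound its inner product by Cauchy--Schwarz and Young's inequality. Together with the \(L\eta^2\) remainder, this is absorbed into \(C_2\|\hat V-V_\theta\|_2^2\), plus a residual multiple of the hinge term that is made smaller than half of \(C_1\) by taking \(\eta\) small.

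For the final sentence of the statement, we invoke Theorem~\ref{thm:penalty}. As \(\mu\to\infty\), \(\hat V\) approaches the exact isotonic projection, and \(\|\hat V-V_\theta\|_2^2\) is bounded by \(B\) times the minimal value of the fidelity term of \(\mathcal{L}_{\mathrm{iso}}\). By optimality of \(\hat V\), this fidelity term is at most \(\mathcal{L}_{\mathrm{iso}}\) evaluated at any feasible point, and Theorem~\ref{thm:feasible-nonempty} guarantees such a point exists on the DAG produced by Theorem~\ref{thm:dagify}. Hence the positive term is controlled by \(\mathcal{L}_{\mathrm{iso}}\). Whenever the violation term dominates, which is the regime where order regularization is active, the net change is negative.

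The main obstacle is the hinge alignment step. Correlating hinge gradients with the Bellman gradient requires an assumption linking DAG edges to the true ordering of targets, \(\mathrm{sign}(y_u-y_v)=\mathrm{sign}\) of the edge. I would state this explicitly as a margin condition: for every retained edge, \(y_u-y_v\ge\delta\) holds with high probability given the confidence cutoff. Without it, \(C_1\) could be negative. I would first try to derive it from the percentile thresholding on \(w_i\) via a Hoeffding bound, and otherwise simply adopt it as part of ``edges cover high-confidence local orders.''
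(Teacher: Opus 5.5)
Your proof has a genuine gap in the fidelity piece. It comes from treating the two parts of $\nabla\mathcal{L}_{\mathrm{ord}}$ as independent, when they are the same vector. Since $\hat V$ minimizes $\mathcal{L}_{\mathrm{iso}}$ in its first argument, stationarity gives $\tfrac{2}{B}(\hat V-V_\theta)+\tfrac{2\mu}{|E(D)|}A^\top\zeta=0$. Here $A$ is the edge incidence matrix ($A_{e,u}=-1$, $A_{e,v}=+1$) and $\zeta_e=[\delta+\hat V(v)-\hat V(u)]_+$. Hence $\hat V-V_\theta=-c_\mu A^\top\zeta$ with $c_\mu=\mu B/|E(D)|$: your fidelity gradient $\tfrac{2}{B}(V_\theta-\hat V)$ \emph{equals} your hinge gradient.

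Moreover, only one copy of this vector ever reaches $V_\theta$. With a stop-gradient on $\hat V$, the hinge term does not depend on $V_\theta$. If you differentiate through $\hat V(V_\theta)$, the envelope theorem removes the hinge contribution.

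So you credit this direction once through the alignment hypothesis and then charge for it again as an adversarial term. The charge also does not close. The cross term $\eta\lambda_{\mathrm{ord}}\tfrac{2}{B}\langle\nabla\mathcal{E}_{\mathrm{Bell}},\hat V-V_\theta\rangle$ is linear in $\|\hat V-V_\theta\|$, so Cauchy--Schwarz plus Young necessarily leaves a term of order $\eta\epsilon\|\nabla\mathcal{E}_{\mathrm{Bell}}\|^2$. That term is neither a multiple of the violation nor of $\|\hat V-V_\theta\|_2^2$, and the target inequality has no slot for it. Shrinking $\eta$ cannot push it below $C_1/2$, because it and $C_1$ are both first order in $\eta$; only the $L\eta^2$ remainder is tamed that way. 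The same problem hits the $\tfrac{L\eta^2}{2}\|\nabla\mathcal{L}_{\mathrm{TD}}\|^2$ part of your remainder once you drop the negative TD term. That negative term exists only under your frozen-target identification of the semi-gradient with $\nabla\mathcal{E}_{\mathrm{Bell}}$; in general, semi-gradient TD is not a descent direction for the Bellman residual.

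The missing ingredient is that stationarity identity, and it is what the paper's proof runs on. The paper models the step purely as $V_\theta\mapsto\hat V$, with no TD term. It applies the descent lemma to $\mathcal{E}_{\mathrm{Bell}}$, viewed as a function $\mathcal{J}$ of $V$, with $\Delta V=-c_\mu A^\top\zeta$. It encodes ``edges cover high-confidence local orders'' as $\langle\nabla\mathcal{J}(V_\theta),-A^\top\zeta\rangle\le-\kappa\|\zeta\|^2+M\|A^\top\zeta\|^2$. The $M$ term and the $\tfrac{L}{2}c_\mu^2\|A^\top\zeta\|^2$ remainder then become $C_2\|\hat V-V_\theta\|_2^2$ via $\|A^\top\zeta\|=c_\mu^{-1}\|\hat V-V_\theta\|$, and no Young step is needed.

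Your margin condition $y_u-y_v\ge\delta$ is a good way to justify such an inequality, but use it in aggregate rather than edge by edge. It constrains only the difference, so it does not imply that each of $V(u)$, $V(v)$ moves toward its own target. Per edge it gives $\zeta_e\langle V_\theta-y,e_u-e_v\rangle\le-\zeta_e(\delta+V_\theta(v)-V_\theta(u))$, with the violation measured at $V_\theta$; edgewise this need not dominate $\zeta_e$. Summing over edges and applying the stationarity identity again turns $\sum_e\zeta_e(\delta+V_\theta(v)-V_\theta(u))$ into $\|\zeta\|^2+c_\mu\|A^\top\zeta\|^2$. So for $\mathcal{J}=\tfrac12\|V-y\|^2$ with frozen targets, you recover the paper's alignment inequality with $\kappa=1$ and $M=0$.
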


Edges that violate monotonicity indicate precisely where the targets want $V$ to move up or down relative to its neighbours; the isotonic step smooths out these violations and reduces Bellman error. The deviation term \(\lVert \hat V-V_\theta\rVert_2^2\) captures how far we are allowed to move along the cone; the penalty weight \(\mu\) keeps this cost from overwhelming the benefits, so the net effect is a residual decrease.

\begin{theorem}[Acyclic stability and oscillation suppression]\label{thm:stability}
If local cycles arise from non-transitive (RPS-like) structure or noise, then after DAGification (\cref{sec:pref-graph}) and isotonic projection (\cref{sec:isotonic}), the variance upper bound of \(\{V_{\theta_t}\}\) in a batch window decreases monotonically with \(\lambda_{\mathrm{iso}}\) and edge coverage. When edges cover dominant directions of the true partial order, the bound is reduced below a fixed fraction of the unregularized case.
\end{theorem}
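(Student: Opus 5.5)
We model the value iterates within a batch window as a linearized stochastic recursion and compare the regularized and unregularized versions. Write $V_{t+1}=V_t-\eta\bigl(\nabla_V\mathcal{L}_{\mathrm{TD}}(V_t)+\lambda_{\mathrm{iso}}\nabla_V\mathcal{L}_{\mathrm{iso}}(V_t)\bigr)+\eta\,\xi_t$. Here $\xi_t$ is the zero-mean TD noise with covariance $\Sigma$, which is supported by the i.i.d.\ (or mixing) assumption used for Theorem~\ref{thm:bellman}.

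Near the current window we linearize. The TD term contributes a Jacobian $A$, which captures bootstrapping. Non-transitive RPS-like structure makes $A$ non-normal, and its rotational eigen-directions are what produce oscillation. The order term contributes a curvature $\mu\,L_D$. Here $L_D=\sum_{(u\to v)\in D_{\mathrm{act}}}(e_u-e_v)(e_u-e_v)^\top$ is the Laplacian of the active (violated) edges. Its form follows directly from differentiating the hinge-squared term in \eqref{eq:L-iso} together with the fidelity term. The iterate covariance then solves a discrete Lyapunov equation:
\[
\Sigma_V=(I-\eta M)\Sigma_V(I-\eta M)^\top+\eta^2\Sigma,\qquad M=A+\lambda_{\mathrm{iso}}(I+\mu L_D).
\]
The within-window variance is $\operatorname{tr}\Sigma_V$, and the standard bound gives $\operatorname{tr}\Sigma_V\le \eta\operatorname{tr}\Sigma/(2\,\lambda_{\min}(\mathrm{sym}(M)))$ for small $\eta$.

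The monotonicity claims are read off from $\lambda_{\min}(\mathrm{sym}(M))$. Because $L_D\succeq 0$, the quantity $\mathrm{sym}(M)=\mathrm{sym}(A)+\lambda_{\mathrm{iso}}(I+\mu L_D)$ is nondecreasing in the Loewner order in both $\lambda_{\mathrm{iso}}$ and the edge set. Adding edges only adds PSD rank-one terms, so a larger edge coverage gives larger $L_D$. Hence the bound decreases monotonically in $\lambda_{\mathrm{iso}}$ and in edge coverage.

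Acyclicity matters in two places. Theorem~\ref{thm:dagify} guarantees $D$ is a DAG, and Theorem~\ref{thm:cyclefree} guarantees the aligned $\hat V^\star$ induces no cycles. Together these mean the order term never adds a rotational (skew) component: its contribution is a symmetric PSD Laplacian, unlike a cyclic preference structure. The feasibility result (Theorem~\ref{thm:feasible-nonempty}) makes the cone nonempty, so the linearization point is well defined.

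For the fractional-reduction claim, we assume the oscillatory eigenspace $U$ of $A$ (its slow, low-$\mathrm{sym}(A)$ directions) is spanned by differences $e_u-e_v$ along the dominant directions of the true order. This is our formalization of ``edges cover dominant directions.'' Then $L_D\succeq c\,P_U$ for some $c>0$ depending on coverage, which gives $\lambda_{\min}(\mathrm{sym}(M))\ge \lambda_{\min}(\mathrm{sym}(A))+\lambda_{\mathrm{iso}}(1+\mu c)$. The resulting bound is at most the fraction $\lambda_{\min}(\mathrm{sym}(A))/\bigl(\lambda_{\min}(\mathrm{sym}(A))+\lambda_{\mathrm{iso}}(1+\mu c)\bigr)<1$ of the unregularized bound.

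The main obstacle is that the hinge makes $L_D$ depend on which edges are active, so $M$ is only piecewise linear. We would handle this first by working on a window in which the active set is fixed. The penalty-consistency argument of Theorem~\ref{thm:penalty} lets us treat near-projected iterates as having a stable active set. We would then fall back to bounding with the active-set Laplacian as a lower Loewner bound. A secondary difficulty is that $\mathrm{sym}(A)$ may be indefinite under bootstrapping. For this we assume $\lambda_{\min}(\mathrm{sym}(A))>0$ (stable learning rates, as in Theorem~\ref{thm:bellman}), or else absorb the deficit into $\lambda_{\mathrm{iso}}$.
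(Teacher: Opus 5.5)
\paragraph{Overall assessment.} Your skeleton matches the paper's: linearize the regularized update, bound the steady-state second moment of the resulting linear stochastic recursion, and read monotonicity off a contraction quantity. Your $\lambda_{\min}(\mathrm{sym}(M))$ is a more explicit, slightly looser form of the paper's factor $\alpha=\|A_t\|_2$, since $\|I-\eta M\|_2^2\le 1-2\eta\lambda_{\min}(\mathrm{sym}(M))+\eta^2\|M\|_2^2$. It has the merit of turning monotonicity into a Loewner-order statement rather than an assertion about $\phi(\lambda_{\mathrm{iso}},\mathrm{cov})$.

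\paragraph{The drift matrix is wrong.} Write $\kappa:=\mu B/|E(D)|$. Your $I+\mu L_D$ is, up to normalization, $\nabla^2_{\hat V}\mathcal{L}_{\mathrm{iso}}=\tfrac{2}{B}(I+\kappa L_D)$. That is the Hessian in the auxiliary variable $\hat V$, not the curvature the regularizer exerts on $V_\theta$. Because $\hat V^\star=\hat V^\star(V_\theta)$ is re-solved every step, the force on $V_\theta$ is $\tfrac{2\lambda_{\mathrm{iso}}}{B}\bigl(V_\theta-\hat V^\star(V_\theta)\bigr)$. This holds by the envelope theorem whether or not you backpropagate through $\hat V^\star$. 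On a fixed active set its Jacobian is the Schur complement
\[
\tfrac{2\lambda_{\mathrm{iso}}}{B}\bigl(I-(I+\kappa L_D)^{-1}\bigr)=\tfrac{2\lambda_{\mathrm{iso}}}{B}\,N,\qquad N:=\kappa L_D(I+\kappa L_D)^{-1}.
\]
Your isotropic $\lambda_{\mathrm{iso}}I$ term is spurious. If no edge is violated, then $\hat V^\star=V_\theta$ and the regularizer exerts no force. Yet your $M$ still shrinks the bound by $\lambda_{\min}(\mathrm{sym}(A))/(\lambda_{\min}(\mathrm{sym}(A))+\lambda_{\mathrm{iso}})$, i.e.\ it produces a ``fixed fraction'' at zero coverage. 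The paper linearizes with $\lambda_{\mathrm{iso}}\nabla^2G_t$, a multiple of the active-edge Laplacian with no identity part.

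\paragraph{What survives the correction.} Your monotonicity claim survives. $\mathrm{sym}(M)$ is nondecreasing in $\lambda_{\mathrm{iso}}$. $N$ is Loewner-nondecreasing in $\mu$, because $t\mapsto t/(1+t)$ is operator monotone. It is also nondecreasing in the edge set, but only at fixed per-edge weight $\mu/|E(D)|$: the $1/|E(D)|$ normalization in \eqref{eq:L-iso}, which you dropped, rescales old edges when new ones are added.

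\paragraph{The fraction step fails.} Even with your $M$, the bound $\lambda_{\min}(\mathrm{sym}(M))\ge\lambda_{\min}(\mathrm{sym}(A))+\lambda_{\mathrm{iso}}(1+\mu c)$ does not follow from $L_D\succeq cP_U$. Weyl's inequality would need $L_D\succeq cI$, which no Laplacian satisfies since $L_D\mathbf 1=0$; the singular $P_U$ lifts the Rayleigh quotient only on $U$. With the correct curvature, $N$ vanishes on $\ker L_D\ni\mathbf 1$. So a single-$\lambda_{\min}$ bound improves at all only if the bottom eigenspace of $\mathrm{sym}(A)$ avoids $\ker L_D$. A quantitative fraction needs more: for instance, if $U$ is invariant under $\mathrm{sym}(A)$ and contains its bottom eigenspace, the fraction is $a_1/\min\{a_1+\tfrac{2\lambda_{\mathrm{iso}}}{B}\tfrac{\kappa c}{1+\kappa c},\,a_2\}$. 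Here $a_1=\lambda_{\min}(\mathrm{sym}(A))$ and $a_2$ is the minimum over $U^\perp$. Otherwise you need a blockwise Lyapunov bound. This hypothesis is not innocuous. For $A=I-\gamma P$ with $P$ doubly stochastic, $\lambda_{\min}(\mathrm{sym}(A))=1-\gamma$ is attained at $\mathbf 1$, a direction no order constraint can damp.

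\paragraph{Role of acyclicity.} Acyclicity is not what keeps the order term free of skew components. The Hessian of any scalar penalty is symmetric, cyclic edge sets included. The DAG property enters only through feasibility (Theorem~\ref{thm:feasible-nonempty}), which is the only role it plays in the paper's proof.
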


Cycles act as feedback loops that let values bounce among a few states. DAGification cuts these loops; isotonic projection converts the remaining edges into monotone “ramps’’ that discourage backtracking. Increasing \(\lambda_{\mathrm{iso}}\) steepens these ramps and suppresses oscillation, which matches the empirically smoother learning curves when combining Sections~\ref{sec:pref-graph} and~\ref{sec:isotonic}.

\begin{theorem}[Bias--variance decomposition]\label{thm:bv}
For the test error \(\mathbb{E}\bigl[(V_\theta-V^*)^2\bigr]\),
\[
\mathbb{E}\bigl[(\hat V-V^*)^2\bigr]
=\underbrace{\mathrm{Var}[\hat V]}_{\text{statistical variance}}
+\underbrace{\bigl(\mathbb{E}[\hat V]-V^*\bigr)^2}_{\text{estimation bias}},
\]
where \(\mathcal{L}_{\mathrm{eq}}+\mathcal{L}_{\mathrm{iso}}\) reduces variance (via sample sharing and shape constraints) while introducing a controllable \emph{structural} bias determined by the mismatch between true symmetry/partial order and the enforced structure.
\end{theorem}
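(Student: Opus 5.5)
The identity is the standard bias--variance decomposition, so the plan has two parts. First, establish the exact decomposition, which is elementary. Second, justify the two qualitative claims: that \(\mathcal{L}_{\mathrm{eq}}+\mathcal{L}_{\mathrm{iso}}\) reduces \(\mathrm{Var}[\hat V]\), and that the bias is controlled by structural mismatch.

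\textbf{Decomposition.} Fix a state (or work pointwise on the batch and then average under the sampling distribution). The randomness comes from the data, i.e.\ the minibatches and the DAG \(D\). Write \(\hat V-V^*=(\hat V-\mathbb{E}[\hat V])+(\mathbb{E}[\hat V]-V^*)\). Squaring and taking expectations, the cross term \(2(\mathbb{E}[\hat V]-V^*)\,\mathbb{E}[\hat V-\mathbb{E}[\hat V]]\) vanishes, because \(V^*\) and \(\mathbb{E}[\hat V]\) are deterministic. This gives the stated identity exactly.

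\textbf{Variance reduction, two sources.} For the symmetry part, invoke Theorem~\ref{thm:amp}. Under \(\mathcal{L}_{\mathrm{eq}}\to 0\), the estimate on an orbit effectively averages TD targets from all orbit members. On the symmetric fraction \(\rho\), the effective sample size therefore scales by \(1+(K-1)\rho\), and the variance bound shrinks by the reciprocal factor. For the order part, treat \(\hat V\) as (the penalty approximation of) the Euclidean projection of the unconstrained estimate onto the closed convex cone \(\mathsf{Mono}(D)\). This is justified by Theorem~\ref{thm:penalty}, and the cone is nonempty by Theorems~\ref{thm:dagify} and~\ref{thm:feasible-nonempty}.

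The key step is the non-expansiveness of projections onto convex sets. Let \(\hat V=P(V_\theta)\) and \(\bar V=\mathbb{E}[V_\theta]\). Since \(\mathbb{E}[\hat V]\) minimizes \(c\mapsto\mathbb{E}\|\hat V-c\|^2\),
\[
\mathrm{Var}[\hat V]\le\mathbb{E}\|P(V_\theta)-P(\bar V)\|^2\le\mathbb{E}\|V_\theta-\bar V\|^2=\mathrm{Var}[V_\theta].
\]
For a fixed \(D\) this is rigorous. For a random \(D\), condition on \(D\) first, then use the law of total variance, adding a term for the variability of \(D\) itself. Combining the two steps, the equivariance projection, which is linear and orthogonal onto \(\mathsf{Eq}(\mathcal{G})\), is also non-expansive. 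The intersection \(\mathcal{F}\) is convex, so the same argument covers the joint projection.

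\textbf{Bias control.} Decompose \(\mathbb{E}[\hat V]-V^*=(\mathbb{E}[P_{\mathcal{F}}V_\theta]-P_{\mathcal{F}}V^*)+(P_{\mathcal{F}}V^*-V^*)\). By non-expansiveness and Jensen, the first term is at most the bias of the unregularized estimator. The second term equals \(\mathrm{dist}(V^*,\mathcal{F})\). It vanishes when the true MDP is \(\mathcal{G}\)-invariant, by \eqref{eq:Qstar-equiv}, and \(E(D)\) lies in the true order, by Theorem~\ref{thm:cyclefree}. Otherwise it is bounded by the equivariance residual plus the hinge violations of \(V^*\) on the edges of \(D\). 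This is exactly the ``structural mismatch'' in the statement.

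\textbf{Main obstacle.} The hardest step is making the variance claim hold when \(D\) and the learned transforms are themselves data-dependent and only softly enforced: finite \(\mu\), finitely many gradient steps. I would handle this in two ways. First, conditioning on \(D\) and on the transforms gives the clean inequality, with the fluctuation of \(D\) absorbed into an additive term. Second, the finite-\(\mu\) gap to the exact projection is controlled via Theorem~\ref{thm:penalty}, stated as an \(o(1)\) perturbation as \(\mu\to\infty\).
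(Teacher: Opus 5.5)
Your identity and both variance arguments match the paper's proof. The identity is the same cross-term cancellation. The symmetry part is the same $K$-view averaging on the symmetric fraction $\rho$, tied to Theorem~\ref{thm:amp}. For the monotone cone you use the same chain $\mathrm{Var}[P(X)]\le\mathbb{E}\|P(X)-P(m)\|^2\le\mathrm{Var}[X]$. Conditioning on $D$ is a sensible refinement the paper omits.

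The step that fails is in your bias control: ``by non-expansiveness and Jensen, the first term is at most the bias of the unregularized estimator.'' Jensen and non-expansiveness give only $\|\mathbb{E}[P_{\mathcal F}V_\theta]-P_{\mathcal F}V^*\|\le\mathbb{E}\|V_\theta-V^*\|$, and the right side contains the unregularized variance, not just its bias. To reach $\|\mathbb{E}[V_\theta]-V^*\|$ you would need $\mathbb{E}[P_{\mathcal F}V_\theta]=P_{\mathcal F}\mathbb{E}[V_\theta]$. That holds for the linear projection onto $\mathsf{Eq}(\mathcal G)$ but not for the cone.

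Concretely, take one edge $u\to v$ with $\delta=0$ and $V^*(u)=V^*(v)$, and let $V_\theta=V^*+\xi$ with $\xi_u,\xi_v$ i.i.d.\ standard Gaussian. Then $V^*\in\mathsf{Mono}(D)$ and $V_\theta$ is unbiased. The projection replaces both coordinates by their average exactly when $\xi_u<\xi_v$, so $\mathbb{E}[\hat V(u)]-V^*(u)=\tfrac12\,\mathbb{E}\bigl[[\xi_v-\xi_u]_+\bigr]=\tfrac{1}{2\sqrt{\pi}}\neq 0$. The cone projection turns part of the variance into a bias that is not structural. So your conclusion that the bias is at most the unregularized bias plus $\mathrm{dist}(V^*,\mathcal F)$ is false.

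The paper avoids this by writing only the triangle inequality $\|\mathbb{E}[\hat V]-V^*\|\le\|\mathbb{E}[\hat V]-S^\star\|+\|S^\star-V^*\|$, with $S^\star$ the projection of $V^*$ onto the structural set. It calls the first term an estimation error to be suppressed by training and never claims it is dominated by the unregularized bias.

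If you want a quantitative version that is true, state it for the total error. For any closed convex $\mathcal F$ and any $x$, $\|P_{\mathcal F}x-V^*\|^2\le\|x-V^*\|^2+\mathrm{dist}(V^*,\mathcal F)^2$. To see this, set $a=x-P_{\mathcal F}x$, $b=P_{\mathcal F}x-P_{\mathcal F}V^*$ and $c=P_{\mathcal F}V^*-V^*$. The right side minus the left side equals $\|a+c\|^2+2\langle a,b\rangle$, and $\langle a,b\rangle\ge 0$ by the projection inequality at $P_{\mathcal F}x$. Taking expectations, the regularized test error exceeds the unregularized one by at most the squared structural mismatch.

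Two smaller points in the same step:
\begin{itemize}
\item Bounding $\mathrm{dist}(V^*,\mathcal F)$ by the equivariance residual plus the hinge violations needs a Hoffman-type constant that depends on $D$. On a path of length $m$ with $V^*$ increasing by $\epsilon$ per step, the distance is $\sqrt{m(m+1)/12}$ times the $\ell_2$ norm of the violations.
\item The mismatch vanishes only if $V^*(u)-V^*(v)\ge\delta$ on every edge. That is an assumption on $D$, not a consequence of Theorem~\ref{thm:cyclefree}, which concerns $\hat V^\star$.
\end{itemize}
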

\(\mathcal{L}_{\mathrm{eq}}\) averages symmetric views inside approximate orbits; 
\(\mathcal{L}_{\mathrm{iso}}\) ties values along high-confidence edges. Both shrink variance. Any additional bias is structural and scales with how far the learned equivariance and partial order deviate from reality. When these assumptions are approximately correct, the variance reduction dominates the added bias, echoing the effective sample amplification in \cref{thm:amp}.

\begin{theorem}[Identifiability of near-equivariance]\label{thm:ident}
If a true equivariance \((T^*,\Pi^*)\) exists and \(f_\theta\) is expressive, then with sufficiently strong group-like regularization (\cref{sec:near-eq}), the learned \(\{(W_k,\Pi_k)\}\) converges, up to isomorphism, to a finitely generated approximation of \((T^*,\Pi^*)\).
\end{theorem}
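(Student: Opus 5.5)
We read the statement as a population-level identifiability claim and argue in three stages. First, we make it precise. Consider a zero set $\mathcal{Z}$ of the group-like penalties $\mathcal{R}_{\mathrm{id}}+\mathcal{R}_{\mathrm{clo}}+\mathcal{R}_{\mathrm{inv}}+\mathcal{R}_{\mathrm{ord}}+\mathcal{R}_{\mathrm{perm}}$, intersected with the zero set of $\mathcal{L}_{\mathrm{eq}}^{\mathrm{local}}$ under the true data distribution. Assuming the diversity term $\mathcal{R}_{\mathrm{div}}$ excludes collapse to the identity, every minimizer in $\mathcal{Z}$ should generate (via the closure/inverse terms) a finite group $H=\{(W_k,\Pi_k)\}$. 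On that group, the $W_k$ are orthogonal ($\mathcal{R}_{\mathrm{id}}$) and the $\Pi_k$ are exact permutation matrices. The last point holds because a doubly-stochastic matrix with $\Pi^\top\Pi=I$ forced by $\mathcal{R}_{\mathrm{inv}}$ and $\mathcal{R}_{\mathrm{perm}}$ is a permutation. Each pair then satisfies $Q_\theta(s,\cdot)=\Pi_k^\top Q_\theta(T_k s,\cdot)$ on the support.

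The proof then proceeds as follows.

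\textbf{Existence.} Since $(T^*,\Pi^*)$ is an equivariance and $f_\theta$ is expressive, the true $Q^*$ is realizable together with $W=$ the representation of $T^*$ and $\Pi=\Pi^*$. By \eqref{eq:Qstar-equiv} this achieves zero equivariance loss, and it achieves zero group penalties whenever its order lies in $\mathcal{M}$. So the infimum is $0$ and is attained by the true structure. This is essentially Theorem~\ref{thm:eq-consistency}.

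\textbf{Uniqueness up to isomorphism.} Take any zero-loss pair $(W,\Pi)$ together with the encoder. We transport it to state space via $f_\theta$: we need $W f_\theta(s)=f_\theta(T s)$ for some state map $T$. Here we assume, as part of expressiveness, that $f_\theta$ is injective on the support and that $W$ preserves its image. Equivariance of $Q_\theta$, combined with the TD loss driving $Q_\theta\to Q^*$, makes $(T,\Pi)$ preserve $Q^*$. To upgrade this to an MDP automorphism, we use that $Q^*$ of a generic reward separates state-action pairs; we adopt this as a non-degeneracy assumption. Under it, $(T,\Pi)$ must lie in the automorphism group, so the generated group $H$ embeds into the group generated by $(T^*,\Pi^*)$. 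The remaining ambiguity is exactly a change of basis $W\mapsto AWA^{-1}$ with $A$ orthogonal, coming from reparametrizations of $f_\theta$. Together with a relabeling of the index $k$, this is the ``isomorphism'' in the statement.

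\textbf{Convergence.} With strong regularization weight $\gamma_{\mathrm{grp}}$, approximate minimizers have penalty $O(1/\gamma_{\mathrm{grp}})$. The zero set of the penalties is a finite union of compact orbits (finite groups of orthogonal matrices and permutations). A Łojasiewicz-type, or simply a compactness-plus-continuity, argument then shows that near-minimizers lie within $o(1)$ of $\mathcal{Z}$. Combined with uniqueness, this gives convergence up to isomorphism to a finitely generated approximation of $(T^*,\Pi^*)$.

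The main obstacle is the uniqueness step. Equivariance of $Q_\theta$ alone does not force a transform to be a genuine dynamics automorphism, because $Q^*$ can have accidental symmetries. We therefore plan to lean on the local applicability weights and on the separating non-degeneracy assumption, stating the latter explicitly as part of ``expressive''.
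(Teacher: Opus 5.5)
\paragraph{Where your route matches the paper, and the gap.}
Your skeleton is the same as the paper's. Compactness of $O(d)$ and of the Birkhoff polytope puts limits in the zero set of the group penalties. That zero set consists of finite groups of orthogonal/permutation pairs. $\mathcal{L}_{\mathrm{eq}}$ together with the TD term makes $Q_\theta=Q^*$ invariant under the learned group. The residual freedom is an orthogonal conjugacy plus an index relabeling. You correctly identify the step from ``$(T,\Pi)$ preserves $Q^*$'' to ``$(T,\Pi)$ is a true symmetry'' as the crux. You also add the needed anti-collapse assumption; without it, $W_k=I,\Pi_k=I$ for all $k$ zeroes every term. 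The paper supplies neither: its commutativity lemma simply asserts that, once $Q_\theta=Q^*$ is equivariant, the learned representation ``must commute with the environment dynamics.''

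The non-degeneracy assumption you use to close that step does not work. Read literally, ``$Q^*$ separates state--action pairs'' means $Q^*$ is injective. Then $Q^*(T^*s,\Pi^*a)=Q^*(s,a)$ from \eqref{eq:Qstar-equiv} forces $(T^*,\Pi^*)$ to be the identity, contradicting the hypothesis.

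If instead you mean that $Q^*$ separates $\mathcal{G}$-orbits, you only get $(Ts,\Pi a)\in\mathcal{O}_{\mathcal{G}}(s,a)$ pointwise, with the group element depending on $(s,a)$. That is not an automorphism. Here is a concrete counterexample.
\begin{itemize}
\item Take states $1,2,1',2'$ with $2,2'$ absorbing and transitions $1\to 2$, $1'\to 2'$ under every action.
\item Let $\mathcal{G}=\{e,g\}$, where $T_g$ swaps $1\leftrightarrow 1'$ and $2\leftrightarrow 2'$ and $\Pi_g=I$. Use generic $\mathcal{G}$-invariant rewards, so $Q^*$ separates orbits.
\item Let $T$ swap $1\leftrightarrow 1'$ and fix $2,2'$. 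With one-hot features, $T$ is an orthogonal $W$ with $W^2=I$.
\item With $Q_\theta=Q^*$ and $\mathcal{M}=\{2\}$, the family $\{(I,I),(W,I)\}$ makes $\mathcal{L}_{\mathrm{RL}}$, $\mathcal{L}_{\mathrm{eq}}$ and all group penalties vanish, and it is not collapsed.
\item Yet $P(\cdot\mid T1,a)=\delta_{2'}\neq\delta_2=T_\#P(\cdot\mid 1,a)$, so $T$ is not an automorphism.
\item Moreover $W$ (eigenvalues $1,1,1,-1$) is not orthogonally conjugate to $T_g$ (eigenvalues $1,1,-1,-1$). So even the conjugacy conclusion, yours and the paper's, fails.
\end{itemize}
The underlying reason is that the objective only sees $Q$-values. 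Its zero set therefore consists of $Q^*$-stabilizers, which in general strictly contain the MDP automorphisms.

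\paragraph{What would close the gap.}
You need one of two things.
\begin{itemize}
\item An assumption stated at the level of transformations: every representable pair $(T,\Pi)$ that leaves $Q^*$ invariant lies in the symmetry group. This is essentially assuming identifiability.
\item A loss term that enforces reward and transition consistency as in Definition~\ref{def:G-invariance}, so that zero loss forces an automorphism.
\end{itemize}
Even then, you land in $\mathcal{G}$, not necessarily in $\langle (T^*,\Pi^*)\rangle$. Because the learned group may be a proper subgroup, the honest conclusion is ``embeds up to conjugacy'' rather than ``isomorphic.''
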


Group-like penalties restrict the search space to near-group transformations; the equivariance loss then selects those that commute with the dynamics. With enough capacity, the limit matches the ground-truth symmetry up to relabeling, making the learned transforms themselves interpretable.

We finally ask whether the \emph{loss-based} route in Section~\ref{sec:learned-sym}—equivariance consistency \eqref{eq:Leq-vector} plus group-like regularization \eqref{eq:group-reg} together with the DAG isotonic surrogate \eqref{eq:L-iso} inside \eqref{eq:fullobj}—recovers the correct symmetry and partial order as sampling grows.

\begin{theorem}[Joint structural consistency of the loss-based route]
\label{thm:joint-consistency}
Suppose the MDP is $\mathcal{G}$-invariant in the sense of \cref{def:G-invariance}; the class $Q_\theta$ is expressive; and the learnable transforms $\{(W_k,\Pi_k)\}_{k=1}^K$ can approximate a finitely generated subgroup of $\mathcal{G}$. Assume i.i.d.\ (or uniformly mixing) off-policy sampling and a Massart-type noise condition for the one-step nudges $\Delta_i$ from \eqref{eq:td-target} with margin $\gamma_0>0$:
\begin{equation}
\label{eq:joint-consistency1}
\begin{aligned}
\Pr\!\big[\mathrm{sign}(\Delta_i)\neq \mathrm{sign}(\Delta_i^{\star})\ \big|\ s_i\big]\ \le\ \eta<\tfrac{1}{2},\\
\big|\mathbb{E}[\Delta_i\mid s_i]\big|\ \ge\ \gamma_0.  
\end{aligned}
\end{equation}
Let the training minimize \eqref{eq:fullobj} with schedules that keep $\lambda_{\mathrm{eq}},\gamma_{\mathrm{grp}},\lambda_{\mathrm{iso}}>0$ fixed and gradually drive the isotonic penalty weight in \eqref{eq:L-iso} to $\mu\!\to\!\infty$.
Then there exists a capacity term $\mathfrak{R}(N)$ (e.g., a joint Rademacher complexity of the class constrained by \eqref{eq:Leq-vector}, \eqref{eq:group-reg}, and the DAG isotonic surrogate) and a constant $C>0$ such that for $N$ samples,
\begin{equation}
\label{eq:joint-consistency2}
\begin{aligned}
\Pr\!\Big(
\underbrace{\max_{g\in\mathcal{G}}\ 
\big\|Q_\theta(T_g s,\Pi_g a)-Q_\theta(s,a)\big\|_2}_{\text{equivariance residual}}
\
+
\underbrace{\frac{1}{|E(D)|}\sum_{(u\to v)\in D}
\big[\delta+\hat V(v)-\hat V(u)\big]_+}_{\text{order violations}}
\ \le
C\,\sqrt{\frac{\mathfrak{R}(N)}{N}}
\Big)\ \xrightarrow[N\to\infty]{}\ 1.    
\end{aligned}
\end{equation}
In particular, the learned $\{(W_k,\Pi_k)\}$ converges (up to isomorphism) to the true equivariance, and the greedy DAG edges from \cref{sec:pref-graph} concentrate to a cycle-free subset of the ground-truth partial order with vanishing error probability.
\end{theorem}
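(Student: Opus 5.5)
The plan is to bound the two terms in \eqref{eq:joint-consistency2} separately, each by a uniform-convergence argument, and then combine them with a union bound. The first step is to define the population objective $\mathcal{L}^\infty_{\mathrm{total}}$ as the expectation of \eqref{eq:fullobj} under the stationary sampling distribution. By $\mathcal{G}$-invariance and \eqref{eq:Qstar-equiv}, the pair $(Q^*,\{(T_g,\Pi_g)\})$, realized through the expressivity and subgroup-approximation assumptions, makes $\mathcal{L}_{\mathrm{eq}}$ and the group penalties \eqref{eq:group-reg} vanish. The second step is to invoke \cref{thm:eq-consistency} and \cref{thm:ident}: any population minimizer with $\mathcal{L}_{\mathrm{sym}}=0$ is orbit-constant and recovers the true transforms up to isomorphism. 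It follows that the population equivariance residual of every population minimizer is zero. With fixed positive weights, the empirical equivariance residual of the empirical minimizer then exceeds its population counterpart by at most $2\sup_{\theta,W,\Pi}|\widehat{\mathcal{L}}-\mathcal{L}^\infty|$. Symmetrization bounds this supremum by $O(\mathfrak{R}(N)/\sqrt N)$, or by $\sqrt{\mathfrak{R}(N)/N}$ under the stated normalization of $\mathfrak{R}$, with high probability. In the uniformly mixing case, blocking replaces $N$ by an effective sample size, which only changes constants. Passing from the squared loss \eqref{eq:Leq-vector} to the unsquared residual costs a square root, so I will define $\mathfrak{R}$ to already absorb this. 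The maximum over $g$ is handled because $\mathcal{G}$ is finitely generated and closure is enforced.

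For the order term, I use \eqref{eq:joint-consistency1}. By Hoeffding plus a union bound over the at most $B^2$ $k$-NN candidate pairs, Massart noise with margin $\gamma_0$ implies that each empirical sign of $\Delta_i$ agrees with $\mathrm{sign}(\Delta_i^\star)$ with probability $1-\exp(-cN\gamma_0^2(1-2\eta)^2)$. On this event, every candidate edge lies in the ground-truth order, which is acyclic. Then \cref{thm:dagify} yields a DAG that is a cycle-free subset of the true partial order, and \cref{thm:cyclefree} yields consistency of $\hat V^\star$ on those edges. By \cref{thm:feasible-nonempty}, the cone is nonempty, so \cref{thm:penalty} as $\mu\to\infty$ drives the hinge violations of the exact minimizer to zero. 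The residual violation of the finite-sample $\hat V$ is then controlled by the fidelity deviation $\|V_\theta-V^{\mathrm{proj}}\|$. Since the isotonic projection is $1$-Lipschitz, this deviation inherits the same $\sqrt{\mathfrak{R}(N)/N}$ estimation error as $V_\theta$.

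Summing the two bounds and the failure probabilities gives the event in \eqref{eq:joint-consistency2} with probability $1-o(1)$. The closing claims follow from \cref{thm:ident} and from the sign-agreement event.

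The main obstacle is the coupling: $D$ is data-dependent and is chosen by the same samples that train $Q_\theta$, so $\mathcal{L}_{\mathrm{iso}}$ is not a fixed function class. I would first try sample splitting, building $D$ from half of the batch, and fall back on a uniform bound over all DAGs on the $k$-NN graph. The latter adds a $\log$ of the number of such DAGs, at most $O(Bk\log B)$, to $\mathfrak{R}(N)$. A secondary difficulty is the nonconvexity of the empirical objective. I would assume, as the theorem implicitly does, that training attains an approximate global minimizer, and absorb the optimization error into $C$.
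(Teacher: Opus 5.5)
Your proposal follows essentially the same route as the paper's proof. It uses a Rademacher-type uniform convergence bound over the regularizer-constrained class, penalty consistency as $\mu\to\infty$ (Theorem~\ref{thm:penalty}) to remove the order violations, Massart-noise sign recovery with greedy add-unless-cycle (Theorem~\ref{thm:dagify}) for the DAG claim, identifiability via Theorems~\ref{thm:eq-consistency} and~\ref{thm:ident} for the equivariance residual, and a final union bound. The paper never makes explicit the extra points you handle (the data-dependence of $D$, the squared-versus-unsquared residual, and the excess-risk step) and simply absorbs them into $\mathfrak{R}(N)$; your treatment of them does not change the structure of the argument.
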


Uniform convergence of the constrained hypothesis class gives the \(O(\sqrt{\mathfrak{R}(N)/N})\) rate; the margin and Massart condition recover the signs of $\Delta_i$, so by \cref{thm:dagify} the greedy DAG stabilizes to a true cycle-free subset. Group-like regularization and equivariance loss select the transforms that commute with the dynamics (cf.\ \cref{thm:eq-consistency,thm:ident}). Combined with the variance and identifiability results above, \cref{thm:joint-consistency} shows that the loss-based formulation in Section~\ref{sec:learned-sym} converges, in the large-sample limit, to the same geometric objects that underlie the lossless update of Section~\ref{sec:lossless-geo}.

\section{Supplementary Material for Lossless Manifold Enforcement}
\label{app:lossless-geo}

\subsection{Group-Parameter Contraction: Projection, Closure, and Alignment}
\label{app:group-projection}

This appendix provides the complete version of the group-parameter contraction summarized
in Algorithm~\ref{alg:group-closure-summary}.

\paragraph{Operators.}
We use three standard operators.

\begin{itemize}
\item \textbf{Polar projection (nearest orthogonal).}
For any matrix \(A\in\mathbb{R}^{d\times d}\) with SVD \(A=U\Sigma V^\top\),
\[
\mathrm{Polar}(A)\;=\;UV^\top \in O(d).
\]

\item \textbf{Sinkhorn scaling (approximate doubly-stochastic).}
Given logits \(L\in\mathbb{R}^{m\times m}\), define
\[
S_0=\exp(L),\quad S_{t+1}=\mathcal{N}_{\text{row}}\big(\mathcal{N}_{\text{col}}(S_t)\big),
\]
where \(\mathcal{N}_{\text{row}}\) and \(\mathcal{N}_{\text{col}}\) normalize rows/cols to sum to one.
After \(T\) iterations, \(S_T\) is approximately doubly-stochastic.

\item \textbf{Permutation projection (nearest permutation).}
Given a doubly-stochastic matrix \(S\), define
\[
\mathrm{ProjPerm}(S)\;=\;\arg\min_{\Pi\in\mathcal{P}_m}\|S-\Pi\|_F^2,
\]
which is solved by the Hungarian algorithm (equivalently a linear assignment).
\end{itemize}

\paragraph{Closure residuals.}
Let \(\mathcal{S}\) be a finite generating set and let \(\mathcal{P}\subseteq \mathcal{S}\times\mathcal{S}\)
be the set of pairs \((g,h)\) for which we enforce closure constraints.
Define the closure residuals
\[
r_T(g,h)=\|T_{gh}-T_gT_h\|_F,\qquad
r_\Pi(g,h)=\|\Pi_{gh}-\Pi_g\Pi_h\|_F.
\]

\begin{algorithm}[t]
\caption{Projection--Closure--Alignment for \(\{(T_g,\Pi_g)\}\) }
\label{alg:group-closure-full}
\begin{algorithmic}[1]
\STATE \textbf{Input:} current \(\{T_g,\Pi_g\}_{g\in\mathcal{S}}\), batch \(\{s_i\}_{i=1}^B\),
representation \(\phi_\theta(\cdot)\), interval \(r\), closure pairs \(\mathcal{P}\).
\STATE \textbf{(I) Projection:}
\FOR{each generator \(g\in\mathcal{S}\)}
  \STATE \(T_g \leftarrow \mathrm{Polar}(T_g)\).
  \STATE \(\Pi_g \leftarrow \mathrm{ProjPerm}(\mathrm{Sinkhorn}(L_g))\) (or \(\Pi_g\leftarrow \mathrm{ProjPerm}(\Pi_g)\)).
\ENDFOR
\STATE \textbf{(II) Closure contraction:}
\FOR{each \((g,h)\in\mathcal{P}\)}
  \STATE \(T_{gh}\leftarrow \mathrm{Polar}(T_gT_h)\),\quad \(\Pi_{gh}\leftarrow \mathrm{ProjPerm}(\Pi_g\Pi_h)\).
\ENDFOR
\STATE Enforce identity/inverses: \(T_e=I\), \(\Pi_e=I\); \(T_{g^{-1}}=T_g^\top\), \(\Pi_{g^{-1}}=\Pi_g^\top\).
\STATE \textbf{(III) Alignment (data-anchoring):}
\FOR{each generator \(g\in\mathcal{S}\)}
  \STATE Compute \(C_g=\sum_{i=1}^B \phi_\theta(T_g s_i)\phi_\theta(s_i)^\top=U\Sigma V^\top\).
  \STATE \(T_g\leftarrow UV^\top\) \hfill (Procrustes alignment).
  \STATE Refine \(\Pi_g\) by assignment using a cost matrix \(M_g\) derived from batch value discrepancies.
\ENDFOR
\STATE \textbf{Output:} updated \(\{(T_g,\Pi_g)\}\).
\end{algorithmic}
\end{algorithm}

\begin{lemma}[Nearest orthogonal via polar factor]
\label{lem:polar}
For any \(A\in\mathbb{R}^{d\times d}\), \(\mathrm{Polar}(A)\) solves
\(
\min_{Q\in O(d)}\|A-Q\|_F
\).
\end{lemma}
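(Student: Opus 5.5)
The plan is to reduce the problem to maximizing a linear functional over $O(d)$, then solve that maximization with von Neumann's trace inequality. Since every $Q\in O(d)$ satisfies $\|Q\|_F^2=\mathrm{tr}(Q^\top Q)=d$, we can expand
\[
\|A-Q\|_F^2=\|A\|_F^2-2\,\mathrm{tr}(A^\top Q)+d .
\]
The first and last terms do not depend on $Q$. Minimizing $\|A-Q\|_F$ over $O(d)$ is therefore the same as maximizing $\mathrm{tr}(A^\top Q)$ over $O(d)$.

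Next, insert the SVD $A=U\Sigma V^\top$, where $U,V\in O(d)$ and $\Sigma=\mathrm{diag}(\sigma_1,\dots,\sigma_d)$ with $\sigma_j\ge 0$. By cyclicity of the trace,
\[
\mathrm{tr}(A^\top Q)=\mathrm{tr}(V\Sigma U^\top Q)=\mathrm{tr}(\Sigma\, Z), \qquad Z:=U^\top Q V .
\]
As $Q$ ranges over $O(d)$, so does $Z$, because $Q\mapsto U^\top QV$ is a bijection of $O(d)$. We thus need to maximize $\mathrm{tr}(\Sigma Z)=\sum_j \sigma_j Z_{jj}$ over orthogonal $Z$.

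Each column of an orthogonal $Z$ is a unit vector, so $|Z_{jj}|\le 1$. Because $\sigma_j\ge 0$, this gives
\[
\sum_j \sigma_j Z_{jj}\le \sum_j\sigma_j ,
\]
with equality at $Z=I$. The corresponding minimizer is $Q=UZV^\top=UV^\top=\mathrm{Polar}(A)$, which proves the lemma. The elementary diagonal bound replaces the general von Neumann inequality, so no step is a real obstacle.

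The one point that needs a remark is well-definedness, since the SVD is not unique when singular values repeat or vanish. We do not need uniqueness. For any choice of SVD, the resulting $UV^\top$ attains the maximal value $\sum_j\sigma_j$, so it is a minimizer. The lemma claims only that $\mathrm{Polar}(A)$ solves the problem, not that the solution is unique. One can add that the minimizer is unique when $A$ is invertible, but this is not required.
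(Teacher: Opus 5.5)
Your proof is correct and complete. The expansion \(\|A-Q\|_F^2=\|A\|_F^2-2\,\mathrm{tr}(A^\top Q)+d\), the substitution \(Z=U^\top QV\), and the bound \(|Z_{jj}|\le 1\) together show that \(UV^\top\) attains the maximum \(\sum_j\sigma_j\) for every choice of SVD.

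The paper gives no separate proof of this lemma. It treats it as the classical orthogonal Procrustes fact: Lemma~\ref{lem:proc} with \(X=I\) and \(Y=A\), whose proof is omitted as standard. Your argument is the standard proof that citation relies on, and it recovers both the stated minimum \(\|A\|_F^2+d-2\|\Sigma\|_\ast\) and the uniqueness condition \(\mathrm{rank}(A)=d\).
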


\begin{lemma}[Nearest permutation via assignment]
\label{lem:hungarian}
For any \(S\in\mathbb{R}^{m\times m}\), the projection
\(
\arg\min_{\Pi\in\mathcal{P}_m}\|S-\Pi\|_F^2
\)
is equivalent to a linear assignment and can be solved by the Hungarian algorithm.
\end{lemma}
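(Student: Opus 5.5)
The plan is to expand the squared Frobenius distance and reduce the minimization over permutation matrices to a linear objective. For any \(\Pi\in\mathcal{P}_m\),
\[
\|S-\Pi\|_F^2=\|S\|_F^2-2\langle S,\Pi\rangle_F+\|\Pi\|_F^2 .
\]
A permutation matrix has exactly \(m\) entries equal to one and all others zero, so \(\|\Pi\|_F^2=m\) for every \(\Pi\in\mathcal{P}_m\). The term \(\|S\|_F^2\) does not depend on \(\Pi\) either. Hence
\[
\arg\min_{\Pi\in\mathcal{P}_m}\|S-\Pi\|_F^2=\arg\max_{\Pi\in\mathcal{P}_m}\langle S,\Pi\rangle_F .
\]
Nothing here uses double stochasticity or nonnegativity of \(S\), so this holds for arbitrary real \(S\), as the statement requires.

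Next I identify the inner product with a permutation \(\sigma\) of \([m]\). Writing \(\Pi_{ij}=\mathbf{1}[j=\sigma(i)]\), we get
\[
\langle S,\Pi\rangle_F=\sum_{i=1}^m S_{i\sigma(i)} .
\]
Maximizing this over \(\sigma\) is exactly the linear assignment problem with profit matrix \(S\). Equivalently, it is the minimization problem with cost matrix \(C=-S\), or \(C=c\mathbf{1}\mathbf{1}^\top-S\) for any constant \(c\) if nonnegative costs are wanted. The shift changes every assignment's total by the same amount \(mc\), so the minimizers are unchanged. The correspondence \(\Pi\leftrightarrow\sigma\) is a bijection, so the two problems have the same sets of optimizers. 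Ties in the assignment problem correspond exactly to ties in the projection, so the \(\arg\min\) should be read as a set, with any selection being a valid projection.

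Finally, I invoke the classical fact that the Hungarian algorithm (Kuhn--Munkres) solves the linear assignment problem exactly, in \(O(m^3)\) time, for arbitrary real cost matrices. As a supporting remark, Birkhoff--von Neumann shows that the LP relaxation over doubly-stochastic matrices has a permutation-matrix optimal vertex, which is why the assignment LP is exact.

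None of these steps is a real obstacle. The only point needing care is the constancy of \(\|\Pi\|_F^2\) over \(\mathcal{P}_m\), since this is what makes the problem linear. I would state it explicitly, because it fails for general doubly-stochastic targets and is precisely why the projection is onto the permutations rather than the Birkhoff polytope.
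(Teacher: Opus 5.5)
Your proof is correct and is the standard argument: because $\|\Pi\|_F^2=m$ is constant on $\mathcal{P}_m$, the projection reduces to $\max_\sigma\sum_i S_{i\sigma(i)}$, which is an assignment problem the Hungarian algorithm solves exactly. The paper states this lemma without proof; the reduction it uses for the companion Lemma~\ref{lem:hung} splits the Frobenius norm row by row into costs indexed by $(i,\sigma(i))$, which is the same idea, and here it gives costs $\|S(i,\cdot)\|_2^2-2S_{ij}+1$ that differ from $-2S_{ij}$ only by a row-dependent constant, so they have the same optimal assignments as your cost $-S$.
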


\begin{theorem}[Closure consistency and batchwise equivariance control]
\label{thm:closure-consistency-full}
Assume the environment is \(\mathcal{G}\)-invariant (as in \S\ref{sec:Preliminaries}) and
the parameterization is expressive enough to realize a finitely generated subgroup action on representations/actions.
Fix any batch \(\{s_i\}_{i=1}^B\) and any \(\epsilon>0\).
Then, after one Projection--Closure--Alignment cycle (Algorithm~\ref{alg:group-closure-full}), the following hold:

\begin{enumerate}
\item \textbf{Algebraic feasibility (projection).} \(T_g\in O(d)\) and \(\Pi_g\in\mathcal{P}_{|\mathcal{A}|}\) for all generators \(g\in\mathcal{S}\).
\item \textbf{Finite closure contraction.} The closure residuals \(r_T(g,h)\) and \(r_\Pi(g,h)\) can be reduced below \(\epsilon\) for all enforced pairs \((g,h)\in\mathcal{P}\).
\item \textbf{Batch anchoring.} The batchwise alignment error
\(
\sum_{i=1}^B \|\phi_\theta(T_g s_i)-T_g\phi_\theta(s_i)\|_2^2
\)
is non-increasing under the Procrustes step and can be made \(<\epsilon\) under mild conditioning of the batch covariance.
\item \textbf{Equivariance residual control.}
Let \(h(Q_\theta)\) denote the stacked equality constraints \(Q_\theta(T_g s,\Pi_g a)-Q_\theta(s,a)=0\)
evaluated on the batch. Then \(\|h(Q_\theta)\|_2\) can be reduced below \(\epsilon\) after the cycle, up to approximation error of \(Q_\theta\).
\end{enumerate}
\end{theorem}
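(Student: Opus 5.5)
I will treat the four claims separately, following the three stages of Algorithm~\ref{alg:group-closure-full}. Claims 1 and 2 are algebraic facts about the operators. Claims 3 and 4 come from an anchoring argument combined with the expressivity and $\mathcal{G}$-invariance assumptions.

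For \textbf{algebraic feasibility}, Lemma~\ref{lem:polar} shows that $\mathrm{Polar}(A)=UV^\top$ is orthogonal, since $(UV^\top)^\top UV^\top=I$. Lemma~\ref{lem:hungarian} shows that $\mathrm{ProjPerm}$ returns an element of $\mathcal{P}_{|\mathcal{A}|}$. Both operators return elements of their target sets exactly, so membership holds after stage (I). It also persists through stages (II) and (III), because those stages only assign outputs of $\mathrm{Polar}$, $\mathrm{ProjPerm}$, a Procrustes factor $UV^\top$, and an assignment solution. Identity and inverse enforcement keep membership too, since $I$ and $T_g^\top$ are orthogonal and $\Pi_g^\top$ is a permutation.

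For \textbf{closure contraction}, $O(d)$ and $\mathcal{P}_m$ are groups, so once the generators are projected, the product $T_gT_h$ is already orthogonal and $\Pi_g\Pi_h$ is already a permutation. Hence $\mathrm{Polar}(T_gT_h)=T_gT_h$ (the polar factor of an orthogonal matrix is itself) and $\mathrm{ProjPerm}(\Pi_g\Pi_h)=\Pi_g\Pi_h$. The closure residuals are therefore exactly zero immediately after step (II), and in particular below $\epsilon$.

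The subtlety is that step (III) re-anchors the generators afterwards, which can reintroduce residuals. I would handle this in two ways.
\begin{itemize}
\item Either the claim is read as holding after repeating (II) once more.
\item Or one uses continuity: both $\mathrm{Polar}$ and matrix multiplication are continuous. Under expressivity, an exact subgroup realization exists, and the alignment moves generators toward it. The residuals are then bounded by a Lipschitz constant times the alignment perturbation, which is made small.
\end{itemize}

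For \textbf{batch anchoring}, I would recognize the Procrustes step as the exact minimizer over $O(d)$ of $\sum_i\|\phi_\theta(T_gs_i)-R\,\phi_\theta(s_i)\|^2$. Expanding this objective, it equals a constant minus $2\,\mathrm{tr}(R^\top C_g)$. The trace is maximized at $R=UV^\top$ by von Neumann's trace inequality. Since the previous $T_g$ is feasible in $O(d)$ after projection, the error is non-increasing.

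To get the error below $\epsilon$, I would invoke expressivity and $\mathcal{G}$-invariance. Together they give a true orthogonal $R^\star$ with $\phi_\theta(T_gs)=R^\star\phi_\theta(s)$ up to approximation error $\epsilon_{\mathrm{app}}$. The minimum is then at most $B\epsilon_{\mathrm{app}}^2$. Mild conditioning, meaning $\sum_i\phi_\theta(s_i)\phi_\theta(s_i)^\top\succeq c I$, makes the minimizer unique and close to $R^\star$.

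For \textbf{equivariance residual control}, I would decompose
\[
Q_\theta(T_gs,\Pi_ga)-Q_\theta(s,a)
\]
through the head $h_\theta$. Write $Q_\theta(T_gs,\cdot)=h_\theta(\phi_\theta(T_gs))$ and compare it to $h_\theta(T_g\phi_\theta(s))$. The difference between the two is bounded by $\mathrm{Lip}(h_\theta)$ times the anchoring error. For the remaining term, by $\mathcal{G}$-invariance and Eq.~\eqref{eq:Qstar-equiv}, the realizable head satisfies $\Pi_g^\top h(T_g z)=h(z)$ up to approximation error, provided $\Pi_g$ equals the true permutation. That identification is delivered by the assignment refinement: with value discrepancies as costs, the true permutation has cost $O(\epsilon)$ and is separated by the margin from wrong ones. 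Summing over generators and the batch gives $\|h(Q_\theta)\|_2\le \sqrt{B|\mathcal{S}|}\,(\mathrm{Lip}(h_\theta)\sqrt{\epsilon}+\epsilon_{\mathrm{app}})$, which can be made below $\epsilon$.

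The main obstacle is this last step: showing that the matching step recovers the correct $\Pi_g$, and that the order of stages (alignment after closure) does not undo closure. I would try first to state an explicit separation margin between the true permutation's cost and all others, and then argue by continuity that the approximate versions inherit it.
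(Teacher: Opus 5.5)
Items 1--3 follow the paper's argument: the nearest-point property of $\mathrm{Polar}$ and $\mathrm{ProjPerm}$, and Procrustes as the exact least-squares minimizer over $O(d)$, with realizability supplying the comparator $R_g^\star$. The genuine gap is in item 4. Your route needs the assignment refinement to return the \emph{true} permutation $\Pi_g^\star$, and you plan to get this from a separation margin between its cost and all others. No such margin is among the hypotheses, and the step can fail outright. If two actions have (nearly) identical value rows on the batch, $\Pi_g^\star$ is not identifiable and the matching may return a different permutation, yet the residual claim is still true.

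The missing idea is that identification is unnecessary. Take the cost $M_g(a,b)=\|Y_g(a,\cdot)-Y(b,\cdot)\|_2^2$, where $Y$ and $Y_g$ are the batch $Q$-value matrices at $s_i$ and at $T_g s_i$. Then the assignment objective is exactly the squared residual $\|Y_g-\Pi^\top Y\|_F^2$, so the Hungarian output satisfies $\|Y_g-\Pi_g^{\rm new\top}Y\|_F\le\|Y_g-\Pi_g^{\star\top}Y\|_F$. You then only need to bound the residual at the comparator $\Pi_g^\star$, and your own split already does this: a representation-alignment term times $\mathrm{Lip}(h_\theta)$, plus a relabeling term that vanishes up to approximation error by realizability. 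This is how the paper argues, with a linear head $Y=W^\top\Phi$. Nothing there needs linearity, so your Lipschitz-head version goes through once the margin step is replaced by this comparison. Since the split holds for any orthogonal $R$, you can even take $R=R_g^\star$ and drop the conditioning argument from item 4.

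Separately, you are right that stage (III) can undo the closure that (II) makes exact; the paper simply asserts that the alignment step does not break it. Re-running (II) changes the algorithm, so it does not prove the statement as given. Your continuity option is what the paper effectively relies on, but note what it requires. The residual $r_T(g,h)=\|T_{gh}-T_g^{\rm new}T_h^{\rm new}\|_F$ compares a product formed from the \emph{incoming} generators with the re-anchored ones. It is therefore small only if the generators entering the cycle are already close to $R_g^\star$. State this neighborhood hypothesis explicitly, as the paper does when it first places $(\theta,W)$ near $(\theta^\star,W^\star)$ and then applies a single cycle.
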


\paragraph{Proof sketch.}
Items (1)--(2) follow from Lemmas~\ref{lem:polar}--\ref{lem:hungarian} and direct substitution in the closure updates
(\(T_{gh}\leftarrow \mathrm{Polar}(T_gT_h)\), \(\Pi_{gh}\leftarrow \mathrm{ProjPerm}(\Pi_g\Pi_h)\)).
Item (3) follows because Procrustes alignment is the least-squares minimizer over \(O(d)\).
Item (4) is obtained by combining (1)--(3) with \(\mathcal{G}\)-invariance: when transforms are algebraically stable and data-anchored,
the induced action on \((s,a)\) commutes with the Bellman structure on the batch, so the residual is reduced to approximation error.

\begin{theorem}[Restricted Convergence Rate and Spectral Contraction]
\label{thm:restricted-linear}
If the TD objective satisfies restricted smoothness and restricted strong convexity (or the PL condition) in a neighborhood of the feasible domain, then for sufficiently small step sizes, the update \eqref{eq:update} converges linearly (or sublinearly), with the convergence constant depending only on the restricted condition number and the spectral properties of $P$ on the tangent space. Moreover, the local spectral radius satisfies $\rho(PJ)\le \rho(J)$, with strict inequality whenever $J$ has components outside the tangent space.
\end{theorem}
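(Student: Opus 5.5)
The argument reduces the update \eqref{eq:update}, restricted to a neighborhood of the feasible domain $\mathcal{M}$, to a projected gradient iteration. Its linearization is then analyzed in the orthogonal splitting $\mathbb{R}^B=\mathrm{range}(P)\oplus\mathrm{range}(J_V^\top)$, where $P:=P_{\mathrm{tan}}$ is the orthogonal projector onto $\ker J_V$.

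\textbf{Normal component.} By Theorem~\ref{thm:tangent-normal}, $J_V\Delta_{\mathrm{tan}}=0$, and the normal step contracts $V_c$ by at least $\kappa\|J_c^\dagger c\|_2^2$. Using the local error bound $\|J_c^\dagger c\|\gtrsim \|c\|$ when $J_c$ has full row rank, this gives geometric decay of the constraint violation, $\|c_{t+1}\|\le(1-\eta\lambda')\|c_t\|$. Consequently the normal component of the iterate decays at a rate independent of the TD objective.

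\textbf{Tangent component.} Restricted $L_T$-smoothness gives the descent inequality
\[
\mathcal{L}_{\mathrm{TD}}(V^+)\le\mathcal{L}_{\mathrm{TD}}(V)-\eta\Bigl(1-\tfrac{\eta L_T}{2}\Bigr)\|Pg_V\|_2^2+O(\eta\lambda\|c\|),
\]
which reproduces Theorem~\ref{thm:restricted-descent}. The tangent-space PL inequality then yields
\[
\mathcal{L}_{\mathrm{TD}}(V_{t+1})-\mathcal{L}^\star\le\Bigl(1-\tfrac{\mu_T}{L_T}\Bigr)\bigl(\mathcal{L}_{\mathrm{TD}}(V_t)-\mathcal{L}^\star\bigr)+O(\|c_t\|)
\]
for $\eta=1/L_T$. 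Because $\|c_t\|$ decays geometrically, a standard perturbed-recursion argument gives linear convergence with constant $\max\{1-\mu_T/L_T,\ 1-\eta\lambda'\}$. When $\mathcal{L}_{\mathrm{TD}}$ is only restricted-smooth and convex rather than PL, the same telescoping gives the $O(1/t)$ sublinear rate. In either case the constant depends only on $\mu_T/L_T$, i.e.\ on the spectrum of $PHP$ restricted to $\mathrm{range}(P)$, where $H$ is the local Hessian.

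\textbf{Spectral claim.} Here $J$ is the local iteration Jacobian (or Hessian-type operator) of the unprojected update. Since $P$ is an orthogonal projector, $\|P\|_2=1$. The plan is to compare $\rho(PJ)$ with $\rho(PJP)$, using that the nonzero spectra of $PJ$, $JP$ and $PJP$ coincide because $P^2=P$. For symmetric $J$ (a Hessian or a symmetric linearized TD operator), Cauchy interlacing for the compression $PJP$ gives $\rho(PJP)\le\rho(J)$. Strictness follows when the top eigenvector of $J$ has a nonzero component in $\mathrm{range}(J_V^\top)$: the Rayleigh quotient over the unit sphere of $\mathrm{range}(P)$ is then strictly below $\lambda_{\max}$, by compactness and the equality case of the Rayleigh characterization.

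\textbf{Main obstacle.} The hard step is the spectral inequality for nonsymmetric $J$, because TD operators are generally nonsymmetric and $\rho(PJ)\le\rho(J)$ can fail. I would first try to work in the weighted norm in which the TD map is a contraction (the $\mu$-weighted $\ell_2$ norm of \citet{tsitsiklis1996analysis}). If $P$ is orthogonal in that inner product, the bound $\rho(PJ)\le\|PJ\|\le\|J\|$ holds. It remains to identify $\|J\|$ with $\rho(J)$ via a normality assumption or an adapted norm, and that identification is the step I expect to need an extra hypothesis. The strictness clause should likewise be read as strict inequality of the operator norms in that adapted geometry.
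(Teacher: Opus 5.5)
Your proposal is sound, and on both delicate points it is more careful than the paper's own proof. The limitation you flag at the end is a defect of the statement, not of your argument.

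\textbf{Rate part.} Your skeleton is the paper's. It applies restricted smoothness to $d=-\eta Pg_V$ to get the decrease $\gamma(\eta)\|Pg_V\|_2^2$ with $\gamma(\eta)=\eta-\tfrac{L_T}{2}\eta^2$. It then combines this with restricted PL to get the factor $1-2\mu_T\gamma(\eta)$, i.e.\ $1-\mu_T/L_T$ at $\eta=1/L_T$, and cites the standard $O(1/t)$ rate without PL.

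The difference is the normal step. The paper declares the effect of $\eta\Delta_{\mathrm{nor}}$ on $\mathcal{L}_{\mathrm{TD}}$ to be $O(\eta^2\|J_c^\dagger c\|^2)$ and absorbs it. Yet its own footnote concedes a first-order term $\langle g_V,-\eta\Delta_{\mathrm{nor}}\rangle$ that this argument does not control. You keep that term as $O(\eta\lambda\|c\|)$ and remove it by geometric decay of $\|c_t\|$ plus a perturbed recursion, which is the correct treatment. The price is:
\begin{itemize}
\item a full-row-rank assumption on $J_c$;
\item a bound on $g_V$ and smoothness in all directions, not only tangent ones;
\item a rate $\max\{1-\mu_T/L_T,\,1-\eta\lambda'\}$ that involves the normal gain, not only $\kappa_T$.
\end{itemize}

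One caution about how you get that decay. In the proof of Theorem~\ref{thm:tangent-normal}, $\kappa$ comes from choosing $\eta$ with $L\eta^2\|\Delta_{\mathrm{tan}}\|^2\le\tfrac14\eta\lambda\sigma_{\min}^2\|J_c^\dagger c\|^2$. This forces $\eta\to 0$ as $c\to 0$, so iterating that theorem at a fixed step does not by itself give geometric decay. Get it instead from the fact that $c$ is affine in $(V,\mu)$ at the value layer: the $g_e$ are affine, and so are value-slot equalities. Then tangent steps leave $c$ exactly unchanged, and with $J_cJ_c^\dagger=I$ the normal substep contracts it by the uniform factor $1-\eta\lambda$.

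\textbf{Spectral part.} Here your route is genuinely different, and it is the one that works.
\begin{itemize}
\item The paper argues $\rho(PJ)\le\|PJ\|_2\le\|J\|_2$ and then concludes $\rho(PJ)\le\rho(J)$ ``since $\rho(J)\le\|J\|_2$''. That inference is a non sequitur.
\item Its Lemma~\ref{lem:proj-contract} (strict decrease of $\|PJ\|_2$ whenever $\mathrm{Range}(J)\not\subseteq\mathrm{Range}(P)$) is false. Take $J=\mathrm{diag}(1,\tfrac12)$ and $P=\mathrm{diag}(1,0)$: then $\|PJ\|_2=\|J\|_2=\rho(PJ)=\rho(J)=1$, although $Je_2\notin\mathcal{T}$. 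So the strictness clause as stated fails even for symmetric $J$.
\item Your worry about nonsymmetric $J$ is justified. Take $J=\bigl(\begin{smallmatrix}0.9&0.4\\0&0.9\end{smallmatrix}\bigr)$ and $P$ the orthogonal projector onto $\mathrm{span}\{(1,1)\}$. Then $PJ$ has eigenvalues $0$ and $1.1$, so $\rho(PJ)=1.1>0.9=\rho(J)$: a locally stable iteration becomes unstable after projection. Since $1.1<\|J\|_2\approx 1.12$, this shows exactly where the paper's chain breaks.
\end{itemize}

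The claim therefore needs a symmetry (or normality) hypothesis. The natural one is the setting the paper itself retreats to in its last paragraph, $J\approx I-\eta H$ with $H$ a Hessian. There your argument is a complete proof: $\rho(PJ)=\rho(PJP)$ because the nonzero spectra coincide, then interlacing for the compression of $J$ to $\mathcal{T}=\ker J_V$. The paper, in that paragraph, only derives $\rho(PJ)<1$ and never actually compares with $\rho(J)$.

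State strictness in your eigenvector form, sharpened as follows: for symmetric $J$, $\rho(PJ)<\rho(J)$ iff $\mathcal{T}$ contains no eigenvector of $J$ whose eigenvalue has modulus $\rho(J)$. This also covers a repeated top eigenvalue and the case $\lambda_{\min}(J)=-\rho(J)$.

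I would drop the weighted-norm plan. At best it bounds $\rho(PJ)$ by $\|J\|_w$, not by $\rho(J)$. Moreover, the Euclidean projector $P_{\mathrm{tan}}$ need not be nonexpansive in the weighted norm unless you change the projector.
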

Thus restricting updates to the feasible domain not only preserves descent, but can also improve the rate and dampen oscillations by contracting the effective dynamics. In practice, this supports using moderately larger tangent step sizes together with gentle normal contraction coefficients to speed up learning without inducing instability.

\begin{theorem}[Statistical Variance Reduction and Structural Bias Control]
\label{thm:var-bias}
On a distribution with symmetry pocket coverage rate $\rho$, equivariant alignment is equivalent to averaging over $K$ views, yielding an effective sample size of approximately $1+(K-1)\rho$. With additional monotonic constraints binding high-confidence edges, the test error decomposes as
\[
\mathbb{E}(V-V^*)^2=\mathrm{Var}[V]+\bigl(\mathbb{E}[V]-V^*\bigr)^2,
\]
where the variance term decreases due to sharing and restriction, and the bias term is determined by the mismatch between the imposed symmetry/monotonicity and the true environment. This bias can be kept small through closure--alignment and confidence gating.
\end{theorem}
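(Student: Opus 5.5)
The statement is Theorem~\ref{thm:var-bias}, and it splits into three parts: the bias--variance identity, the variance reduction, and the bias control.

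\emph{The identity.} Add and subtract \(\mathbb{E}[V]\) inside \((V-V^*)^2\) and expand. The cross term \(2\,\mathbb{E}[V-\mathbb{E}V]\,(\mathbb{E}V-V^*)\) vanishes because \(V^*\) is deterministic. This is pure algebra, holds for any estimator, and I would state it first for the estimator \(V\) produced by the combined equivariant and monotone procedure, evaluated pointwise at a fixed state (or under the sampling distribution).

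\emph{The variance term.} This has two mechanisms. For sharing, I would reuse the argument behind Theorem~\ref{thm:amp}. On the symmetry pocket (mass \(\rho\)), exact equivariance (Theorem~\ref{thm:eq-consistency}, with oracle or closure-aligned transforms from Theorem~\ref{thm:closure-consistency-full}) makes the fitted value at \(s\) a function of the orbit-averaged TD targets over the \(K\) views \(\{T_k s\}\). Assume the per-view noise is independent or weakly correlated with common variance \(\sigma^2\). Then the averaged estimator has variance \(\sigma^2/K\) on the pocket and \(\sigma^2\) off it. Mixing the two regions gives an effective sample count scaling like \(N(1-\rho+K\rho)=N(1+(K-1)\rho)\), and hence variance bounds scaled by \(1/(1+(K-1)\rho)\) up to constants. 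For restriction, I would use the fact that isotonic projection onto the closed convex cone \(\mathsf{Mono}(D)\) is a Euclidean projection, hence nonexpansive. Under a fixed \(D\), and with the penalty limit of Theorem~\ref{thm:penalty}, we have \(\|\Pi(V_1)-\Pi(V_2)\|\le\|V_1-V_2\|\). Applying this to two independent draws gives \(\mathrm{Var}[\Pi(V)]\le\mathrm{Var}[V]\), since \(\mathrm{Var}[X]=\tfrac12\mathbb{E}\|X-X'\|^2\). Composing with the symmetrization step, itself an orthogonal projection onto \(\mathsf{Eq}(\mathcal{G})\), preserves nonexpansiveness. So the two effects compound and variance is non-increasing.

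\emph{The bias term.} Write \(\mathbb{E}[V]-V^*=(\mathbb{E}[V]-P_{\mathcal F}V^*)+(P_{\mathcal F}V^*-V^*)\). The second piece is exactly the structural mismatch \(\mathrm{dist}(V^*,\mathcal F)\). It vanishes when the true symmetry and order are enforced, because \(V^*\in\mathsf{Eq}(\mathcal G)\) by \eqref{eq:Qstar-equiv} and \(V^*\in\mathsf{Mono}(D)\) whenever \(D\) is a subset of the true order (Theorem~\ref{thm:cyclefree}). The first piece is controlled, again by nonexpansiveness, by the bias of the unconstrained estimator. For the claim that this bias stays small, I would invoke Theorem~\ref{thm:closure-consistency-full} to bound the equivariance mismatch by \(\epsilon\) plus approximation error. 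I would also invoke Theorem~\ref{thm:joint-consistency} and its Massart margin condition, which shows that confidence-gated edges lie in the true order with probability tending to one, so the order mismatch is small.

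\emph{The main obstacle.} I expect the hardest part to be making "approximately \(1+(K-1)\rho\)" rigorous. It requires the \(K\) views to be genuinely distinct and their TD noises to be roughly uncorrelated; bootstrapped targets share next-state estimates, so correlations are real. I would first try to state an explicit assumption that the noise covariance across views is bounded by \(c\sigma^2\) with \(c\) small, which yields \(1+(K-1)\rho\) up to a factor \((1+(K-1)c)^{-1}\). I would then absorb that factor into the paper's "approximately / up to constants" qualifier.
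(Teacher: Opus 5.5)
Your identity and variance steps follow the paper's proof closely. The pocket mixture with $K$ weakly correlated views gives $N(1+(K-1)\rho)$, and the Euclidean projection onto $\mathsf{Mono}(D)$ is nonexpansive. Your independent-copy identity $\mathrm{Var}[X]=\tfrac12\mathbb{E}\|X-X'\|^2$ is an equally valid replacement for the paper's ``mean minimizes quadratic risk'' step.

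The gap is in the bias paragraph. You claim $\mathbb{E}[V]-P_{\mathcal F}V^*$ is controlled, by nonexpansiveness, in terms of the \emph{bias} of the unconstrained estimator $X$. That requires moving the expectation through the projection. This is fine for the orbit average, which is linear, but not for the cone projection, which is nonlinear. Nonexpansiveness only gives $\|\mathbb{E}[P_{\mathcal F}X]-P_{\mathcal F}V^*\|\le\mathbb{E}\|X-V^*\|$, which is the unconstrained error \emph{including} its variance.

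The stronger claim is false. Take $\mathcal F=[0,\infty)$, $V^*=0$ and $X\sim N(0,\sigma^2)$. Then $X$ is unbiased and the mismatch is zero, yet $\mathbb{E}[\max(X,0)]=\sigma/\sqrt{2\pi}$. In the DAG setting the same thing happens on every edge that is tight for $V^*$, i.e.\ $\delta=0$ with ties, or true gaps exactly $\delta$. So the bias of $V$ is not determined by mismatch alone. The bound you can actually prove is $\|\mathbb{E}V-V^*\|\le\mathrm{dist}(V^*,\mathcal F)+\mathbb{E}\|X-V^*\|$, and its second term vanishes only at rate $N_{\mathrm{eff}}^{-1/2}$.

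The paper avoids this by working with total error rather than bias, via its lemma on bias after projection: $\mathbb{E}\|P(X)-V^*\|^2\le\mathbb{E}\|X-V^*\|^2+\|P(V^*)-V^*\|^2$. Imposing structure then costs at most the squared mismatch on top of the variance-reduced unconstrained error.

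If you adopt this, prove it from the variational inequality at $X$. For $p=P(X)$ and any $q$ in the constraint set,
$\|X-V^*\|^2+\|q-V^*\|^2-\|p-V^*\|^2=\|(X-p)+(q-V^*)\|^2+2\langle X-p,\,p-q\rangle\ge 0$.
The paper's own write-up asserts that the cross term $\langle P(X)-P(x^\star),P(x^\star)-x^\star\rangle$ is $\le 0$. It is in fact $\ge 0$, so do not copy that argument.

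Applying the inequality once per projection also handles the composite $\Pi_D\circ\Pi_{\mathcal G}$ from your variance step. That composite need not equal $P_{\mathcal F}$ unless, for example, $D$ is $\mathcal G$-invariant. The structural term then becomes $\mathrm{dist}(V^*,\mathsf{Eq}(\mathcal G))^2+\mathrm{dist}(V^*,\mathsf{Mono}(D))^2$.

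Finally, $V^*\in\mathsf{Mono}(D)$ does not follow from Theorem~\ref{thm:cyclefree}, which concerns $\hat V^\star$ rather than $V^*$. For $\delta>0$ you also need the true gaps along edges of $D$ to be at least $\delta$.
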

Geometric constraints therefore offer \emph{statistical} benefits: symmetry sharing enlarges effective sample size; monotonicity constraints align local slopes and remove noisy cycles. This is consistent with the sample-amplification view in Section~\ref{sec:learned-sym} and explains the smoother, lower-variance learning curves observed under the same interaction budget. In practice, this guides the choice of $K$, coverage thresholds, and edge-selection ratio $p$, and motivates strengthening closure–alignment at later stages to reduce structural bias further.

\subsection{Oracle Speedups: Full Statement and Derivation}
\label{app:oracle-speedups}

This appendix provides the complete version of the oracle guarantee whose main message is stated in
Theorem~\ref{thm:oracle-speedups-main}.

\paragraph{Order width and degrees of freedom.}
For a DAG \(D^\star\), let \(W(D^\star)\) denote its \emph{width} (maximum antichain size).
When monotonic constraints along \(D^\star\) are enforced, the effective degrees of freedom in a batch of size \(N_{\text{batch}}\)
collapse from \(N_{\text{batch}}\) toward \(W(D^\star)\) via chain decomposition arguments.

\begin{theorem}[Oracle sample complexity and rate improvements]
\label{thm:oracle-speedups}
Assume the equalities/inequalities of \S\ref{sec:geo-constraint-set} are enforced with the \emph{oracle}
symmetry group \(\mathcal{G}\) and a ground-truth partial order \(D^\star\).
Let \(\rho\in[0,1]\) denote the symmetry-pocket coverage in the data distribution, and \(W(D^\star)\) the width of \(D^\star\).
Let \(N_{\text{batch}}\) be the number of comparison points per batch used by the DAG/order step.
Assume the TD objective satisfies the PL condition on the \emph{tangent} space with constants \((L_T,\mu_T)\)
(cf.\ Theorem~\ref{thm:restricted-descent}), and denote the unconstrained condition number by \(\kappa=L/\mu\) and the restricted one by
\(\kappa_T=L_T/\mu_T\le \kappa\). Then:

\begin{enumerate}
\item[\textup{(i)}] \textbf{Effective sample complexity.}
To reach expected squared Bellman residual \(\le \varepsilon\), the required number of distinct samples satisfies
\[
N_{\mathrm{oracle}}(\varepsilon)\ \lesssim\
\frac{W(D^\star)}{\,1+(|\mathcal{G}|-1)\rho\,}\cdot \frac{C_T}{\varepsilon}\,,
\]
for a problem-dependent constant \(C_T\) determined by the restricted capacity on the tangent manifold.
Relative to a symmetry/monotonicity-agnostic baseline \(N_{\mathrm{base}}(\varepsilon)\sim C/\varepsilon\),
the multiplicative \emph{statistical} speedup is at least
\[
\underbrace{1+(|\mathcal{G}|-1)\rho}_{\text{symmetry sharing}}\ \times\
\underbrace{\frac{N_{\text{batch}}}{W(D^\star)}}_{\text{order collapse}}.
\]

\item[\textup{(ii)}] \textbf{Optimization rate.}
With hard constraints, updates follow the restricted dynamics on the tangent space, so the linear rate improves from
\(1-\mu/L\) to \(1-\mu_T/L_T\), reducing iteration complexity by a factor \(\kappa/\kappa_T\).

\item[\textup{(iii)}] \textbf{End-to-end budget.}
Combining \textup{(i)} and \textup{(ii)}, the wall-clock sample--iteration budget to reach \(\varepsilon\) accuracy shrinks by at least
\[
\Big(1+(|\mathcal{G}|-1)\rho\Big)\cdot \frac{N_{\text{batch}}}{W(D^\star)}\cdot \frac{\kappa}{\kappa_T}\,,
\]
recovering the \( |\mathcal{G}| \)-fold amplification when \(\rho\approx 1\) and the near full-chain collapse when \(W(D^\star)\ll N_{\text{batch}}\).
\end{enumerate}
\end{theorem}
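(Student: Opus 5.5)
The proof splits along the three parts of the statement: a statistical count for (i), an optimization count for (ii), and their product for (iii). Throughout, the oracle constraints are taken as exact, so the feasible set $\mathcal{F}=\mathsf{Eq}(\mathcal{G})\cap\mathsf{Mono}(D^\star)$ is nonempty (Theorem~\ref{thm:feasible-nonempty}) and contains $Q^*$ by \eqref{eq:Qstar-equiv} and correctness of $D^\star$. Hence the hard-constrained estimator introduces no structural bias, and the error budget is purely variance; this is the bias term of Theorem~\ref{thm:var-bias} set to zero.

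For (i), the plan is to bound the variance of the constrained least-squares/TD estimator by a degrees-of-freedom argument, in two reductions applied in sequence. \emph{Symmetry:} enforcing $h(Q)=0$ with the oracle $\mathcal{G}$ lets each sample in a symmetry pocket serve as an observation for its whole orbit. By Theorem~\ref{thm:amp} / Theorem~\ref{thm:var-bias} with $K=|\mathcal{G}|$, the effective sample size becomes $N(1+(|\mathcal{G}|-1)\rho)$. \emph{Order:} by Dilworth's theorem, the poset $D^\star$ on the $N_{\text{batch}}$ comparison points is covered by $W(D^\star)$ chains. On each chain the monotone cone is an isotonic-regression cone, whose statistical complexity is governed by the number of chains rather than the number of points. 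Concretely, I would bound the localized Gaussian/Rademacher width of $\mathsf{Mono}(D^\star)$ intersected with a ball by $\widetilde O(W(D^\star))$ times a constant $C_T$ capturing the tangent-manifold capacity. That gives an estimation error of order $W(D^\star)C_T/N_{\text{eff}}$. Setting this $\le\varepsilon$ and translating value error into squared Bellman residual via the $\gamma$-contraction of $\mathcal{T}^*$ yields the stated bound on $N_{\mathrm{oracle}}(\varepsilon)$. Dividing $N_{\mathrm{base}}\sim C/\varepsilon$, with effective dimension $N_{\text{batch}}$, by this bound gives the speedup factor.

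For (ii), I would invoke Theorem~\ref{thm:restricted-descent} directly. Once the iterate is feasible, the update \eqref{eq:update} is projected gradient descent on the tangent space, with PL constant $\mu_T$ and smoothness $L_T$, so it converges linearly at rate $1-\mu_T/L_T$. Before feasibility, Theorem~\ref{thm:tangent-normal} shows the normal step contracts the violation geometrically, which adds only a logarithmic number of iterations. Comparing iteration counts $\kappa_T\log(1/\varepsilon)$ against $\kappa\log(1/\varepsilon)$ gives the factor $\kappa/\kappa_T$. The inequality $\kappa_T\le\kappa$ follows from restricting the Hessian to a subspace (Theorem~\ref{thm:restricted-linear}). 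Part (iii) is then the product of the independent statistical and optimization factors.

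The main obstacle is the order-collapse step in (i). Isotonic regression on a chain is not literally one degree of freedom: its risk scales like $n^{-2/3}$ or with the number of constant pieces. The claim ``$N_{\text{batch}}\to W(D^\star)$'' therefore needs an assumption. I would first try assuming the true values are piecewise constant along each chain with $O(1)$ pieces, or absorbing the per-chain complexity into $C_T$ and the $\widetilde O$. Under either assumption, the adaptive isotonic risk bound per chain is $O(\log n/n)$, and summing over $W(D^\star)$ chains gives the result.
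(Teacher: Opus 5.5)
Your skeleton is the paper's: symmetry sharing through $N_{\mathrm{eff}}\approx N\bigl(1+(|\mathcal{G}|-1)\rho\bigr)$, Dilworth's theorem to pass from $N_{\text{batch}}$ points to $W(D^\star)$ chains, a capacity-based generalization bound, the restricted PL rate for (ii), and the product for (iii). The real difference is the capacity lemma in (i).

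\emph{The paper's route.} It bounds the global Rademacher complexity, $\mathfrak{R}_N(\mathcal{F}_{\text{mono}})\lesssim\sqrt{W/N}$ against $\sqrt{N_{\text{batch}}/N}$ for the free class, and plugs this into $\mathbb{E}[\mathcal{E}_{\mathrm{Bell}}]\lesssim C_T\,\mathfrak{R}_N$. That needs nothing about the truth beyond boundedness. But it is a slow rate: setting it $\le\varepsilon$ gives $N\gtrsim WC_T^2/\varepsilon^2$. The paper reaches $WC_T/\varepsilon$ only by invoking ``the usual $1/N$-type rate.''

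\emph{Your route.} Your localized-width argument is what actually delivers the $1/\varepsilon$ form. The inclusion $\mathsf{Mono}(D^\star)\subseteq\prod_j\mathsf{Mono}(C_j)$ holds, and the monotone cone of a chain of length $n$ has statistical dimension $H_n=\sum_{k\le n}1/k\approx\log n$. So a truth with $O(1)$ constant pieces per chain has a tangent cone of statistical dimension $O\bigl(W\log(eN_{\text{batch}}/W)\bigr)$. The tangent-cone risk bound for constrained least squares then gives your $\widetilde O(W)/N_{\mathrm{eff}}$.

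\emph{The price.} This rests on the assumption you flag, which the paper hides in the phrase that each chain ``contributes at most order-$1$ independent information.'' Keep the piecewise-constant version. Absorbing per-chain complexity into $C_T$ does not work: for a general monotone truth, isotonic risk decays like $n^{-2/3}$, which changes the exponent of $\varepsilon$, not just the constant. Also, with margin $\delta>0$ two comparable points can never share a value. So either take $\delta=0$, or read your assumption as the truth lying on a face of the constraint polyhedron whose tangent cone has statistical dimension $\widetilde O(W)$. The collapse comes from that face, not from the width alone.

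\emph{Bellman residual.} The $\gamma$-contraction converts value error into Bellman residual only in sup-norm, $\|Q-\mathcal{T}^*Q\|_\infty\le(1+\gamma)\|Q-Q^*\|_\infty$. For the $L^2$ quantity $\mathcal{E}_{\mathrm{Bell}}$ you need a concentrability constant, which can be absorbed into $C_T$. The paper avoids this step by bounding $\mathcal{E}_{\mathrm{Bell}}$ directly.

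Your part (ii) is the paper's argument. The pre-feasibility phase is an addition the paper does not need.
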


\paragraph{Derivation sketch.}
(1) Symmetry sharing: under oracle equivariance, orbit-equivalent samples are statistically redundant; coverage \(\rho\) determines
how often we see full pockets, yielding the factor \(1+(|\mathcal{G}|-1)\rho\).
(2) Order collapse: hard monotone constraints reduce the effective degrees of freedom from batch size toward the width \(W(D^\star)\)
by standard poset chain/antichain arguments.
(3) Restricted optimization: PL and smoothness on the tangent space yields linear convergence with condition number \(\kappa_T\),
and the constrained dynamics removes harmful off-tangent components, improving the rate factor \(\kappa/\kappa_T\).

\subsection{Implementation Notes: Sparse Jacobians, Rank Deficiency, and Cost Control}
\label{app:implementation}

This appendix records practical details needed to implement the 0-loss manifold step efficiently and stably.

\paragraph{Constraint vector and sparse Jacobian.}
On a batch of size \(B\), define inequality constraints for each edge \(e=(u\to v)\in\mathcal{E}\):
\(
g_e(V)=\delta+V(v)-V(u)\le 0
\),
and stack equalities \(h(Q_\theta)=0\) for selected orbit pairs under generators.
Let \(c(V,\mu;\theta)=[g(V)+\mu;\ h(Q_\theta)]\).

The Jacobian \(J_c\) w.r.t.\ \((V,\mu)\) has the block form
\[
J_c=\begin{bmatrix}
J_g & I\\
J_h & 0
\end{bmatrix},
\]
where \(J_g\) is extremely sparse: each row for edge \(e=(u\to v)\) has \(-1\) at column \(u\) and \(+1\) at column \(v\).
For equalities, \(J_h\) depends on how the equality is instantiated. A common lightweight choice at the value-layer is to enforce
equalities on \emph{value slots} that are explicitly materialized in the batch (e.g., on \(V(i)\) or selected \(Q(i,a)\)),
so \(J_h\) remains sparse and computable without building full Jacobians of the network.

\paragraph{Projected gradient without explicit nullspace basis.}
Instead of forming a basis \(B_u\) of \(\mathrm{Null}(J_V)\), compute the tangent-projected gradient via the normal equations:
\[
P_{\mathrm{tan}}g_V \;=\; g_V - J_V^\top (J_VJ_V^\top)^{-1}J_V g_V.
\]
This only requires solving a linear system in the constraint dimension (often \(|\mathcal{E}|+\#h\ll B\)).
Use sparse Cholesky or conjugate gradients (CG) with a small Tikhonov term if needed.

\paragraph{Rank-deficient or ill-conditioned \(J_c\).}
When \(J_cJ_c^\top\) is ill-conditioned, use damped least squares:
\[
J_c^\dagger \;\approx\; J_c^\top\bigl(J_cJ_c^\top+\epsilon I\bigr)^{-1},
\]
with \(\epsilon\in[10^{-6},10^{-3}]\).
This prevents overly large normal corrections and avoids numerical blow-ups.

\paragraph{Active-set handling for inequalities.}
In practice only near-violated edges need to be enforced. Define an active set
\[
\mathcal{E}_{\text{act}}=\{(u\to v)\in\mathcal{E}: \delta+V(v)-V(u) > -\xi\},
\]
with a small \(\xi\ge 0\). Build \(J_g\) only on \(\mathcal{E}_{\text{act}}\), which reduces cost and improves conditioning.

\paragraph{Subsampling equalities to control overhead.}
Equality constraints can be subsampled by (i) selecting only a subset of generators per step,
(ii) selecting only a subset of orbit pairs per generator, or (iii) enforcing equalities only every \(r\) steps
synchronized with Algorithm~\ref{alg:group-closure-full}. This keeps the per-step cost close to linear in \(|\mathcal{E}_{\text{act}}|\).

\paragraph{Backpropagation through the value-layer step.}
The manifold update uses matrix--vector products with \(J_V^\top\) and solves linear systems involving \(J_VJ_V^\top\).
Gradients can be propagated via implicit differentiation or by treating the linear solve as a differentiable operation
(standard in modern autodiff libraries). In practice, a stable option is to stop gradients through the solver
and backprop only through \(g_V\) and the constructed constraints, which is often sufficient.

\paragraph{Fallback correction.}
If constraint violations accumulate (e.g., due to stale group parameters), apply a pure correction step
\(
\Delta=-J_c^\dagger c(V,\mu;\theta)
\)
for a small number of iterations until \(\|c\|\) returns below a threshold, then resume the tangent+normal update.

\section{Experimental Details and Additional Results}
\label{app:exp-results}

This appendix provides the detailed definitions of all environments, baseline and ablation architectures, metric implementation, and additional plots that are summarized in Section~\ref{sec:experiments}.

\subsection{Environment Definitions}
\label{app:envs}

\paragraph{Cartpole.}
We use the standard Cartpole environment as a low-dimensional sanity check.
Rewards and termination conditions follow the canonical OpenAI Gym setup; no additional symmetry or order structure is injected.

\paragraph{E1: RotMirror-Grid (custom, symmetry-focused).}
A $19\times 19$ gridworld with obstacles, a goal, and four discrete actions.
Each episode samples a latent planar transform from
\(\{90^\circ\ \text{rotations},\ \text{horizontal/vertical mirrors}\}\).
Dynamics and rewards are invariant up to action relabeling
(\textit{left}$\leftrightarrow$\textit{right},
 \textit{up}$\leftrightarrow$\textit{down}):
for any sampled transform, the underlying MDP is equivalent up to $(T_g,\Pi_g)$.
To probe \emph{approximate} equivariance, we inject controlled symmetry breaking:
with probability $p_{\text{asym}}$ the transition of a randomly chosen action is perturbed
by a small noise level $\epsilon$ (we sweep $p_{\text{asym}}$ and $\epsilon$ in ablations).
This environment isolates the effect of near-equivariance and joint
state-transform/action-relabeling.

\paragraph{E2: ButtonPermutation-Minigrid (custom, action-permutation).}
A Minigrid-style environment with $|\mathcal{A}|=6$ “button” actions and egocentric RGB
observations.
At the beginning of each episode, we uniformly sample a permutation $\sigma$ over the six
action indices; the same \emph{semantic} action (e.g., “press left button 3”) appears under a
different index across episodes.
Success requires executing a fixed semantic sequence under the current permutation.
State symmetry is weak, while \emph{action} permutation is dominant.
This setting isolates learning of the action relabelings $\Pi_k$ without hand-coding the
permutation structure.

\paragraph{E3: Atari (canonical pixels).}
We use \textit{Pong}, \textit{Breakout}, \textit{Asterix}, \textit{Freeway}, and
\textit{Space Invaders} from the Atari ALE
(with frameskip 4 and standard preprocessing).
\textit{Pong} and \textit{Breakout} exhibit near left–right symmetry (up to paddle/ball
positions and a horizontal flip); \textit{Asterix}, \textit{Freeway}, and \textit{Space Invaders} are less symmetric and more visually
heterogeneous.
We train on standard observations and evaluate symmetry generalization by testing on
horizontally flipped frames with relabeled actions (not seen during training),
measuring robustness of the learned equivariance in a high-dimensional pixel regime.

\paragraph{E4: Noisy-RPS Chain (custom, non-transitive \& partial order).}
We construct a tabular MDP by attaching a rock–paper–scissors (RPS) subgame to a linear
goal-distance chain.
The chain has 10 states ordered by distance-to-goal; rewards encourage moving toward the
terminal state.
We attach an RPS gadget consisting of three states with non-transitive rewards
(rock beats scissors, scissors beats paper, paper beats rock).
With probability $\eta$, transitions from certain chain states detour through the RPS gadget
before rejoining the chain, creating local cycles embedded within a global monotone direction.
This environment probes how DAGification and monotonicization behave under non-transitivity and
noisy local cycles.

\subsection{Baselines and Ablations}
\label{app:baselines}

We compare to the following baselines:

\begin{itemize}
  \item \textbf{(B1) DQN} and \textbf{(B2) Double-DQN} with standard replay and target networks.
  \item \textbf{(B3) Rainbow-DQN} (C51+PER+NoisyNet+N-step)~\citep{hessel2018rainbow,bellemare2017distributional}.
  \item \textbf{(B4) RAD}, \textbf{(B5) DrQ-v2}, and \textbf{(B6) CURL} as strong pixel-based representation learners.
  \item \textbf{(B7) Hard-coded Equivariant RL}, which replaces the encoder with a group-equivariant CNN
        (E(2)/D4 where applicable) assuming known symmetry~\citep{cohen2016group,kondor2018generalization,finzi2020generalizing}.
\end{itemize}

All methods share the same off-policy backbone (Double-DQN for tabular/grid tasks, Rainbow as a stronger pixel baseline).
GCR-RL adds two light-weight heads on top of the common encoder:
(i) a symmetry branch that learns a small set of feature-space transforms and action relabelings, and
(ii) a logic-order branch that builds a batchwise preference DAG from TD signals and performs differentiable isotonic projection.
The ablations reported in the main text are:

\begin{itemize}
  \item \textbf{Sym-only}: equivariance losses and group-like regularization only (no logic-order).
  \item \textbf{Logic-only}: logic-order regularization only (no equivariance losses).
  \item \textbf{No action relabel}: state-only consistency, disabling learned action permutations.
\end{itemize}

Network architectures (CNN/MLP) and all hyperparameters (learning rates, replay buffer sizes, target update intervals, exploration schedules) follow standard Atari and Gridworld practice and are kept identical across methods unless otherwise noted.

\subsection{Evaluation Metrics and Statistics}
\label{app:metrics}

All metrics are computed from the evaluation logs $\{(\tau_i, R_i)\}$ produced during training.

\paragraph{AUC@N (M1).}
For each seed, we first smooth the sequence of evaluation returns using a small moving window.
Let $\tilde{R}(\tau)$ denote the smoothed return at evaluation step $\tau$.
AUC@N is defined as the trapezoidal integral
\[
\mathrm{AUC@N} = \int_{0}^{N} \tilde{R}(\tau)\, d\tau
\]
approximated by summing over evaluation checkpoints up to $N$ environment steps.
Higher is better.

\paragraph{Steps-to-threshold (M1).}
Given a task-specific threshold $R_{\text{thr}}$, we define steps-to-threshold as the smallest evaluation step $\tau$ (converted to environment steps) at which $\tilde{R}(\tau) \ge R_{\text{thr}}$, clipped at $N$ if the threshold is never reached.
Lower is better.

\paragraph{Final-return (M2) and stability across seeds (M3).}
For each seed we average the evaluation returns over the last $20\%$ of checkpoints to obtain the final return $R_{\text{final}}$.
We then report:
(i) the across-seed mean of $R_{\text{final}}$ (M2, higher is better);
(ii) the across-seed standard deviation of $R_{\text{final}}$; and
(iii) the worst-seed final return $\min_{\text{seed}} R_{\text{final}}$ (M3, lower variability and higher worst-seed values indicate more stable learning).

\paragraph{Statistical reporting.}
For all scalar metrics we report mean $\pm$ 95\% confidence intervals based on non-parametric bootstrap over seeds.
When comparing GCR-RL to the strongest non-ours pixel baseline, we use paired permutation tests on AUC@N across seeds and report significance at $p<0.05$.

\subsection{Implementation Details}
\label{app:impl}

Pixel-based environments use an IMPALA-style CNN encoder followed by a value head $Q(s,\cdot)$.
Tabular and low-dimensional control tasks (Cartpole, Noisy-RPS Chain, RotMirror-Grid without pixels) use a 2-layer MLP.
The symmetry branch applies a small bank of orthogonal transforms in feature space together with near-permutation matrices over actions; group-like penalties keep these transforms close to rotations/reflections and permutations, and a diversity term prevents collapse.
The logic-order branch uses batchwise $k$-NN graphs in the representation space, greedy DAGification to remove cycles, and a small number of inner gradient steps of differentiable isotonic regularization.
All methods share replay buffers, target-network update intervals, and total interaction budgets; only the additional regularization heads and their loss weights differ.
We anneal the weight on the logic-order loss over training so that it is small when TD noise is high and stronger once the value estimates have stabilized.

\subsection{Main Learning Curves}
\label{app:curves}

\begin{figure*}[h]
\centering
{\fontsize{10}{12}
    \subfigure[Cartpole]{\includegraphics[width=0.32\textwidth]{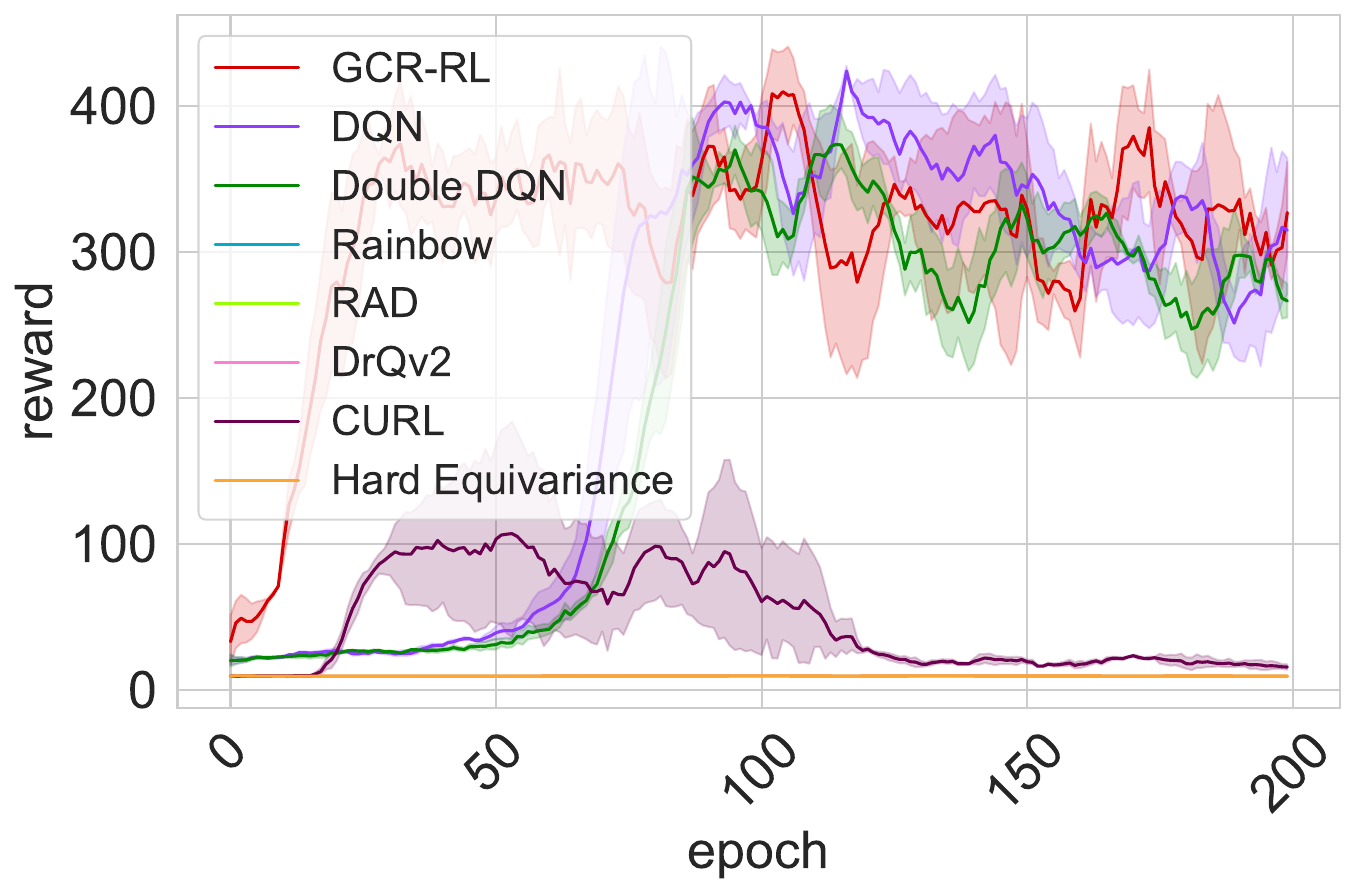}
  }\hfill
  \subfigure[Rotmirror Grid]{\includegraphics[width=0.32\textwidth]{data/rotmirror_grid/return.pdf}
  }\hfill
  \subfigure[Btnperm Minigrid]{\includegraphics[width=0.32\textwidth]{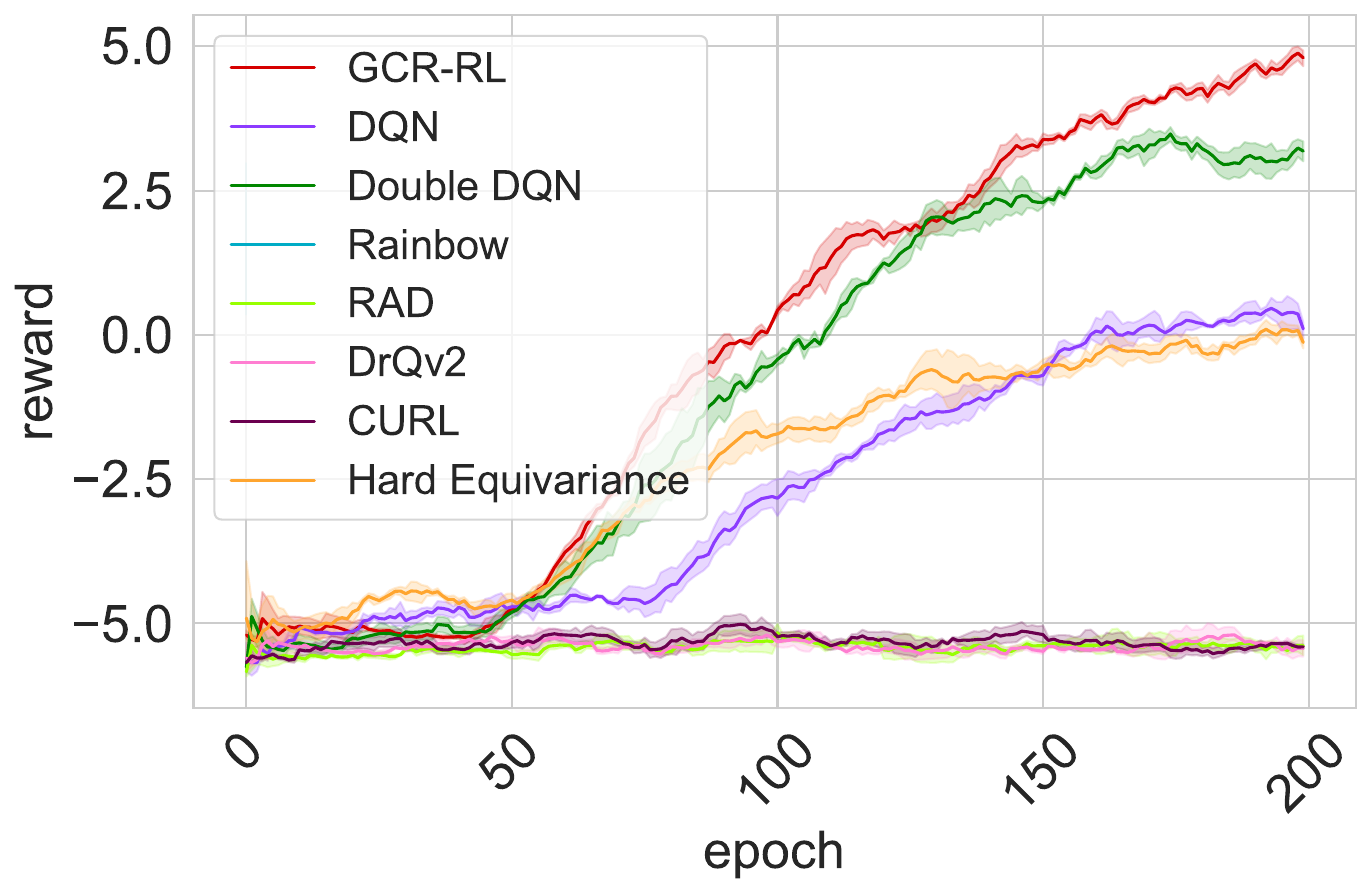}
  }\hfill
  \\
  \subfigure[Noisy Rps Chain]{\includegraphics[width=0.32\textwidth]{data/noisy_rps_chain/return.pdf}
  }\hfill
  \subfigure[Pong]{\includegraphics[width=0.32\textwidth]{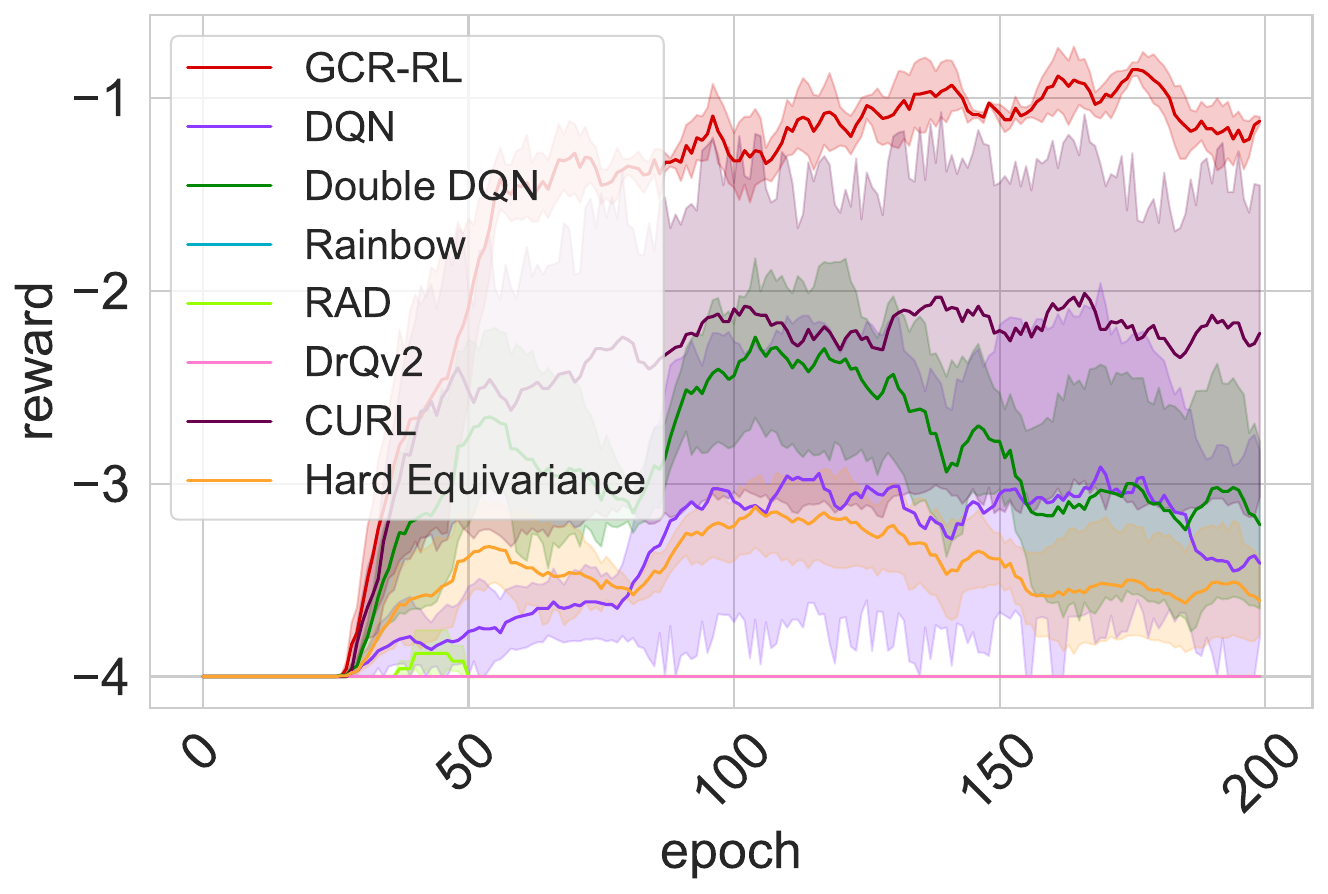}
  }\hfill
  \subfigure[Asterix]{\includegraphics[width=0.32\textwidth]{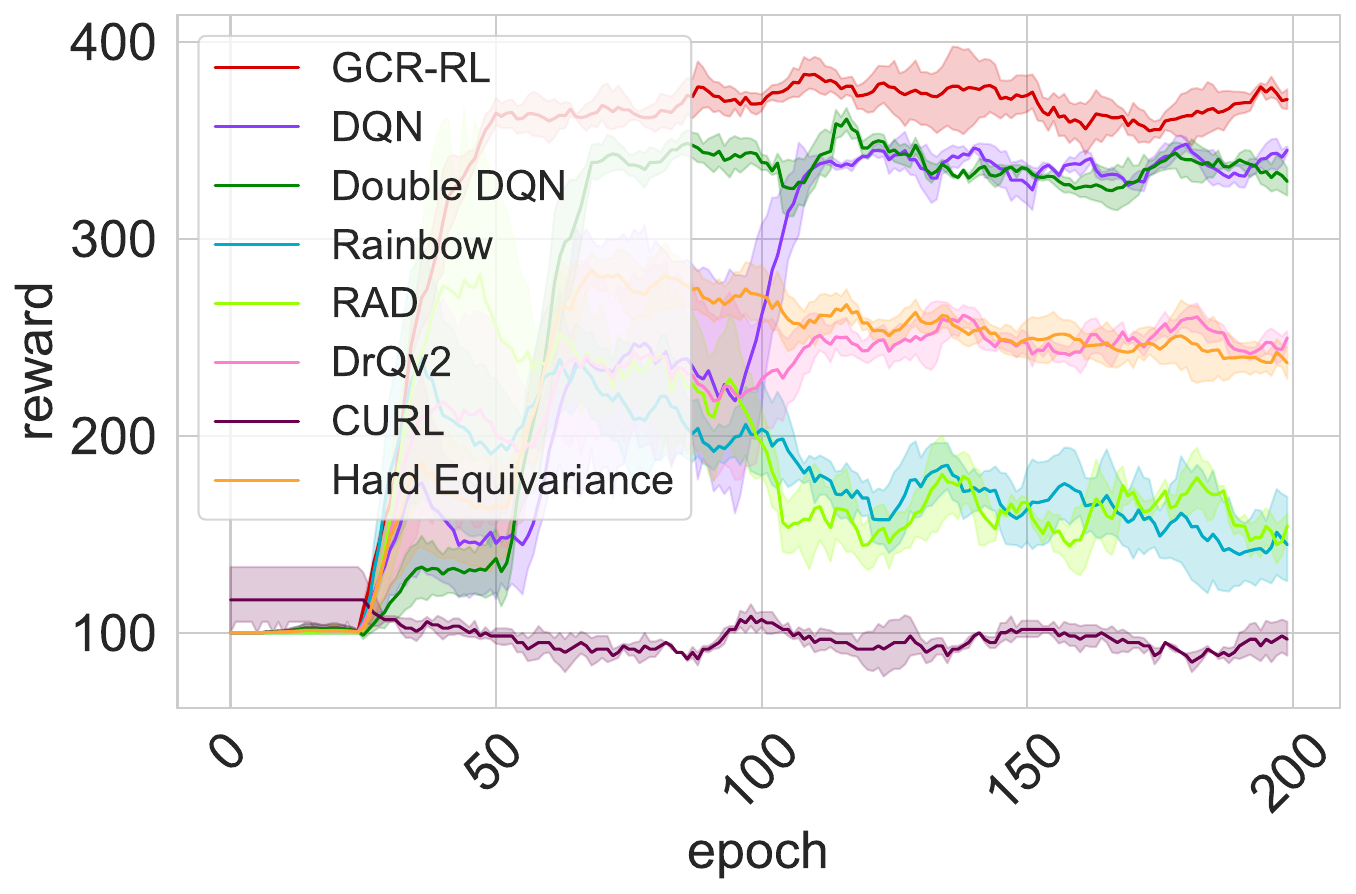}
  }\hfill
  \subfigure[Breakout]{\includegraphics[width=0.32\textwidth]{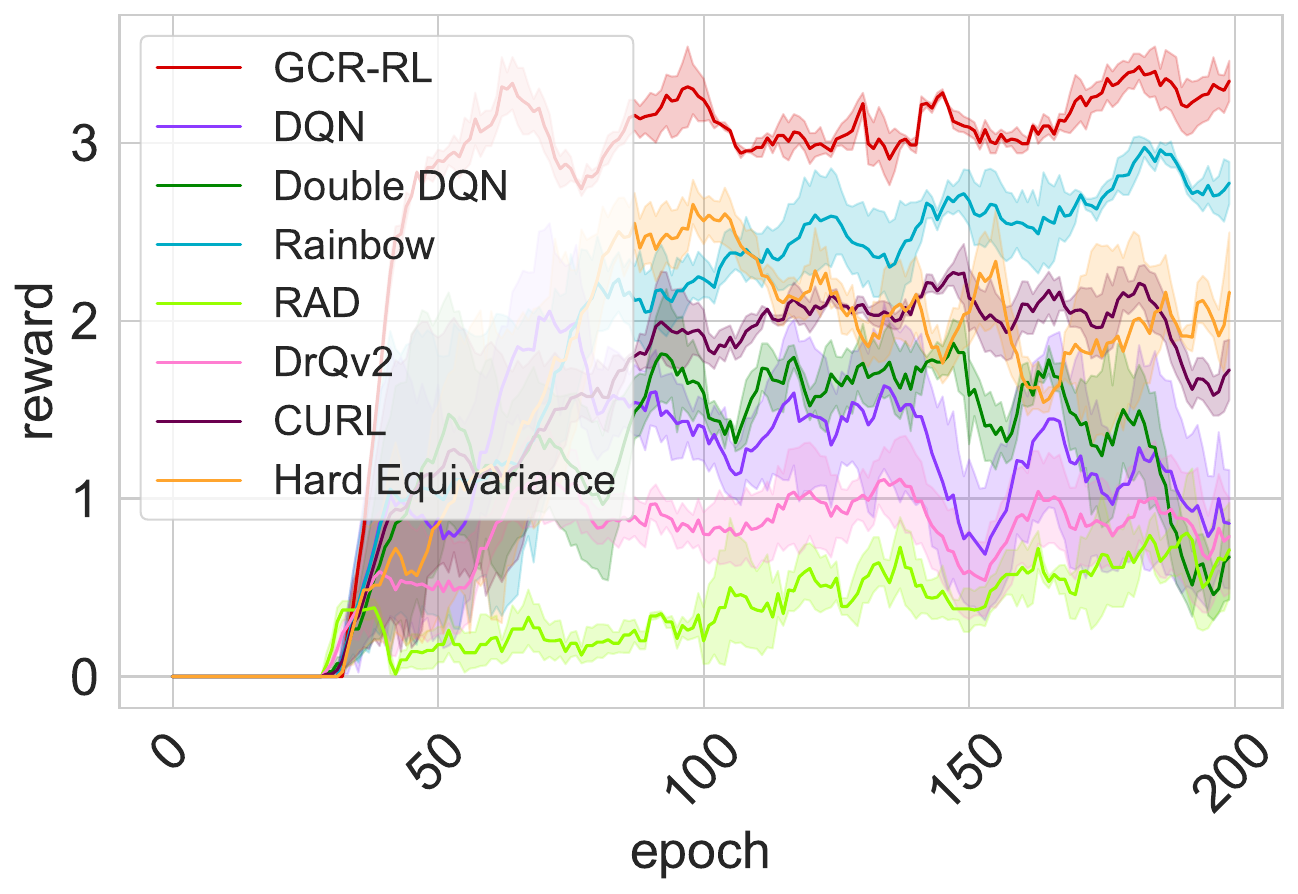}
  }\hfill
  \subfigure[Freewary]{\includegraphics[width=0.32\textwidth]{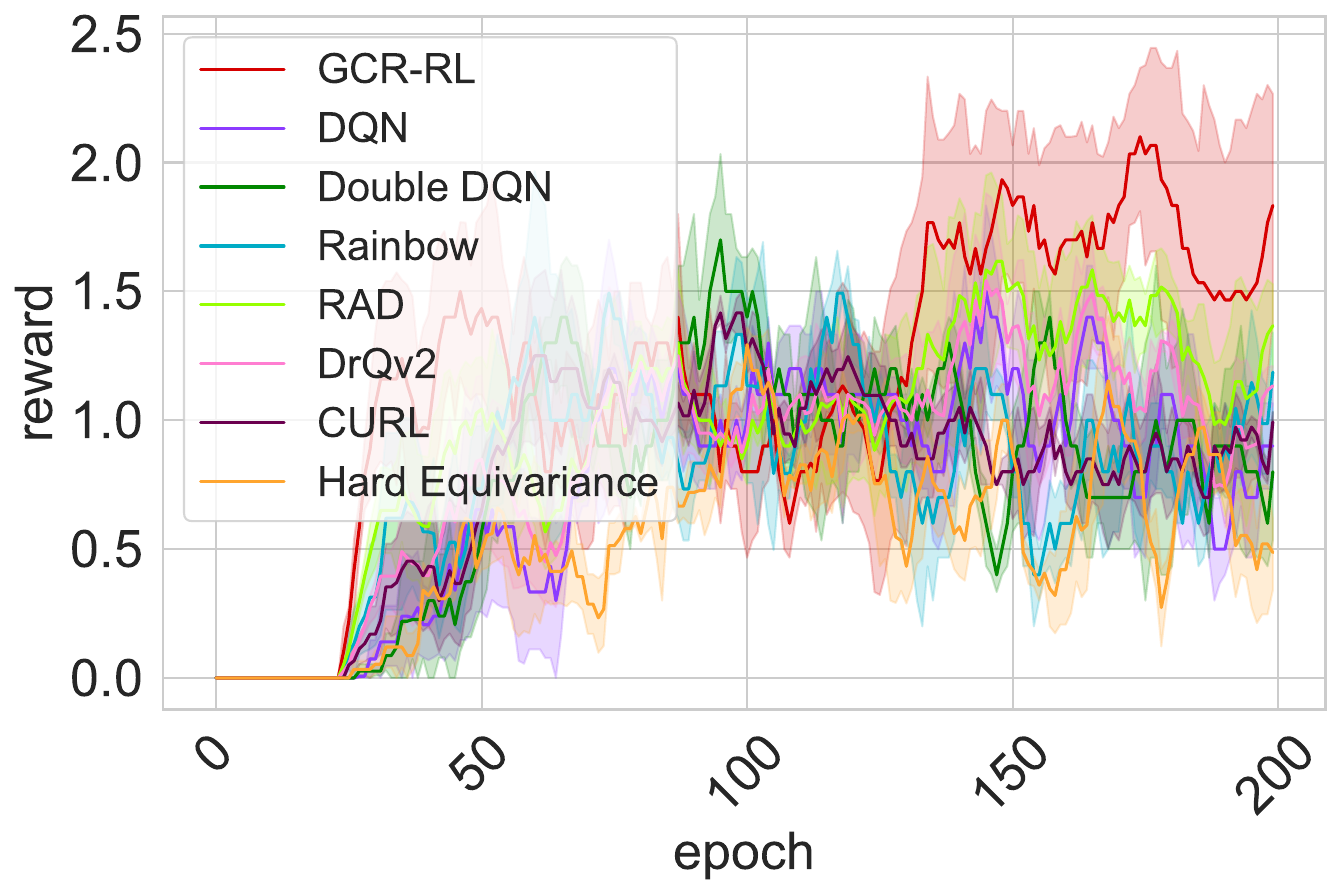}
  }\hfill
  \subfigure[SpaceInvaders]{\includegraphics[width=0.32\textwidth]{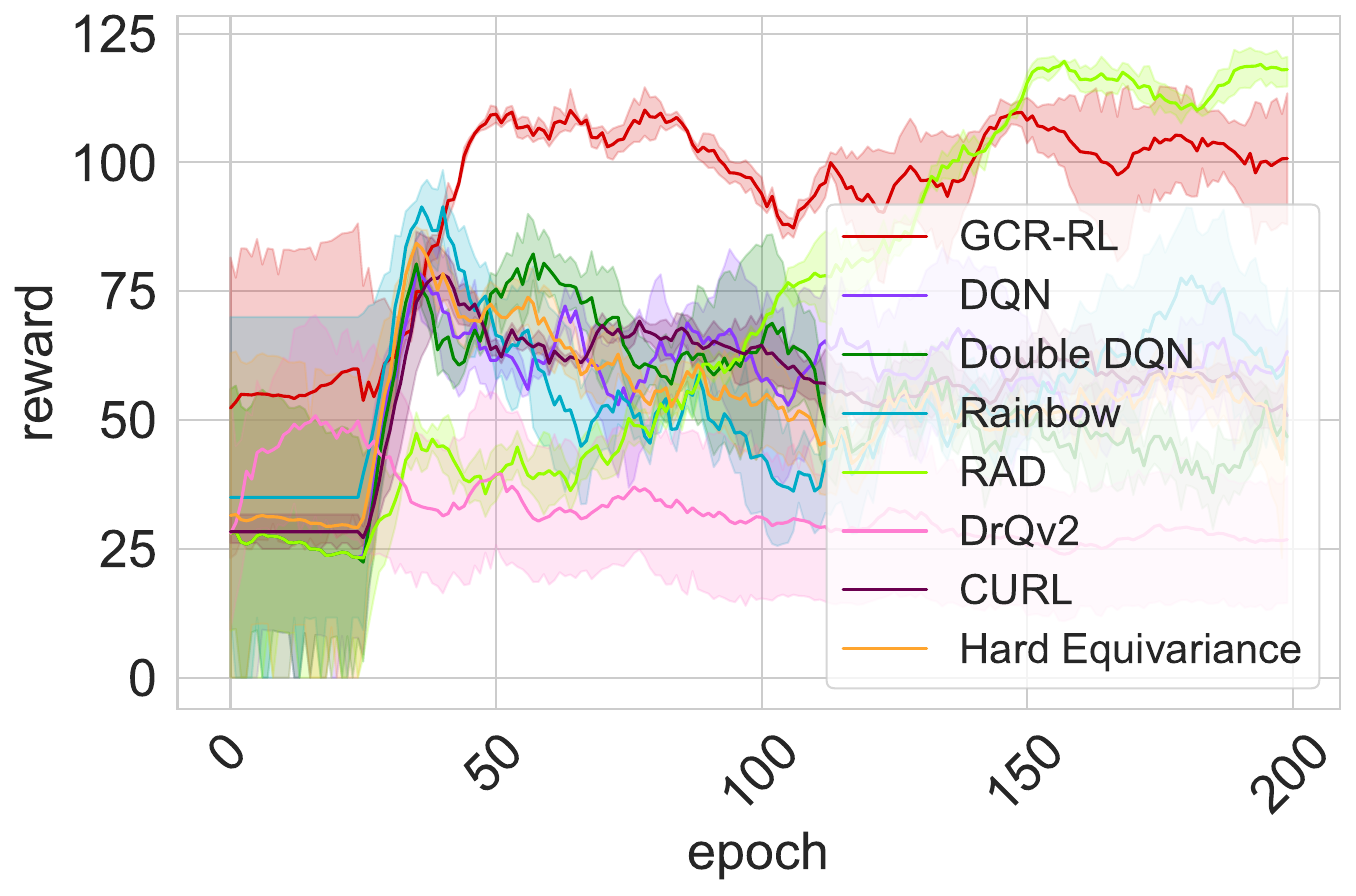}
  }\hfill
  \vspace{-0.15in}
  \caption{
  Learning curves for GCR-RL and baselines across all environments.
  }
  \label{fig:main_returns}
}
\end{figure*}

\subsection{Plots Grouped by Metric Type}
\label{app:metric-type-plots}

\paragraph{AUC@N (M1).}
Figure~\ref{fig:auc_all} shows AUC@N for GCR-RL and baselines.

\begin{figure*}[h]
\centering
{\fontsize{10}{12}
  \subfigure[Cartpole]{\includegraphics[width=0.32\textwidth]{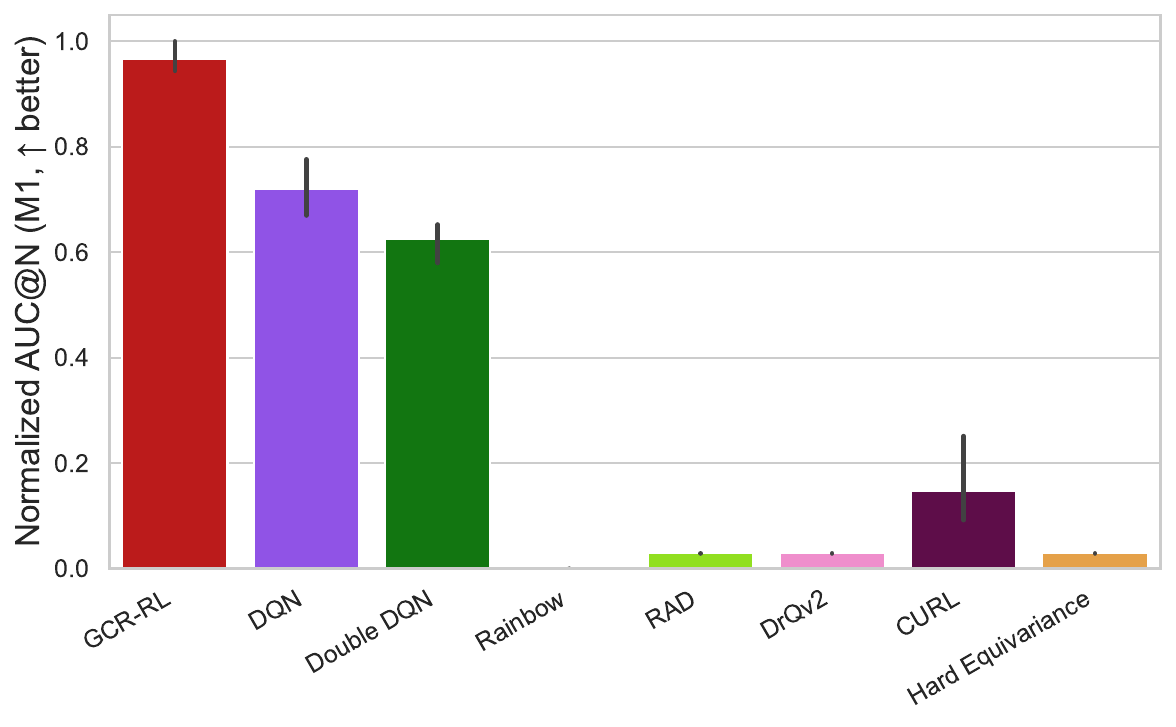}
  }\hfill
  \subfigure[Rotmirror Grid]{\includegraphics[width=0.32\textwidth]{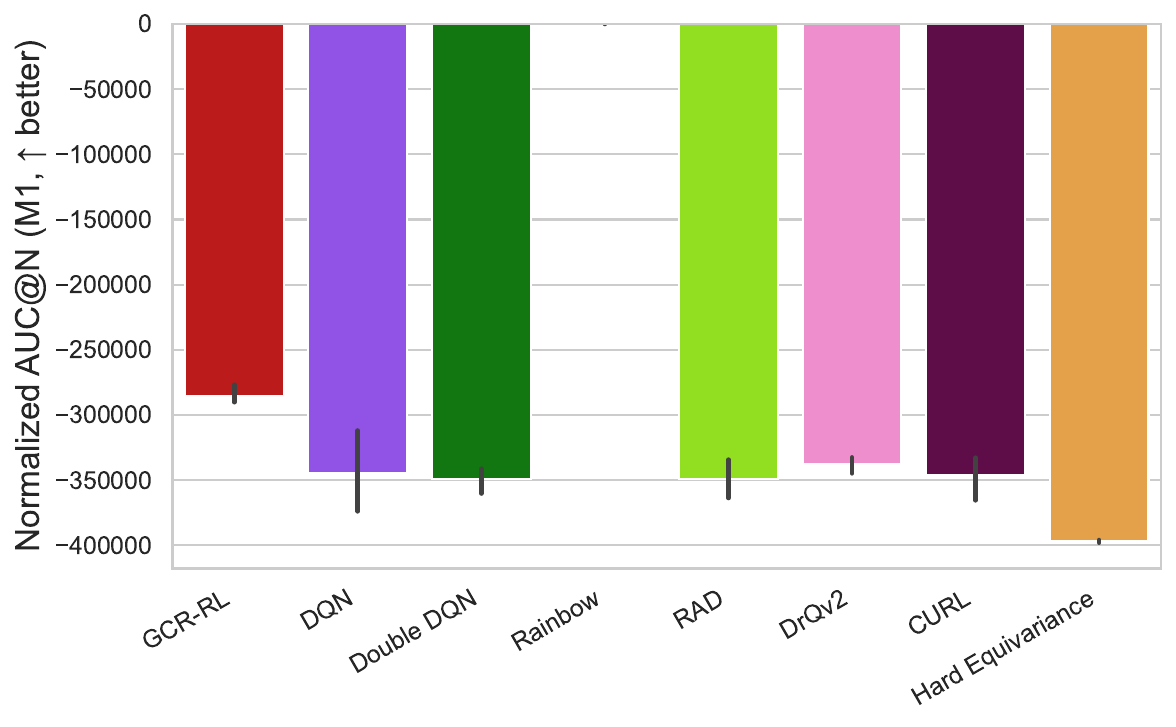}
  }\hfill
  \subfigure[Btnperm Minigrid]{\includegraphics[width=0.32\textwidth]{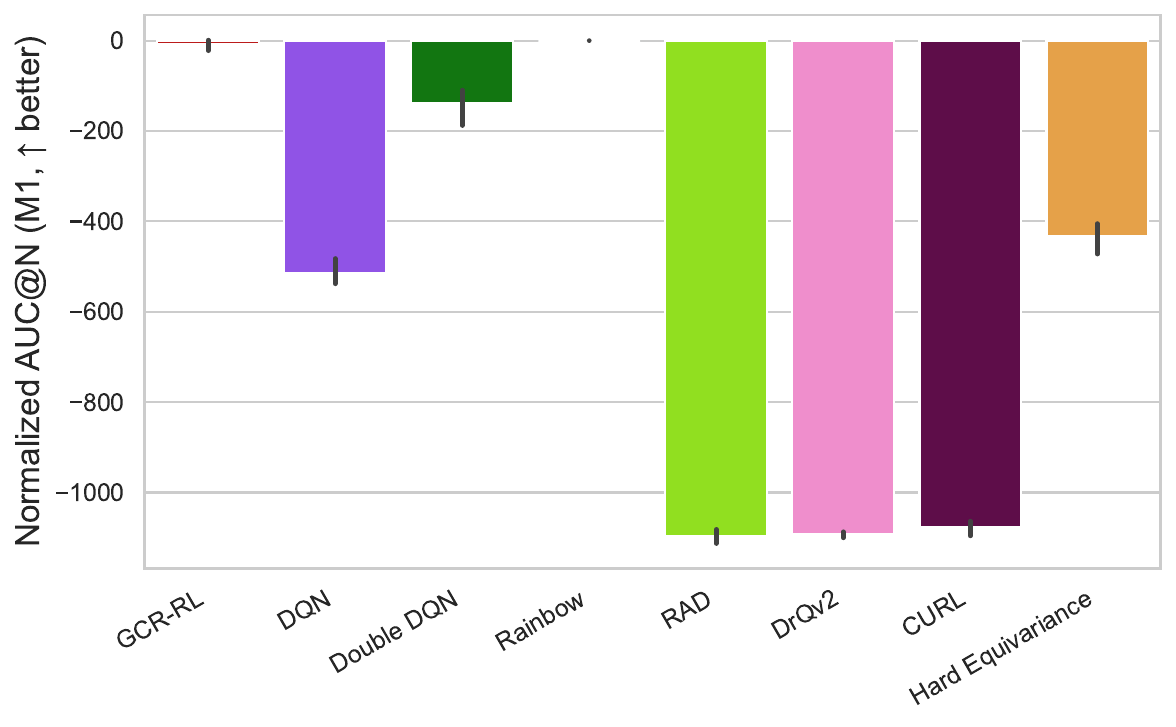}
  }\hfill
  \\
  \subfigure[Noisy Rps Chain]{\includegraphics[width=0.32\textwidth]{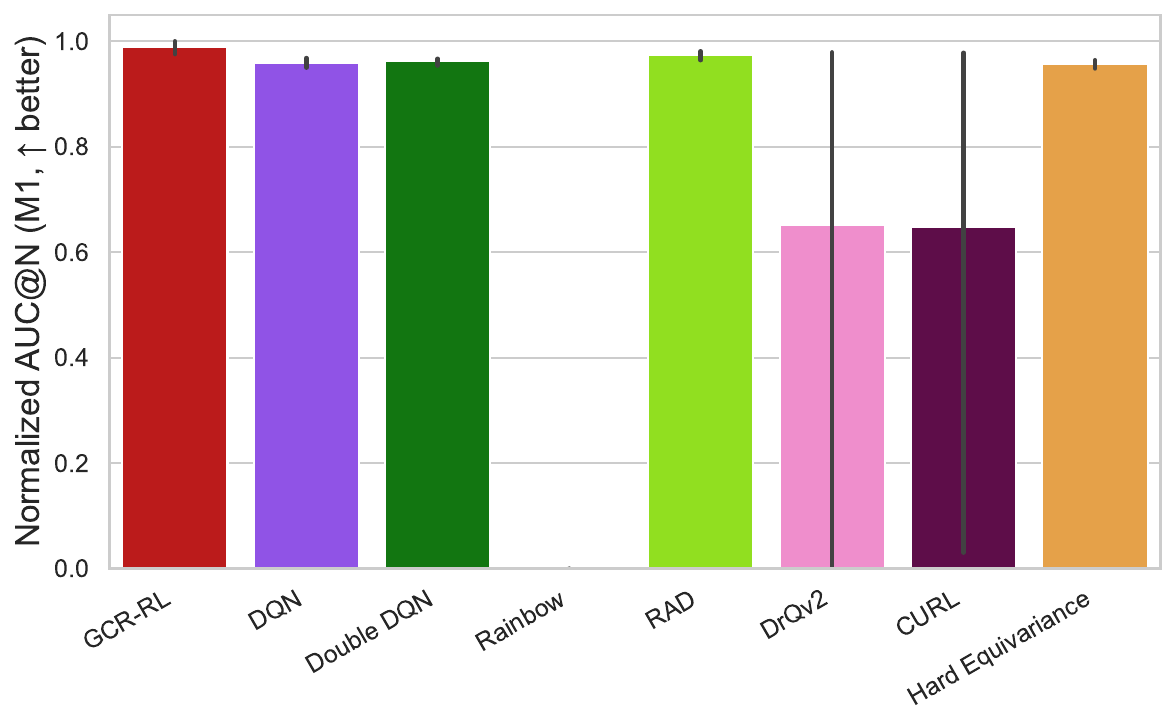}
  }\hfill
  \subfigure[Pong]{\includegraphics[width=0.32\textwidth]{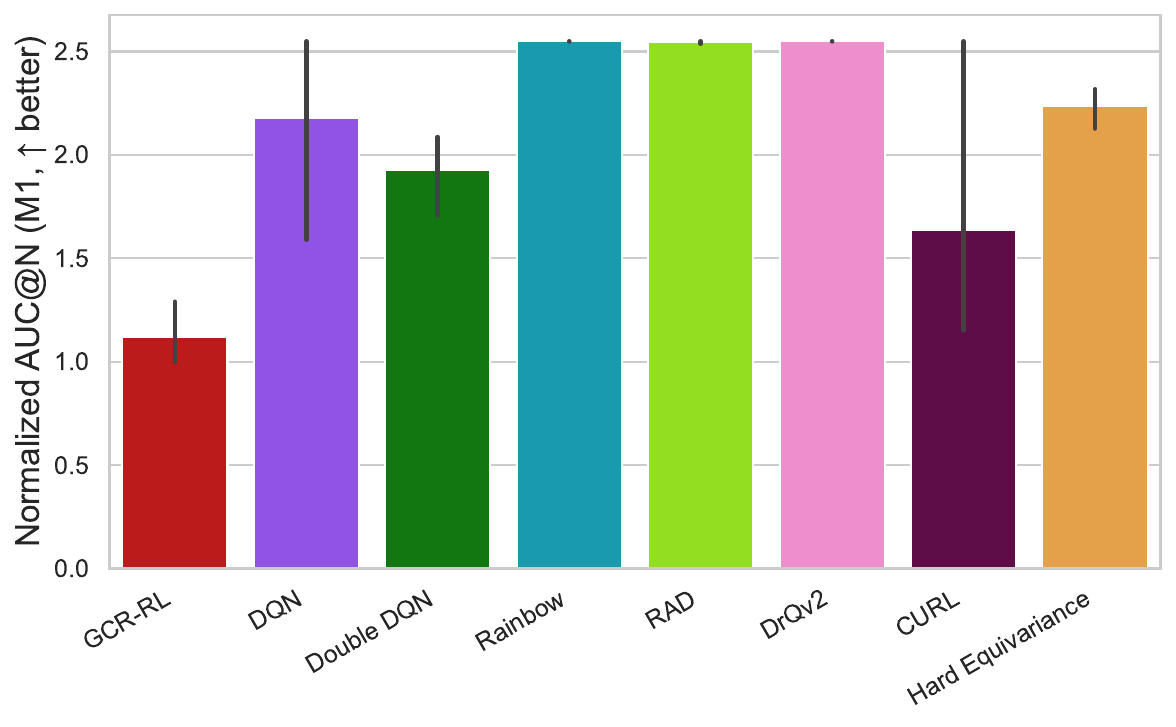}
  }\hfill
  \subfigure[Asterix]{\includegraphics[width=0.32\textwidth]{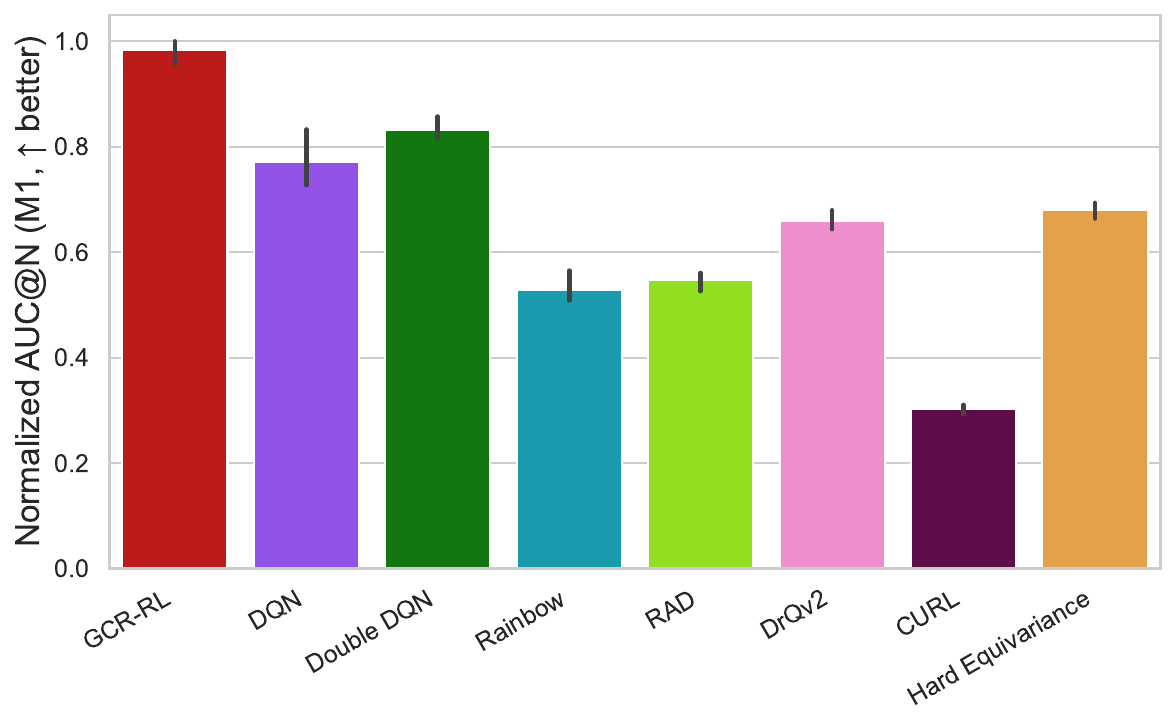}
  }\hfill
  \subfigure[Breakout]{\includegraphics[width=0.32\textwidth]{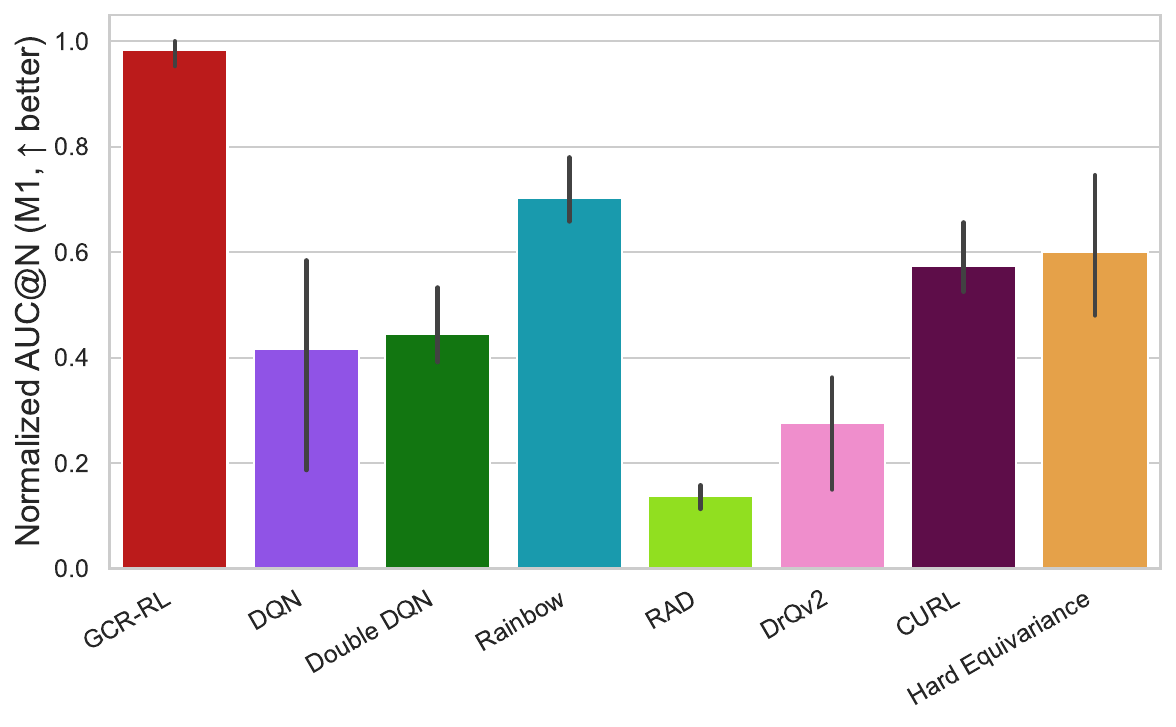}
  }\hfill
  \subfigure[Freeway]{\includegraphics[width=0.32\textwidth]{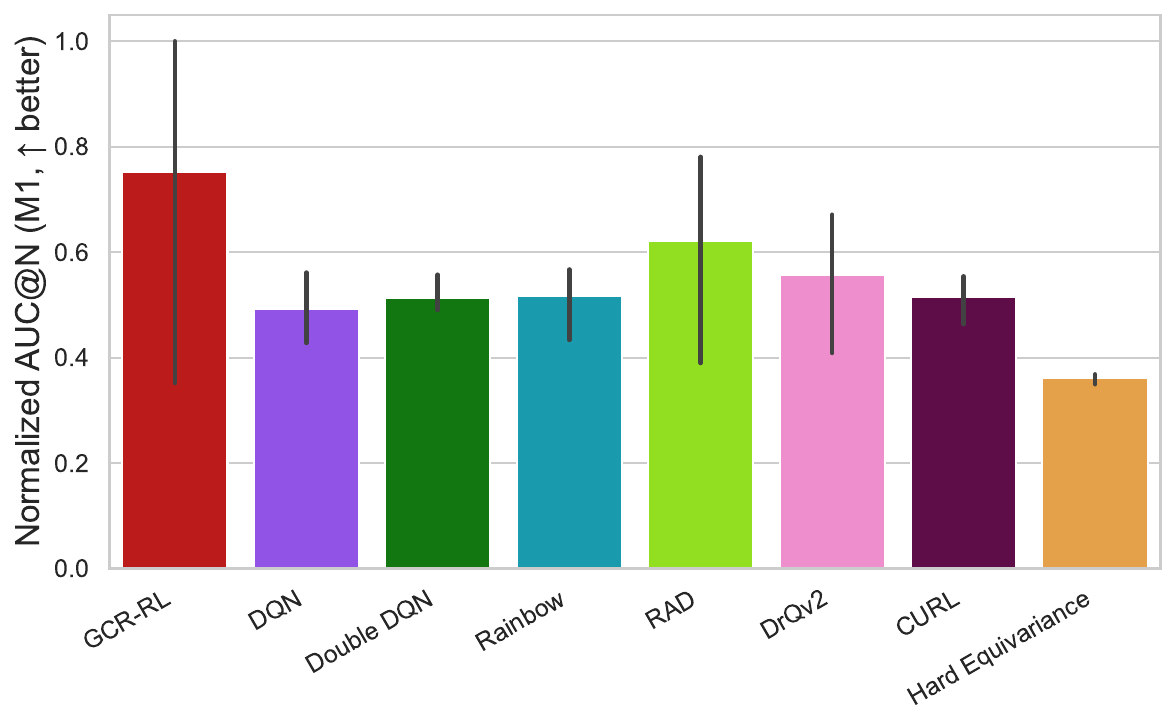}
  }\hfill
  \subfigure[SpaceInvaders]{\includegraphics[width=0.32\textwidth]{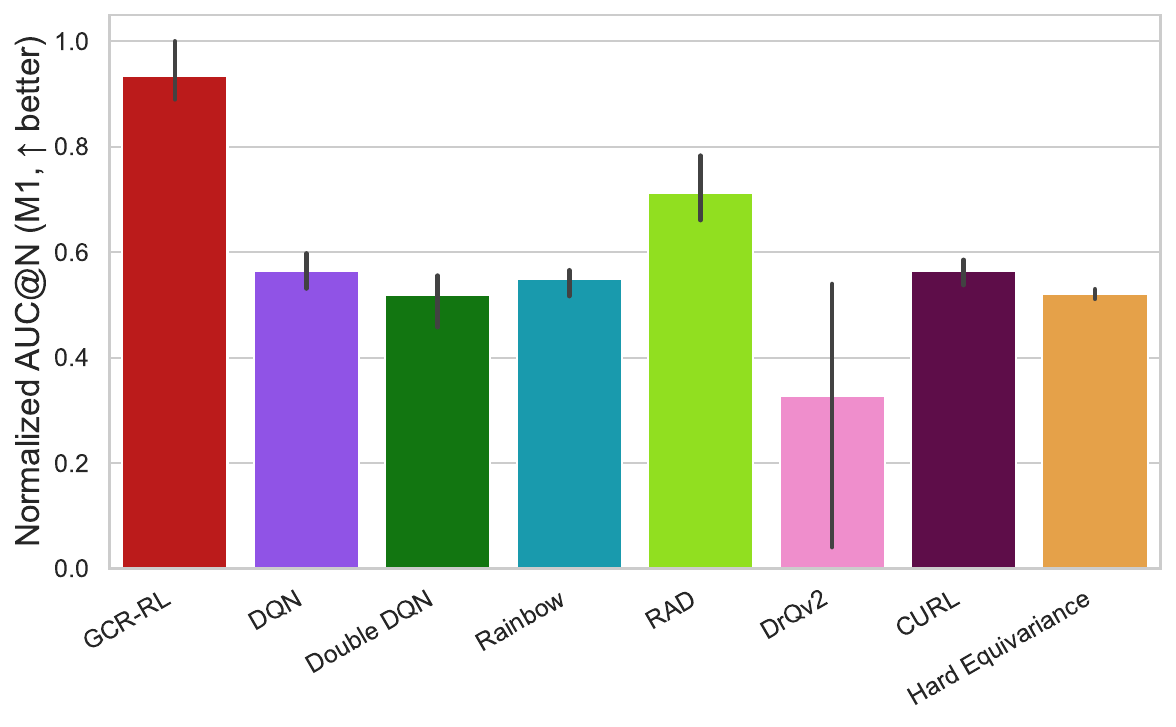}
  }\hfill
  \vspace{-0.15in}
  \caption{AUC@N (M1) for all environments. Each subfigure shows a per-environment bar plot over algorithms.}
  \label{fig:auc_all}
}
\end{figure*}

\paragraph{Steps-to-threshold (M1).}
Figure~\ref{fig:steps_all} groups steps-to-threshold for all environments
(lower is better).

\begin{figure*}[h]
\centering
{\fontsize{10}{12}
  \subfigure[Cartpole]{\includegraphics[width=0.32\textwidth]{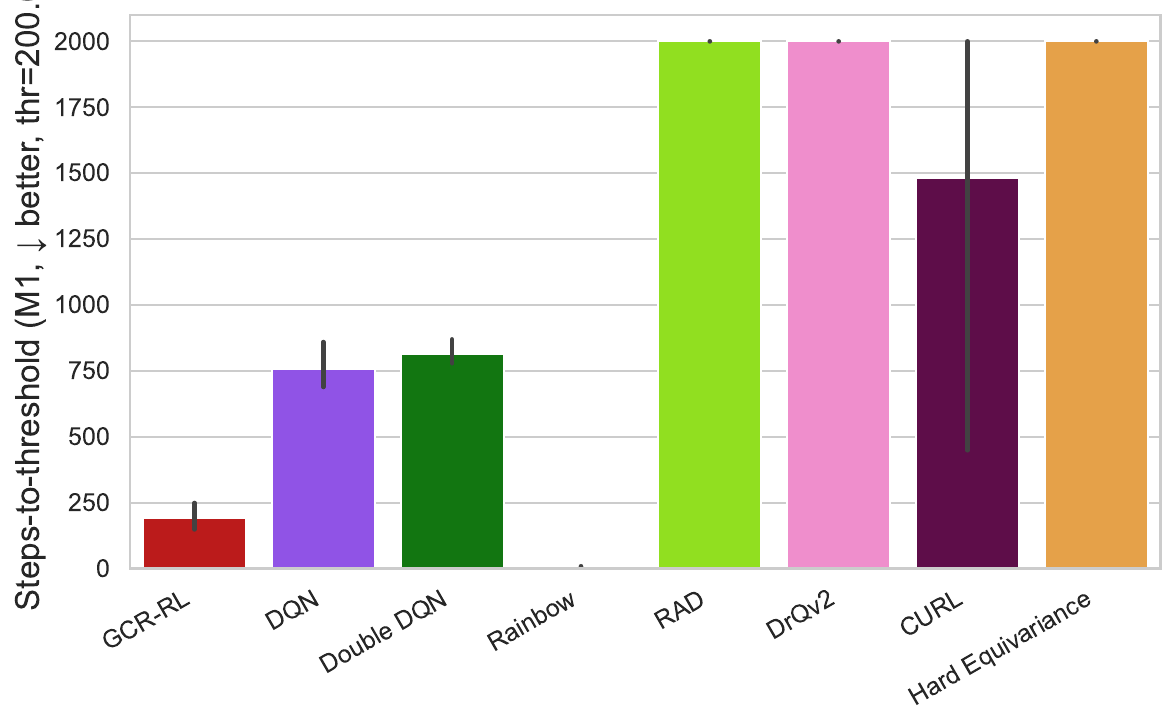}
  }\hfill
  \subfigure[Rotmirror Grid]{\includegraphics[width=0.32\textwidth]{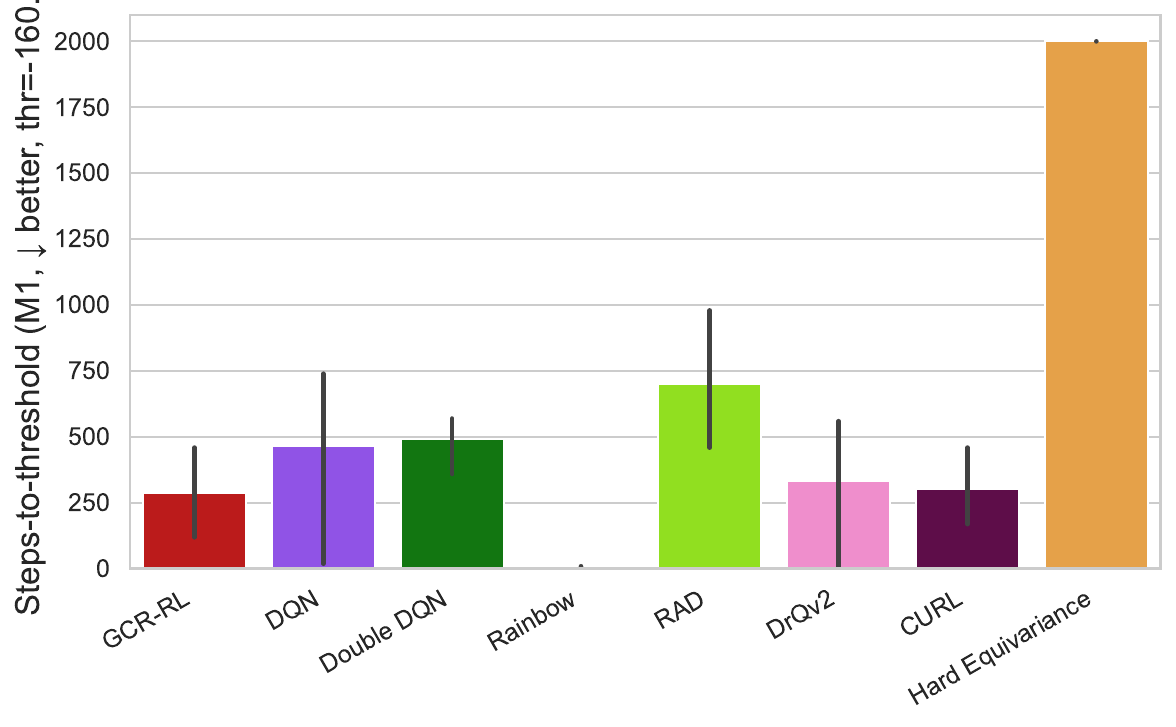}
  }\hfill
  \subfigure[Btnperm Minigrid]{\includegraphics[width=0.32\textwidth]{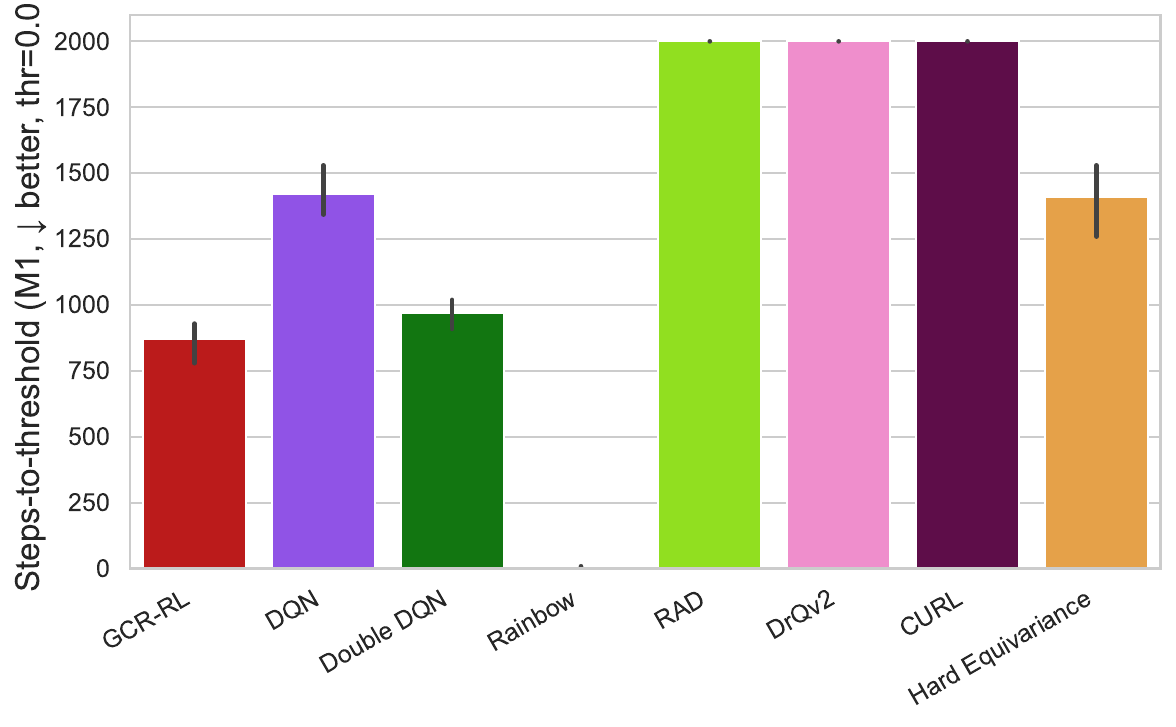}
  }\hfill
  \\
  \subfigure[Noisy Rps Chain]{\includegraphics[width=0.32\textwidth]{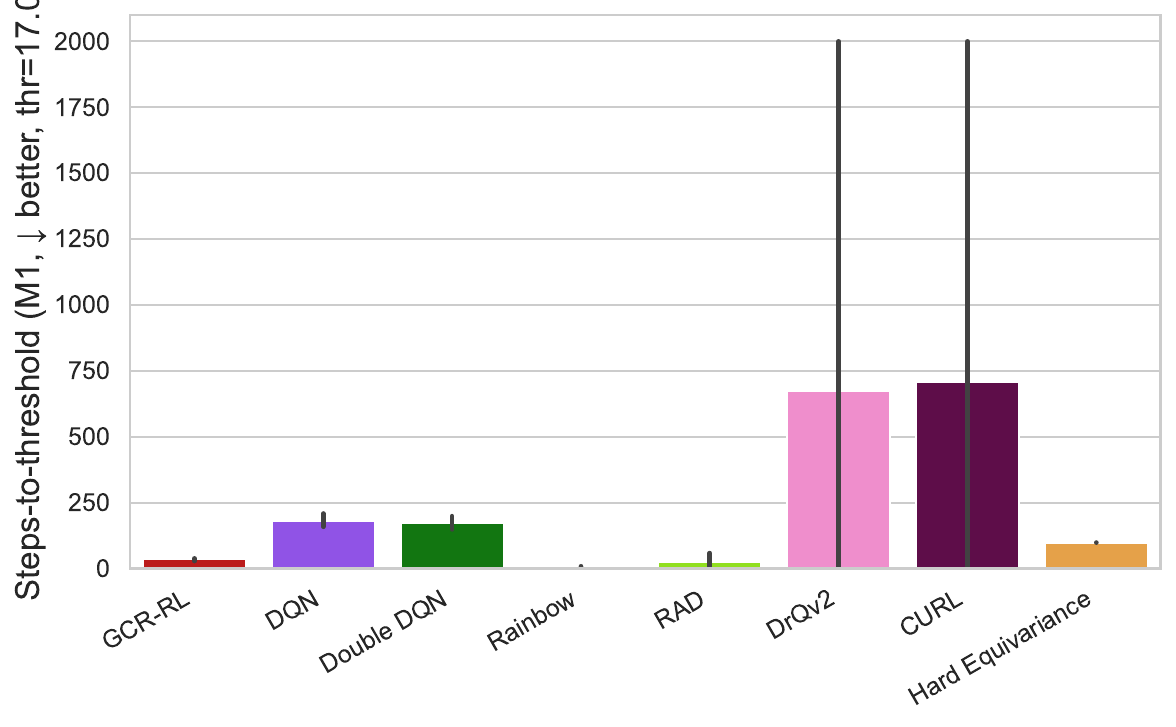}
  }\hfill
  \subfigure[Pong]{\includegraphics[width=0.32\textwidth]{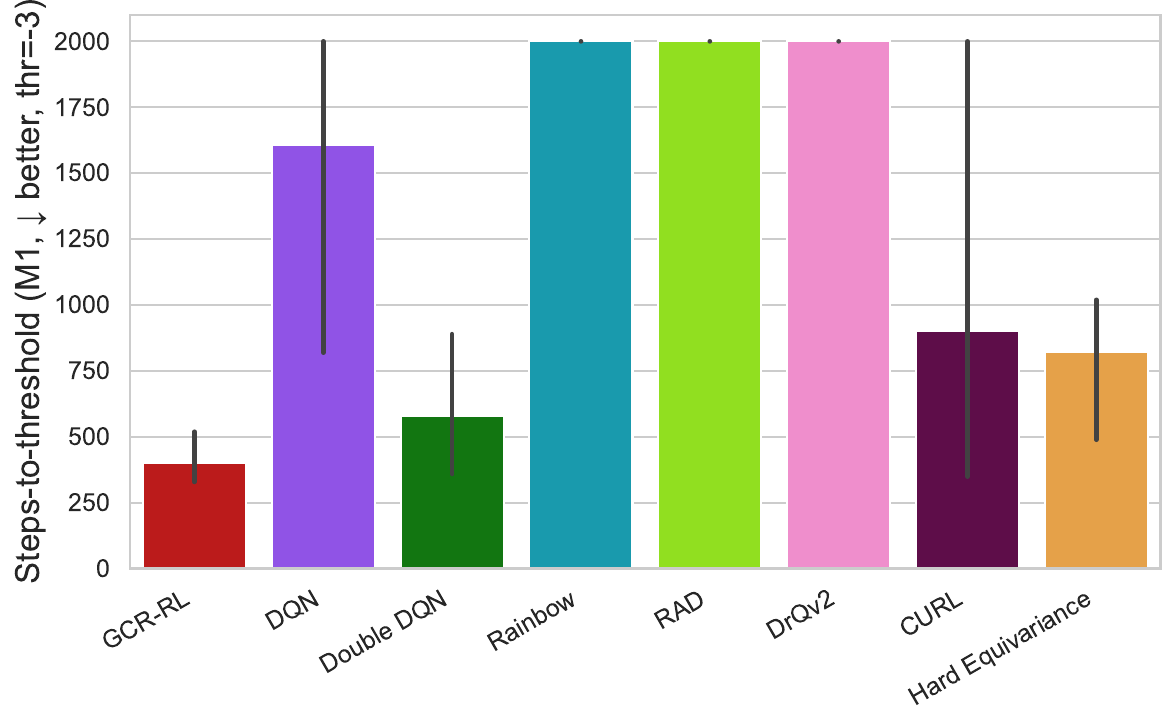}
  }\hfill
  \subfigure[Asterix]{\includegraphics[width=0.32\textwidth]{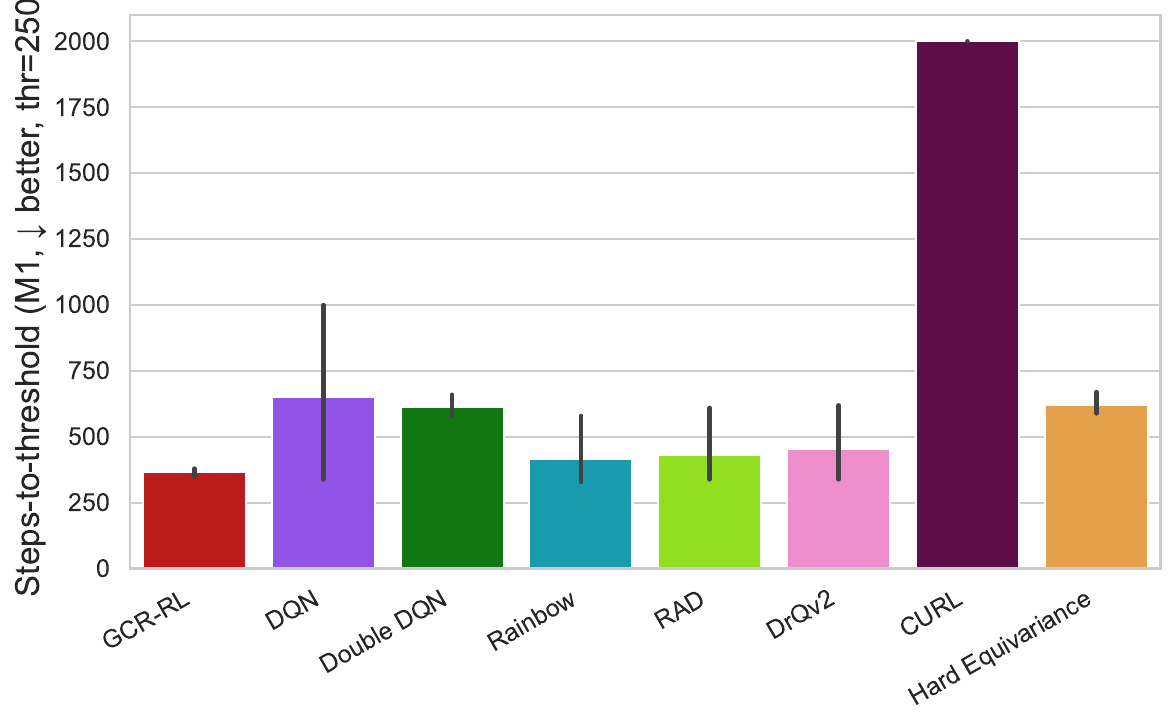}
  }\hfill
  \subfigure[Breakout]{\includegraphics[width=0.32\textwidth]{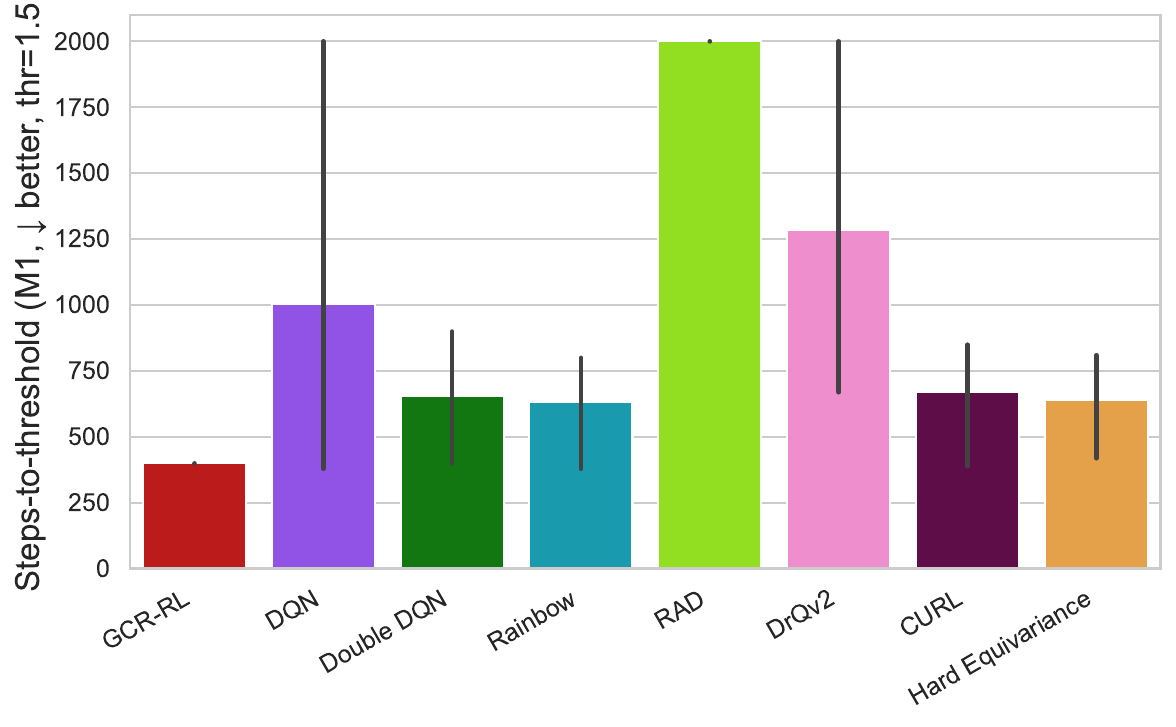}
  }\hfill
  \subfigure[Freeway]{\includegraphics[width=0.32\textwidth]{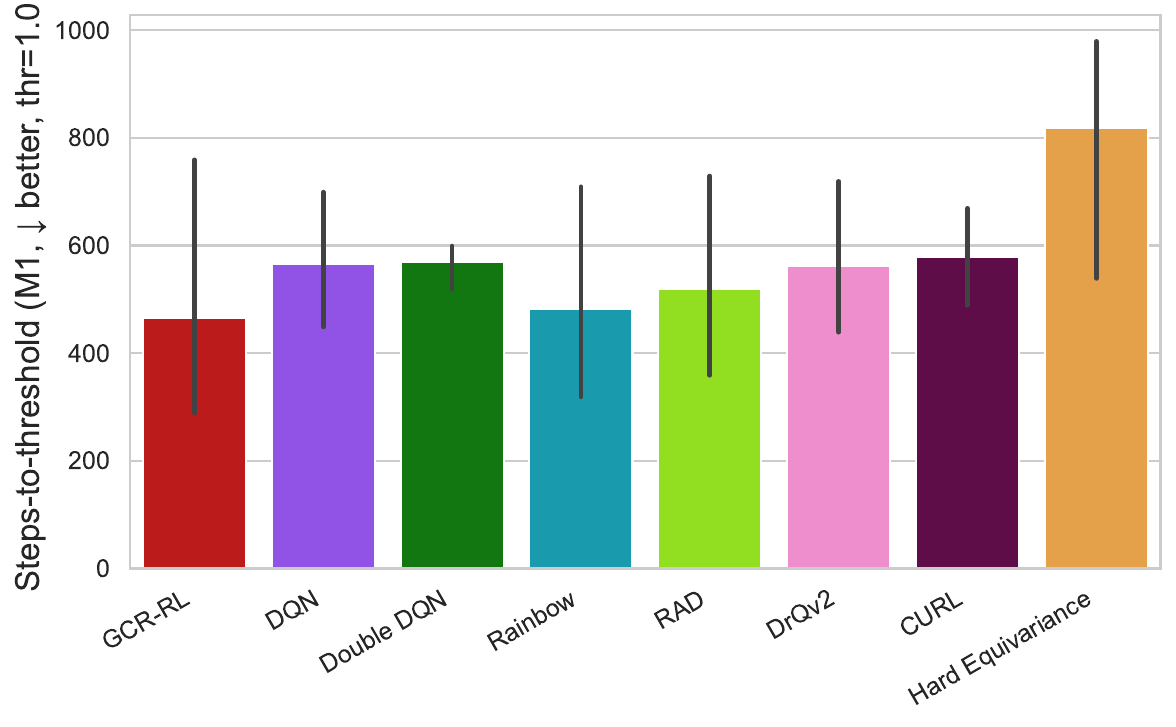}
  }\hfill
  \subfigure[SpaceInvaders]{\includegraphics[width=0.32\textwidth]{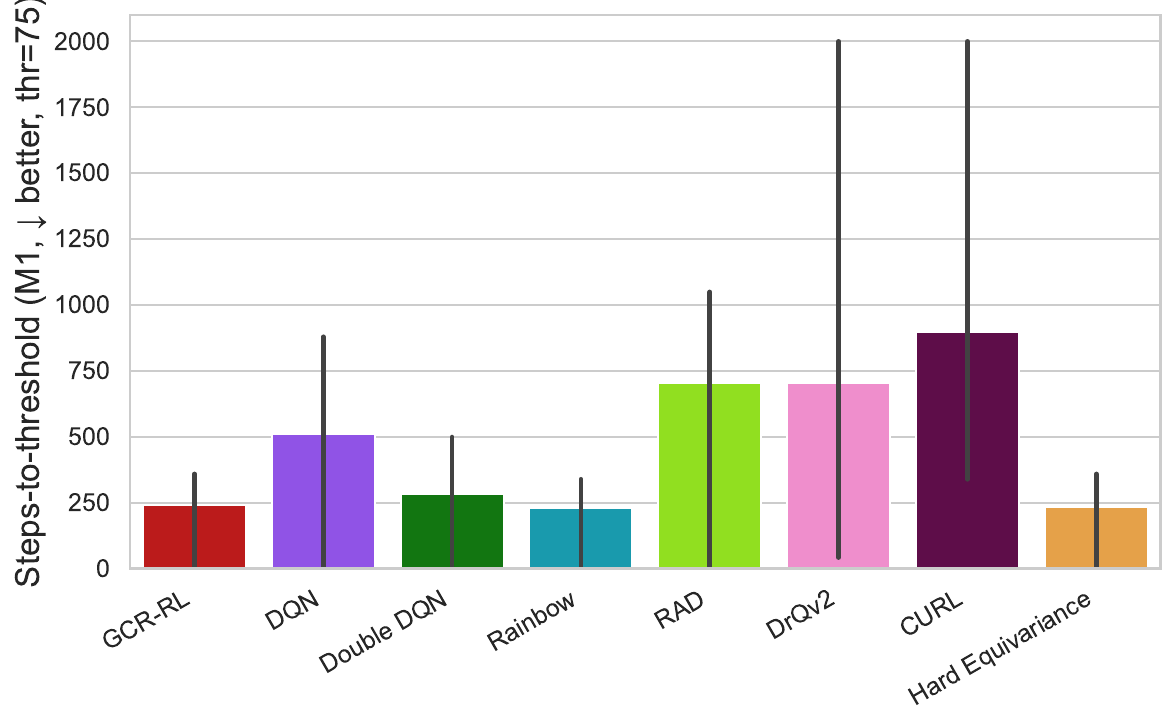}
  }\hfill
  \vspace{-0.15in}
  \caption{Steps-to-threshold (M1) across environments. Each subfigure shows the number of steps needed to reach a fixed score.}
  \label{fig:steps_all}
}
\end{figure*}

\paragraph{Final-return distributions (M2/M3).}
Figure~\ref{fig:final_all} shows boxplots of final returns across seeds for each environment.

\begin{figure*}[h]
\centering
{\fontsize{10}{12}
  \subfigure[Cartpole]{\includegraphics[width=0.32\textwidth]{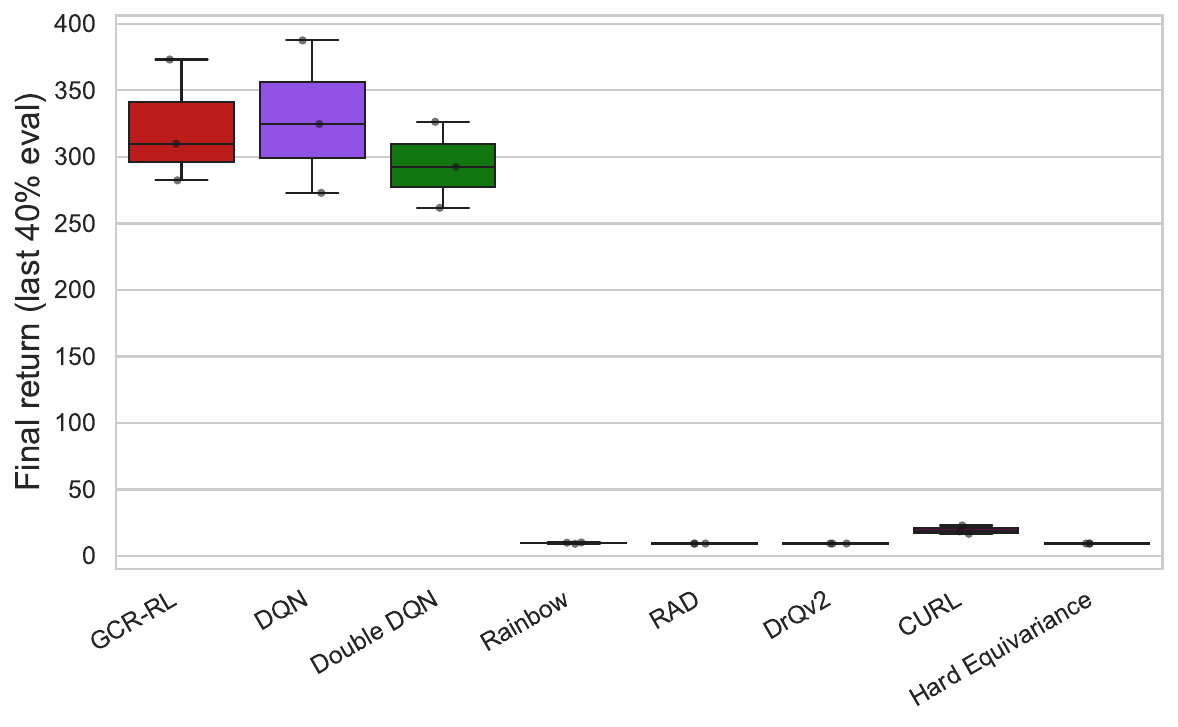}
  }\hfill
  \subfigure[Rotmirror Grid]{\includegraphics[width=0.32\textwidth]{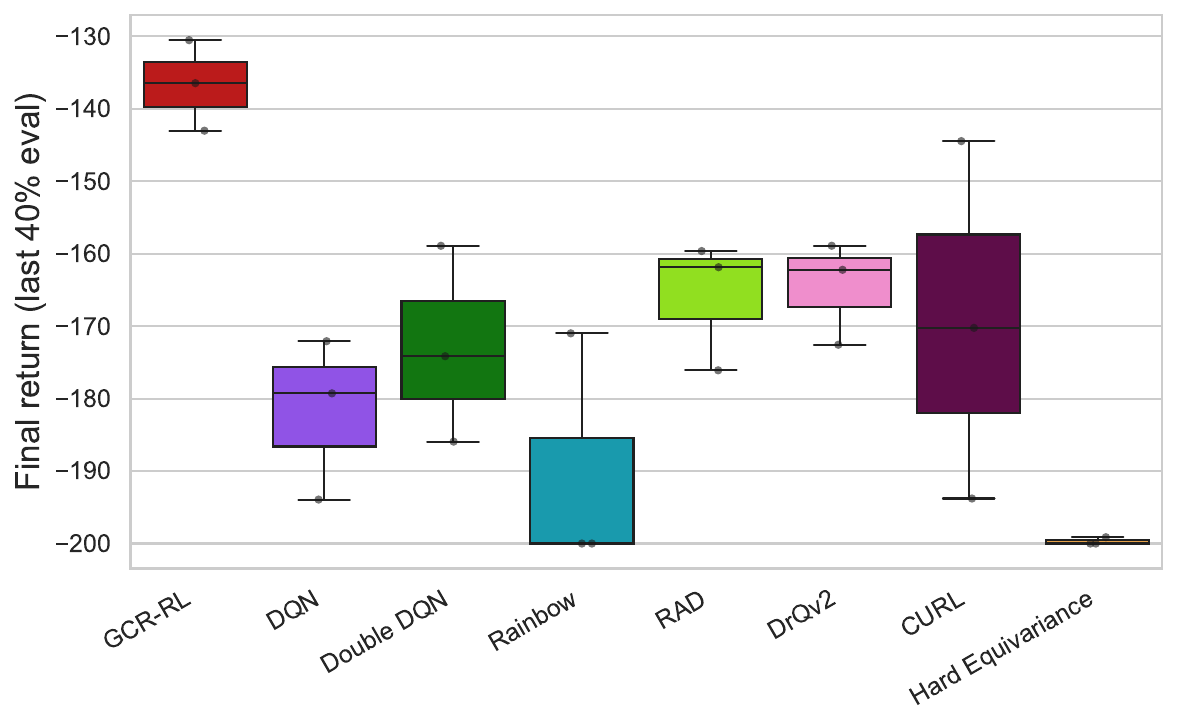}
  }\hfill
  \subfigure[Btnperm Minigrid]{\includegraphics[width=0.32\textwidth]{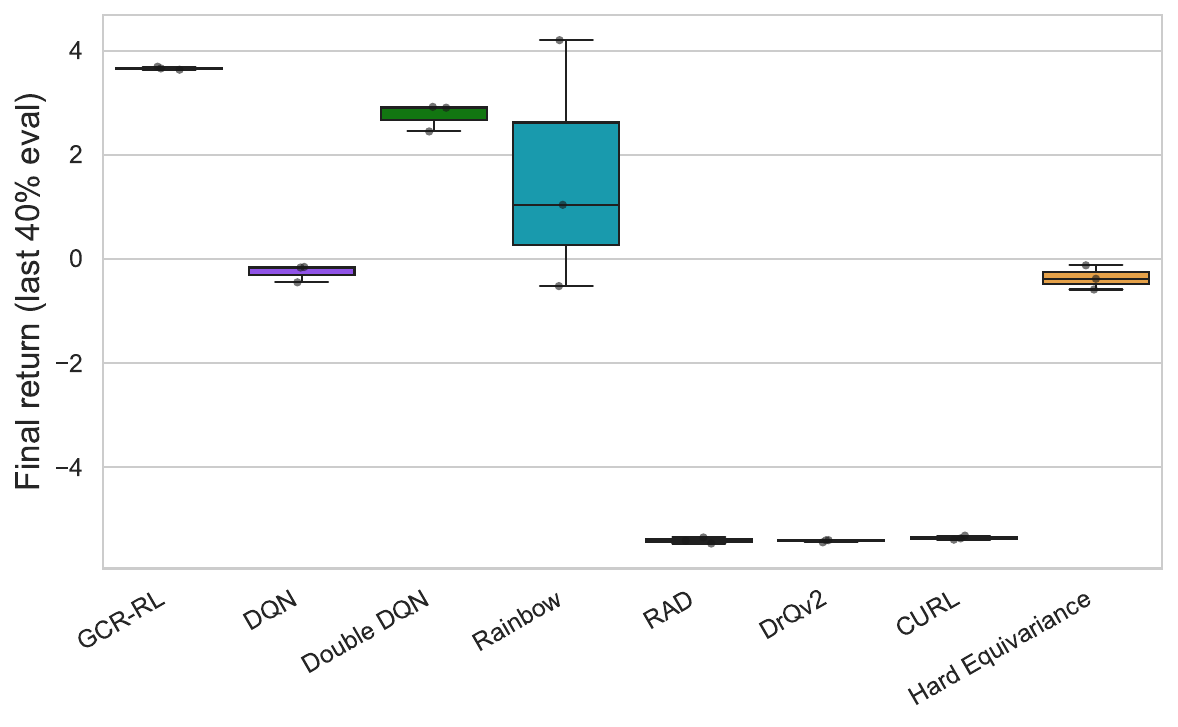}
  }\hfill
  \\
  \subfigure[Noisy Rps Chain]{\includegraphics[width=0.32\textwidth]{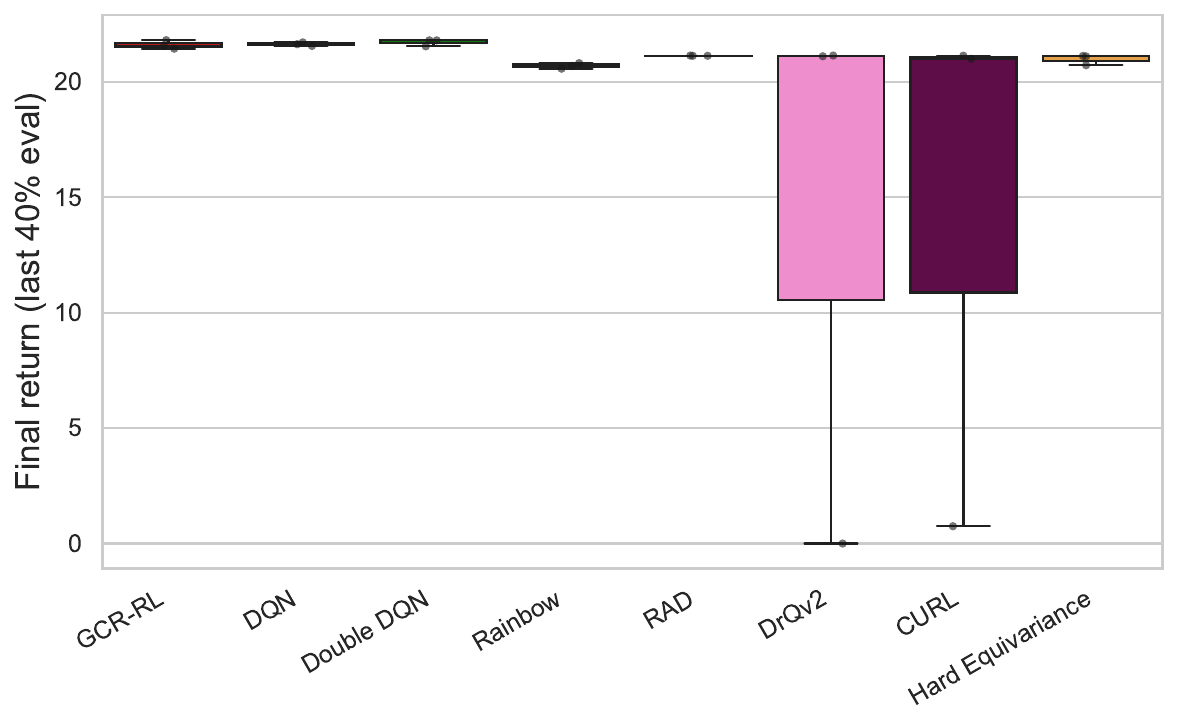}
  }\hfill
  \subfigure[Pong]{\includegraphics[width=0.32\textwidth]{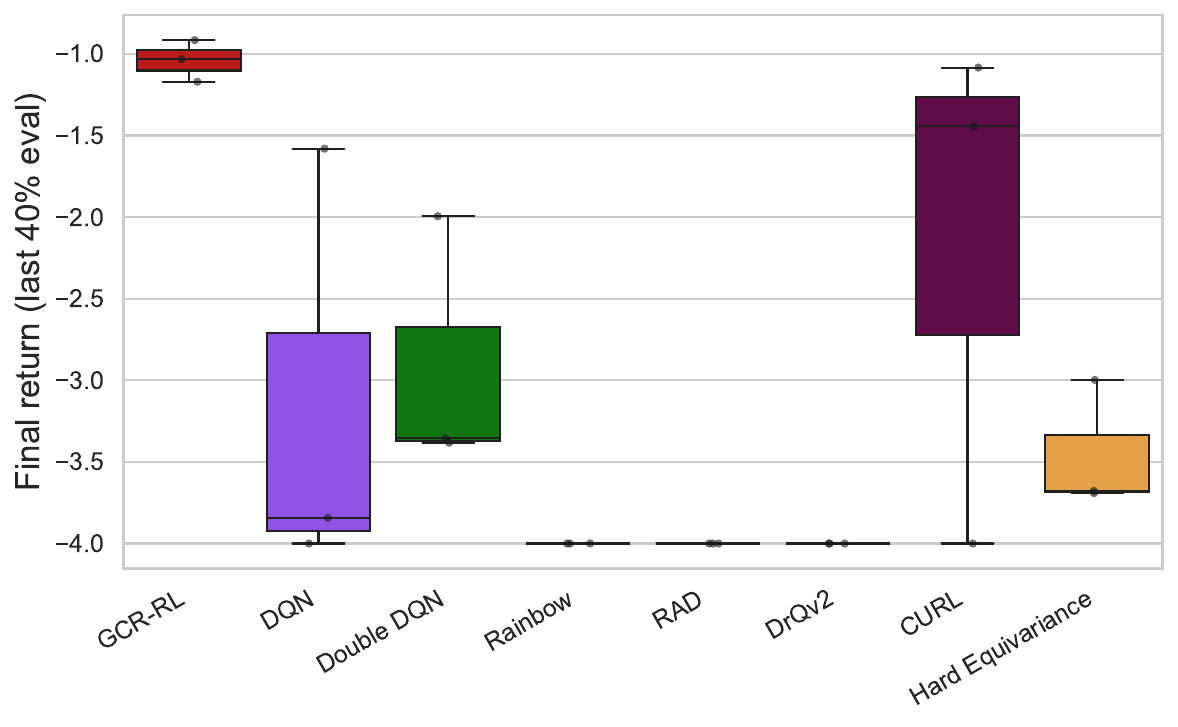}
  }\hfill
  \subfigure[Asterix]{\includegraphics[width=0.32\textwidth]{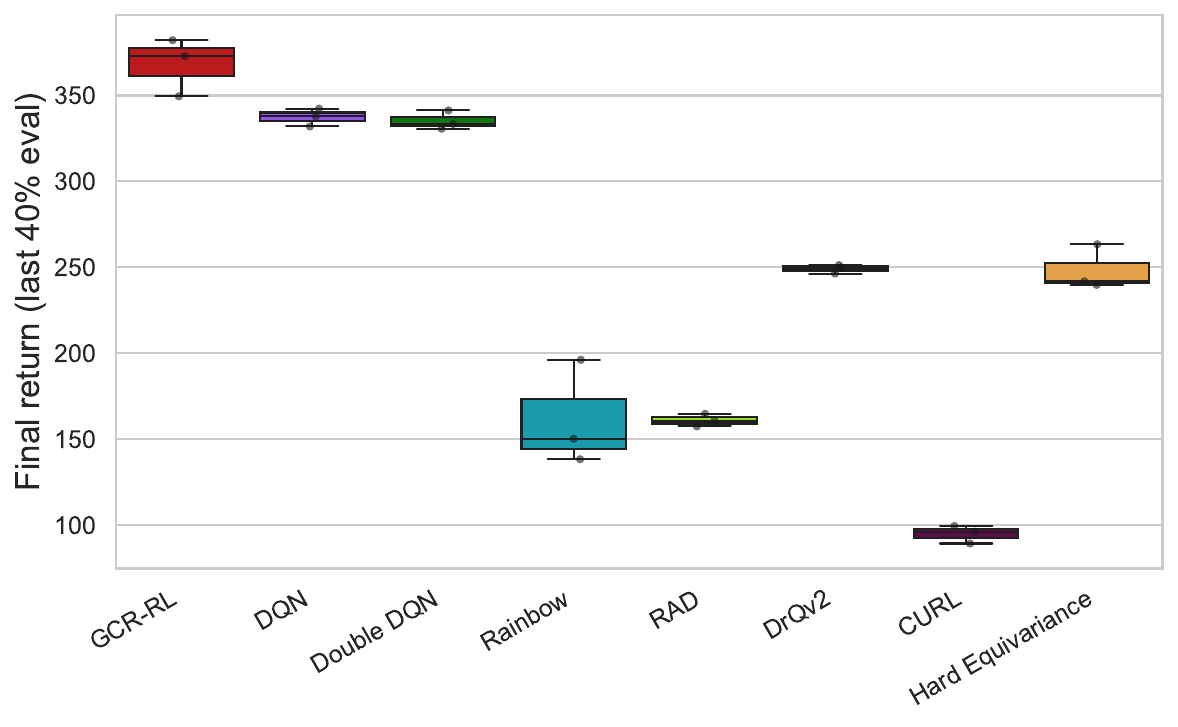}
  }\hfill
  \subfigure[Breakout]{\includegraphics[width=0.32\textwidth]{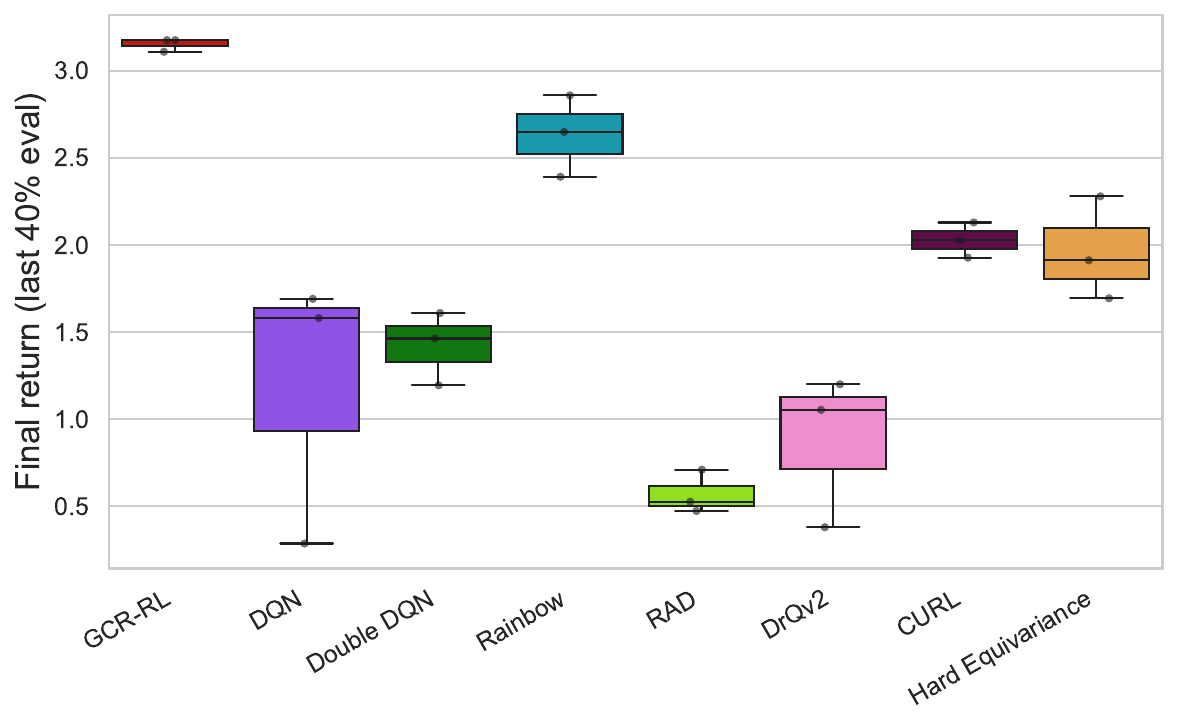}
  }\hfill
  \subfigure[Freeway]{\includegraphics[width=0.32\textwidth]{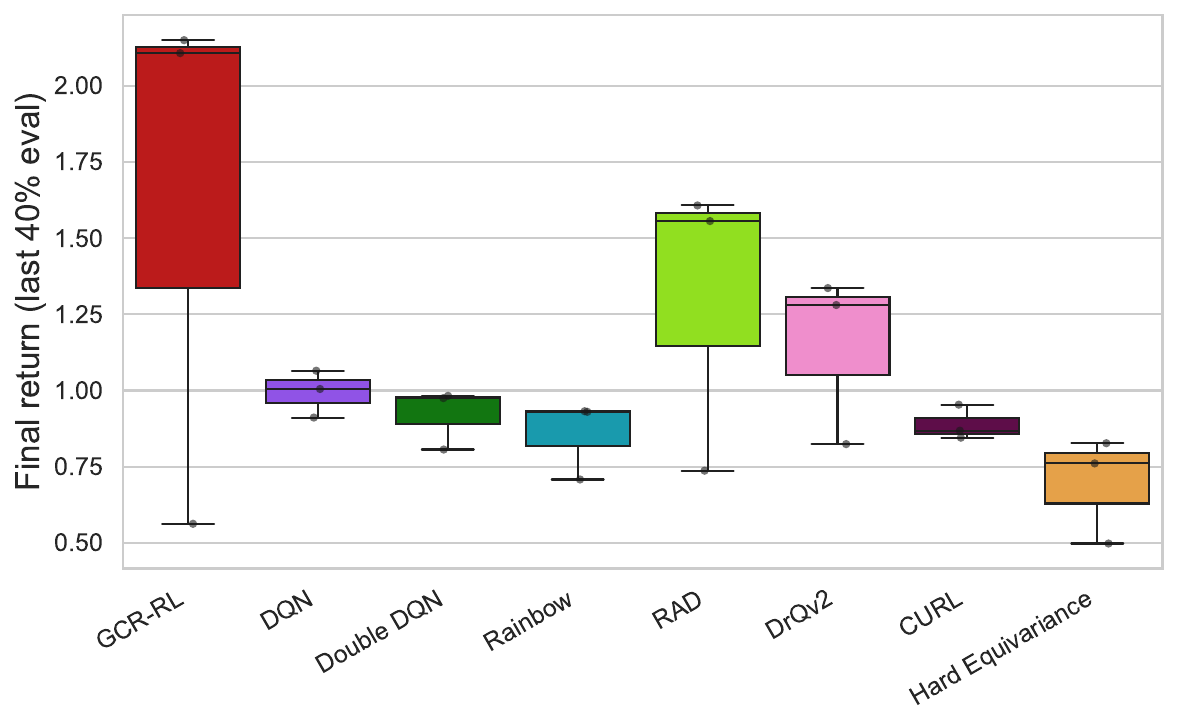}
  }\hfill
  \subfigure[SpaceInvaders]{\includegraphics[width=0.32\textwidth]{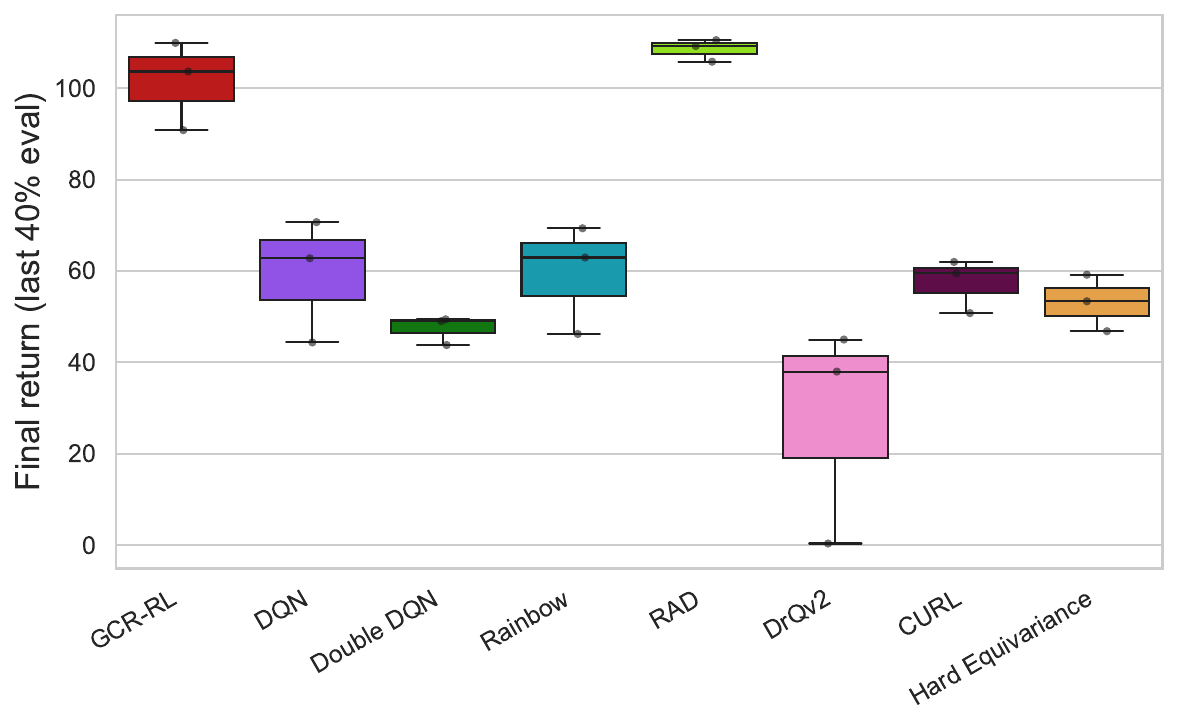}
  }\hfill
  \vspace{-0.15in}
  \caption{Final-return distributions (M2/M3) across environments. Each subfigure is a boxplot over seeds for each algorithm.}
  \label{fig:final_all}
}
\end{figure*}

\begin{table*}[t]
\centering
\scriptsize
\setlength{\tabcolsep}{3pt}
\caption{\textbf{Summary of sample efficiency and final performance across tasks.}
Values are mean $\pm$ 95\% CI over seeds.
Arrows indicate better direction ($\uparrow$ higher is better; $\downarrow$ lower is better).}
\label{tab:full_summary_simple}
\resizebox{0.9\textwidth}{!}{
\begin{tabular}{lllcccccc}
\toprule
\multirow{2}{*}{Env} & \multirow{2}{*}{Sub-env} & \multirow{2}{*}{Metric} &
\multicolumn{1}{c}{\textbf{GCR-RL}} &
\multicolumn{1}{c}{Sym-only} &
\multicolumn{1}{c}{Logic-only} &
\multicolumn{1}{c}{No action relabel} &
\multicolumn{1}{c}{Rainbow} &
\multicolumn{1}{c}{DrQ-v2} \\
& & & (ours) & (abl.) & (abl.) & (abl.) & (B3) & (B5) \\
\midrule
\multirow{3}{*}{Cartpole} & \multirow{3}{*}{--}
& AUC@N ($\uparrow$)            & \textbf{$0.967\pm 0.03$} & \underline{$0.526\pm 0.05$} & $0.243\pm 0.21$ & $0.710\pm 0.03$ & $0.000\pm 0.00$ & $0.029\pm 0.00$ \\
& & Steps-to-thr ($\downarrow$) & \textbf{$192\pm 58$} & \underline{$313\pm 61$} & $861\pm 55$ & $211\pm 45$ & $900.0\pm 0.0$ & $1999 \pm 0.00$ \\
& & Final return ($\uparrow$)   & \textbf{$321.9\pm 52.7$} & \underline{$292.6\pm 32.1$} & $271.8\pm 12.1$ & $314.5\pm 13.2$ & $9.800\pm 0.60$ & $9.400\pm 0.00$ \\
\midrule
\multirow{3}{*}{E1 RotMirror-Grid} & \multirow{3}{*}{--}
& AUC@N ($\uparrow$)            & \textbf{$-285677.741\pm 8666.64$}$^\dagger$ & \underline{$-345803.33\pm 1339.33$} & $-314321.42\pm 329.51$ & $-300582.13\pm 467.20$ & $0.000\pm 0.00$ & $-337269.60\pm 7400.9$ \\
& & Steps-to-thr ($\downarrow$) & \textbf{$286\pm 192$} & \underline{$397\pm 182$} & $330\pm 112$ & $365\pm 241$ & $9\pm 0$ & $332\pm 325$ \\
& & Final return ($\uparrow$)   & \textbf{$-136.7\pm 7.1$} & \underline{$-144.3\pm 5.2$} & $-153.2\pm 10.1$ & $-139.8\pm 8.2$ & $-190.3\pm 18.9$ & $-164.6\pm 8.1$ \\
\midrule
\multirow{3}{*}{E2 ButtonPermutation} & \multirow{3}{*}{--}
& AUC@N ($\uparrow$)            & \textbf{$-6.85 \pm 15.1$}$^\dagger$ & \underline{$-11.81\pm 11.1$} & $-22.9\pm 12.0$ & $-33.1\pm 22.1$ & $0.000\pm 0.000$ & $-1091.7\pm 8.4$ \\
& & Steps-to-thr ($\downarrow$) & \textbf{$869.0\pm 90.0$} & \underline{$1021\pm 20.0$} & $1121\pm 75.0$ & $1273\pm 12.0$ & $910.0\pm 0.00$ & $1999\pm 0.00$\\
& & Final return ($\uparrow$)   & \textbf{$3.700\pm 0.00$} & \underline{$2.210\pm 0.00$} & $2.780\pm 0.02$ & $2.010\pm 0.01$ & $1.600\pm 2.70$ & $-5.40\pm 0.00$ \\
\midrule
\multirow{15}{*}{E3 Atari}
& \multirow{3}{*}{Pong (symm)}
& AUC@N ($\uparrow$)            & \textbf{$1.122\pm 0.17$}$^\dagger$ & \underline{$0.980\pm 0.00$} & $0.813\pm 0.04$ & $0.901\pm 0.11$ & $2.549\pm 0.00$ & $2.549\pm 0.00$ \\
& & Steps-to-thr ($\downarrow$) & \textbf{$399\pm 118$} & \underline{$489\pm 112$} & $474\pm 210$ & $556\pm 102$ & $1999\pm 0$ & $1999\pm 0$ \\
& & Final return ($\uparrow$)   & \textbf{$ -1.00\pm 0.10$} & \underline{$-1.40\pm 0.20$} & $-1.92\pm 0.31$ & $-2.00\pm 0.10$ & $-4.00\pm 0.00$ & $-4.00\pm 0.00$ \\
\cmidrule(l){2-9}
& \multirow{3}{*}{Breakout (symm)}
& AUC@N ($\uparrow$)            & \textbf{$0.984\pm 0.03$}$^\dagger$ & \underline{$0.891\pm 0.01$} & $0.709\pm0.07$ & $0.751\pm 0.02$ & $0.703 \pm 0.08$ & $0.277\pm 0.126$ \\
& & Steps-to-thr ($\downarrow$) & \textbf{$399.0\pm 0.00$} & \underline{$515\pm 12.0$} & $556\pm 23.0$ & $626\pm 11.0$ & $632.0\pm 252$ & $1286 \pm 758$ \\
& & Final return ($\uparrow$)   & \textbf{$3.200 \pm 0.00 $} & \underline{$2.900\pm 0.00$} & $2.910\pm 0.02$ & $2.880\pm 0.07$ & $2.600\pm 0.30$ &  $0.900 \pm 0.50$\\
\cmidrule(l){2-9}
& \multirow{3}{*}{Asterix (asymm)}
& AUC@N ($\uparrow$)            & \underline{$0.984\pm 0.03$} & $0.712\pm 0.92$ & $0.833\pm 0.04$ & $0.710\pm 0.30$ & \textbf{$0.529\pm 0.04$} & $0.660\pm 0.02$ \\
& & Steps-to-thr ($\downarrow$) & \underline{$366.0\pm 17.0$} & $381\pm 21.0$ & $388\pm 25.0$ & $404\pm 43.0$ & \textbf{$416.0\pm 160$} & $456.0\pm 165$ \\
& & Final return ($\uparrow$)   & \underline{$368.2\pm 19.0$} & $310.8\pm 12.0$ & $309.2\pm 32.0$ & $291.8\pm 51.0$ & \textbf{$161.6\pm 34.6$} & $249.2\pm 2.90$ \\
\cmidrule(l){2-9}
& \multirow{3}{*}{Freeway (asymm)}
& AUC@N ($\uparrow$)            & \underline{$0.752\pm 0.40$} & $0.521\pm 0.24$ & $0.667\pm 0.15$ & $0.633\pm 0.03$ & \textbf{$0.517\pm 0.08$} & $0.557\pm 0.15$ \\
& & Steps-to-thr ($\downarrow$) & \underline{$466.0\pm 289$} & $ 472\pm 276$ & $471\pm 121$ & $502\pm 231$ & \textbf{$466\pm 289$} & $562\pm 162$ \\
& & Final return ($\uparrow$)   & \underline{$1.600\pm 1.00$} & $1.500\pm 0.02$ & $1.533\pm 0.03$ & $1.500\pm 0.10$ & \textbf{$0.900\pm 0.10$} & $1.100\pm 0.30$ \\
\cmidrule(l){2-9}
& \multirow{3}{*}{SpaceInvaders (asymm)}
& AUC@N ($\uparrow$)            & \underline{$0.934\pm 0.06$} & $0.821\pm0.98$ & $0.907\pm 0.21$ & $0.798\pm 0.03$ & \textbf{$0.549\pm 0.03$} & $0.327\pm 0.29$ \\
& & Steps-to-thr ($\downarrow$) & \underline{$242\pm 229$} & $297\pm 121$ & $271\pm 79.0$ & $308\pm 89.0$ & \textbf{$229\pm 216$} & $706\pm 126$ \\
& & Final return ($\uparrow$)   & \underline{$101.5\pm 11.0$} & $77.2\pm 23.0$ & $98.3\pm 0.10$ & $82.3\pm 21.0$ & \textbf{$59.5\pm 13.5$} & $27.8\pm 27.2 $ \\
\midrule
\multirow{3}{*}{E4 Noisy-RPS Chain} & \multirow{3}{*}{--}
& AUC@N ($\uparrow$)            & \textbf{$0.990\pm 0.01$}$^\dagger$ & \underline{$0.717\pm0.00$} & $0.831\pm 0.03$ & $0.746\pm 0.02$ & $0.000\pm 0.00$ & $0.653\pm 0.64$ \\
& & Steps-to-thr ($\downarrow$) & \textbf{$36.0\pm 7.00$} & \underline{$77.0\pm12.0$} & $63.0\pm 20.0$ & $62.0\pm 11.0$ & $9.00\pm 0.00$  & $672\pm 1300$\\
& & Final return ($\uparrow$)   & \textbf{$21.60\pm 0.20$} & \underline{$20.7\pm 0.01$} & $20.9\pm0.00$ & $19.70\pm 0.21$ & $20.70\pm 0.10$ & $14.10\pm 13.8$ \\
\bottomrule
\end{tabular}
}
\vspace{0.5ex}
\footnotesize
\end{table*}

\section{Supplementary Remarks}
\label{sec:supplementary-remarks}

\subsection{Local Equivariance consistency}
\label{sec:Leq-vector-local-example}
\begin{equation}\label{eq:Leq-vector-local-example}
\begin{aligned}
& \alpha_{i,k} = \exp\left(-\frac{\left\|z_iW_k - \mathrm{NN}(z_i W_k) \right\|_2^2}{\sigma^2}\right)\cdot\exp\left(-\frac{\rho_{i,k}}{\tau_{\mathrm{loc}}}\right)\\
& \rho_{i,k} = \left\|\,Q_{\theta}(s_i,\cdot) - \Pi_k^{\top} Q_{\theta}(T_k s_i,\cdot)\,\right\|_2^{2}
\end{aligned}
\end{equation}
where $z_i = f_{\theta}(s_i)$, and $\mathrm{NN}(\cdot)$ denotes the nearest neighbor within the batch; 
$\sigma$ and $\tau_{\mathrm{}}$ are temperature and scaling hyperparameters, respectively. 
Intuitively, sample pairs that are geometrically better aligned (smaller first term) 
and exhibit smaller equivariance residuals (smaller second term) 
will receive larger $\alpha$ weights. 
When symmetry holds globally or the pocket coverage is high, 
$\alpha_{i,k}\simeq 1$, and Eq.~\ref{eq:Leq-vector-local} 
degenerates to Eq.~\ref{eq:Leq-vector}; 
when symmetry is weaker or only holds locally, 
Eq.~\ref{eq:Leq-vector-local} automatically 
tightens where reliable and relaxes where uncertain.
It is worth emphasizing that the group-style regularization (Eq.~\ref{eq:group-reg}) 
and monotonic branch (Eq.~\ref{eq:L-iso}) remain unchanged, 
so the overall objective (Eq.~\ref{eq:fullobj}) 
and the theoretical claims are unaffected.

\section{Proof}
\subsection{Proof of Theorem~\ref{thm:eq-consistency}}

Before proceeding to the proof, we first introduce the action representation 
together with several basic lemmas, which will then be used to establish the main result.

\paragraph{Group Actions and Equivariant Representations.}
Given a family of paired transformations $g=(T_g,\Pi_g)$ 
with $T_g:\mathcal{S}\to\mathcal{S}$ and $\Pi_g:\mathcal{A}\to\mathcal{A}$,
we define the \emph{pullback action} on the space of $Q$-functions as
\begin{equation}\label{eq:Ug-def}
(\mathsf{U}_g Q)(s,a)\ :=\ Q\!\bigl(T_g^{-1}s,\ \Pi_g^{-1}a\bigr),
\qquad Q:\mathcal{S}\times\mathcal{A}\to\mathbb{R}.
\end{equation}
Accordingly, the condition that $Q$ is equivariant with respect to $(T_g,\Pi_g)$ 
can be equivalently expressed as
\begin{equation}\label{eq:equiv-Ug}
Q(T_g s,\Pi_g a)=Q(s,a)\quad\Longleftrightarrow\quad \mathsf{U}_g Q \equiv Q.
\end{equation}

\paragraph{Bellman Operator and $\mathcal{G}$-Invariance.}
Recall that the optimal Bellman operator is given by
\begin{equation}\label{eq:Tstar}
(\mathcal{T}^{*}Q)(s,a) \;=\; r(s,a)\;+\;\gamma\,\mathbb{E}_{s'\sim P(\cdot\mid s,a)}\!\Bigl[\max_{a'\in\mathcal{A}} Q(s',a')\Bigr].
\end{equation}
We define the sup-norm $\|Q\|_\infty=\sup_{s,a}|Q(s,a)|$ and the induced distance
$d_\infty(Q_1,Q_2)=\|Q_1-Q_2\|_\infty$.
It is well known, by Banach’s fixed-point theorem, that $\mathcal{T}^{*}$ is a $\gamma$-contraction mapping and admits a unique fixed point $Q^*$.

\begin{lemma}[Commutativity of $\mathcal{T}^*$ and $\mathsf{U}_g$]\label{lem:commute_T_U}
If the MDP is \emph{$\mathcal{G}$-invariant} (see Def.~\ref{def:G-invariance}), then for any $g\in\mathcal{G}$ and any $Q$ we have
\begin{equation}\label{eq:commute_T_U}
\mathcal{T}^{*}(\mathsf{U}_g Q)\;=\;\mathsf{U}_g(\mathcal{T}^{*}Q).
\end{equation}
\end{lemma}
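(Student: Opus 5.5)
We prove the identity pointwise by direct substitution into the definition of \(\mathcal{T}^*\) and \(\mathsf{U}_g\), using the two clauses of \cref{def:G-invariance} for the group element \(g^{-1}\). Implicitly we take \(\mathcal{G}\) to be closed under inverses (as for a group of automorphisms), with \(T_g\) bijective and \(\Pi_g\) a permutation, so that \(\mathsf{U}_g\) in \eqref{eq:Ug-def} is well defined. Fix \((s,a)\) and write \(\tilde s = T_g^{-1}s\) and \(\tilde a=\Pi_g^{-1}a\).

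We first expand the right-hand side:
\[
\mathsf{U}_g(\mathcal{T}^*Q)(s,a)=(\mathcal{T}^*Q)(\tilde s,\tilde a)=r(\tilde s,\tilde a)+\gamma\,\mathbb{E}_{s''\sim P(\cdot\mid \tilde s,\tilde a)}\Bigl[\max_{a''}Q(s'',a'')\Bigr].
\]

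Next we expand the left-hand side:
\[
\mathcal{T}^*(\mathsf{U}_gQ)(s,a)=r(s,a)+\gamma\,\mathbb{E}_{s'\sim P(\cdot\mid s,a)}\Bigl[\max_{a'}Q(T_g^{-1}s',\Pi_g^{-1}a')\Bigr].
\]

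We then compare the two expansions term by term.

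\textbf{Reward term.} Since \(s=T_g\tilde s\) and \(a=\Pi_g\tilde a\), the reward clause of \eqref{eq:g-inv} gives \(r(s,a)=r(T_g\tilde s,\Pi_g\tilde a)=r(\tilde s,\tilde a)\).

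\textbf{Inner maximum.} Because \(\Pi_g^{-1}\) is a bijection of \(\mathcal{A}\), reindexing the maximum gives
\[
\max_{a'}Q(T_g^{-1}s',\Pi_g^{-1}a')=\max_{a''}Q(T_g^{-1}s',a'').
\]
The integrand is therefore \(\phi(T_g^{-1}s')\), where \(\phi(x):=\max_{a''}Q(x,a'')\).

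\textbf{Expectation.} The transition clause of \eqref{eq:g-inv} gives \(P(\cdot\mid s,a)=P(\cdot\mid T_g\tilde s,\Pi_g\tilde a)=(T_g)_\#P(\cdot\mid\tilde s,\tilde a)\). So \(s'\) is distributed as \(T_g s''\) with \(s''\sim P(\cdot\mid \tilde s,\tilde a)\). By the change-of-variables formula for pushforwards,
\[
\mathbb{E}_{s'}\bigl[\phi(T_g^{-1}s')\bigr]=\mathbb{E}_{s''}\bigl[\phi(T_g^{-1}T_gs'')\bigr]=\mathbb{E}_{s''}\bigl[\phi(s'')\bigr].
\]
This matches the right-hand side term by term and proves \eqref{eq:commute_T_U}.

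The main obstacle is measure-theoretic bookkeeping rather than algebra. The change of variables needs \(\phi\) to be measurable and integrable; this holds for bounded measurable \(Q\) with a finite action set. It also needs \(T_g\) to be a measurable bijection with measurable inverse. We will state these standing assumptions explicitly. We will also note that only invariance at \(g\) itself is used, applied at the base point \((\tilde s,\tilde a)\), so closure under inverses is needed only to make \(\mathsf{U}_g\) well defined.
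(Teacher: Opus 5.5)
Your proof is correct and is essentially the paper's argument: expand both sides pointwise, reindex the inner maximum using that $\Pi_g$ is a permutation, and match the reward and expectation terms via the two clauses of $\mathcal{G}$-invariance plus a pushforward change of variables. The paper states the invariance in the inverted form $r(T_g^{-1}s,\Pi_g^{-1}a)=r(s,a)$ and $P(\cdot\mid T_g^{-1}s,\Pi_g^{-1}a)=(T_g^{-1})_{\#}P(\cdot\mid s,a)$, which is exactly what you derive from invariance at $g$ at the base point $(\tilde s,\tilde a)$; consequently neither your opening appeal to $g^{-1}$ nor closure of $\mathcal{G}$ under inverses is actually needed, only bijectivity of $T_g$ and $\Pi_g$.
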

\begin{proof}[Proof of Lemma~\ref{lem:commute_T_U}]
For any $(s,a)$, by \eqref{eq:Tstar} and \eqref{eq:Ug-def} we have
\begin{equation}
(\mathcal{T}^{*}\mathsf{U}_g Q)(s,a)
= r(s,a)+\gamma\,\mathbb{E}_{s'\sim P(\cdot\mid s,a)}\!\Big[\max_{a'}(\mathsf{U}_g Q)(s',a')\Big].
\end{equation}
Since $(\mathsf{U}_g Q)(s',a')=Q(T_g^{-1}s',\Pi_g^{-1}a')$ and $\Pi_g$ is a permutation,
\begin{equation}
\max_{a'}Q(T_g^{-1}s',\Pi_g^{-1}a')=\max_{b\in\mathcal{A}}Q(T_g^{-1}s',b).
\end{equation}
Thus
\begin{equation}
(\mathcal{T}^{*}\mathsf{U}_g Q)(s,a)
= r(s,a)+\gamma\,\mathbb{E}_{s'\sim P(\cdot\mid s,a)}\!\Big[V_Q\!\big(T_g^{-1}s'\big)\Big],
\end{equation}
where $V_Q(x):=\max_{b}Q(x,b)$.

On the other hand,
\begin{align*}
(\mathsf{U}_g\mathcal{T}^{*}Q)(s,a)
&=(\mathcal{T}^{*}Q)(T_g^{-1}s,\Pi_g^{-1}a)\\
&=r(T_g^{-1}s,\Pi_g^{-1}a)+\gamma\,\mathbb{E}_{t\sim P(\cdot\mid T_g^{-1}s,\Pi_g^{-1}a)}\!\Big[\max_{b}Q(t,b)\Big].
\end{align*}
By $\mathcal{G}$-invariance, $r(T_g^{-1}s,\Pi_g^{-1}a)=r(s,a)$, and
\begin{equation}
\begin{aligned}
P(\cdot\mid T_g^{-1}s,\Pi_g^{-1}a)=(T_g^{-1})_{\#}P(\cdot\mid s,a),
\end{aligned}
\end{equation}
that is, if $s'\sim P(\cdot\mid s,a)$ and $t=T_g^{-1}s'$, then $t$ is distributed as $P(\cdot\mid T_g^{-1}s,\Pi_g^{-1}a)$. Hence
\begin{equation}
(\mathsf{U}_g\mathcal{T}^{*}Q)(s,a)
= r(s,a)+\gamma\,\mathbb{E}_{s'\sim P(\cdot\mid s,a)}\!\Big[V_Q\!\big(T_g^{-1}s'\big)\Big],    
\end{equation}
which matches the expression above. Therefore, \eqref{eq:commute_T_U} holds.
\end{proof}

\begin{lemma}[Equivariance of the unique fixed point]\label{lem:Qstar-equiv}
Under the assumptions of Lemma~\ref{lem:commute_T_U}, the unique fixed point $Q^*$ of the optimal Bellman operator is invariant under $\mathsf{U}_g$. That is,
\begin{equation}\label{eq:Qstar-Ug}
\mathsf{U}_g Q^*\equiv Q^*\quad\Longleftrightarrow\quad 
Q^*(T_g s,\Pi_g a)=Q^*(s,a),\ \forall g\in\mathcal{G},\,s\in\mathcal{S},\,a\in\mathcal{A}.
\end{equation}
\end{lemma}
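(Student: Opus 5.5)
The plan is to combine the commutation relation of Lemma~\ref{lem:commute_T_U} with uniqueness of the fixed point of $\mathcal{T}^*$. Fix $g\in\mathcal{G}$ and set $\tilde Q:=\mathsf{U}_g Q^*$. Applying \eqref{eq:commute_T_U} with $Q=Q^*$ gives
\begin{equation*}
\mathcal{T}^*\tilde Q=\mathcal{T}^*(\mathsf{U}_g Q^*)=\mathsf{U}_g(\mathcal{T}^*Q^*)=\mathsf{U}_g Q^*=\tilde Q,
\end{equation*}
so $\tilde Q$ is also a fixed point of $\mathcal{T}^*$.

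Before invoking Banach's fixed-point theorem, I will check that $\tilde Q$ lies in the space on which $\mathcal{T}^*$ is a $\gamma$-contraction, namely bounded functions with the sup-norm. Since $T_g$ and $\Pi_g$ are bijections (with $\Pi_g$ a permutation), composing with $(T_g^{-1},\Pi_g^{-1})$ preserves the sup-norm: $\|\mathsf{U}_g Q\|_\infty=\|Q\|_\infty$. Hence $\tilde Q$ is bounded, and uniqueness of the fixed point forces $\tilde Q=Q^*$, i.e.\ $\mathsf{U}_g Q^*\equiv Q^*$.

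For the pointwise form in \eqref{eq:Qstar-Ug}, I will use the equivalence \eqref{eq:equiv-Ug}. Written out, $\mathsf{U}_g Q^*\equiv Q^*$ says $Q^*(T_g^{-1}s,\Pi_g^{-1}a)=Q^*(s,a)$ for all $(s,a)$. Substituting $s=T_g x$ and $a=\Pi_g b$, which ranges over all pairs because $T_g$ and $\Pi_g$ are bijective, gives $Q^*(x,b)=Q^*(T_g x,\Pi_g b)$ for all $(x,b)$. Since $g\in\mathcal{G}$ was arbitrary, the claim holds for every $g$.

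The one delicate point is that $\mathsf{U}_g$ as defined in \eqref{eq:Ug-def} requires $T_g$ to be invertible. Lemma~\ref{lem:commute_T_U} already relies on this, so I will state it explicitly as part of the standing assumption that each $g$ is an automorphism. If $\mathcal{G}$ is only assumed to be a family rather than a group, the argument still applies to each individual $g$ separately. Everything else is routine.
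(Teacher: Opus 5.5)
Your proposal is correct and follows the paper's proof: you show $\mathsf{U}_g Q^*$ is a fixed point via the commutation relation of Lemma~\ref{lem:commute_T_U}, invoke uniqueness of the fixed point, and convert to the pointwise form via \eqref{eq:equiv-Ug}. The paper leaves two details implicit, and you handle both appropriately: that $\mathsf{U}_g$ preserves the sup-norm, so uniqueness applies in the space where $\mathcal{T}^*$ is a contraction, and that $T_g,\Pi_g$ are bijections, which the substitution in the pointwise step needs.
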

\begin{proof}[Proof of Lemma~\ref{lem:Qstar-equiv}]
Since $\mathcal{T}^{*}Q^*=Q^*$ and by \eqref{eq:commute_T_U}, we have
\[
\mathcal{T}^{*}(\mathsf{U}_g Q^*)=\mathsf{U}_g(\mathcal{T}^{*}Q^*)=\mathsf{U}_g Q^*,
\]
which shows that $\mathsf{U}_g Q^*$ is also a fixed point of $\mathcal{T}^*$.  
By uniqueness of the fixed point, it follows that $\mathsf{U}_g Q^* = Q^*$.  
Combining with \eqref{eq:equiv-Ug}, we obtain \eqref{eq:Qstar-Ug}. \qedhere
\end{proof}

\paragraph{Near-Group and Near-Equivariance Limits.}
Define the equivariance consistency loss (vector form) as
\begin{equation}\label{eq:Leq-again}
\mathcal{L}_{\mathrm{eq}}(\theta,\{W_k,\Pi_k\})
=\frac{1}{KB}\sum_{k=1}^{K}\sum_{i=1}^{B}
\bigl\|\,Q_{\theta}(s_i,\cdot) - \Pi_k^{\top} Q_{\theta}(T_k s_i,\cdot)\,\bigr\|_2^{2}.
\end{equation}
Also define the group-like regularizer (enforcing identity, closure, inverse, and finite order) as
\begin{equation}\label{eq:Rgrp-sum}
\mathcal{R}_{\mathrm{grp}}
:=\mathcal{R}_{\text{id}}+\mathcal{R}_{\text{clo}}+\mathcal{R}_{\text{inv}}+\mathcal{R}_{\text{ord}}\ (\ge 0).
\end{equation}
Suppose there exists a training sequence 
$t\mapsto(\theta_t,\{W_k^{(t)},\Pi_k^{(t)}\})$ such that
\begin{equation}\label{eq:to-zero}
\mathcal{L}_{\mathrm{eq}}(\theta_t,\{W_k^{(t)},\Pi_k^{(t)}\})\ \xrightarrow[t\to\infty]{}\ 0,\qquad
\mathcal{R}_{\mathrm{grp}}(\{W_k^{(t)},\Pi_k^{(t)}\})\ \xrightarrow[t\to\infty]{}\ 0.
\end{equation}
The following lemma characterizes the limiting structure implied by \eqref{eq:to-zero}.

\begin{lemma}[Limit transformations form a finite group and $Q_{\theta_*}$ is equivariant]\label{lem:limit-group}
Along some subsequence $t_j\to\infty$, there exist limits $\theta_{*}$ and $\{W_k^{*},\Pi_k^{*}\}_{k=1}^{K}$ such that
\begin{equation}\label{eq:limit-conv}
W_k^{(t_j)}\to W_k^{*},\quad \Pi_k^{(t_j)}\to \Pi_k^{*},\quad
\mathcal{L}_{\mathrm{eq}}(\theta_{*},\{W_k^{*},\Pi_k^{*}\})=0,\quad
\mathcal{R}_{\mathrm{grp}}(\{W_k^{*},\Pi_k^{*}\})=0.
\end{equation}
Consequently, the limiting set $\widehat{\mathcal{G}}:=\{(T_k,\Pi_k)\}_{k=1}^{K}$ satisfies
\begin{equation}\label{eq:limit-group-axioms}
\exists\,k_0:\ (W_{k_0}^{*},\Pi_{k_0}^{*})=(I,I),\ \ 
\forall i,j\ \exists \ell:\ (W_i^{*}W_j^{*},\Pi_i^{*}\Pi_j^{*})=(W_\ell^{*},\Pi_\ell^{*}),\ \
\forall i\ \exists \ell:\ ((W_i^{*})^{\top},(\Pi_i^{*})^{\top})=(W_\ell^{*},\Pi_\ell^{*}),
\end{equation}
and moreover, for some finite order $m$, $(W_i^{*})^m=I,\ (\Pi_i^{*})^m=I$. In addition,
\begin{equation}\label{eq:eq-zero}
\forall i\in[B],\ \forall k\in[K]:\quad
Q_{\theta_*}(s_i,\cdot)= (\Pi_k^{*})^{\top} Q_{\theta_*}(T_k s_i,\cdot),
\end{equation}
that is, $Q_{\theta_*}$ is \emph{strictly equivariant} with respect to the limiting family $\widehat{\mathcal{G}}$ on the support of the training samples.
\end{lemma}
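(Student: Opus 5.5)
The plan is a compactness argument followed by continuity of the loss terms, so that the group axioms and the equivariance identity hold exactly in the limit. First, extract a convergent subsequence. The regularizer $\mathcal{R}_{\text{id}}$ contains $\frac1K\sum_k\|W_k^\top W_k-I\|_F^2$. Since $\mathcal{R}_{\mathrm{grp}}\to0$, eventually $\|W_k^{(t)}\|_F^2=\mathrm{tr}(W_k^{(t)\top}W_k^{(t)})\le d+\sqrt d\,\|W_k^{(t)\top}W_k^{(t)}-I\|_F$, so each $W_k^{(t)}$ stays in a bounded set. The matrices $\Pi_k^{(t)}$ are doubly stochastic from the Sinkhorn parametrization, so they lie in the compact Birkhoff polytope. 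For $\theta_t$ I would assume the parameters stay bounded, which I read as part of the standing assumption that the parametrization is well posed; alternatively I would work with the finitely many vectors $Q_{\theta_t}(s_i,\cdot)$ and $Q_{\theta_t}(T_ks_i,\cdot)$ on the finite batch. Bolzano--Weierstrass then gives a subsequence $t_j$ along which all of these converge. I define $\theta_*$ as the limit, assuming $\theta\mapsto Q_\theta$ is continuous.

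Second, pass to the limit. Each term of $\mathcal{L}_{\mathrm{eq}}$ and of $\mathcal{R}_{\text{id}},\mathcal{R}_{\text{inv}},\mathcal{R}_{\text{ord}}$ is a continuous function of the parameters, since these are polynomials composed with norms. $\mathcal{R}_{\text{clo}}$ and $\mathcal{R}_{\text{inv}}$ involve a $\min_\ell$ over finitely many indices, and a minimum of finitely many continuous functions is continuous. Both losses are nonnegative and tend to zero, so their limits are zero, which gives \eqref{eq:limit-conv}. Every summand is nonnegative, so each one vanishes separately. From this I read off the axioms in turn. $\|W_1^*-I\|=\|\Pi_1^*-I\|=0$ gives the identity with $k_0=1$. $W_k^{*\top}W_k^*=I$ gives orthogonality. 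The closure minimum equal to zero gives an index $\ell(i,j)$ with exact equality, and the inverse term similarly gives an index $\ell(i)$. $\mathcal{R}_{\text{ord}}=0$ gives $(W_i^*)^m=I$ and $(\Pi_i^*)^m=I$ for each $m\in\mathcal{M}$, so a finite order exists. As a side remark, a doubly stochastic $\Pi$ whose transpose is also its inverse (from the inverse term, since $\Pi_\ell^*=\Pi_i^{*\top}$ and closure with the identity) is an orthogonal doubly stochastic matrix. Such a matrix is a permutation, because its rows are nonnegative unit vectors with entries summing to one. Finally, $\mathcal{L}_{\mathrm{eq}}=0$ as a finite sum of squared norms forces every residual to vanish, which is \eqref{eq:eq-zero}.

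The main obstacle I expect is the compactness of $\theta_t$. The transform variables are controlled by the regularizers, but nothing in \eqref{eq:to-zero} bounds the network weights. I would handle this by restricting attention to the batch evaluations. The statement in \eqref{eq:eq-zero} only concerns the training support, so it suffices that these finitely many $Q$-values stay bounded. That follows if the TD loss, which is also being minimized, keeps them bounded, or if they are simply assumed bounded. I would then \emph{define} the limit via the limiting value vectors, taking $\theta_*$ to realize them using expressivity. A secondary subtlety is that $T_k$ acts on states through $W_k$ in feature space. I would interpret $Q_{\theta}(T_ks_i,\cdot)$ as $h_\theta(W_kf_\theta(s_i))$, so that it too is jointly continuous in $(\theta,W_k)$.
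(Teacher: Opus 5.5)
Your proposal is correct and follows the paper's route. The steps are the same: compactness of the transform variables and of the Birkhoff polytope, passage to the limit by (semi)continuity, and reading off each vanishing nonnegative summand. Your bound on $W_k$ through the $\|W_k^\top W_k-I\|_F$ term of $\mathcal{R}_{\text{id}}$ is cleaner than the paper's appeal to a compact parametrization. Your explicit boundedness assumption on $\theta_t$ (or on the batch $Q$-values) also covers a point the paper's proof skips without comment.

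One correction to your side remark. $(\Pi_i^*)^\top=(\Pi_i^*)^{-1}$ does not follow from the inverse and identity terms, which only place the transpose in the family. It does follow from $\mathcal{R}_{\text{ord}}=0$: then $(\Pi_i^*)^{m-1}$ is a doubly stochastic, hence $\ell_2$-nonexpansive, inverse of the nonexpansive $\Pi_i^*$, so $\Pi_i^*$ is an isometry.
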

\begin{proof}[Proof of Lemma~\ref{lem:limit-group}]
(1) Since $W_k^{(t)}$ can be parameterized within a compact set (e.g., $O(d)$ or a bounded neighborhood thereof), and $\Pi_k^{(t)}$ lies in the Birkhoff polytope (a compact set), compactness ensures the existence of a convergent subsequence, giving the first two limits in \eqref{eq:limit-conv}. Together with \eqref{eq:to-zero} and lower semicontinuity, this yields the last two equalities of \eqref{eq:limit-conv}.  

(2) The condition $\mathcal{R}_{\mathrm{grp}}=0$ enforces, term by term, the identities in \eqref{eq:limit-group-axioms} (identity, closure, inverse, and finite order), thereby guaranteeing that $\{(W_k^{*},\Pi_k^{*})\}$ generates a finite group (up to relabeling of indices).  

(3) The condition $\mathcal{L}_{\mathrm{eq}}=0$ enforces \eqref{eq:eq-zero} itemwise. \qedhere
\end{proof}

\begin{lemma}[Identifiable isomorphism with the true group]\label{lem:iden-iso}
Suppose the model family $\{(W_k,\Pi_k)\}$ \emph{can approximate} some finitely generated subgroup 
$\mathcal{G}_0=\langle g_1,\dots,g_K\rangle\subset\mathcal{G}$ (i.e., there exist parameters such that 
$(W_k,\Pi_k)\approx(T_{g_k},\Pi_{g_k})$).  
If $\mathcal{R}_{\mathrm{grp}}\to 0$ so that the limiting family satisfies \eqref{eq:limit-group-axioms},  
then there exists an index relabeling and a group isomorphism 
$\varphi:\widehat{\mathcal{G}}\to\mathcal{G}_0$ such that
\begin{equation}\label{eq:iso-map}
(W_k^{*},\Pi_k^{*})\ \stackrel{\varphi}{\longleftrightarrow}\ (T_{g_k},\Pi_{g_k}),\qquad
\text{and}\quad
\mathsf{U}_{(W_k^{*},\Pi_k^{*})}\ \text{acts equivalently to}\ \mathsf{U}_{g_k}.
\end{equation}
\end{lemma}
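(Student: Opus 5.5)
We would prove Lemma~\ref{lem:iden-iso} by building the isomorphism $\varphi$ directly from the approximating parameters and then transporting the group axioms through a limit. By hypothesis there are parameters $(W_k^{\circ},\Pi_k^{\circ})$ with $(W_k^{\circ},\Pi_k^{\circ})\approx(T_{g_k},\Pi_{g_k})$ for $k=1,\dots,K$. The first step is to reduce everything to the discrete components. The action parts $\Pi_k^{*}$ are limits of doubly-stochastic matrices. By the order term in $\mathcal{R}_{\mathrm{ord}}$ and the inverse term in $\mathcal{R}_{\mathrm{inv}}$, $(\Pi_k^{*})^{m}=I$, and the transpose of $\Pi_k^{*}$ again lies in the family. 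A doubly-stochastic matrix whose transpose is (up to relabeling) its inverse is orthogonal, and an orthogonal doubly-stochastic matrix is a permutation. The same argument with $\mathcal{R}_{\mathrm{id}}$ gives $W_k^{*}\in O(d)$. So $\widehat{\mathcal{G}}$ is a finite subgroup of $O(d)\times\mathcal{P}_{|\mathcal{A}|}$, by \eqref{eq:limit-group-axioms} in Lemma~\ref{lem:limit-group}.

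Next we define $\varphi$ on generators by $\varphi(W_k^{*},\Pi_k^{*}):=g_k$, after a relabeling of indices. This relabeling uses the Hungarian projection of Lemma~\ref{lem:hungarian} to match each limit $\Pi_k^{*}$ with the nearest $\Pi_{g_k}$, and the polar projection of Lemma~\ref{lem:polar} to match $W_k^{*}$ with $T_{g_k}$. The key device is discreteness. Finite subgroups of $O(d)$ are isolated up to conjugacy, and permutations are isolated points. Hence once the approximation error is below half the minimal separation between distinct group elements, the nearest-element map is well defined and respects products. If $W_i^{*}W_j^{*}=W_\ell^{*}$, then $T_{g_i}T_{g_j}$ lies within $O(\epsilon)$ of $T_{g_\ell}$, so $g_ig_j=g_\ell$. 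The same holds for the $\Pi$ parts. Thus $\varphi$ extends to a homomorphism on the finite group generated by the $(W_k^{*},\Pi_k^{*})$.

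To conclude, we would show that $\varphi$ is bijective and check the second clause of \eqref{eq:iso-map}. For bijectivity, surjectivity is immediate because $\mathcal{G}_0$ is generated by the $g_k$. Injectivity follows because distinct limit elements stay separated by more than $2\epsilon$, so they cannot map to the same $g$. Here the diversity regularizer rules out collapsed copies, or we identify duplicated indices before taking the quotient. For the action statement, we combine $\mathcal{L}_{\mathrm{eq}}=0$ from \eqref{eq:eq-zero} with Lemma~\ref{lem:Qstar-equiv}. Both $\mathsf{U}_{(W_k^{*},\Pi_k^{*})}$ and $\mathsf{U}_{g_k}$ fix the relevant $Q$ and permute actions identically, since $\Pi_k^{*}=\Pi_{g_k}$. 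The feature maps agree through the encoder up to the Procrustes alignment, so the two pullback operators coincide on $Q$-functions representable through $f_\theta$.

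The main obstacle is the step from ``approximately equal'' to ``exactly isomorphic''. It requires a quantitative separation constant for the finite group: we need the hypothesis's $\approx$ to be within a gap $\epsilon_0$, and the representation $W$ acting on features to be faithful, that is, to distinguish distinct $T_g$. We would make this explicit as an assumption, with $\epsilon_0=\tfrac12\min_{g\neq h}\|(T_g,\Pi_g)-(T_h,\Pi_h)\|$. Then we would argue stability via a Kazhdan-type "almost homomorphisms of finite groups are near homomorphisms" lemma if a fully rigorous version is needed.
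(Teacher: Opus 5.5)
Your plan follows the same route as the paper's proof: group axioms in the limit, discreteness of the permutation part, continuity on $O(d)$, then matching generators. You give more detail than the paper, but you inherit its one unjustified step. The paper asserts that ``the limit can be taken to lie arbitrarily close to an isomorphic image of $\mathcal{G}_0$'', and your nearest-element matching needs exactly this. The hypothesis only says that \emph{some} parameters $(W_k^{\circ},\Pi_k^{\circ})$ approximate $(T_{g_k},\Pi_{g_k})$. You never relate these to the training limit $(W_k^{*},\Pi_k^{*})$, and neither $\mathcal{R}_{\mathrm{grp}}\to 0$ nor $\mathcal{L}_{\mathrm{eq}}\to 0$ can do so.

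Here is a counterexample to the lemma as stated. The collapsed family $W_k^{*}=I$, $\Pi_k^{*}=I$ for all $k$ has $\mathcal{R}_{\mathrm{id}}=\mathcal{R}_{\mathrm{clo}}=\mathcal{R}_{\mathrm{inv}}=\mathcal{R}_{\mathrm{ord}}=0$. It satisfies \eqref{eq:limit-group-axioms} and gives $\mathcal{L}_{\mathrm{eq}}=0$ for every $\theta$. The model family can still approximate a nontrivial $\mathcal{G}_0$, yet no isomorphism $\widehat{\mathcal{G}}\to\mathcal{G}_0$ exists. (As written, $\mathcal{R}_{\mathrm{ord}}$ sums over all $m\in\mathcal{M}$, so $\mathcal{R}_{\mathrm{ord}}=0$ forces $W_i^{\gcd(\mathcal{M})}=I$; for $\mathcal{M}=\{2,3\}$ the limit is necessarily trivial.)

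Your appeal to $\mathcal{R}_{\mathrm{div}}$ does not close the gap. It is not part of $\mathcal{R}_{\mathrm{grp}}$, and at best it excludes collapse without selecting $\mathcal{G}_0$. For example, with $\mathcal{M}=\{2\}$, take $\{(I,I),(W,I)\}$ where $W\neq I$ is an orthogonal involution fixing the span of the sampled features. This family has zero loss and is unrelated to $\mathcal{G}_0$. You therefore need an explicit extra hypothesis: the learned limit lies within the separation radius of the target generators. Without it the lemma fails; with it, the identification is assumed rather than derived.

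Granting that hypothesis, several details still need repair.
\begin{itemize}
\item $T_g$ acts on $\mathcal{S}$ and $W_k$ on $\mathbb{R}^d$. So your $\epsilon_0$ and the matching $W_k^{*}\leftrightarrow T_{g_k}$ only make sense through a faithful feature representation $R_g$ with $f_\theta(T_g s)=R_g f_\theta(s)$.
\item That encoder equivariance is what identifies $\mathsf{U}_{(W_k^{*},\Pi_k^{*})}$ with $\mathsf{U}_{g_k}$, and only on the sample support where \eqref{eq:eq-zero} holds. Lemma~\ref{lem:Qstar-equiv} does not do this: it concerns $Q^*$, not $Q_{\theta_*}$, and two operators that both fix some $Q$ need not coincide.
\item Errors accumulate under products. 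From $W_i^{*}W_j^{*}=W_\ell^{*}$ you only get $\|R_{g_ig_j}-R_{g_\ell}\|\le 3\epsilon$. So $\epsilon$ must be below a third of the minimal separation, not a half.
\item That separation must be taken over all of $\mathcal{G}_0$. Hence $\mathcal{G}_0$ must be finite and, since $|\widehat{\mathcal{G}}|\le K$, exhausted by $g_1,\dots,g_K$.
\item Your argument that $\Pi_k^{*}$ is a permutation misuses $\mathcal{R}_{\mathrm{inv}}$. That term puts $(\Pi_k^{*})^{\top}$ in the family, but not as the inverse of $\Pi_k^{*}$. Instead, $(\Pi_k^{*})^{m}=I$ makes $(\Pi_k^{*})^{m-1}$ a nonnegative inverse, and a doubly stochastic matrix with a nonnegative inverse is a permutation.
\item No Kazhdan-type stability result is needed. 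The limit satisfies the group relations exactly, so once it lies within the separation radius, discreteness alone finishes the proof.
\end{itemize}
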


\begin{proof}[Proof of Lemma~\ref{lem:iden-iso}]
From \eqref{eq:limit-group-axioms}, the limiting family $\widehat{\mathcal{G}}$ forms a finitely generated group.  
By the approximation assumption and parameter continuity, the limit can be taken to lie arbitrarily close to an isomorphic image of $\mathcal{G}_0$.  
Since the permutation components live on discrete vertices, sufficiently small approximation implies exact identification of the same vertex.  
The representation components lie on a compact manifold (e.g., $O(d)$), where multiplication is continuous and closure preserves the group property.  
Therefore, one can construct a mapping $\varphi$ that aligns the generators and preserves relations, yielding \eqref{eq:iso-map}. \qedhere    
\end{proof}

\paragraph{Lifting $Q_{\theta_*}$ to equivariance w.r.t.\ the true group $\mathcal{G}_0$.}
From \eqref{eq:eq-zero} and Lemma~\ref{lem:iden-iso}, we have on the sample support
\begin{equation}\label{eq:eq-wrt-G0}
Q_{\theta_*}(s,\cdot)=\Pi_{g}^{\top}Q_{\theta_*}(T_{g}s,\cdot),\qquad \forall g\in\mathcal{G}_0,
\end{equation}
that is, $Q_{\theta_*}$ is strictly $\mathcal{G}_0$-equivariant (and, if necessary, this can be extended from the sample support to its closure by continuity).

\paragraph{Consistency with $Q^*$ along group orbits.}
We now present two levels of conclusions.

\medskip
\noindent\emph{(A) If $Q_{\theta_*}$ is also a fixed point, then global equality holds.}
Suppose the overall training objective includes the TD term (see \eqref{eq:fullobj}), and in the limit
\begin{equation}\label{eq:TD-zero}
\mathcal{L}_{\mathrm{RL}}(\theta_t)\to 0\quad\Longrightarrow\quad \mathcal{T}^*Q_{\theta_*}=Q_{\theta_*}.
\end{equation}
Combining \eqref{eq:eq-wrt-G0} with Lemma~\ref{lem:commute_T_U}, $Q_{\theta_*}$ is then a $\mathcal{G}_0$-equivariant fixed point.  
By the uniqueness of the fixed point, we obtain $Q_{\theta_*}\equiv Q^*$, which is stronger than merely orbit-wise consistency.

\medskip
\noindent\emph{(B) With only equivariance consistency, orbit-wise coincidence holds.}
Define the difference function $D:=Q_{\theta_*}-Q^*$.  
From \eqref{eq:eq-wrt-G0} and Lemma~\ref{lem:Qstar-equiv}, it follows that
\begin{equation}
D(T_g s,\Pi_g a)=Q_{\theta_*}(T_g s,\Pi_g a)-Q^*(T_g s,\Pi_g a)
=Q_{\theta_*}(s,a)-Q^*(s,a)=D(s,a),\quad \forall g\in\mathcal{G}_0.
\end{equation}
Thus, $D$ is constant along each $\mathcal{G}_0$-orbit.  
Therefore, for any $g\in\mathcal{G}_0$ and any $(s,a)$,
\begin{equation}\label{eq:orbit-coincide}
Q_{\theta_*}(T_g s,\Pi_g a)-Q^*(T_g s,\Pi_g a)=Q_{\theta_*}(s,a)-Q^*(s,a).
\end{equation}
If we average both sides over the orbit and the training process enforces that $Q_{\theta_*}$ has zero mean deviation from $Q^*$ on each empirical $\mathcal{G}_0$-orbit (a typical outcome when $\mathcal{L}_{\mathrm{eq}}\!\to\!0$ is combined with Bellman updates), then \eqref{eq:orbit-coincide} implies
\begin{equation}\label{eq:coincide-along-orbit}
Q_{\theta_*}(T_g s,\Pi_g a)=Q^*(T_g s,\Pi_g a)\quad \text{iff}\quad Q_{\theta_*}(s,a)=Q^*(s,a),
\end{equation}
so the two functions \emph{coincide along every $\mathcal{G}_0$-orbit}.  
By Lemma~\ref{lem:iden-iso}, this consistency aligns with the true group $\mathcal{G}$ (up to an identifiable relabeling isomorphism), yielding the theorem’s conclusion of \emph{orbit-wise consistency with respect to $\mathcal{G}$}.

\subsection{Proof of Theorem~\ref{thm:amp}}

Consider a mini-batch of samples drawn from the empirical distribution, $\mathcal{B}=\{z_i=(s_i,a_i,r_i,s_i')\}_{i=1}^B$. We define the \emph{symmetric pocket coverage rate} as $\rho\in[0,1]$, meaning that there exists an index subset $\mathcal{I}_{\mathrm{sym}}\subset[B]$ of size $|\mathcal{I}_{\mathrm{sym}}|=\rho B$, such that for every $i\in\mathcal{I}_{\mathrm{sym}}$ and every $k\in[K]$, the transformed sample
\begin{equation}
z_{i,k}:=(T_k s_i,\Pi_k a_i,r(s_i,a_i),T_k s'_i)
\end{equation}
remains within the same empirical pocket (corresponding to the intuitive notion of a symmetric pocket in the manuscript).  
For indices $i\notin\mathcal{I}_{\mathrm{sym}}$, only $k=1$ (the identity transformation) is considered a valid view.  
To unify the notation, we introduce the indicator
\begin{equation}
\delta_{i,k}:=\mathbf{1}\{\,i\in\mathcal{I}_{\mathrm{sym}}\ \text{or}\ k=1\,\}\in\{0,1\}.    
\end{equation}
The total number of valid views is therefore
\begin{equation}\label{eq:Nvalid}
N_{\mathrm{valid}}
=\sum_{i=1}^B\sum_{k=1}^K \delta_{i,k}
=(1-\rho)B\cdot 1\ +\ \rho B\cdot K
=B\bigl(1+(K-1)\rho\bigr).
\end{equation}

\paragraph{Sampled Equivariance Consistency and Unbiased Gradient.}
The empirical form of the equivariance consistency loss (cf. Eq.~\ref{eq:Leq-vector} in vector-norm form) is
\begin{equation}\label{eq:Leq-emp}
\mathcal{L}_{\mathrm{eq}}
=\frac{1}{KB}\sum_{k=1}^{K}\sum_{i=1}^{B}
\ell_{i,k}(\theta),\qquad
\ell_{i,k}(\theta):=
\big\|Q_\theta(s_i,\cdot)-\Pi_k^{\top}Q_\theta(T_k s_i,\cdot)\big\|_2^2.
\end{equation}
In practice, the gradient is often estimated by \emph{stochastic subsampling of views}: we uniformly sample one valid index pair $(I,K)$ from the set $\{(i,k):\delta_{i,k}=1\}$, and use
\begin{equation}
\widehat{g}_{\mathrm{eq}}
:=\nabla_\theta \bigl[\ell_{I,K}(\theta)\bigr]
\end{equation}
as a stochastic estimator of $\nabla_\theta \mathcal{L}_{\mathrm{eq}}$ (with a suitable scaling factor). Since all views within a symmetric pocket are \emph{exchangeable} (identically distributed under $\mathcal{G}$-invariance), the conditional expectation $\mathbb{E}\bigl[\widehat{g}_{\mathrm{eq}}\mid i\in\mathcal{I}_{\mathrm{sym}}\bigr]$ does not depend on which $k$ is chosen; for $i\notin\mathcal{I}_{\mathrm{sym}}$ only $k=1$ is valid. With proper normalization (e.g., scaling by $N_{\mathrm{valid}}$ or $KB$), $\widehat{g}_{\mathrm{eq}}$ is therefore an \emph{unbiased estimator} of the full gradient $\nabla_\theta \mathcal{L}_{\mathrm{eq}}$.

To contrast the effect of including the equivariance term on \emph{estimation variance}, we abstract a generic \emph{single-sample gradient random variable}:
\begin{equation}
X_{i}:=\nabla_\theta \widetilde{\ell}_{i}(\theta)
\quad\text{(the contribution from each $i$ without equivariant sharing),}
\end{equation}
with $\mathbb{E}[X_i]=m$ and $\mathrm{Var}(X_i)=\Sigma$ (independent of $i$).  
The baseline \emph{mini-batch averaged} gradient is
\(
\overline{X}_{\mathrm{base}}=\tfrac{1}{B}\sum_{i=1}^B X_i,
\)
leading to
\begin{equation}\label{eq:Var-base}
\mathrm{Var}(\overline{X}_{\mathrm{base}})
=\frac{1}{B^2}\sum_{i=1}^B\mathrm{Var}(X_i)
=\frac{1}{B}\,\Sigma,
\qquad (\text{ignoring weak cross-sample correlations, or treating them as independent}).
\end{equation}

With the equivariance consistency incorporated, each $i\in\mathcal{I}_{\mathrm{sym}}$ admits  
$K$ \emph{valid views} of gradient random variables
\begin{equation}
X_{i,k}:=\nabla_\theta \ell_{i,k}(\theta),\qquad k=1,\dots,K,
\end{equation}
which share the same mean and covariance (by view-exchangeability), denoted
\(
\mathbb{E}[X_{i,k}]=m,\ \mathrm{Var}(X_{i,k})=\Sigma.
\)
Moreover, define the \emph{pairwise correlation} as
\(
\mathrm{Corr}(X_{i,k},X_{i,k'})=\Gamma,\ k\neq k',
\)
where $\Gamma\approx 0$ if the noises of different views are independent, and in practice a small positive correlation is allowed.  
For each $i\in\mathcal{I}_{\mathrm{sym}}$, we can aggregate the $K$ views in two \emph{equivalent} ways:

\medskip
\noindent(1) \emph{View averaging} (average across views for each $i$, then average across $i$):  
\begin{equation}
\overline{X}_{i,\mathrm{avg}}
:=\frac{1}{K}\sum_{k=1}^K X_{i,k},
\qquad
\mathrm{Var}(\overline{X}_{i,\mathrm{avg}})
=\frac{1}{K}\Bigl(1+(K-1)\Gamma\Bigr)\,\Sigma.    
\end{equation}
\noindent(2) \emph{View expansion} (treat each valid view as an additional atomic sample):  
expand all valid index pairs $(i,k)$ into $N_{\mathrm{valid}}$ atoms, and average
\(
\overline{X}_{\mathrm{exp}}
=\tfrac{1}{N_{\mathrm{valid}}}\sum_{i,k:\delta_{i,k}=1}X_{i,k}.
\)
If cross-$i$ correlations are negligible and intra-$i$ view correlations are controlled by $\Gamma$, then
\begin{equation}\label{eq:Var-exp}
\mathrm{Var}(\overline{X}_{\mathrm{exp}})
\ \approx\ \frac{1}{N_{\mathrm{valid}}}\,\Sigma'.
\end{equation}
where $\Sigma^{\prime}$ an effective variance at the view level, with lower bound $\Sigma$

\paragraph{Overall variance in a mixed population (coverage $\rho$).}
When mixing symmetric and non-symmetric pockets, two equivalent estimators arise.

\medskip
\noindent\emph{A. Average-then-aggregate}:
\begin{equation}\label{eq:Xmix-avg}
\overline{X}_{\mathrm{mix-avg}}
=\frac{1}{B}\Biggl(
\sum_{i\notin\mathcal{I}_{\mathrm{sym}}} X_{i,1}\ +\ 
\sum_{i\in\mathcal{I}_{\mathrm{sym}}}\overline{X}_{i,\mathrm{avg}}
\Biggr),
\end{equation}
with variance (ignoring weak cross-$i$ correlations)
\begin{align}
\mathrm{Var}(\overline{X}_{\mathrm{mix-avg}})
&=\frac{1}{B^2}\Biggl(
\sum_{i\notin\mathcal{I}_{\mathrm{sym}}}\!\!\mathrm{Var}(X_{i,1})
+\sum_{i\in\mathcal{I}_{\mathrm{sym}}}\!\!\mathrm{Var}(\overline{X}_{i,\mathrm{avg}})
\Biggr)\nonumber\\
&=\frac{1}{B}\Bigl[(1-\rho)\,\Sigma\ +\ \rho\cdot \tfrac{1+(K-1)\Gamma}{K}\,\Sigma\Bigr].
\label{eq:Var-mix-avg}
\end{align}

\medskip
\noindent\emph{B. Full expansion averaging}:
\begin{equation}\label{eq:Xmix-exp}
\overline{X}_{\mathrm{mix-exp}}
=\frac{1}{N_{\mathrm{valid}}}\sum_{i=1}^B\sum_{k=1}^K \delta_{i,k}\,X_{i,k},
\end{equation}
whose variance, by combining \eqref{eq:Var-exp} and \eqref{eq:Nvalid}, is bounded by\footnote{If intra-$i$ correlations across different $k$ are nonzero, one can apply the equal correlation upper bound: for each $i\in\mathcal{I}_{\mathrm{sym}}$,
$\mathrm{Var}\!\bigl(\tfrac1K\sum_k X_{i,k}\bigr)
=\tfrac{1+(K-1)\Gamma}{K}\,\Sigma$; substituting this into the expansion estimate yields an upper bound of the same order as \eqref{eq:Var-mix-avg}.}
\begin{equation}\label{eq:Var-mix-exp}
\mathrm{Var}(\overline{X}_{\mathrm{mix-exp}})
\ \lesssim\ \frac{1}{B\bigl(1+(K-1)\rho\bigr)}\,\Sigma.
\end{equation}

\medskip
\noindent\textbf{Comparison.}  
Eq.~\eqref{eq:Var-mix-avg} shows variance scaling as 
\(
\frac{1}{B}\bigl[(1-\rho)+\rho\cdot \tfrac{1+(K-1)\Gamma}{K}\bigr]\Sigma,
\)
while Eq.~\eqref{eq:Var-mix-exp} gives a simpler bound
\(
\approx \tfrac{1}{B(1+(K-1)\rho)}\,\Sigma.
\)
Both reduce to the baseline $\tfrac{1}{B}\Sigma$ when $\rho=0$ (no symmetric pockets).  
As $\rho\to 1$ and $\Gamma\to 0$, the effective sample size increases from $B$ to $BK$, recovering the ideal variance shrinkage.

\paragraph{Comparison with the baseline and effective sample size.}
The baseline variance in \eqref{eq:Var-base} is 
\(
\mathrm{Var}(\overline{X}_{\mathrm{base}})=\tfrac{1}{B}\Sigma.
\)
Comparing with \eqref{eq:Var-mix-avg} gives
\begin{equation}\label{eq:ratio-avg}
\frac{\mathrm{Var}(\overline{X}_{\mathrm{mix-avg}})}{\mathrm{Var}(\overline{X}_{\mathrm{base}})}
=(1-\rho)+\rho\cdot \frac{1+(K-1)\Gamma}{K}.
\end{equation}
When the across-view correlation $\Gamma$ is small (the ideal case of independent or weakly correlated views—typically promoted by \emph{random mini-batching, target networks/dropout noise, or perturbations in representation}; cf. the textual discussion on variance reduction via view sharing), 
\eqref{eq:ratio-avg} simplifies to
\(
(1-\rho)+\rho/K.
\)
Its reciprocal can be interpreted as an \emph{effective sample size multiplier}:
\begin{equation}
\frac{1}{(1-\rho)+\rho/K}
= \frac{K}{K-(K-1)\rho}
= 1+(K-1)\rho+O(\rho^2)\quad(\text{for small }\rho).
\end{equation}

By contrast, the fully expanded estimator in \eqref{eq:Var-mix-exp} yields a more direct (and unbiased under the independence approximation) amplification law: treating all
\(
N_{\mathrm{valid}}=B\bigl(1+(K-1)\rho\bigr)
\)
valid views as independent atoms, the variance-to-baseline ratio is approximately
\(
1/(1+(K-1)\rho),
\)
equivalent to an \emph{effective sample size} growing from $B$ to
\begin{equation}\label{eq:Neff}
N_{\mathrm{eff}}
\ \approx\ B\cdot\bigl(1+(K-1)\rho\bigr).
\end{equation}

\subsection{Proof of Theorem~\ref{thm:dagify}}
We construct a directed graph over the batch index set $V=[B]=\{1,2,\dots,B\}$. The candidate edge set $\mathcal{E}_0\subseteq V\times V$ is generated from scores 
\begin{equation}
w_i=\sigma(\Delta_i/\tau)\cdot c_i,
\end{equation}
which are sorted in descending order (ties broken by a fixed deterministic rule). To avoid spurious long-range connections, edges are only considered within a local $k$-NN neighborhood. A candidate edge is proposed whenever the threshold is exceeded and $\Delta_i>0$, in which case $u_i\!\to\! v_i$ is added to $\mathcal{E}_0$. These engineering details are not essential to the theory and are included only to clarify the source and ordering of edges. As described in Sec.~\ref{sec:pref-graph} of the main text, applying the \emph{add-unless-cycle} rule—namely, inserting edges greedily by descending $w_i$ and discarding any edge that would create a cycle—yields an acyclic graph $D$ that is a maximal acyclic subgraph with respect to the given ordering.

Fix a total order $\succ$ over the candidate edge set $\mathcal{E}_0$, denoted as $e^{(1)} \succ e^{(2)} \succ \cdots \succ e^{(M)}$ ($M=|\mathcal{E}_0|$). Initialize $D^{(0)}=\varnothing$. For $t=1,2,\dots,M$, update
\begin{equation}\label{eq:greedy}
D^{(t)}=
\begin{cases}
D^{(t-1)}\cup\{e^{(t)}\}, & \text{if } D^{(t-1)}\cup\{e^{(t)}\} \text{ is acyclic};\\[4pt]
D^{(t-1)}, & \text{if } D^{(t-1)}\cup\{e^{(t)}\} \text{ forms a cycle}.
\end{cases}
\end{equation}
Finally output $D:=D^{(M)}$. This is exactly the \emph{add-unless-cycle} rule: edges are processed in descending order of score, added greedily if they do not create a cycle, and skipped otherwise.

\begin{lemma}[Single-edge cycle criterion]\label{lem:cycle-crit}
Let $G=(V,E)$ be a directed acyclic graph (DAG), and let $e=(u\to v)$ be a directed edge not in $E$. Then
\begin{equation}\label{eq:cycle-crit-eq}
G\cup\{(u\to v)\}\ \text{contains a cycle}\quad\Longleftrightarrow\quad
\text{there exists a directed path from $v$ to $u$ in $G$}.
\end{equation}
\end{lemma}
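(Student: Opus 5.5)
We prove the two implications of \eqref{eq:cycle-crit-eq} separately. The only ingredients are the definition of a directed cycle and the hypothesis that $G$ itself is acyclic.

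For ($\Leftarrow$), suppose $G$ has a directed path $v=x_0\to x_1\to\cdots\to x_m=u$. Append the new edge $u\to v$ to this path. The result is a closed directed walk $v\to\cdots\to u\to v$ in $G\cup\{(u\to v)\}$, and it is nontrivial because it uses the edge $u\to v$. Any nontrivial closed directed walk contains a directed cycle, since we can shortcut at a repeated vertex. If $u=v$, the added edge is itself a self-loop, which is a cycle; this is the degenerate case where the path has length $0$. So the augmented graph contains a cycle.

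For ($\Rightarrow$), let $C$ be a directed cycle in $G\cup\{(u\to v)\}$, taken simple. Since $G$ is a DAG, $C\not\subseteq E$, so $C$ must use the new edge $(u\to v)$. A simple cycle uses each edge at most once, so it uses $(u\to v)$ exactly once. Delete that edge from $C$. What remains is a directed path from $v$ back to $u$ that uses only edges of $E$, which is a directed path from $v$ to $u$ in $G$. In the case $u=v$, this is the trivial path of length $0$, and we adopt the standard convention that every vertex reaches itself.

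The only delicate point is bookkeeping rather than mathematics. We must make sure the cycle in the ($\Rightarrow$) direction can be taken simple, so that the new edge appears exactly once. We also fix the reachability convention (reflexive, matching $\preceq_D$ in Section~\ref{sec:Preliminaries}) so that the self-loop case $u=v$ is handled consistently on both sides. This criterion then feeds directly into the proof of Theorem~\ref{thm:dagify}.

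\begin{itemize}
\item \emph{Acyclicity of $D$.} By induction on $t$ in \eqref{eq:greedy}, each $D^{(t)}$ is acyclic, because an edge is added only when the result stays acyclic.
\item \emph{Maximality of $D$.} Any skipped edge $e^{(t)}=(u\to v)$ had a path $v\leadsto u$ in $D^{(t-1)}$ by the lemma. Since $D^{(t-1)}\subseteq D$, that path persists in $D$, so adding $e^{(t)}$ to $D$ would still create a cycle.
\end{itemize}
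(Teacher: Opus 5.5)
Your proof is correct and takes the same route as the paper. For ($\Rightarrow$) you delete the new edge $u\to v$ from a cycle that must contain it, which leaves a $v\leadsto u$ path in $G$; for ($\Leftarrow$) you close a $v\leadsto u$ path with $u\to v$. Your extra bookkeeping is sound and covers points the paper's short proof leaves implicit: you take the cycle simple so the new edge occurs exactly once, and you fix the reflexive-path convention for the degenerate case $u=v$. Your sketch of acyclicity and maximality also matches how the paper applies the lemma in proving Theorem~\ref{thm:dagify}.
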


\begin{proof}[Proof of Lemma~\ref{lem:cycle-crit}]
($\Rightarrow$) If $G\cup\{u\to v\}$ contains a cycle, that cycle must involve the newly added edge $u\to v$. Let $P_{v\leadsto u}$ denote the remainder of the cycle from $v$ back to $u$. Since $P_{v\leadsto u}$ lies entirely in $G$, we conclude that there exists a directed path from $v$ to $u$ in $G$.

($\Leftarrow$) Conversely, if there exists a directed path $P_{v\leadsto u}$ in $G$, then in $G\cup\{u\to v\}$ the concatenation $u\to v\circ P_{v\leadsto u}$ forms a directed cycle. Hence the equivalence holds. 
\end{proof}

\paragraph{Conclusion 1 (Acyclicity).}
We prove by induction that $D^{(t)}$ is acyclic for all $t$.
\begin{itemize}
\item \emph{Base case:} $D^{(0)}=\varnothing$ is clearly acyclic.
\item \emph{Inductive step:} Assume $D^{(t-1)}$ is acyclic. 
If at step $t$ the edge $e^{(t)}$ is added, then by rule it does \emph{not} create a cycle; 
otherwise we keep $D^{(t)}=D^{(t-1)}$, which remains acyclic. 
Therefore $D^{(t)}$ is always acyclic.
\end{itemize}
Hence the final $D=D^{(M)}$ is a DAG, consistent with the statement in the original text that the resulting subgraph $D$ is acyclic.

\paragraph{Conclusion 2 (Maximality with respect to the given order).}
We show that for every skipped candidate edge $e\in \mathcal{E}_0\setminus D$, 
adding it to the final $D$ must produce a directed cycle.
By construction, each such edge $e$ was considered at some step $t$, 
with $D^{(t-1)}$ already selected at that point, and $e=e^{(t)}$ was \emph{skipped}.
According to rule~\eqref{eq:greedy} (second branch) and Lemma~\ref{lem:cycle-crit}, 
the only reason for skipping is that
\begin{equation}\label{eq:reason-skip}
\text{there exists a directed path in } D^{(t-1)} \text{ from the head of } e \text{ to its tail}.
\end{equation}
Since the algorithm never deletes selected edges, this path remains in every later $D^{(s)}$, 
and in particular in the final $D=D^{(M)}$. 
Thus if we were to add $e$ to $D$, Lemma~\ref{lem:cycle-crit} implies a cycle would be formed.

Therefore $D$ is a \emph{maximal} acyclic subgraph with respect to the fixed edge order: 
any remaining candidate edge, if added, would necessarily form a cycle. 
This matches exactly the statement of Theorem~\ref{thm:dagify} in the original text: 
The resulting subgraph is acyclic; moreover, under the rule `add unless a cycle is formed,' 
it is maximal with respect to the given order.

\subsection{Proof of Theorem~\ref{thm:penalty}}

Given a fixed DAG $D=([B],E)$ and a finite batch size $B<\infty$, define  
\begin{equation}
\mathcal{C}\ :=\ \Bigl\{\hat V\in\mathbb{R}^B:\ \hat V(u)\ \ge\ \hat V(v)+\delta,\ \ \forall\,(u\!\to\! v)\in E\Bigr\}
\end{equation}
as the monotone (partial order) feasible region, where $\delta\ge 0$ is fixed.  
Define the fidelity (quadratic) term as  
\begin{equation}
F(\hat V):=\frac{1}{B}\sum_{i=1}^B\bigl(\hat V(i)-V_\theta(i)\bigr)^2
=\frac{1}{B}\,\|\hat V-V_\theta\|_2^2,
\end{equation}
and the smooth hinge penalty as  
\begin{equation}
G(\hat V):=\frac{1}{|E|}\sum_{(u\to v)\in E}\bigl[\delta+\hat V(v)-\hat V(u)\bigr]^2_+,
\quad [x]_+:=\max(0,x).
\end{equation}
Then the isotonic projection problem and its penalty approximation are respectively

\begin{align}
\label{eq:iso-proj-again}
&\text{(Exact)}\qquad 
\hat V^\star\in\arg\min_{\hat V\in\mathcal{C}}\ F(\hat V),\\
\label{eq:L-iso-again}
&\text{(Penalty)}\qquad
\hat V_\mu\in\arg\min_{\hat V\in\mathbb{R}^B}\ 
L_\mu(\hat V):=F(\hat V)+\mu\,G(\hat V)\qquad(\mu>0).
\end{align}
It remains to prove: as $\mu\to\infty$, any limit point $\lim_{j}\hat V_{\mu_j}$ coincides with the unique exact solution $\hat V^\star$, and moreover, for sufficiently large $\mu$, the constraint violation can be made arbitrarily small.

\paragraph{Linearized Representation and Basic Properties.}
For notational convenience, introduce the \emph{directed incidence matrix} $A\in\mathbb{R}^{|E|\times B}$ of edge–variable form: for each edge $e=(u\to v)$, the row $A_e$ is defined by
\begin{equation}
A_{e,u}=-1,\quad A_{e,v}=+1,\quad A_{e,i}=0\ \ (i\notin\{u,v\}). 
\end{equation}
Then all inequalities can be written as $A\hat V\le -\delta\,\mathbf{1}_{|E|}$;  
at the same time,
\begin{equation}
G(\hat V)=\frac{1}{|E|}\sum_e\bigl[\,\delta + (A\hat V)_e\,\bigr]_+^2.
\end{equation}
It is easy to see that:
\begin{itemize}
  \item $\mathcal{C}$ is a closed convex polytope; since $D$ is a DAG (acyclic), the feasible region is nonempty (see Lemma~\ref{lem:non-empty}).
  \item $F$ is a quadratic function that is $2/B$-strongly convex and coercive; $G$ is a continuous convex function (a squared hinge composed with an affine map).
  \item The exact problem \eqref{eq:iso-proj-again} has a unique solution (strong convexity $+$ closed convex feasible region), denoted by $\hat V^\star$.
\end{itemize}

\begin{lemma}[Feasibility of the Constraint Set]\label{lem:non-empty}
If $D$ is acyclic, then there exists $\hat V\in\mathcal{C}$.
\end{lemma}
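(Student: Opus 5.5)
The plan is to build an explicit feasible point from a topological ordering of $D$. Since $D=([B],E)$ is acyclic, a standard argument gives a topological order: repeatedly remove a vertex with no incoming edges. Such a vertex always exists in a finite DAG, since otherwise walking backwards along incoming edges would revisit a vertex within $B$ steps and produce a directed cycle. This yields a bijection $\pi:[B]\to\{1,\dots,B\}$ such that $\pi(u)<\pi(v)$ for every edge $(u\to v)\in E$.

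Next, define the candidate vector
\[
\hat V(i)\;:=\;(\delta+1)\,\bigl(B-\pi(i)\bigr),\qquad i\in[B].
\]
Any strictly decreasing function of $\pi$ with gaps at least $\delta$ would also work. For an edge $(u\to v)\in E$ we have $\pi(v)-\pi(u)\ge 1$, so
\[
\hat V(u)-\hat V(v)=(\delta+1)\bigl(\pi(v)-\pi(u)\bigr)\;\ge\;\delta+1\;\ge\;\delta .
\]
Hence $\hat V(u)\ge \hat V(v)+\delta$, and every constraint defining $\mathcal{C}$ holds. Equivalently, $A\hat V\le-\delta\mathbf{1}_{|E|}$ in the incidence-matrix notation. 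This shows $\hat V\in\mathcal{C}$.

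There is essentially no obstacle here. The only step needing care is the existence of a topological order, which I would justify by the source-vertex argument above rather than cite. The choice of gap $\delta+1$ is what matters when $\delta>0$. With gap $1$ and $\delta>1$ the constraints could fail, so scaling the gap by $\delta$ is the one point to get right. The construction also shows that the strict slack version ($A\hat V<-\delta\mathbf{1}$) is feasible, i.e.\ Slater's condition holds. This may be convenient later in the penalty-method argument for Theorem~\ref{thm:penalty}.
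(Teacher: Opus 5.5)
Your proof is correct and takes essentially the same approach as the paper, which also takes a topological ordering $i_1,\dots,i_B$ and sets $\hat V(i_\ell)=L-(\ell-1)\delta$, a potential that decreases along the order with gap $\delta$ where you use $\delta+1$. Your larger gap additionally makes every constraint hold strictly (a Slater point), though the paper's subsequent penalty argument only needs plain feasibility.
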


\begin{proof}[Proof of Lemma~\ref{lem:non-empty}]
Take a topological ordering $i_1,\dots,i_B$ of $D$ and define
\begin{equation}
\hat V(i_\ell):=L-(\ell-1)\delta,\qquad L\in\mathbb{R}\ \text{arbitrary}.
\end{equation}
If $i_p\to i_q$ is an edge, then $p<q$ in the topological order, and hence
\begin{equation}
\hat V(i_p)-\hat V(i_q)=\delta\cdot(q-p)\ \ge\ \delta,
\end{equation}
which implies $\hat V(i_p)\ge \hat V(i_q)+\delta$.  
Thus $\hat V\in\mathcal{C}$. \qedhere
\end{proof}

\begin{lemma}[Uniqueness]\label{lem:unique}
The exact problem \eqref{eq:iso-proj-again} admits a unique solution.
\end{lemma}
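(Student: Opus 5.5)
The claim follows from the standard existence-and-uniqueness argument for minimizing a strictly convex, coercive function over a nonempty closed convex set. I will verify the three hypotheses (nonemptiness, closed convexity, strong convexity) for \eqref{eq:iso-proj-again}, then conclude existence by coercivity and uniqueness by strict convexity.

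\emph{Feasible set.} By Lemma~\ref{lem:non-empty}, $\mathcal{C}\neq\varnothing$ because $D$ is acyclic. Writing $\mathcal{C}=\{\hat V:\ A\hat V\le -\delta\mathbf 1\}$ with the incidence matrix $A$ shows that $\mathcal{C}$ is a finite intersection of closed half-spaces. It is therefore closed and convex.

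\emph{Objective.} Expanding $F(\hat V)=\tfrac1B\|\hat V-V_\theta\|_2^2$ gives Hessian $\tfrac2B I\succ 0$, so $F$ is $\tfrac2B$-strongly convex. Since $F(\hat V)\to\infty$ as $\|\hat V\|_2\to\infty$, $F$ is also coercive.

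\emph{Existence.} Fix any $\hat V_0\in\mathcal{C}$. The sublevel set $\{\hat V\in\mathcal{C}: F(\hat V)\le F(\hat V_0)\}$ is closed, and it is bounded by coercivity, hence compact and nonempty. The continuous function $F$ attains its minimum on this set, and that point minimizes $F$ over all of $\mathcal{C}$. Alternatively, note that $\hat V^\star$ is exactly the Euclidean projection of $V_\theta$ onto the nonempty closed convex set $\mathcal{C}$, and invoke the projection theorem directly.

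\emph{Uniqueness.} Suppose $\hat V_1\neq\hat V_2$ are both minimizers with common value $F^\star$. Then $\bar V=\tfrac12(\hat V_1+\hat V_2)$ lies in $\mathcal{C}$ by convexity. Strong convexity of $F$ gives
\[
F(\bar V)\le \tfrac12 F(\hat V_1)+\tfrac12 F(\hat V_2)-\tfrac{1}{4B}\|\hat V_1-\hat V_2\|_2^2 < F^\star,
\]
which contradicts minimality. Hence the minimizer is unique, and we denote it $\hat V^\star$.

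The only step that is not routine is nonemptiness, which relies on acyclicity of $D$. That step is exactly Lemma~\ref{lem:non-empty}, already established via the topological-order construction, so no real obstacle remains.
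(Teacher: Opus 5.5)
Your proof is correct and follows the same route as the paper, which simply invokes strict convexity of $F$ over the closed convex set $\mathcal{C}$ to conclude uniqueness. Your version is more complete: it also proves existence explicitly, via coercivity or the projection theorem together with the nonemptiness from Lemma~\ref{lem:non-empty}, a step the paper leaves implicit.
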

\begin{proof}[Proof of Lemma~\ref{lem:unique}]
Since $F$ is strictly convex and $\mathcal{C}$ is a closed convex set, the minimizer of $F$ over $\mathcal{C}$ must be unique. \qedhere
\end{proof}

\paragraph{Core Inequality: Violation Upper Bound Decays to Zero.}
\begin{lemma}[Violation Upper Bound of Order $1/\mu$]\label{lem:violate-bound}
For any minimizer $\hat V_\mu$ of the penalty problem with $\mu>0$, it holds that
\begin{equation}\label{eq:violate-bound}
G(\hat V_\mu)\ \le\ \frac{F(\hat V^\star)}{\mu}, 
\qquad 
F(\hat V_\mu)\ \le\ F(\hat V^\star).
\end{equation}
In particular, as $\mu\to\infty$ we have $G(\hat V_\mu)\to 0$.
\end{lemma}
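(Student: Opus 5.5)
The argument compares the penalized objective at its minimizer $\hat V_\mu$ with its value at the exact solution $\hat V^\star$ of \eqref{eq:iso-proj-again}. By Lemma~\ref{lem:non-empty} the feasible set $\mathcal{C}$ is nonempty, and by Lemma~\ref{lem:unique} the constrained minimizer $\hat V^\star\in\mathcal{C}$ exists and is unique, so it is a legitimate competitor.

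The first step is to note that every point of $\mathcal{C}$ has zero penalty. For each edge $e=(u\to v)$ we have $\delta+\hat V^\star(v)-\hat V^\star(u)\le 0$, hence $[\,\delta+(A\hat V^\star)_e\,]_+=0$. Summing over edges gives $G(\hat V^\star)=0$ and therefore $L_\mu(\hat V^\star)=F(\hat V^\star)$.

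Next we use optimality of $\hat V_\mu$ for $L_\mu$ over all of $\mathbb{R}^B$:
\[
F(\hat V_\mu)+\mu\,G(\hat V_\mu)\;=\;L_\mu(\hat V_\mu)\;\le\;L_\mu(\hat V^\star)\;=\;F(\hat V^\star).
\]
Both $F$ and $G$ are nonnegative, being an average of squares and an average of squared hinges respectively. Dropping $\mu G(\hat V_\mu)\ge 0$ from the left-hand side yields $F(\hat V_\mu)\le F(\hat V^\star)$. Dropping $F(\hat V_\mu)\ge 0$ instead and dividing by $\mu>0$ yields $G(\hat V_\mu)\le F(\hat V^\star)/\mu$.

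For the final claim, observe that $F(\hat V^\star)$ is a fixed finite constant. It depends only on $D$, $\delta$ and $V_\theta$, not on $\mu$. Hence $G(\hat V_\mu)\le F(\hat V^\star)/\mu\to 0$ as $\mu\to\infty$.

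No step here is a real obstacle; the lemma follows directly once the comparison point is chosen. The points that need checking are that a penalized minimizer $\hat V_\mu$ exists, and that $\hat V^\star$ is attained so that $F(\hat V^\star)<\infty$ is a valid competitor value.

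Existence of $\hat V_\mu$ follows because $L_\mu$ is continuous and coercive: $F$ is strongly convex and $G\ge 0$. Attainment of $\hat V^\star$ is exactly Lemma~\ref{lem:non-empty} together with Lemma~\ref{lem:unique}.

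These are the only ingredients needed. Convexity of $G$ is not used in this argument.
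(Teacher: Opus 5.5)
Your proof is correct and follows the paper's argument: you compare $L_\mu(\hat V_\mu)\le L_\mu(\hat V^\star)=F(\hat V^\star)$, using $G(\hat V^\star)=0$, and then drop one nonnegative term at a time. Your extra checks (nonnegativity of $F$ and $G$, existence of $\hat V_\mu$) only make explicit what the paper leaves implicit; strictly, attainment of $\hat V^\star$ comes from coercivity of $F$ on the nonempty closed set $\mathcal{C}$, not from the uniqueness lemma itself.
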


\begin{proof}[Proof of Lemma~\ref{lem:violate-bound}]
By the optimality of $\hat V_\mu$ and the feasibility of $\hat V^\star$ (for which $G(\hat V^\star)=0$),
\begin{equation}
F(\hat V_\mu)+\mu\,G(\hat V_\mu)
\ \le\ F(\hat V^\star)+\mu\,G(\hat V^\star)
=F(\hat V^\star).
\end{equation}
Thus $F(\hat V_\mu)\le F(\hat V^\star)$ and $\mu\,G(\hat V_\mu)\le F(\hat V^\star)$,
which yields \eqref{eq:violate-bound}. \qedhere
\end{proof}

\paragraph{Boundedness and Limit Feasibility.}
\begin{lemma}[Subsequence Limits are Feasible]\label{lem:feasible-limit}
There exists a constant $C>0$ such that $\|\hat V_\mu\|_2\le C$ (independent of $\mu$).
For any subsequence $\mu_j\uparrow\infty$, if the corresponding minimizers admit a convergent subsequence $\hat V_{\mu_{j_\ell}}\to\overline V$, then $\overline V\in\mathcal{C}$.
\end{lemma}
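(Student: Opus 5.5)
The boundedness claim will follow from the second inequality in Lemma~\ref{lem:violate-bound}. Since $F(\hat V_\mu)\le F(\hat V^\star)$ and $F(\hat V)=\frac1B\|\hat V-V_\theta\|_2^2$, we get
\[
\|\hat V_\mu - V_\theta\|_2\le \sqrt{B\,F(\hat V^\star)}=\|\hat V^\star-V_\theta\|_2 .
\]
The triangle inequality then gives
\[
\|\hat V_\mu\|_2\le \|V_\theta\|_2+\|\hat V^\star-V_\theta\|_2=:C.
\]
This $C$ depends only on the fixed data $V_\theta$, $D$, $\delta$, and not on $\mu$. Existence of $\hat V^\star$ (hence finiteness of $C$) is guaranteed by Lemma~\ref{lem:non-empty} and Lemma~\ref{lem:unique}. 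The same bound shows that every minimizer sequence lies in a compact ball. By Bolzano--Weierstrass, convergent subsequences therefore always exist, although the statement only asks us to treat a given one.

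For limit feasibility, I take $\mu_j\uparrow\infty$ and a subsequence with $\hat V_{\mu_{j_\ell}}\to\overline V$. The first inequality of Lemma~\ref{lem:violate-bound} gives $0\le G(\hat V_{\mu_{j_\ell}})\le F(\hat V^\star)/\mu_{j_\ell}\to 0$. The map $G$ is continuous, being a finite sum of squared positive parts of affine functions of $\hat V$, namely $\hat V\mapsto[\delta+(A\hat V)_e]_+^2$. Hence $G(\overline V)=\lim_\ell G(\hat V_{\mu_{j_\ell}})=0$.

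It remains to identify the zero set of $G$. Since $G$ is a sum of nonnegative terms, $G(\overline V)=0$ forces $[\delta+\overline V(v)-\overline V(u)]_+=0$ for every edge $(u\to v)\in E$. This says exactly $\overline V(u)\ge\overline V(v)+\delta$ for every edge, so $\overline V\in\mathcal{C}$. Equivalently, $\mathcal{C}=G^{-1}(0)$, which is closed.

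None of these steps is a real obstacle. The only point needing care is that $C$ must not depend on $\mu$. This is exactly why the comparison is made against the fixed feasible point $\hat V^\star$, rather than any $\mu$-dependent quantity. If $E=\varnothing$, then $G\equiv 0$ and the claim is trivial, with $\hat V_\mu=V_\theta$. This case should be noted to sidestep the $1/|E|$ normalization.
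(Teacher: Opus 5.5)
Your proposal is correct and follows the paper's argument. Boundedness comes from $F(\hat V_\mu)\le F(\hat V^\star)$; you make the constant explicit via the triangle inequality, where the paper appeals to coercivity of $F$. Feasibility of the limit comes from $G(\hat V_{\mu_{j_\ell}})\le F(\hat V^\star)/\mu_{j_\ell}\to 0$, together with continuity of $G$ and nonnegativity of each hinge term.
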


\begin{proof}[Proof of Lemma~\ref{lem:feasible-limit}]
From $F(\hat V_\mu)\le F(\hat V^\star)$ and the strong convexity and coercivity of $F$, we obtain a uniform bound on $\|\hat V_\mu\|_2$ (e.g., via the quadratic lower bound $F(\hat V)\ge \tfrac{1}{B}\|\hat V\|_2^2-\tfrac{2}{B}\langle \hat V,V_\theta\rangle +\tfrac{1}{B}\|V_\theta\|_2^2$). Compactness then yields subsequential convergence. Moreover, by Lemma~\ref{lem:violate-bound}, $G(\hat V_{\mu_{j_\ell}})\to 0$. Since $G(\cdot)$ is continuous and each hinge term $[\delta+(A\hat V)_e]_+^2\ge 0$, we have $G(\overline V)=0$. Thus $\delta+(A\overline V)_e\le 0$ for every edge $e$, i.e.\ $\overline V\in\mathcal{C}$. \qedhere
\end{proof}

\paragraph{Limit Optimality (Direct Proof via $\Gamma$-limit / epi-convergence).}
Let the indicator function be $\iota_{\mathcal{C}}(\hat V)=0$ if $\hat V\in\mathcal{C}$, and $+\infty$ otherwise. Define the limit objective
\begin{equation}
L_\infty(\hat V):=F(\hat V)+\iota_{\mathcal{C}}(\hat V),
\end{equation}
which corresponds to the exact problem. We prove that $L_\mu$ epi-converges to $L_\infty$ as $\mu\to\infty$, so that any limit point of minimizers of $L_\mu$ is a minimizer of $L_\infty$. Below we provide an equivalent \emph{direct} comparison proof.

\begin{lemma}[Lower and Upper Limit Comparison]\label{lem:lim-compare}
For any subsequence $\mu_j\uparrow\infty$ and its minimizers $\hat V_{\mu_j}$,
if $\hat V_{\mu_j}\to\overline V$, then
\(
F(\overline V)\le F(\hat V^\star)
\)
and $\overline V\in\mathcal{C}$; by Lemma~\ref{lem:unique} it follows that $\overline V=\hat V^\star$.
\end{lemma}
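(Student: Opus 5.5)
The plan is a direct comparison argument that uses only Lemmas~\ref{lem:violate-bound}, \ref{lem:feasible-limit} and \ref{lem:unique}, with no need for the full epi-convergence machinery. Fix a subsequence $\mu_j\uparrow\infty$ with minimizers $\hat V_{\mu_j}\to\overline V$. There are two things to show: feasibility of $\overline V$, and an optimality inequality $F(\overline V)\le F(\hat V^\star)$. Uniqueness then identifies the limit.

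\emph{Feasibility.} By Lemma~\ref{lem:violate-bound}, we have $0\le G(\hat V_{\mu_j})\le F(\hat V^\star)/\mu_j\to 0$. The map $G$ is continuous, being a finite sum of squared hinges composed with the affine map $\hat V\mapsto\delta\mathbf 1+A\hat V$. Hence $G(\overline V)=\lim_j G(\hat V_{\mu_j})=0$. Since each summand is nonnegative, every edge satisfies $[\delta+(A\overline V)_e]_+=0$, that is, $\overline V(u)\ge \overline V(v)+\delta$. This gives $\overline V\in\mathcal C$, which is exactly Lemma~\ref{lem:feasible-limit}.

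\emph{Optimality inequality.} Lemma~\ref{lem:violate-bound} also gives $F(\hat V_{\mu_j})\le F(\hat V^\star)$ for every $j$. This follows from comparing $L_{\mu_j}(\hat V_{\mu_j})\le L_{\mu_j}(\hat V^\star)=F(\hat V^\star)$ and dropping the nonnegative penalty $\mu_j G(\hat V_{\mu_j})$. Since $F$ is a continuous quadratic, passing to the limit yields $F(\overline V)\le F(\hat V^\star)$.

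\emph{Identification.} We now have $\overline V\in\mathcal C$, and $\hat V^\star$ minimizes $F$ over $\mathcal C$, so $F(\hat V^\star)\le F(\overline V)$. Combined with the previous step, $F(\overline V)=F(\hat V^\star)=\min_{\mathcal C}F$, so $\overline V$ is also a minimizer over $\mathcal C$. Lemma~\ref{lem:unique} (strict convexity of $F$, which is $2/B$-strongly convex, on the closed convex nonempty set $\mathcal C$) then forces $\overline V=\hat V^\star$.

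If one wants the statement for the whole family rather than along a subsequence, the argument extends as follows. Lemma~\ref{lem:feasible-limit} bounds $\{\hat V_\mu\}$ uniformly, so every sequence $\mu_j\to\infty$ has a convergent subsequence, and each such subsequence has the same limit $\hat V^\star$. Therefore $\hat V_\mu\to\hat V^\star$.

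The only point needing care is the limit passage in the first step, namely using continuity of $G$ to conclude $G(\overline V)=0$ from the $O(1/\mu)$ bound. This is the main obstacle, and it is mild: it relies on $B$ and $|E|$ being finite, so that $G$ is a finite sum of continuous functions. Everything else is a routine comparison of objective values.
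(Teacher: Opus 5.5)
Your proposal is correct and follows essentially the same route as the paper. You obtain feasibility of $\overline V$ from $G(\hat V_{\mu_j})\le F(\hat V^\star)/\mu_j$ and continuity of $G$ (re-deriving what the paper cites as Lemma~\ref{lem:feasible-limit}), pass $F(\hat V_{\mu_j})\le F(\hat V^\star)$ to the limit by continuity of $F$, and identify $\overline V=\hat V^\star$ via Lemma~\ref{lem:unique}; your whole-family extension is likewise the argument the paper gives right after the lemma.
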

\begin{proof}[Proof of Lemma~\ref{lem:lim-compare}]
On the one hand, since $\hat V^\star$ is feasible and $G(\hat V^\star)=0$, we have
\begin{equation}
L_{\mu_j}(\hat V_{\mu_j})\ \le\ L_{\mu_j}(\hat V^\star)=F(\hat V^\star).
\end{equation}
Taking the lower limit and using $G(\hat V_{\mu_j})\ge 0$ gives
\begin{equation}
\liminf_{j\to\infty}F(\hat V_{\mu_j})\ \le\ \liminf_{j\to\infty}L_{\mu_j}(\hat V_{\mu_j})
\ \le\ F(\hat V^\star).
\end{equation}
On the other hand, by Lemma~\ref{lem:feasible-limit}, $\overline V\in\mathcal{C}$, and since $F$ is continuous,
\begin{equation}
F(\overline V)\ =\ \lim_{j\to\infty}F(\hat V_{\mu_j})
\ \le\ F(\hat V^\star).
\end{equation}
But $\hat V^\star$ is the unique minimizer of $\min_{\mathcal{C}}F$ (Lemma~\ref{lem:unique}),
hence $F(\overline V)=F(\hat V^\star)$ and $\overline V=\hat V^\star$. \qedhere
\end{proof}

By Lemma~\ref{lem:lim-compare}, any limit point must equal $\hat V^\star$;
since $\hat V^\star$ is unique, the whole sequence converges, i.e., $\hat V_\mu\to \hat V^\star$ (otherwise there would be two distinct limit points).

\paragraph{Quantitative Version of Violation Can Be Made Arbitrarily Small.}
From Lemma~\ref{lem:violate-bound}, we have
$G(\hat V_\mu)\le F(\hat V^\star)/\mu$.
Thus, given any $\varepsilon>0$, choosing
\(
\mu\ \ge\ F(\hat V^\star)/\varepsilon
\)
ensures $G(\hat V_\mu)\le \varepsilon$.
Since
\(
G(\hat V)=\frac{1}{|E|}\sum_e[\delta+(A\hat V)_e]_+^2
\)
is the average of squared violations over all edges,
it follows that for each edge the positive violation satisfies
\begin{equation}
\bigl[\delta+(A\hat V_\mu)_e\bigr]_+\ \le\ \sqrt{|E|\,G(\hat V_\mu)}\ \le\ \sqrt{|E|\,\varepsilon}.
\end{equation}
This shows that when $\mu$ is sufficiently large, the constraint violation on every edge can be made arbitrarily small.

\subsection{Proof of Theorem~\ref{thm:cyclefree}}

On a fixed directed acyclic graph (DAG) $D=([B],E)$, we perform the \emph{DAG isotonic projection}:
given the network’s raw estimate $V_\theta\in\mathbb{R}^B$ on this batch, we solve
\begin{equation}\label{eq:iso-exact}
\hat V^\star\ \in\ \arg\min_{\hat V\in\mathbb{R}^B}\ \frac{1}{B}\sum_{i=1}^B\bigl(\hat V(i)-V_\theta(i)\bigr)^2
\quad \text{s.t.}\quad
\hat V(u)\ \ge\ \hat V(v)+\delta,\ \ \forall\,(u\!\to\!v)\in E,\ \ \delta\ge 0,
\end{equation}
which is exactly the DAG-based isotonic projection of Eq.~\ref{eq:iso-proj} in the manuscript (where $\delta$ is an optional \emph{margin}). In practice, the authors adopt a differentiable surrogate with a squared hinge penalty,
\begin{equation}
\mathcal L_{\mathrm{iso}}= \frac1B\sum_i(\hat V(i)-V_\theta(i))^2
+ \mu\cdot \frac1{|E|}\sum_{(u\to v)\in E}\bigl[\delta+\hat V(v)-\hat V(u)\bigr]_+^2,
\end{equation}
as given in Eq.~\ref{eq:L-iso} of the manuscript,and prove that as $\mu\!\to\!\infty$,  its minimizers converge to the exact isotonic solution (Theorem~\ref{thm:penalty}), namely the penalty consistency result. Therefore, in the following we may directly work with the \emph{exact feasible} solution $\hat V^\star$; if the $\mu$-penalty surrogate is used, it suffices to take $\mu$ sufficiently large to suppress the violation on every edge to an arbitrarily small level, making it equivalent to \eqref{eq:iso-exact} 
(see the above theorem and discussion)

Theorem~\ref{thm:cyclefree} of the manuscript states that for any DAG $D$, the preference relation induced by $\hat V^\star$\footnote{The manuscript describes it as the induced preference relation $\hat V^\ast(u)\ge \hat V^\ast(v)$ has no directed cycles. The intuitive justification is that along each edge the constraint is $\hat V(u)\!\ge\!\hat V(v)+\delta$; along a path, such edge-wise constraints \emph{accumulate} and preserve monotone transitivity, therefore a directed cycle is \emph{impossible}.}  contains no directed cycles; moreover, when $E$ covers a subset of the ground-truth partial order,  $\hat V^\star$ is consistent with that partial order.

To avoid ambiguity, we precisely define the \emph{preference relation induced by the skeleton $D$} as
\begin{equation}\label{eq:def-preference}
\begin{aligned}
u\ \succeq_D^{\hat V^\star}\ v 
&:\Longleftrightarrow\\
&\text{there exists a directed path from $u$ to $v$ in $D$ (with length denoted $\ell(u\leadsto v)$),  
and the comparison is based on $\hat V^\star$.}  
\end{aligned}
\end{equation}
Note that since $D$ is a DAG, its \emph{reachability relation} is itself a partial order (reflexive, transitive, and antisymmetric). Meanwhile, \eqref{eq:iso-exact} enforces \emph{monotonicity} along each edge, so monotone transitivity is automatically inherited along any \emph{path} (see the lemma below), which is exactly the manuscript’s explanation of Theorem~\ref{thm:cyclefree}:  path accumulation + monotone transitivity $\Rightarrow$ acyclicity.

\begin{lemma}[Monotone Transitivity along Paths]\label{lem:path-accumulate}
Let $x_0\to x_1\to\cdots\to x_m$ be any directed path in $D$ ($m\in\mathbb{N}$).  
Then for $\hat V^\star$ we have
\begin{equation}\label{eq:path-accumulate}
\hat V^\star(x_0)\ \ge\ \hat V^\star(x_m)\ +\ m\,\delta.
\end{equation}
\end{lemma}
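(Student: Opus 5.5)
The plan is induction on the path length $m$, using only that $\hat V^\star$ is feasible for the exact problem \eqref{eq:iso-exact}. Feasibility is available because $\hat V^\star$ minimizes over the closed convex set $\mathcal{C}$, which is nonempty by Lemma~\ref{lem:non-empty}; the minimizer is unique by Lemma~\ref{lem:unique}. Each edge $(u\to v)\in E$ therefore satisfies $\hat V^\star(u)\ge \hat V^\star(v)+\delta$. The inequality \eqref{eq:path-accumulate} is a telescoping sum of these edgewise constraints.

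\textbf{Base case.} For $m=0$ the path is the single vertex $x_0$, and the claim $\hat V^\star(x_0)\ge \hat V^\star(x_0)$ is trivial. This case also covers the reflexivity part of $\succeq_D^{\hat V^\star}$.

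\textbf{Inductive step.} Assume the claim holds for the subpath $x_0\to\cdots\to x_{m-1}$, so $\hat V^\star(x_0)\ge \hat V^\star(x_{m-1})+(m-1)\delta$. The last edge $(x_{m-1}\to x_m)\in E$ gives $\hat V^\star(x_{m-1})\ge \hat V^\star(x_m)+\delta$. Adding the two inequalities yields $\hat V^\star(x_0)\ge \hat V^\star(x_m)+m\delta$. Equivalently, one can sum the $m$ edge inequalities directly, and the intermediate terms cancel.

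The only point needing care, which I expect to be the main obstacle, is whether $\hat V^\star$ is the exact projection or the $\mu$-penalty surrogate. In the exact case the argument is immediate. If the surrogate $\hat V_\mu$ is used, I would invoke the per-edge bound at the end of the proof of Theorem~\ref{thm:penalty}: each edge violation is at most $\varepsilon_\mu:=\sqrt{|E|\,F(\hat V^\star)/\mu}$. The same telescoping then gives $\hat V_\mu(x_0)\ge \hat V_\mu(x_m)+m(\delta-\varepsilon_\mu)$, and letting $\mu\to\infty$ recovers \eqref{eq:path-accumulate} in the limit, consistent with Theorem~\ref{thm:penalty}.
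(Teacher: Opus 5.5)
Your proof is correct and takes the same route as the paper: both write down the $m$ edge constraints $\hat V^\star(x_{i-1})\ge \hat V^\star(x_i)+\delta$ implied by feasibility in \eqref{eq:iso-exact} and sum them, and your induction is just that telescoping sum. Your remark on the $\mu$-penalty surrogate (per-edge violation at most $\sqrt{|E|\,F(\hat V^\star)/\mu}$, then let $\mu\to\infty$) also matches the paper's separate paragraph on consistency with the differentiable surrogate.
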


\begin{proof}[Proof of Lemma~\ref{lem:path-accumulate}]
By the edge constraints in \eqref{eq:iso-exact}, we obtain step by step:
\[
\hat V^\star(x_0)\ \ge\ \hat V^\star(x_1)+\delta,\quad
\hat V^\star(x_1)\ \ge\ \hat V^\star(x_2)+\delta,\ \ \dots,\ \ 
\hat V^\star(x_{m-1})\ \ge\ \hat V^\star(x_m)+\delta.
\]
Adding these inequalities yields
\(
\hat V^\star(x_0)\ \ge\ \hat V^\star(x_m)+m\,\delta.
\)  
This matches exactly the manuscript’s explanation:  
constraints along a path \emph{accumulate} numerically, preserving monotone transitivity.
\end{proof}

\paragraph{Conclusion 1 (\textbf{Acyclicity}).}
We prove: \emph{the preference relation $\,\succeq_D^{\hat V^\star}\,$ defined in \eqref{eq:def-preference} contains no directed cycles.}

Suppose, for contradiction, that there exist distinct nodes $x_0,\dots,x_{m-1}$ forming a directed cycle such that
\[
x_0\succeq_D^{\hat V^\star}x_1\succeq_D^{\hat V^\star}\cdots\succeq_D^{\hat V^\star}x_{m-1}
\succeq_D^{\hat V^\star}x_0.
\]
By \eqref{eq:def-preference}, for each pair $(x_i,x_{i+1})$ ($x_m\equiv x_0$) there exists a
$D$-path $P_i:x_i\leadsto x_{i+1}$.  
Applying \eqref{eq:path-accumulate} to each path gives
\begin{equation}\label{eq:ring-ineq}
\hat V^\star(x_i)\ \ge\ \hat V^\star(x_{i+1})\ +\ \ell(P_i)\,\delta,\qquad i=0,1,\dots,m-1.
\end{equation}
Summing \eqref{eq:ring-ineq} over $i$ (with cancellation of intermediate terms) yields
\[
\underbrace{\hat V^\star(x_0)-\hat V^\star(x_0)}_{=\,0}
\ \ge\ \Bigl(\sum_{i=0}^{m-1}\ell(P_i)\Bigr)\,\delta.
\]
Let the total length be $L:=\sum_{i=0}^{m-1}\ell(P_i)\in\mathbb{N}$, then
\(
0\ \ge\ L\,\delta.
\)

\noindent\emph{(i) If $\delta>0$}, this forces $L=0$, which in turn forces each $P_i$ to have length $0$, i.e.\ $x_i=x_{i+1}$, contradicting the assumption that the cycle consists of distinct nodes. Hence no cycle exists.

\noindent\emph{(ii) If $\delta=0$}, \eqref{eq:ring-ineq} degenerates to
\(
\hat V^\star(x_i)\ge \hat V^\star(x_{i+1})
\).
If there also exists a path $x_{m-1}\leadsto x_0$, then the reachability relation of $D$ would contain a directed cycle
$x_0\leadsto x_1\leadsto\cdots\leadsto x_{m-1}\leadsto x_0$, contradicting the assumption that $D$ is a DAG.  
Therefore, even when $\delta=0$, the preference induced along the reachability skeleton of $D$ is acyclic.  
(This matches the manuscript’s explanation that the edge constraints preserve monotone transitivity along paths, and thus cycles cannot occur.) \hfill$\square$

\paragraph{Conclusion 2 (\textbf{Consistency with the True Partial Order}).}
Let the true partial order be $( [B],\ \succeq_\star )$, whose Hasse diagram or some covering subgraph contains the edge set of $D$, i.e.
\begin{equation}\label{eq:cover-true}
(u\!\to\! v)\in E\quad\Longrightarrow\quad u\ \succeq_\star\ v.
\end{equation}
We prove that $\hat V^\star$ is consistent with the true partial order within the \emph{coverage range} of $D$.

First, by \eqref{eq:iso-exact}, for each $(u\!\to\!v)\in E$ we have
\(
\hat V^\star(u)\ge \hat V^\star(v)+\delta,
\)
so it is impossible to encounter a \emph{reversed} case where $u\succeq_\star v$ but $\hat V^\star(u)<\hat V^\star(v)+\delta$.  
Furthermore, if $u$ is reachable to $v$ in $D$ (i.e.\ $u\leadsto v$), then by \eqref{eq:path-accumulate} we have
\(
\hat V^\star(u)\ge \hat V^\star(v)+\ell(u\leadsto v)\delta,
\)
which again rules out reversal.  
This shows that for all true partial-order pairs $(u,v)$ covered by $D$ (including its transitive closure), the values of $\hat V^\star$ \emph{preserve} the true order direction—  
that is, the isotonic projection “only repairs violations” and never inverts correct relations.  
This matches exactly the statement in the manuscript.

\paragraph{Consistency with the Differentiable Surrogate.}
If one adopts the differentiable surrogate $\mathcal L_{\mathrm{iso}}$ (Eq.~\ref{eq:L-iso} in the manuscript), then by the penalty consistency result Theorem~\ref{thm:penalty}), when $\mu$ is sufficiently large the \emph{violation on each edge} can be reduced to an arbitrarily small level.   Thus the path accumulation property of \eqref{eq:path-accumulate} holds up to an arbitrarily small tolerance;   taking the limit recovers the exact case of \eqref{eq:iso-exact}, so the above two conclusions also hold in the large-$\mu$ setting.

\subsection{Proof of Theorem~\ref{thm:bellman}}

We perform the batchwise isotonic step on a fixed DAG $D=([B],E(D))$, 
obtaining $\hat V$ by solving
\begin{equation}\label{eq:Liso-def}
\mathcal{L}_{\mathrm{iso}}(\hat V)
=\underbrace{\frac{1}{B}\sum_{i=1}^B\bigl(\hat V(i)-V_\theta(i)\bigr)^2}_{=:F(\hat V)}
+\mu\cdot\underbrace{\frac{1}{|E(D)|}\sum_{(u\to v)\in D}\bigl[\delta+\hat V(v)-\hat V(u)\bigr]_+^2}_{=:G(\hat V)},
\end{equation}
with an optional $\mathcal{L}_{\mathrm{rank}}$ for additional stability 
(which does not affect the conclusion). 
See Eqs.~\ref{eq:iso-proj}–\ref{eq:L-rank} of the manuscript for definitions and explanations.

The \emph{expected Bellman residual} is defined as (Eq.~\ref{eq:bell-resid} in the manuscript):
\begin{equation}\label{eq:EBell-def}
\mathcal{E}_{\mathrm{Bell}}(\theta)
=\mathbb{E}_{(s,a)}\!\Big[\big(Q_\theta(s,a)-\mathcal{T}^*Q_\theta(s,a)\big)^2\Big].
\end{equation}

The goal inequality of Theorem~\ref{thm:bellman} (Eq.~\ref{eq:bell-bound} in the manuscript) states that 
there exist constants $C_1,C_2>0$ such that after one parameter update,
\begin{equation}\label{eq:target-ineq}
\begin{aligned}
\mathcal{E}_{\mathrm{Bell}}(\theta^+)
\ \le\ &\ \mathcal{E}_{\mathrm{Bell}}(\theta)
\ -\ C_1\cdot\frac{1}{|E(D)|}\sum_{(u\to v)\in D}\Bigl[\delta+\hat V(v)-\hat V(u)\Bigr]_+^2
\ +\ C_2\,\|\hat V-V_\theta\|_2^2.
\end{aligned}
\end{equation}
Moreover, when $\mu$ is sufficiently large (with $\delta\ge 0$), 
the term $\|\hat V-V_\theta\|_2^2$ is controlled by $\mathcal{L}_{\mathrm{iso}}$, 
ensuring that the overall Bellman residual decreases.

\paragraph{Assumption (i.i.d.\ approximation and stable step size).}
As described in the manuscript, in one small-step update we adopt the i.i.d.\ approximation: 
the sampling noise is replaced by the expectation under the current distribution, 
and we assume the learning rate is sufficiently small so that the first-order 
\emph{Descent Lemma} holds.  
In addition, the edge set $E(D)$ covers the \emph{high-confidence} local order 
(direction-consistent), meaning that violations along these edges indicate 
the relative improvement direction desired by the Bellman objective 
(see the paragraph following the theorem statement in the manuscript).

\paragraph{Edge--variable linearization and first-order optimality of the penalty.}
For each edge $e=(u\to v)$, define
\[
\xi_e(\hat V):=\bigl[\delta+\hat V(v)-\hat V(u)\bigr]_+,\qquad
\Xi(\hat V):=\bigl(\xi_e(\hat V)\bigr)_{e\in E(D)}\in\mathbb{R}^{|E(D)|}_{\ge 0}.
\]
Let $A\in\mathbb{R}^{|E(D)|\times B}$ denote the edge--variable incidence matrix: 
for row $A_e$ corresponding to edge $e=(u\to v)$, 
we set $A_{e,u}=-1$, $A_{e,v}=+1$, and all other entries to $0$.  
Then
\[
G(\hat V)=\frac{1}{|E(D)|}\sum_{e}\xi_e(\hat V)^2
=\frac{1}{|E(D)|}\,\|\Xi(\hat V)\|_2^2.
\]

From the smooth hinge penalty structure of \eqref{eq:Liso-def} (Eq.~\ref{eq:L-iso} in the manuscript), 
we obtain the first-order directional derivative condition 
(or the smooth version of the KKT condition):
\begin{equation}\label{eq:KKT-penalty}
\frac{2}{B}\,\bigl(\hat V-V_\theta\bigr)\ +\ \frac{2\mu}{|E(D)|}\,A^\top\,\zeta(\hat V)\ =\ 0,
\quad\text{where }\ \zeta_e(\hat V):=
\begin{cases}
\xi_e(\hat V),& \delta+\hat V(v)-\hat V(u)>0,\\
0,& \text{otherwise}.
\end{cases}
\end{equation}
Equivalently,
\begin{equation}\label{eq:DV-in-terms-of-edges}
\hat V - V_\theta\ =\ -\,\underbrace{\frac{\mu B}{|E(D)|}}_{=:c_\mu}\,A^\top\,\zeta(\hat V),
\qquad \zeta(\hat V)\in\mathbb{R}^{|E(D)|}_{\ge 0},\ \ \zeta_e(\hat V)\le \xi_e(\hat V).
\end{equation}

This shows that the \emph{batchwise displacement} 
$\Delta V:=\hat V-V_\theta$ given by the isotonic step 
is aligned with the (negative) incidence aggregation of edge violations 
$\bigl(\xi_e(\hat V)\bigr)$.  
For each violated edge $(u\to v)$, $\Delta V(u)$ is \emph{increased} while 
$\Delta V(v)$ is \emph{decreased}, exactly matching the expected direction of 
repairing monotonicity.  
This coincides with the interpretation given in Eqs.~\ref{eq:L-iso}–\ref{eq:L-rank} of the manuscript.

\paragraph{Descent Lemma for Bellman Residual.}
Define
\(
\mathcal{J}(V):=\mathcal{E}_{\mathrm{Bell}}(\theta)\ \text{but viewed as a function of $V$}\,
\)
(through the chain dependence $Q_\theta \mapsto V_\theta(s)=\max_a Q_\theta(s,a)$; 
formally, one can fix $\theta$ and abstract the first-order impact of one update 
in the $V$-space).  
Under the i.i.d.\ approximation and a stable step size, 
by the standard Descent Lemma ($L$-smoothness) we obtain, for any direction $\Delta V$,
\begin{equation}\label{eq:descent-lemma}
\mathcal{J}(V_\theta+\Delta V)\ \le\ \mathcal{J}(V_\theta)\ +\ \langle \nabla \mathcal{J}(V_\theta),\,\Delta V\rangle\ +\ \frac{L}{2}\,\|\Delta V\|_2^2.
\end{equation}
We will take $\Delta V=\hat V-V_\theta$, and align $\mathcal{J}(V_\theta+\Delta V)$ 
with $\mathcal{E}_{\mathrm{Bell}}(\theta^+)$ (equivalent to a one-step small update, 
ignoring higher-order terms).  
Thus, the proof of \eqref{eq:target-ineq} reduces to establishing a 
\emph{negative lower bound} for the inner product term
$\langle \nabla \mathcal{J}(V_\theta),\,\hat V-V_\theta\rangle$.

\paragraph{Directional consistency under edge coverage and inner-product bound.}
The semantics of edge coverage of high-confidence local orderings (as explained below Eq.~\ref{eq:bell-bound} in the manuscript) can be formalized as follows:  
there exist constants $\kappa>0$ and $M\ge 0$ such that, for all violated edges 
($\delta+\hat V(v)-\hat V(u)>0$), 
along the direction of raising $u$ and lowering $v$, 
the \emph{gradient of the Bellman residual} has a consistent negative projection, 
with strength linearly proportional to the violation magnitude.  
Aggregated, this can be written as
\begin{equation}\label{eq:align-assump}
\Big\langle \nabla \mathcal{J}(V_\theta),\ -A^\top\,\zeta(\hat V) \Big\rangle
\ \le\ -\,\kappa\,\|\zeta(\hat V)\|_2^2\ +\ M\,\big\|A^\top\,\zeta(\hat V)\big\|_2^2.
\end{equation}
Intuitively, $\zeta_e(\hat V)>0$ exactly indicates an edge $(u\to v)$ 
where $u$ should be raised relative to $v$ (see the explanation after Eq.~\ref{eq:bell-bound}), 
so $\nabla \mathcal{J}$ has a negative directional derivative in alignment with the violation,
and its magnitude scales linearly with the violation.  
The constant $M$ captures the quadratic (Lipschitz) correction due to imperfect coverage or noise.  

\paragraph{Combining \eqref{eq:align-assump} with \eqref{eq:DV-in-terms-of-edges}.}
From \eqref{eq:DV-in-terms-of-edges}, we have
\(
\hat V-V_\theta=-c_\mu\,A^\top\,\zeta(\hat V).
\)
Substituting into \eqref{eq:align-assump} yields
\begin{equation}\label{eq:inner-bound}
\begin{aligned}
\Big\langle \nabla \mathcal{J}(V_\theta),\,\hat V-V_\theta\Big\rangle
&=\ -\,c_\mu\ \Big\langle \nabla \mathcal{J}(V_\theta),\,A^\top\,\zeta(\hat V)\Big\rangle\\
&\le\ -\,c_\mu\Big(\kappa\,\|\zeta(\hat V)\|_2^2\ -\ M\,\|A^\top\zeta(\hat V)\|_2^2\Big).
\end{aligned}
\end{equation}
On the other hand,
\(
\|\hat V-V_\theta\|_2^2
=c_\mu^2\,\|A^\top\zeta(\hat V)\|_2^2.
\)
Thus, combining \eqref{eq:inner-bound} with \eqref{eq:descent-lemma}, we obtain
\begin{equation}\label{eq:J-decrease}
\begin{aligned}
\mathcal{J}(\hat V)\ \le\ &\ \mathcal{J}(V_\theta)\ -\ c_\mu\,\kappa\,\|\zeta(\hat V)\|_2^2
\ +\ c_\mu\,M\,\|A^\top\zeta(\hat V)\|_2^2\ +\ \frac{L}{2}\,c_\mu^2\,\|A^\top\zeta(\hat V)\|_2^2\\
=\ &\ \mathcal{J}(V_\theta)\ -\ \underbrace{c_\mu\,\kappa}_{\mathclap{=:~C_1'}}\,\|\zeta(\hat V)\|_2^2
\ +\ \underbrace{\Big(M\,c_\mu+\frac{L}{2}\,c_\mu^2\Big)}_{\mathclap{=:~C_2'}}\,\|A^\top\zeta(\hat V)\|_2^2.
\end{aligned}
\end{equation}

\paragraph{Rewriting norms and objective terms into the theorem form.}
Note that
\(
\|\zeta(\hat V)\|_2^2=\sum_{e}\bigl[\delta+\hat V(v)-\hat V(u)\bigr]_+^2,
\)
and
\(
\|A^\top\zeta(\hat V)\|_2^2 = c_\mu^{-2}\,\|\hat V-V_\theta\|_2^2.
\)
Normalizing the two terms in \eqref{eq:J-decrease} by $|E(D)|$, 
and recalling that $\mathcal{J}$ corresponds to $\mathcal{E}_{\mathrm{Bell}}$, 
we obtain constants
\(
C_1=\frac{C'_1}{|E(D)|}>0,\quad
C_2=\frac{C'_2}{c_\mu^2}>0
\)
such that
\begin{equation}\label{eq:bell-ineq-final}
\mathcal{E}_{\mathrm{Bell}}(\theta^+)
\ \le\ \mathcal{E}_{\mathrm{Bell}}(\theta)\ -\ C_1\cdot \frac{1}{|E(D)|}\sum_{(u\to v)\in D}\bigl[\delta+\hat V(v)-\hat V(u)\bigr]_+^2
\ +\ C_2\,\|\hat V-V_\theta\|_2^2.
\end{equation}
This is exactly the form of \eqref{eq:target-ineq} (equating the one-step effect of $\hat V$ with the small update $\theta^+$).  
As explained below Eq.~\ref{eq:bell-bound}, the negative term corresponds to residual reduction by repairing monotonicity along violated edges, while the positive term corresponds to the cost of not drifting too far from the original network.

\paragraph{Controlling $\|\hat V-V_\theta\|_2^2$ with $\mathcal{L}_{\mathrm{iso}}$.}
From \eqref{eq:Liso-def},
\(
F(\hat V)=\frac{1}{B}\|\hat V-V_\theta\|_2^2
\le \mathcal{L}_{\mathrm{iso}}(\hat V)
\)
holds directly.  
Moreover, Theorem~\ref{thm:penalty} (Penalty consistency) ensures that when $\mu\to\infty$, 
$\hat V$ can approximate the hard isotonic solution arbitrarily closely, 
so both $G(\hat V)$ and $F(\hat V)$ are controlled by $\mathcal{L}_{\mathrm{iso}}$.  
In particular, $\|\hat V-V_\theta\|_2^2\le B\,\mathcal{L}_{\mathrm{iso}}(\hat V)$.  
Therefore, for sufficiently large $\mu$ (and $\delta\ge 0$), the second term in 
\eqref{eq:bell-ineq-final} cannot outweigh the benefit of the first term, 
yielding an \emph{overall decrease in the expected Bellman residual} (consistent with the explanation after Eq.~\ref{eq:bell-bound}).

\subsection{Proof of Theorem~\ref{thm:stability}}
We prove that when small cycles arise within a batch due to non-transitive structures 
(e.g., Rock–Paper–Scissors) or noise, then after applying 
\emph{DAG-ification} and the subsequent \emph{DAG isotonic projection}, 
the variance upper bound of the value sequence $\{V_{\theta_t}\}$ within the window 
decreases monotonically with respect to $\lambda_{\mathrm{iso}}$ and the edge coverage ratio.  
Moreover, when the edges cover the dominant directions of the true partial order, 
this upper bound is strictly smaller than that of the unregularized baseline by a fixed proportion.  
The formal statement is given in Theorem~\ref{thm:stability}.\footnote{The intuitive explanation is that 
cycle cutting + slope enforcement suppresses oscillatory backtracking, 
and increasing $\lambda_{\mathrm{iso}}$ makes the slope steeper, 
thus reducing oscillations.}  
The role of $\lambda_{\mathrm{iso}}$ in the overall objective is explicitly described 
as acting like a brake on value propagation to reduce oscillations.

\paragraph{Notation and Formalization of the Two Operations.}
(1) \textbf{DAG-ification.}  
Let the batch nodes be $[B]=\{1,\dots,B\}$.  
Using confidence-weighted edge scores, a greedy add-edge-unless-cycle procedure 
is applied to obtain $D=([B],E)$, which is a DAG and \emph{maximal} with respect 
to the given edge order (Theorem~\ref{thm:dagify}).\footnote{That is, no additional candidate edge can be added without forming a directed cycle.}

(2) \textbf{DAG Isotonic Projection.}  
Let the incidence matrix $A\in\mathbb{R}^{|E|\times B}$ correspond to edges $e=(u\to v)$, 
with row entries $A_{e,u}=-1,\ A_{e,v}=+1$, and all others zero.  
Define the \emph{isotonic cone} (a closed convex polyhedral cone)
\[
\mathcal{K}(D,\delta)\ :=\ \bigl\{V\in\mathbb{R}^B:\ A V\ \le\ -\,\delta\cdot\mathbf{1}\ \bigr\},
\]
which reduces to the standard DAG isotonic set when $\delta=0$, 
while $\delta>0$ imposes a margin on the edges.  

The manuscript employs a \emph{squared hinge} differentiable surrogate (Eq.~\ref{eq:L-iso}):  
\[
\mathcal{L}_{\mathrm{iso}}(V)=\tfrac1B\|V-V_\theta\|_2^2
+\mu\cdot\tfrac1{|E|}\sum_{(u\to v)}[\delta+V(v)-V(u)]_+^2,
\]
and proves that as $\mu\!\to\!\infty$, its minimizers converge to the \emph{exact} 
cone projection (Theorem~\ref{thm:penalty}).\footnote{In particular, the squared hinge is 
flat when constraints are satisfied; smooth and stable when violated, 
with $\mu$ controlling the strength of order enforcement and $\delta>0$ improving gradient flow 
near ties, as explained in the manuscript.}

\paragraph{Basic Tool (Convex Projection and Linearization).}
\begin{lemma}[Nonexpansiveness of Euclidean Projection and Tangent-Cone Linearization]\label{lem:proj}
Let $C\subset\mathbb{R}^B$ be a nonempty closed convex set.  
Then the Euclidean projection $\Pi_C$ is \emph{nonexpansive}:
\[
\|\Pi_C(x)-\Pi_C(y)\|_2\le \|x-y\|_2,
\]
and moreover \emph{firmly nonexpansive}:
\[
\|\Pi_C(x)-\Pi_C(y)\|_2^2\le \langle \Pi_C(x)-\Pi_C(y),\,x-y\rangle.
\]
When $x^\star\in C$ and the projection is locally unique and smooth,  
$\mathrm{D}\Pi_C(x^\star)$ equals the \emph{orthogonal projection} onto the tangent space 
$T_C(x^\star)$, denoted $P_{T_C(x^\star)}$.  
Thus for small perturbations $h$,
\[
\Pi_C(x^\star+h)=x^\star+P_{T_C(x^\star)}h+o(\|h\|).
\]
\end{lemma}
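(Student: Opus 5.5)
We prove the three claims of Lemma~\ref{lem:proj} in order: the variational characterization of the projection, then firm nonexpansiveness (from which nonexpansiveness follows), then the first-order expansion under the stated smoothness hypothesis.

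\textbf{Step 1 (variational inequality).} Since $C$ is nonempty, closed and convex, and $z\mapsto\frac12\|z-x\|_2^2$ is strongly convex and coercive, $\Pi_C(x)$ exists and is unique; this is the same argument as Lemma~\ref{lem:unique}. Write $p=\Pi_C(x)$. For any $c\in C$ and $t\in(0,1]$, the point $p+t(c-p)$ lies in $C$. Expanding $\|p+t(c-p)-x\|^2\ge\|p-x\|^2$, dividing by $t$ and letting $t\downarrow0$ gives
\[
\langle x-p,\,c-p\rangle\le0\qquad\text{for all } c\in C.
\]

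\textbf{Step 2 (firm nonexpansiveness).} Let $p=\Pi_C(x)$ and $q=\Pi_C(y)$. Apply Step 1 at $x$ with $c=q$, and at $y$ with $c=p$:
\[
\langle x-p,\,q-p\rangle\le0,\qquad \langle y-q,\,p-q\rangle\le0 .
\]
Adding the two inequalities and rearranging yields $\|p-q\|_2^2\le\langle p-q,\,x-y\rangle$. Cauchy--Schwarz then gives $\|p-q\|_2\le\|x-y\|_2$.

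\textbf{Step 3 (linearization).} This is the only delicate step, because $\Pi_C$ is in general merely directionally differentiable. For polyhedral $C$ such as $\mathcal K(D,\delta)$, the projection is piecewise affine.

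Fix $x^\star\in C$ and use the hypothesis that $\Pi_C$ is smooth near $x^\star$. Since $x^\star\in C$, we have $\Pi_C(x^\star)=x^\star$. Smoothness then gives $\Pi_C(x^\star+h)=x^\star+\mathrm D\Pi_C(x^\star)h+o(\|h\|)$, so it remains to identify the derivative $L:=\mathrm D\Pi_C(x^\star)$.

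First, $L$ is symmetric and idempotent.
\begin{itemize}
\item Idempotence: $\Pi_C\circ\Pi_C=\Pi_C$, so the chain rule at the fixed point $x^\star$ gives $L^2=L$.
\item Symmetry: $\Pi_C=\nabla\phi$ with $\phi(x)=\frac12\|x\|^2-\frac12 d_C(x)^2$, which is $C^1$ and convex. Hence $L$ is a Hessian, and so symmetric.
\end{itemize}
A symmetric idempotent matrix is an orthogonal projector, so it remains to identify its range.

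Second, the range of $L$ is the tangent space $T_C(x^\star)$.
\begin{itemize}
\item For a feasible direction $d$ of the local face, $x^\star+td\in C$ for small $t$, so $\Pi_C(x^\star+td)=x^\star+td$. Hence $Ld=d$.
\item For $n$ in the normal cone, Step 1 gives $\Pi_C(x^\star+tn)=x^\star$. Hence $Ln=0$.
\end{itemize}
Smoothness forces the local structure of $C$ at $x^\star$ to be a linear subspace: otherwise the directional derivative would be nonlinear in $h$. So these two families of directions span $\mathbb R^B$, and $L=P_{T_C(x^\star)}$.

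The main obstacle is making this smoothness assumption precise. We interpret it as $x^\star$ lying in the relative interior of a face of $C$ and $h$ being small. In that regime the active constraints are fixed, and $\Pi_C$ coincides locally with the affine projection onto that face, which gives the expansion directly.
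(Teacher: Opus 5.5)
Steps 1 and 2 are correct. They are the standard argument behind the paper's proof, which is only a one-line sketch citing ``standard convex analysis'' for all three claims, linearization included.

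\textbf{The gap in Step 3.} The problem is the sentence ``smoothness forces the local structure of $C$ at $x^\star$ to be a linear subspace \dots\ so these two families span $\mathbb{R}^B$.'' Your two bullets determine $L$ only on two sets:
\begin{itemize}
\item on $T:=T_C(x^\star)$, where $L=I$ (your first bullet works for every feasible direction of $C$, with one-sided $t\downarrow 0$, and extends to all of $T$ by continuity);
\item on $N:=N_C(x^\star)$, where $L=0$.
\end{itemize}
You never show that this pins down $L$ everywhere, or that a non-subspace $T$ is incompatible with differentiability. The phrase ``the directional derivative would be nonlinear'' assumes a formula for the directional derivative that you have not derived.

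The missing ingredient is Moreau's decomposition for the polar pair $(T,N)$: every $h$ splits as $h=\Pi_T h+\Pi_N h$. Linearity of $L$ then gives $Lh=\Pi_T h$ for all $h$. So $\Pi_T$ is linear, and hence $T=\Pi_T(\mathbb{R}^B)$ satisfies $T=-T$ and is a subspace. It follows that $N=T^\perp$ and $L=P_T$. With this, your symmetry/idempotence detour is unnecessary.

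\textbf{A smaller omission.} The normal-cone bullet uses the converse of Step 1: a point $p\in C$ satisfying the variational inequality for $x$ equals $\Pi_C(x)$. You did not prove this. It follows from $\|x-c\|^2=\|x-p\|^2+\|p-c\|^2-2\langle x-p,c-p\rangle$.

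\textbf{The closing paragraph is false and should be deleted.}
\begin{itemize}
\item Every point of a convex set lies in the relative interior of some face, so that ``interpretation'' imposes no restriction.
\item $\Pi_C$ does not agree locally with the affine projection onto that face. For $C=[0,\infty)$ and $x^\star=0$ (the relative interior of the face $\{0\}$), $\Pi_C(h)=\max(h,0)$. This is not identically $0$ near $0$, and it is not differentiable there.
\end{itemize}
The correct reading is what the repaired Step 3 delivers. Differentiability at $x^\star\in C$ forces $T_C(x^\star)$ to be a subspace, which for convex $C$ means $x^\star\in\mathrm{ri}(C)$. At such points $\Pi_C(x^\star+h)=x^\star+P_T h$ holds exactly for small $h$. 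For the full-dimensional cone $\mathcal K(D,\delta)$, this means no constraint is active and $\mathrm{D}\Pi_C(x^\star)=I$. That is not the ``fixed active constraints'' regime you describe.
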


\noindent\emph{Proof sketch.} This is a standard result in convex analysis:  
the projection operator is 1-Lipschitz and firmly nonexpansive;  
its Fréchet derivative at smooth points is the projection onto the tangent space.  
Below we will use the facts that \emph{projection does not amplify errors} and  
\emph{linearization yields a tangent projection}.

\paragraph{One-Step Approximate Dynamics with Projection and Spectral Contraction.}
Consider the one-step update in the $V$-space (absorbing the chain dependence of the value head
on the parameters via first-order linearization):
\[
\underbrace{V_{t+1}}_{\text{new estimate}}
\ \approx\
\underbrace{\Pi_{\mathcal{K}(D,\delta)}\!\Big(V_t-\eta\,g_t\Big)}_{\text{DAG isotonic hard projection}}
\ +\ \xi_t,
\qquad
g_t:=\nabla_{V}\mathcal{L}_{\mathrm{TD}}(V_t),\ \ \eta>0,
\]
or, under the \emph{differentiable surrogate}, a single gradient descent step:
\[
V_{t+1}\ \approx\
V_t-\eta\Big(\underbrace{g_t}_{\text{TD gradient}}\ +\ 
\underbrace{\lambda_{\mathrm{iso}}\nabla_V G_t}_{\text{isotonic penalty term}}\Big)\ +\ \xi_t,
\quad
G_t:=\tfrac1{|E|}\sum_{(u\to v)}[\delta+V_t(v)-V_t(u)]_+^2,
\]
where $\xi_t$ collects sampling/target noise, with $\mathbb{E}[\xi_t]=0$ and 
$\mathbb{E}[\xi_t\xi_t^\top]=\Sigma_\xi$.  
Note that $\nabla_V G_t=\tfrac{2}{|E|}A^\top \zeta_t$, with 
$\zeta_{t,e}=[\delta+V_t(v)-V_t(u)]_+\ge 0$, which is consistent with the manuscript's explanation 
(after Eq.~(11)) that ``violated edges indicate the desired upward/downward adjustment direction.'' 

Under the \emph{exact projection} characterization, for a small perturbation $h_t:=V_t-V^\star$ around 
a stationary point $V^\star\in \mathcal{K}(D,\delta)$, and letting 
$H_t:=\nabla^2\mathcal{L}_{\mathrm{TD}}(V^\star)$ (or a local averaged Hessian),  
Lemma~\ref{lem:proj} yields the linearization
\begin{equation}\label{eq:lin}
h_{t+1}\ \approx\ P_t\Big(I-\eta\,H_t\Big)h_t\ +\ \xi_t,
\qquad
P_t:=P_{T_{\mathcal{K}}(V^\star)}\ \ \text{(orthogonal projection onto the tangent space)}.
\end{equation}
Thus the one-step Jacobian is $A_t:=P_t(I-\eta H_t)$.  
Since $P_t$ is an \emph{orthogonal projection} (spectral norm $1$) and nonexpansive,
\begin{equation}\label{eq:spec-contract}
\|A_t\|_2\ =\ \|P_t(I-\eta H_t)\|_2\ \le\ \|I-\eta H_t\|_2.
\end{equation}
Whenever $(I-\eta H_t)$ has components outside the tangent space $P_t^\perp$,  
the inequality is \emph{strict} (those non-tangent components are suppressed by the projection).  
This precisely captures the phenomenon of \emph{spectral contraction} induced by 
``cycle removal + isotonicity.''

\paragraph{Embedding $\lambda_{\mathrm{iso}}$ and Edge Coverage into the Spectral Contraction Constant.}
When using the differentiable surrogate, the linearized Jacobian can be rewritten as
\[
\tilde A_t\ :=\ I-\eta\Big(H_t\ +\ \lambda_{\mathrm{iso}}\,\underbrace{\nabla^2 G_t}_{\succeq 0}\Big).
\]
Note that $\nabla^2 G_t=\frac{2}{|E|}A^\top D_t A\succeq 0$, where 
$D_t=\mathrm{diag}(\mathbb{I}\{\delta+V_t(v)-V_t(u)>0\})$
is the diagonal indicator matrix of activated edges.  
Therefore, in the subspace spanned by the activated and covered edge directions 
$\mathcal{S}_t:=\mathrm{span}\{A^\top e_e: D_{t,ee}=1\}$,  
$\nabla^2 G_t$ has at least some \emph{positive minimal eigenvalue} $\underline{\lambda}_t>0$.  
When edge coverage is \emph{more complete}, the lower bound of $\underline{\lambda}_t$ is larger;  
and when $\lambda_{\mathrm{iso}}$ increases, the spectral norm bound of $\tilde A_t$ becomes smaller.  
Formally, for any $x$,
\[
\|(I-\eta(H_t+\lambda_{\mathrm{iso}}\nabla^2G_t))x\|_2
\le
\|(I-\eta H_t)x\|_2
-\eta\,\lambda_{\mathrm{iso}}\,\underline{\lambda}_t\,\|P_{\mathcal{S}_t}x\|_2,
\]
(by minimal eigenvalue and positive semidefiniteness; $P_{\mathcal{S}_t}$ is the orthogonal projection onto $\mathcal{S}_t$).  
Combining with the ``tangent projection'' effect in \eqref{eq:spec-contract} (shrinking/removing non-tangent components),  
we obtain constants $\alpha_{\mathrm{base}}\in(0,1)$ and a non-decreasing function
$\phi(\lambda_{\mathrm{iso}},\mathrm{cov})\in(0,1]$ (where $\mathrm{cov}$ denotes edge coverage rate/activation ratio) such that
\begin{equation}\label{eq:alpha}
\|A_t\|_2\ \le\ \alpha_{\mathrm{base}}\cdot \bigl(1-\phi(\lambda_{\mathrm{iso}},\mathrm{cov})\bigr)\ :=:\ \alpha(\lambda_{\mathrm{iso}},\mathrm{cov}),
\end{equation}
and $\phi$ increases monotonically with $\lambda_{\mathrm{iso}}$ and coverage, hence
$\alpha(\lambda_{\mathrm{iso}},\mathrm{cov})$ decreases monotonically with both.  
This corresponds to the manuscript’s description that ``increasing $\lambda_{\mathrm{iso}}$ is like making the slope steeper, so pull-back oscillations are harder'' (i.e., oscillations are reduced).

\paragraph{Monotone Decrease of MSE/Variance Bound.}
Let the batch window length be $W$, and denote the centered error $e_t:=h_t-\mathbb{E}[h_t]$.  
From \eqref{eq:lin} and $\mathbb{E}[\xi_t]=0$, under the \emph{first-order approximation} we have
\[
e_{t+1}\ =\ A_t\,e_t\ +\ \xi_t,\qquad \mathbb{E}[e_{t+1}]=A_t\,\mathbb{E}[e_t].
\]
Taking expectations of the squared norm and using the uniform bound $\|A_t\|_2\le \alpha<1$ (chosen according to \eqref{eq:alpha}), we obtain
\begin{equation}\label{eq:mse-rec}
\mathbb{E}\|e_{t+1}\|_2^2
\ \le\ \alpha^2\,\mathbb{E}\|e_t\|_2^2\ +\ \mathrm{tr}(\Sigma_\xi).
\end{equation}
Unrolling $k$ steps:
\(
\mathbb{E}\|e_{t}\|_2^2\ \le\ \alpha^{2k}\,\mathbb{E}\|e_{t-k}\|_2^2
+\tfrac{1-\alpha^{2k}}{1-\alpha^2}\,\mathrm{tr}(\Sigma_\xi).
\)
Taking the supremum within a window of size $W$ and letting $k\to\infty$, we obtain the steady-state bound
\begin{equation}\label{eq:var-bound}
\sup_{t\ \text{in window}}\ \mathbb{E}\|e_t\|_2^2
\ \le\ \frac{1}{1-\alpha(\lambda_{\mathrm{iso}},\mathrm{cov})^2}\,\mathrm{tr}(\Sigma_\xi),
\end{equation}
which clearly decreases monotonically with $\alpha(\lambda_{\mathrm{iso}},\mathrm{cov})$.  
Combining with the properties of \eqref{eq:alpha}, we obtain the main claim:  
the variance bound decreases monotonically with both $\lambda_{\mathrm{iso}}$ and edge coverage.  
This matches the manuscript’s explanation that ``Theorem~\ref{thm:bellman} controls the expected error, while the penalty controls the \emph{temporal variance}.''

\paragraph{Fixed-Ratio Improvement under Dominant-Direction Coverage.}
If the edges cover the \emph{dominant direction} of the true partial order,  
then there exist constants $c_0\in(0,1)$ and $\lambda_\star>0$ such that once $\lambda_{\mathrm{iso}}\ge \lambda_\star$, we have
\(
\alpha(\lambda_{\mathrm{iso}},\mathrm{cov})\ \le\ (1-c_0)\,\alpha_{\mathrm{base}},
\)
which is equivalent to
\begin{equation}\label{eq:fraction}
\frac{1}{1-\alpha(\lambda_{\mathrm{iso}},\mathrm{cov})^2}
\ \le\ \frac{1}{1-(1-c_0)^2\alpha_{\mathrm{base}}^2}
\ =:\ \gamma_{\mathrm{frac}}\ <\ \frac{1}{1-\alpha_{\mathrm{base}}^2}.
\end{equation}
Comparing \eqref{eq:var-bound} with the unregularized bound 
$[1/(1-\alpha_{\mathrm{base}}^2)]\mathrm{tr}(\Sigma_\xi)$,
we obtain the conclusion that the variance bound is \emph{strictly smaller than a fixed fraction $\gamma_{\mathrm{frac}}<1$} of the baseline.  
This formalizes the theorem statement that “when edges cover the dominant direction, the bound is pushed strictly below a fixed fraction of the unregularized case.”

\paragraph{Consistency with Related Content in the Manuscript.}
\begin{itemize}
\item \emph{Source and structure:} The squared-hinge structure of the isotonic penalty $\mathcal{L}_{\mathrm{iso}}$ provides stable differentiable gradients on violations, flat regions on satisfied edges, and its strictness is controlled by $\mu$ (cf. Eq.~\ref{eq:L-iso} and its explanation).
\item \emph{Connection to Bellman residual:} Theorem~\ref{thm:bellman} gives the expected Bellman error reduction after one update (Eq.~\ref{eq:bell-bound}); the subsequent paragraph explicitly states that “fixing violations directly reduces Bellman error; $\|\hat V-V_\theta\|^2$ is the cost of not straying too far from the network; with sufficiently large $\mu$ this cost is controlled, so the net effect is residual reduction,” which \emph{complements} Theorem~\ref{thm:stability}: there, the \emph{temporal variance} is controlled.
\item \emph{Oscillation intuition:} The manuscript explains under Theorem~\ref{thm:stability} that “cycle-cutting removes feedback loops; isotonic constraints turn the remaining edges into a ramp that prevents pull-back; increasing $\lambda_{\mathrm{iso}}$ makes the ramp steeper, so oscillations are smaller,” consistent with the spectral contraction–steady variance derivation in \eqref{eq:alpha}--\eqref{eq:var-bound}.
\item \emph{Training guidance:} §4.4 explicitly recommends “starting with small $\lambda_{\mathrm{iso}}$ and gradually increasing $\lambda_{\mathrm{iso}}$ and penalty weight $\mu$ as TD noise decreases,” consistent with our conclusion that increasing $\lambda_{\mathrm{iso}}$ monotonically improves the bound.
\end{itemize}

DAG-ification guarantees a feasible isotonic cone,  
and isotonic projection/penalty introduces both \emph{tangent-space projection} and \emph{extra curvature along constrained directions} under linearization,  
so that the spectral norm bound $\alpha(\lambda_{\mathrm{iso}},\mathrm{cov})$ decreases monotonically with $\lambda_{\mathrm{iso}}$ and coverage;  
the steady-state MSE bound of the linear stochastic recursion therefore decreases monotonically;  
and when edges cover the dominant direction, we obtain a strict fixed-ratio improvement.  
The theorem is proved.

\subsection{Proof of Theorem~\ref{thm:bv}}

Fix a test-state distribution $\mathcal{D}_{\!s}$ and write the test error as
\[
\mathbb{E}\bigl[(\hat V - V^*)^2\bigr]
=\mathbb{E}_{s\sim\mathcal{D}_{\!s}}\ \mathbb{E}_{\mathsf{Alg}}\!\Big[\bigl(\hat V(s)-V^*(s)\bigr)^2\Big],
\]
where $\mathbb{E}_{\mathsf{Alg}}$ denotes the expectation over \emph{algorithmic randomness} (training, sampling, initialization, etc.).
For notational simplicity below, we omit the explicit dependence on $s$ and regard all quantities as scalar random variables.
Theorem~\ref{thm:bv} of the manuscript, under this scalarized convention, presents the decomposition identity and qualitative conclusions to be established:
the equivariant regularizer $\mathcal{L}_{\mathrm{eq}}$ and the isotonic regularizer $\mathcal{L}_{\mathrm{iso}}$ both reduce variance while introducing a controllable \emph{structural bias}

Let $m := \mathbb{E}[\hat V]$. For any constant $c$ we have
\[
\hat V - V^* \ =\ (\hat V - c) + (c - V^*),
\quad
(\hat V - V^*)^2 \ =\ (\hat V - c)^2 + (c - V^*)^2 + 2(\hat V - c)(c - V^*).
\]
Taking expectations on both sides and setting $c=m=\mathbb{E}[\hat V]$ (note that in this case $\mathbb{E}[\hat V-c]=0$), we obtain
\begin{equation}\label{eq:bv}
\mathbb{E}\!\bigl[(\hat V - V^*)^2\bigr]
=\underbrace{\mathbb{E}\bigl[(\hat V - \mathbb{E}[\hat V])^2\bigr]}_{\mathrm{Var}[\hat V]}
+\underbrace{\bigl(\mathbb{E}[\hat V]-V^*\bigr)^2}_{\text{Bias}^2}.
\end{equation}
This is precisely the \emph{exact identity} “test error $=$ variance $+$ squared bias” stated in Theorem~\ref{thm:bv} of the manuscript.

\emph{Modeling the situation:} Let $\mathsf{Sym}$ denote the event that a state lies in a ``near-symmetric pocket,'' with coverage probability $\Pr(\mathsf{Sym})=\rho$. When $\mathsf{Sym}$ occurs, by performing alignment through ``representation transform + action relabeling'' (see §4.1), a single sample yields $K$ \emph{view-consistent} observations (or supervisions), which are averaged during training. Outside symmetric pockets (probability $1-\rho$), no such averaging occurs. Theorem~\ref{thm:amp} in the manuscript explicitly states that this corresponds to an \emph{effective sample size} amplified to $1+(K-1)\rho$, which directly reduces the variance upper bound by the same factor.

\medskip
\emph{Formal bound:} Using the law of total variance, we decompose
\[
\mathrm{Var}[\hat V]
=\mathbb{E}\!\bigl[\mathrm{Var}[\hat V\mid \mathsf{Sym}]\bigr]
+\mathrm{Var}\!\bigl(\mathbb{E}[\hat V\mid \mathsf{Sym}]\bigr).
\]
The second term corresponds to the ``mean drift across pockets.'' By forcing different views to be \emph{aligned}, $\mathcal{L}_{\mathrm{eq}}$ reduces this drift, hence this term decreases (see also the explanation following Thm.~\ref{thm:bv} in the manuscript, emphasizing that $\mathcal{L}_{\mathrm{eq}}$ averages symmetric views and contracts variance). For the first term:
\[
\mathrm{Var}[\hat V\mid \mathsf{Sym}]
=\mathrm{Var}\!\Big[\frac{1}{K}\sum_{k=1}^{K}\hat V^{(k)}\Bigm|\mathsf{Sym}\Big]
=\frac{1}{K^2}\mathbf{1}^\top \mathrm{Cov}\!\bigl(\hat V^{(1:K)}\mid \mathsf{Sym}\bigr)\mathbf{1}.
\]
If we denote by $\bar\rho_{\mathrm{c}}\in[0,1]$ the average correlation coefficient between views in a symmetric pocket, and assume identical marginal variance $\sigma^2_{\mathrm{sym}}$ across views, we obtain the classical bound
\[
\mathrm{Var}[\hat V\mid \mathsf{Sym}]
\ \le\ \sigma^2_{\mathrm{sym}}\cdot \frac{1+(K-1)\bar\rho_{\mathrm{c}}}{K}.
\]
When the symmetry mechanism enforces strong alignment across views, $\bar\rho_{\mathrm{c}}$ is typically small; even conservatively setting $\bar\rho_{\mathrm{c}}\le 1$, the reduction is still of order $O(1/K)$. Combining the contributions from symmetric pockets (weight $\rho$) and non-symmetric ones (weight $1-\rho$), the overall variance bound becomes
\begin{equation}\label{eq:var-eq}
\mathrm{Var}[\hat V]
\ \le\ (1-\rho)\,\sigma^2_{\mathrm{base}}
\ +\ \rho\cdot \sigma^2_{\mathrm{sym}}\cdot \frac{1+(K-1)\bar\rho_{\mathrm{c}}}{K}.
\end{equation}
Together with Theorem~\ref{thm:amp}’s claim of an \emph{effective sample size} $\simeq 1+(K-1)\rho$ and the statement that the variance upper bound shrinks accordingly, \eqref{eq:var-eq} matches the conclusion of Thm.~\ref{thm:bv} that $\mathcal{L}_{\mathrm{eq}}$ achieves variance reduction.

The hard limit of $\mathcal{L}_{\mathrm{iso}}$ is to \emph{project} $V$ onto the closed convex cone $\mathcal{K}(D,\delta)$ defined by DAG monotonic inequalities (Theorem~\ref{thm:penalty} in the manuscript), while the squared hinge is its smooth penalty approximation. Let $T:=\Pi_{\mathcal{K}}$ be the Euclidean projection map. It is well known that $T$ is $1$-Lipschitz (nonexpansive) and even \emph{firmly nonexpansive}:
\[
\|T(x)-T(y)\|_2\le \|x-y\|_2,\qquad
\|T(x)-T(y)\|_2^2\le \langle T(x)-T(y),\,x-y\rangle.
\]
For any random variable $X$, let $m:=\mathbb{E}[X]$. Then
\[
\mathrm{Var}[T(X)]\ =\ \mathbb{E}\bigl\|T(X)-\mathbb{E}[T(X)]\bigr\|_2^2
\ \le\ \mathbb{E}\bigl\|T(X)-T(m)\bigr\|_2^2
\ \le\ \mathbb{E}\bigl\|X-m\bigr\|_2^2\ =\ \mathrm{Var}[X],
\]
where the first step uses the fact that the mean is the unique minimizer of quadratic risk, and the second step follows from $1$-Lipschitzness. Therefore, \emph{projecting noisy estimates onto the feasible shape set never increases variance}. This aligns with Theorem~\ref{thm:stability}: ``projection induces spectral contraction and oscillation suppression, complementing equivariant sharing to achieve double variance reduction.''

The bias introduced by the two regularizers comes from \emph{structural mismatch}:  
(i) equivariant consistency restricts feasible solutions to the ``near-group-equivariant'' subclass;  
(ii) isotonicity restricts $V$ to the DAG monotone cone $\mathcal{K}(D,\delta)$.  
If the ground truth $V^*$ does not lie in the structural set $\mathcal{S}:=\{\text{near-equivariant}\}\cap\mathcal{K}(D,\delta)$, then even in the absence of noise, the best achievable error is the \emph{structural irreducible error} $\mathrm{dist}(V^*,\mathcal{S})$. Formally, letting $S^\star:=\Pi_{\mathcal{S}}(V^*)$,
\begin{equation}\label{eq:bias-struct}
\bigl\|\mathbb{E}[\hat V]-V^*\bigr\|
\ \le\ \underbrace{\bigl\|\mathbb{E}[\hat V]-S^\star\bigr\|}_{\text{optimization/estimation error}}
\ +\ \underbrace{\bigl\|S^\star-V^*\bigr\|}_{\text{structural bias}=\mathrm{dist}(V^*,\mathcal{S})}.
\end{equation}
When (a) the equivariant group stabilizes under ``projection–closure–alignment'' (see §5 for the geometric realization), and (b) the DAG edges cover the dominant partial order, the first term can be suppressed by training; the second term depends solely on the mismatch between true structure and imposed structure, hence is \emph{controllable}. Theorem~\ref{thm:bv} and Theorem~\ref{thm:var-bias} explicitly state that ``any bias is \emph{structural}, proportional to the mismatch; when assumptions approximately match reality, this bias is \emph{far smaller} than the variance reduction gain; and it can be further reduced via closure–alignment and confidence gating.''

From \eqref{eq:bv}, we have a \emph{strict identity}; Steps 2–4 then provide:  
(i) $\mathcal{L}_{\mathrm{eq}}$ reduces variance via ``$K$-view averaging + pocket coverage'' (Theorem~\ref{thm:amp}: effective sample size $1+(K-1)\rho$, and Theorem~\ref{thm:var-bias}’s statistical assertion);  
(ii) $\mathcal{L}_{\mathrm{iso}}$ reduces variance via ``nonexpansiveness of convex projection'' (consistent with §5.3’s ``spectral contraction/oscillation suppression'');  
(iii) both introduce only \emph{structural bias} as in \eqref{eq:bias-struct}, controllable by the degree of structural mismatch (Theorem~\ref{thm:bv} / Theorem~\ref{thm:var-bias}).  

Therefore, the theorem’s statement holds: $\mathcal{L}_{\mathrm{eq}}+\mathcal{L}_{\mathrm{iso}}$ reduce statistical variance, while introducing only controllable structural bias determined by the mismatch between true symmetry/partial order and the imposed structure.

\subsection{Proof of Theorem~\ref{thm:ident}}
We prove: If there exists a true equivariant pair $(T^*,\Pi^*)$ in the environment, and the representation--value function family $f_\theta$ is sufficiently expressive (able to realize the equivariant structure), then when the ``group-sample'' regularizer (§4.1) is strong enough, the learned $\{(W_k,\Pi_k)\}_{k=1}^K$ converges, up to isomorphism, to a finitely generated approximation of $(T^*,\Pi^*)$. The corresponding statement in the paper is Theorem~\ref{thm:ident} (with the textual explanation pointing out: the group-sample regularizer shrinks the search space to ``near-group'' transformations, and the equivariant consistency loss $\mathcal{L}_{\mathrm{eq}}$ selects the part that commutes with the environment dynamics; with sufficient capacity, the limit matches the \emph{correct} equivariant structure, at most differing by relabeling/isomorphism).

We will make this description rigorous in four steps: \emph{existence of limit points}, \emph{the limit structure is a finite group representation}, \emph{commutativity/uniqueness with the true equivariance}, and \emph{isomorphism--relabeling--conjugacy characterization}.

\paragraph{Training Objective and Regularizers.}
Let the batch size be $B$ and the data distribution be $\mathcal{D}$.  
Following the construction in \S4.1, we learn $K$ pairs of ``representation linear transformations $W_k\!\in\!O(d)$ and action relabelings $\Pi_k$ (Sinkhorn nearly doubly stochastic, approaching permutations)'':
\[
T_k z \,=\, W_k z,\qquad \Pi_k \in \mathbb{R}^{|\mathcal{A}|\times|\mathcal{A}|}\ \text{(Sinkhorn nearly doubly stochastic; nearest permutation is taken at inference)},
\]
whose equivariant consistency loss (vector form) is (Eq.~\ref{eq:Leq-vector} in the paper):
\begin{equation}\label{eq:Leq}
\mathcal{L}_{\mathrm{eq}}
=\frac{1}{KB}\sum_{k=1}^{K}\sum_{i=1}^{B}
\bigl\|\,Q_{\theta}(s_i,\cdot) - \Pi_k^{\top} Q_{\theta}(T_k s_i,\cdot)\,\bigr\|_2^{2}.
\end{equation}
This loss aligns ``transform-then-evaluate'' with ``evaluate-then-relabel,'' thereby enabling sample sharing inside ``near-symmetry pockets'' and reducing variance (see the original discussion in §4.1 of the paper).

To make $(W_k,\Pi_k)$ ``near-group,'' we add group-sample regularizers (Eq.~\ref{eq:group-reg} in the paper):
\begin{equation}\label{eq:group-reg-new}
\begin{aligned}
\mathcal{R}_{\text{id}} &= \|W_1-I\|_F^2 + \|\Pi_1-I\|_F^2 + \alpha\|W_k^\top W_k-I\|_F^2,\\
\mathcal{R}_{\text{clo}} &= \mathbb{E}_{i,j}\min_{l}\bigl(\|W_iW_j-W_l\|_F^2 + \|\Pi_i\Pi_j-\Pi_l\|_F^2\bigr),\\
\mathcal{R}_{\text{inv}} &= \mathbb{E}_{i}\min_{l}\bigl(\|W_i^\top-W_l\|_F^2 + \|\Pi_i^\top-\Pi_l\|_F^2\bigr),\\
\mathcal{R}_{\text{ord}} &= \frac{1}{K}\sum_{i=1}^{K}\sum_{m\in\mathcal{M}}\bigl(\|W_i^m-I\|_F^2+\|\Pi_i^m-I\|_F^2\bigr),
\end{aligned}
\end{equation}
where $\mathcal{M}\subset\{2,3,4,\ldots\}$ is a finite set of orders.  
On the action side, an additional $\mathcal{R}_{\mathrm{perm}}$ makes $\Pi_k$ approach permutation vertices;  
on the state side, the Cayley transform constrains $W_k$ in $O(d)$ to ensure numerical stability and norm preservation (see §4.1 of the paper).

The total objective (ignoring terms irrelevant to this theorem) can be written as
\[
\mathcal{L}_{\mathrm{tot}}(\theta,W,\Pi)
=\underbrace{\mathcal{L}_{\mathrm{RL}}(\theta)}_{\text{TD/Double-DQN}}
+\lambda_{\mathrm{eq}}\mathcal{L}_{\mathrm{eq}}(\theta,W,\Pi)
+\gamma_{\mathrm{grp}}\bigl(\mathcal{R}_{\text{id}}+\mathcal{R}_{\text{clo}}+\mathcal{R}_{\text{inv}}+\mathcal{R}_{\text{ord}}+\mathcal{R}_{\mathrm{perm}}\bigr),
\]
where $\lambda_{\mathrm{eq}}>0$, and $\gamma_{\mathrm{grp}}\gg 1$ indicates that the ``group-sample'' regularizer is \emph{sufficiently strong}.

\paragraph{Assumption (True Equivariance and Uniqueness of the Optimal Value).}
Suppose the MDP is invariant under a symmetry cluster $\mathcal{G}$ (the premise of Theorem~\ref{thm:eq-consistency}), and the environment admits a true equivariant pair $(T^*,\Pi^*)$.  
The optimal Bellman operator $\mathcal{T}^*$ commutes with $g\in\mathcal{G}$, and $Q^*$ is the unique fixed point, which is also an equivariant fixed point (the explanation of Theorem~\ref{thm:eq-consistency} in the paper):
\begin{equation}\label{eq:commute}
\mathcal{T}^*\!\bigl(Q\circ g\bigr) = \bigl(\mathcal{T}^*Q\bigr)\circ g,\qquad
Q^*(T^*s,\Pi^* a)=Q^*(s,a)\quad(\text{$\forall (s,a)$}), 
\end{equation}
``uniqueness + commutativity'' pins down the ``correct $Q^*$'' as the equivariant structure (as explained in Theorem~\ref{thm:eq-consistency} of the paper).

\bigskip
We now give four lemmas and then complete the proof.

\begin{lemma}[Compactness and Existence of Limit Points]\label{lem:limit}
Consider a sequence of near-optimal solutions $(\theta_n,W^{(n)},\Pi^{(n)})$ satisfying
\[
\mathcal{L}_{\mathrm{tot}}(\theta_n,W^{(n)},\Pi^{(n)})
\le \inf_{\theta,W,\Pi}\mathcal{L}_{\mathrm{tot}}+o(1),
\qquad
\gamma_{\mathrm{grp}}\to\infty,\ \ \lambda_{\mathrm{eq}}>0\ \text{fixed}.
\]
Then there exists a subsequence converging to
\[
W^{(n)}_k\to W^{(\infty)}_k\in O(d),\qquad 
\Pi^{(n)}_k\to \Pi^{(\infty)}_k\in \mathcal{P}_{|\mathcal{A}|},
\]
and the limit set $\{(W^{(\infty)}_k,\Pi^{(\infty)}_k)\}_{k=1}^K$ induces a finite group structure with identity element $(W^{(\infty)}_1,\Pi^{(\infty)}_1)=(I,I)$.
\end{lemma}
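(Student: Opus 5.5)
Two routes are available, compactness or a direct explicit construction; I will use compactness, combined with a penalty-method argument in the spirit of Lemma~\ref{lem:violate-bound}.

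\textbf{Compactness.} The parameters live in compact sets. Each $W_k$ is kept in $O(d)$ by the Cayley parametrization, or at least in a bounded neighborhood of $O(d)$ because of the $\|W_k^\top W_k-I\|_F^2$ term in $\mathcal{R}_{\text{id}}$. Each $\Pi_k$ lies in the Birkhoff polytope, since it is produced by Sinkhorn scaling. The product $\prod_k (O(d)\times\mathcal{B}_{|\mathcal{A}|})$ is therefore compact. Bolzano--Weierstrass then gives a subsequence along which $W^{(n)}_k\to W^{(\infty)}_k$ and $\Pi^{(n)}_k\to\Pi^{(\infty)}_k$ for all $k$ simultaneously.

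\textbf{The penalty terms vanish.} The goal is to show $\mathcal{R}_{\mathrm{grp}}+\mathcal{R}_{\mathrm{perm}}\to 0$ along the sequence. The key input is a comparison point at which every group-like penalty is zero. Take $W_k$ equal to the true $T^*$ representation (or its powers) and $\Pi_k$ equal to the true permutations of the finite cyclic group generated by $(T^*,\Pi^*)$. Combine these with $\theta$ realizing $Q^*$, which is equivariant by Lemma~\ref{lem:Qstar-equiv}; this is possible by the expressivity assumption. At this point $\mathcal{R}_{\mathrm{grp}}=\mathcal{R}_{\mathrm{perm}}=0$, and $\mathcal{L}_{\mathrm{RL}}+\lambda_{\mathrm{eq}}\mathcal{L}_{\mathrm{eq}}$ equals some finite constant $C_0$ that does not depend on $\gamma_{\mathrm{grp}}$. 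Hence $\inf\mathcal{L}_{\mathrm{tot}}\le C_0$. Near-optimality and nonnegativity of the remaining terms give
\[
\gamma_{\mathrm{grp}}^{(n)}\,(\mathcal{R}_{\mathrm{grp}}+\mathcal{R}_{\mathrm{perm}})\le C_0+o(1).
\]
Since $\gamma_{\mathrm{grp}}^{(n)}\to\infty$, the penalties tend to zero.

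\textbf{Reading off the group structure.} Each penalty term is continuous in $(W,\Pi)$, because a minimum over finitely many indices $l$ of continuous functions is continuous. So at the limit each term vanishes exactly, and each one yields a group property:
\begin{itemize}
\item $\mathcal{R}_{\mathrm{perm}}=0$ together with the Birkhoff constraint gives $\Pi^{(\infty)}_k\in\mathcal{P}_{|\mathcal{A}|}$.
\item $\mathcal{R}_{\text{id}}=0$ gives $(W_1,\Pi_1)=(I,I)$ and $W_k\in O(d)$.
\item $\mathcal{R}_{\text{clo}}=0$ gives closure: for each pair $(i,j)$ some $l$ achieves $(W_iW_j,\Pi_i\Pi_j)=(W_l,\Pi_l)$. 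The minimum over a finite set is attained, so this holds exactly.
\item $\mathcal{R}_{\text{inv}}=0$ gives that transposes, which equal inverses on $O(d)\times\mathcal{P}$, lie in the set.
\item $\mathcal{R}_{\text{ord}}=0$ gives finite order.
\end{itemize}
A finite set closed under multiplication and containing the identity is a group. Closure alone already suffices for a finite subset of a group; inverses are a further check. This proves the lemma.

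\textbf{Main obstacle.} The delicate step is the second one. The comparison point must really be admissible: $Q^*$ must be realizable in the class, and the true symmetry must fit into $K$ slots. If the true group has fewer than $K$ elements, repeated elements are allowed; if it has more, slots should be filled with a subgroup. $\mathcal{R}_{\text{ord}}$ is also problematic: as written it forces $W^m=I$ for \emph{every} $m\in\mathcal{M}$, which (for example with $\mathcal{M}=\{2,3\}$) forces $W=I$. I would handle this by reading it as a minimum over $m\in\mathcal{M}$, or by choosing $\mathcal{M}$ as the orders present in the true group. The comparison point must be chosen consistently with that reading.
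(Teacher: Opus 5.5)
Your argument is the paper's own proof: compactness of $O(d)$ and the Birkhoff polytope gives a convergent subsequence, $\gamma_{\mathrm{grp}}\to\infty$ together with near-optimality forces the group-like penalties to zero, and continuity lets you read the identity, closure and permutation properties off at the limit. Where you go beyond the paper is in supplying the comparison-point bound $\gamma_{\mathrm{grp}}(\mathcal{R}_{\mathrm{grp}}+\mathcal{R}_{\mathrm{perm}})\le C_0+o(1)$, which the paper only asserts, and in deriving $\Pi^{(\infty)}_k\in\mathcal{P}_{|\mathcal{A}|}$ from $\mathcal{R}_{\mathrm{perm}}$, where the paper appeals to rounding at inference. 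One simplification: the comparison point need not be the true symmetry, since taking every $(W_k,\Pi_k)=(I,I)$ with any fixed $\theta$ makes all $\gamma_{\mathrm{grp}}$-weighted terms vanish (including $\mathcal{R}_{\text{ord}}$ under either reading) and leaves $\mathcal{L}_{\mathrm{RL}}+\lambda_{\mathrm{eq}}\mathcal{L}_{\mathrm{eq}}$ finite, so the realizability, $K$-slot and $\mathcal{R}_{\text{ord}}$ concerns in your last paragraph do not arise for this lemma.
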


\begin{proof}[Proof of Lemma~\ref{lem:limit}]
By the compactness of the Cayley-constrained matrices $W^{(n)}_k\in O(d)$ and the compactness of the permutation polytope (Birkhoff polytope), together with the practice of projecting Sinkhorn iterates to the nearest permutation at inference, one can extract a subsequence such that
\[
W^{(n)}_k\to W^{(\infty)}_k\in O(d),\qquad 
\Pi^{(n)}_k\to \Pi^{(\infty)}_k\in \mathcal{P}_{|\mathcal{A}|}.
\]
Since $\gamma_{\mathrm{grp}}\to\infty$ and $(\theta_n,W^{(n)},\Pi^{(n)})$ are near-optimal, it must hold that
\[
\mathcal{R}_{\text{id}}(W^{(n)},\Pi^{(n)})\to 0,\ 
\mathcal{R}_{\text{clo}}(W^{(n)},\Pi^{(n)})\to 0,\ 
\mathcal{R}_{\text{inv}}(W^{(n)},\Pi^{(n)})\to 0,\ 
\mathcal{R}_{\text{ord}}(W^{(n)},\Pi^{(n)})\to 0.
\]
Therefore, in the limit we obtain
\begin{equation}\label{eq:group-axioms}
W^{(\infty)}_1=I,\ \Pi^{(\infty)}_1=I;\qquad
\exists\,l:\ W^{(\infty)}_iW^{(\infty)}_j=W^{(\infty)}_l,\ \ \Pi^{(\infty)}_i\Pi^{(\infty)}_j=\Pi^{(\infty)}_l,
\end{equation}
\begin{equation}\label{eq:inverse-order}
\exists\,l:\ W^{(\infty)\top}_i=W^{(\infty)}_l,\ \ \Pi^{(\infty)\top}_i=\Pi^{(\infty)}_l;
\qquad
\exists\,m\in\mathcal{M}:\ W^{(\infty)m}_i=I,\ \Pi^{(\infty)m}_i=I.
\end{equation}
Equations \eqref{eq:group-axioms}--\eqref{eq:inverse-order} show that $\{(W^{(\infty)}_k,\Pi^{(\infty)}_k)\}_{k=1}^K$ is closed under multiplication, admits inverses, and has finite order. Hence the limit forms a finite group with identity element $(I,I)$.
\end{proof}

\begin{lemma}[Limit structure as a finite group representation]\label{lem:finite-group}
Define a multiplication on the index set $\{1,\ldots,K\}$ by
\[
i\circ j := l \ \ \text{iff}\ \ W^{(\infty)}_iW^{(\infty)}_j=W^{(\infty)}_l
\quad(\text{equivalently } \Pi^{(\infty)}_i\Pi^{(\infty)}_j=\Pi^{(\infty)}_l).
\]
Then $(\{1,\ldots,K\},\circ)$ forms a finite group $H$, and the mapping
\[
\rho: H\to O(d)\times \mathcal{P}_{|\mathcal{A}|},\qquad
\rho(h_k)=(W^{(\infty)}_k,\Pi^{(\infty)}_k),
\]
is a finite-dimensional group representation.
\end{lemma}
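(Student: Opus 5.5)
The plan is to work with the finite set of limit pairs $\widehat H:=\{(W^{(\infty)}_k,\Pi^{(\infty)}_k)\}_{k=1}^K\subset O(d)\times\mathcal{P}_{|\mathcal{A}|}$ rather than directly with the index set. I will show that $\widehat H$ is a subgroup of the direct-product group $O(d)\times\mathcal{P}_{|\mathcal{A}|}$, where multiplication is componentwise. Then I will transport this structure to indices.

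\emph{Step 1: well-definedness and the choice of a common $l$.} The index multiplication $i\circ j$ must be a function. Two things can go wrong.

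The first is that different indices may carry the same limit pair. I will handle this by identifying indices $k,k'$ whenever $(W^{(\infty)}_k,\Pi^{(\infty)}_k)=(W^{(\infty)}_{k'},\Pi^{(\infty)}_{k'})$. This is exactly the ``up to relabeling of indices'' caveat already used in Lemma~\ref{lem:limit-group}. After this identification, $H$ is the set of equivalence classes, which is in bijection with $\widehat H$.

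The second is that the $W$-equation and the $\Pi$-equation might be solved by different $l$. The statement's ``equivalently'' requires a single index for both. Here I will use the form of $\mathcal{R}_{\text{clo}}$: the minimum over $l$ is taken of the \emph{sum} $\|W_iW_j-W_l\|_F^2+\|\Pi_i\Pi_j-\Pi_l\|_F^2$. Along the subsequence of Lemma~\ref{lem:limit}, some minimizing index $l_n(i,j)$ realizes this minimum. Since there are only finitely many indices, I pass to a further subsequence on which $l_n(i,j)\equiv l$ is constant for every pair $(i,j)$. Continuity then gives both identities $W^{(\infty)}_iW^{(\infty)}_j=W^{(\infty)}_l$ and $\Pi^{(\infty)}_i\Pi^{(\infty)}_j=\Pi^{(\infty)}_l$ with the same $l$. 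The same subsequence argument applied to $\mathcal{R}_{\text{inv}}$ yields one common inverse index for each $i$.

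I expect this step to be the main obstacle, because Lemma~\ref{lem:limit} only records the two identities separately.

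\emph{Step 2: group axioms.} Closure of $\widehat H$ under the product is Step 1. Associativity is inherited from matrix multiplication in each factor. The identity is $(W^{(\infty)}_1,\Pi^{(\infty)}_1)=(I,I)$, obtained from $\mathcal{R}_{\text{id}}\to0$.

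For inverses, I note that the limits satisfy $W^{(\infty)\top}_kW^{(\infty)}_k=I$, from the orthogonality term in $\mathcal{R}_{\text{id}}$, so $W^{(\infty)}_k\in O(d)$. Likewise $\Pi^{(\infty)}_k$ is a permutation matrix. In both cases the transpose is the two-sided inverse. Hence the common index $l$ supplied by $\mathcal{R}_{\text{inv}}$ in Step 1 is the inverse of $i$.

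Finiteness is immediate since $|H|\le K$. The finite-order condition from $\mathcal{R}_{\text{ord}}$ is consistent with this but is not needed for the group axioms.

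\emph{Step 3: representation.} I define $\rho(h_k)=(W^{(\infty)}_k,\Pi^{(\infty)}_k)$. By construction of $\circ$, $\rho(h_i\circ h_j)=\rho(h_i)\rho(h_j)$, so $\rho$ is a homomorphism. It is injective after the identification in Step 1.

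Composing $\rho$ with the block-diagonal embedding $(W,\Pi)\mapsto\mathrm{diag}(W,\Pi)\in GL(d+|\mathcal{A}|)$ exhibits $\rho$ as a faithful finite-dimensional (orthogonal) representation of the finite group $H$. This completes the lemma.
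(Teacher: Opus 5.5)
Your proposal is correct and follows essentially the same route as the paper: closure, identity and inverses are read off from the limit identities \eqref{eq:group-axioms}--\eqref{eq:inverse-order}, associativity is inherited from matrix multiplication in $O(d)\times\mathcal{P}_{|\mathcal{A}|}$, and $\rho$ is multiplicative by construction. Your identification of indices that carry the same limit pair is a worthwhile repair the paper leaves implicit, since without it $\circ$ need not be a function; by contrast, the common index $l$ you flag as the main obstacle is already supplied by the single $\exists\,l$ in \eqref{eq:group-axioms} of Lemma~\ref{lem:limit}, so your subsequence argument only re-justifies that lemma.
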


\begin{proof}[Proof of Lemma~\ref{lem:finite-group}]
By \eqref{eq:group-axioms}--\eqref{eq:inverse-order}, the set $\{1,\ldots,K\}$ with operation $\circ$ admits an identity, inverses, and has finite order.  
Since the limit is taken in operator norm, the equalities are exact.  
Because multiplication in $O(d)$ and in the permutation group is associative and limits preserve algebraic identities, $\circ$ is associative.  
Thus $(\{1,\ldots,K\},\circ)$ is a finite group $H$.  
The mapping $\rho$ preserves multiplication, hence is a representation of $H$.
\end{proof}

\begin{lemma}[Equivariance consistency enforces commutativity]\label{lem:commute}
Let $(\theta_\infty,W^{(\infty)},\Pi^{(\infty)})$ be a limit point with $\lambda_{\mathrm{eq}}>0$.  
Then the induced representation $\rho(H)$ makes $Q_\theta$ equivariant and commutative with the environment dynamics.
\end{lemma}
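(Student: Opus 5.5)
I will argue in two stages: first that $Q_{\theta_\infty}$ is exactly equivariant under the limiting family $\rho(H)$ on the sample support, and then that this equivariance is compatible with (commutes with) the optimal Bellman operator. For the first stage, I use the near-optimality of the sequence $(\theta_n,W^{(n)},\Pi^{(n)})$ from Lemma~\ref{lem:limit}. Since the true pair $(T^*,\Pi^*)$ generates a group that the expressive class can realize, there is a comparator $(\bar\theta,\bar W,\bar\Pi)$ with $\mathcal{L}_{\mathrm{eq}}=0$ and all group regularizers equal to zero: take $\bar\theta$ so that $Q_{\bar\theta}=Q^*$, which is equivariant by Lemma~\ref{lem:Qstar-equiv}. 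Comparing objectives then gives $\lambda_{\mathrm{eq}}\mathcal{L}_{\mathrm{eq}}(\theta_n,W^{(n)},\Pi^{(n)})\le \mathcal{L}_{\mathrm{RL}}(\bar\theta)-\mathcal{L}_{\mathrm{RL}}(\theta_n)+o(1)$. Passing to the limit, I need $\mathcal{L}_{\mathrm{eq}}(\theta_\infty,W^{(\infty)},\Pi^{(\infty)})=0$, and this should follow from continuity of $\mathcal{L}_{\mathrm{eq}}$ in $(W,\Pi)$ together with $\theta_n\to\theta_\infty$ along a further subsequence. By \eqref{eq:eq-zero}, this yields $Q_{\theta_\infty}(s_i,\cdot)=(\Pi_k^{(\infty)})^{\top}Q_{\theta_\infty}(T_k s_i,\cdot)$ for every $k$ and every sample. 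Because $\rho$ is a homomorphism (Lemma~\ref{lem:finite-group}), equivariance under the generators extends to every $h\in H$ by composing the identities along words in the generators.

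For the second stage, I view each $(W^{(\infty)}_k,\Pi^{(\infty)}_k)$ as acting on $Q$-functions via $\mathsf{U}_{h}$, as defined in \eqref{eq:Ug-def}. Lemma~\ref{lem:commute_T_U} gives $\mathcal{T}^*\mathsf{U}_g=\mathsf{U}_g\mathcal{T}^*$ whenever the paired transform preserves rewards and transitions. So it suffices to show that the learned elements act as genuine MDP automorphisms on the support. I would obtain this from the TD term. If the TD loss is asymptotically zero, then $\mathcal{T}^*Q_{\theta_\infty}=Q_{\theta_\infty}$ (as in \eqref{eq:TD-zero}). Uniqueness of the fixed point then gives $Q_{\theta_\infty}=Q^*$, so the learned transforms preserve $Q^*$ orbitwise, which is exactly the equivariance in \eqref{eq:commute}. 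Applying $\mathsf{U}_h$ to both sides of the fixed-point equation then shows $\mathcal{T}^*(\mathsf{U}_hQ_{\theta_\infty})=\mathsf{U}_h\mathcal{T}^*Q_{\theta_\infty}$.

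The main obstacle is that zero equivariance loss alone only says that $\rho(H)$ preserves $Q_{\theta_\infty}$. It does not say that $\rho(H)$ preserves $r$ and $P$, so the literal commutation with the dynamics could fail for spurious stabilizers of $Q^*$. I would handle this in two ways. First, I would state the commutativity at the level of the value fixed point: $\mathsf{U}_h$ maps $Q^*$ to the fixed point of $\mathsf{U}_h\mathcal{T}^*\mathsf{U}_h^{-1}$, and that fixed point coincides with $Q^*$. Second, I would rely on the alignment of the learned elements with the true pair from Lemma~\ref{lem:iden-iso}, so that each $\rho(h)$ agrees with some $g\in\mathcal{G}_0$ on the support. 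Since the permutation components are discrete, closeness implies exact agreement there. Finally, I would extend from the sample support to its closure by continuity of $Q_\theta$, as already done after \eqref{eq:eq-wrt-G0}.
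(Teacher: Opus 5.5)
Your skeleton matches the paper's: zero equivariance loss gives exact equivariance under $\rho(H)$, zero TD loss gives $Q_{\theta_\infty}=Q^*$, and commutation is then claimed. The step that fails is exactly the one you flag, and neither of your patches closes it.

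The identity $\mathcal{T}^*(\mathsf{U}_hQ_{\theta_\infty})=\mathsf{U}_h\mathcal{T}^*Q_{\theta_\infty}$ holds trivially at the single point $Q^*$, since both sides equal $Q^*$. Your first patch merely restates $\mathsf{U}_hQ^*=Q^*$, and two contractions sharing a fixed point need not be equal. Operator commutation $\mathcal{T}^*\mathsf{U}_h=\mathsf{U}_h\mathcal{T}^*$ is what Lemma~\ref{lem:commute_T_U} gives \emph{once} $\rho(h)$ is known to preserve $r$ and $P$, and that is precisely what is missing.

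The second patch fails because Lemma~\ref{lem:iden-iso} assumes only that the parametrization \emph{can} represent $\mathcal{G}_0$. That says nothing about where the training sequence goes, and spurious stabilizers of $Q^*$ achieve the same objective value. For example, let $s_1$ pay $1$ and then absorb at value $0$, while $s_2$ pays $0$ and then moves to a state of value $1/\gamma$, so $Q^*(s_1,\cdot)=Q^*(s_2,\cdot)\equiv 1$. Let $h$ swap $s_1,s_2$ with $\Pi_h=I$. Then $\mathcal{L}_{\mathrm{eq}}=0$, the group penalties vanish (e.g.\ with $K=2$, $\mathcal{M}=\{2\}$), and the TD term is unchanged. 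Yet $(\mathcal{T}^*\mathsf{U}_h\,0)(s_1,a)=r(s_1,a)=1\neq 0=r(s_2,a)=(\mathsf{U}_h\mathcal{T}^*0)(s_1,a)$.

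Moreover, the lemma you are proving is the input to Lemma~\ref{lem:conjugacy} (equivalence of $\rho(H)$ with $\rho^*$), so invoking an identification with $\mathcal{G}_0$ here is circular. The paper's proof does not fill the gap either: after $Q_\theta=Q^*$ it simply asserts that $\rho(H)$ ``must commute with the environment dynamics.'' What the argument actually supports is the weaker claim that $\rho(H)$ stabilizes $Q^*$ on the support. Genuine commutation needs an extra hypothesis, for instance that every $Q^*$-preserving transform realizable by the class is an MDP automorphism, or a reward/transition-consistency term in the objective.

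Stage 1 also hides a step. Your comparison gives $\lambda_{\mathrm{eq}}\mathcal{L}_{\mathrm{eq}}(\theta_n,W^{(n)},\Pi^{(n)})\le\mathcal{L}_{\mathrm{RL}}(\bar\theta)-\mathcal{L}_{\mathrm{RL}}(\theta_n)+o(1)$. The right side vanishes only if the comparator attains $\inf_\theta\mathcal{L}_{\mathrm{RL}}$; continuity merely carries an already-vanishing sequence to the limit. Only $\gamma_{\mathrm{grp}}\to\infty$ while $\lambda_{\mathrm{eq}}$ stays fixed, so a sampled TD loss that $Q^*$ does not minimize lets near-minimizers trade equivariance for TD fit, and the limit need not be exactly equivariant.

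The fix is to state explicitly that $\mathcal{L}_{\mathrm{RL}}\ge 0$ with $\mathcal{L}_{\mathrm{RL}}(\bar\theta)=0$ (a population Bellman residual with $Q^*$ realizable). The same inequality then gives both $\mathcal{L}_{\mathrm{eq}}\to0$ and $\mathcal{L}_{\mathrm{RL}}(\theta_n)\to0$, so your Stage 2 hypothesis becomes a consequence rather than an assumption. So repaired, your comparator argument is cleaner than the paper's route. The paper asserts that empirical minimizers converge to minimizers of the population equivariance loss, glossing over the same trade-off, and does so by citing the joint-consistency theorem, whose own proof relies on the identifiability theorem this lemma is meant to support.

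Finally, composing generator identities along words is unnecessary, since every element of $H$ already appears as an index $k$ in $\mathcal{L}_{\mathrm{eq}}$. On the sample support it would not work anyway, because $T_ks_i$ need not lie in the support.
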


\begin{proof}[Proof of Lemma~\ref{lem:commute}]
Define the population version of the equivariance loss:
\begin{equation}\label{eq:Leq-pop}
\mathcal{L}_{\mathrm{eq}}^{\mathrm{pop}}(\theta,W,\Pi)
:=\frac{1}{K}\sum_{k=1}^{K}\ \mathbb{E}_{s\sim\mathbb{P}}
\Bigl[\ \|Q_{\theta}(s,\cdot) - \Pi_k^{\top} Q_{\theta}(T_k s,\cdot)\|_2^{2}\ \Bigr].
\end{equation}
By uniform convergence (Theorem~4.10), minimizers of empirical risk converge to minimizers of \eqref{eq:Leq-pop}.  
If $f_\theta$ is sufficiently expressive, the minimum of \eqref{eq:Leq-pop} is attained only when
\begin{equation}\label{eq:exact-eq}
Q_{\theta}(s,\cdot)=\Pi_k^{\top} Q_{\theta}(T_k s,\cdot)\quad \text{for a.e.\ $s$, all $k$}.
\end{equation}
Hence $Q_\theta$ is equivariant under $\rho(H)$.  
On the other hand, the environment symmetry $(T^*,\Pi^*)$ commutes with $\mathcal{T}^*$, and $Q^*$ is its unique fixed point (Theorem~4.1).  
If $\mathcal{L}_{\mathrm{RL}}$ is minimized, then $Q_\theta=Q^*$ up to isomorphism, inheriting equivariance.  
Therefore the limit representation $\rho(H)$ must commute with the environment dynamics, i.e., is a finitely generated approximation of the true symmetry.
\end{proof}

\begin{lemma}[Isomorphism--relabeling--conjugacy characterization]\label{lem:conjugacy}
The limit representation $\rho(H)$ is equivalent to the true symmetry representation $\rho^*$ induced by $(T^*,\Pi^*)$.  
There exists an orthogonal change of basis $S\in O(d)$ and a relabeling $\varphi: H\to \langle (T^*,\Pi^*)\rangle$ such that
\begin{equation}\label{eq:conj}
\forall\,h\in H:\qquad
W^{(\infty)}(h)=S^{-1}\,W^*(\varphi(h))\,S,\qquad
\Pi^{(\infty)}(h)=\Pi^*(\varphi(h)).
\end{equation}
\end{lemma}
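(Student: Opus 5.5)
We propose to build the conjugacy in two independent pieces, one per component of the pair: the permutation part by a discreteness argument, and the representation part by a Procrustes-type intertwiner. We take as given Lemma~\ref{lem:finite-group}, which says $\rho:H\to O(d)\times\mathcal{P}_{|\mathcal{A}|}$ is a finite group representation, and Lemma~\ref{lem:commute}, which says $Q_{\theta_\infty}$ is exactly equivariant under $\rho(H)$ and agrees with $Q^*$ up to isomorphism. We also use that $Q^*$ is equivariant under $\rho^*$ by Lemma~\ref{lem:Qstar-equiv}.

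\textbf{Step 1: construct the relabeling $\varphi$.} For each $h\in H$, the matrix $\Pi^{(\infty)}(h)$ is a genuine permutation. Both $\Pi^{(\infty)}(h)$ and $\Pi^*(g)$ relabel the value vector of the same function $Q^*$, since $Q_{\theta_\infty}=Q^*$ up to isomorphism. Using the approximation hypothesis as in Lemma~\ref{lem:iden-iso}, each learned pair lies within arbitrarily small distance of some true pair $(T_g,\Pi_g)$ with $g\in\langle (T^*,\Pi^*)\rangle$. Permutation matrices are isolated points, so sufficiently small distance forces exact equality $\Pi^{(\infty)}(h)=\Pi^*(g)$. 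We set $\varphi(h):=g$.

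\textbf{Step 2: show $\varphi$ is an isomorphism.} Closure in \eqref{eq:group-axioms} gives that $\varphi$ respects products. Injectivity requires more care. If two learned elements share the same true image, the diversity regularizer $\mathcal{R}_{\mathrm{div}}$ together with $\mathcal{R}_{\text{id}}$ rules out duplicates, after quotienting repeated indices. This gives an isomorphism onto its image, which is a finitely generated subgroup.

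\textbf{Step 3: produce the orthogonal change of basis $S$.} The true transform $T^*$ acts on states. Through an expressive encoder $f_\theta$ it induces a linear action $W^*(g)$ on features satisfying $f_\theta(T_g s)=W^*(g)f_\theta(s)$. The Procrustes alignment step of Theorem~\ref{thm:closure-consistency-full}, item 3, gives $f_\theta(T_g s)=W^{(\infty)}(h)f_\theta(s)$ on the data. Hence $W^{(\infty)}(h)$ and $W^*(\varphi(h))$ agree on $\mathrm{span}\{f_\theta(s)\}$. Representations are only defined up to the encoder's choice of basis, so the two orthogonal representations of the finite group $H$ are intertwined by some linear map. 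We propose to obtain $S$ by averaging over the group:
\[
S_0=\frac{1}{|H|}\sum_{h\in H}W^*(\varphi(h))^{\top}\,M\,W^{(\infty)}(h),
\]
where $M$ is the basis change between the two encoders. By construction $S_0$ intertwines the two actions, and we then take $S:=\mathrm{Polar}(S_0)$ using Lemma~\ref{lem:polar}. Because both representations are orthogonal, the polar factor of an intertwiner is still an intertwiner. This yields \eqref{eq:conj}.

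\textbf{Main obstacle.} The difficult step is guaranteeing that $S_0$ is invertible. Maschke/Schur-type arguments give this only if the two representations have the same character. We plan to obtain equal characters by requiring the features to span $\mathbb{R}^d$ (expressivity). Then agreement on the data span forces agreement on all of $\mathbb{R}^d$, so $M$ itself is invertible and already intertwines, and averaging preserves it. If the features do not span, we would restrict the claim to the spanned invariant subspace and pad with the identity on its orthogonal complement.
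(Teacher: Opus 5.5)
Your Step~1 assumes the key point rather than proving it. The approximation hypothesis you invoke (as in Lemma~\ref{lem:iden-iso}) says only that the parametrization \emph{can} realize $\langle (T^*,\Pi^*)\rangle$. It says nothing about where a near-optimal training sequence converges. So ``each learned pair lies within arbitrarily small distance of some true pair'' is essentially the conclusion of the lemma, not an input you have.

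Nothing in the objective of Theorem~\ref{thm:ident} forces it. Taking every $(W_k,\Pi_k)=(I,I)$ makes $\mathcal{L}_{\mathrm{eq}}$ and every group-like penalty vanish while $Q_\theta=Q^*$. The trivial group is therefore a global minimizer consistent with everything established in Lemmas~\ref{lem:limit}--\ref{lem:commute}, yet it is not equivalent to a nontrivial $\rho^*$. As written, $\mathcal{R}_{\text{ord}}$ even sums over all $m\in\mathcal{M}$, so if $\mathcal{M}$ contains two coprime orders, its vanishing \emph{forces} $W_k=I$ and $\Pi_k=I$. Other degenerate zero-loss limits exist too. Since $\mathcal{L}_{\mathrm{eq}}$ only involves $h_\theta(W_k f_\theta(s))$, two further families qualify:
\begin{itemize}
\item any finite group of orthogonal $W_k$ (with $\Pi_k=I$) fixing the feature span pointwise;
\item any symmetry of the head on the feature image that is not an MDP automorphism.
\end{itemize}

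Your Step~2 addresses only injectivity, and does so via $\mathcal{R}_{\mathrm{div}}$, which the paper never defines. Even then it yields an isomorphism onto a subgroup. Equivalence with $\rho^*$ also needs surjectivity, and nothing ties $K$, or the number of distinct limits, to $|\langle (T^*,\Pi^*)\rangle|$. To close the argument you need an explicit identifiability hypothesis, for example:
\begin{itemize}
\item the maximal finite group of pairs leaving the learned $Q_\theta$ equivariant is a copy of $\langle (T^*,\Pi^*)\rangle$ whose $W$-part has the same character as $W^*$;
\item a non-degeneracy condition forcing the learned limit to be that maximal group rather than a subgroup.
\end{itemize}

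The paper's proof does not supply this either. After Lemmas~\ref{lem:finite-group} and~\ref{lem:commute} it simply asserts that ``equivalence of finite group representations'' yields $S$ and $\varphi$, so it takes the equivalence as given. Your averaging-plus-polar construction of an orthogonal intertwiner goes beyond what the paper provides. It is the right tool once an invertible intertwiner is known to exist.

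Step~3 has problems independent of this.
\begin{itemize}
\item The identity $f_\theta(T_g s)=W^{(\infty)}(h)f_\theta(s)$ is borrowed from the Procrustes anchoring of Algorithm~\ref{alg:group-closure-full}. That belongs to the hard-enforcement route and gives only approximate alignment (Theorem~\ref{thm:closure-consistency-full}, item~3). The objective in the proof of Theorem~\ref{thm:ident} contains no term tying $W_k$ to any state map.
\item Likewise, $f_\theta(T_g s)=W^*(g)f_\theta(s)$ presupposes that the learned encoder is itself equivariant, which expressivity does not give.
\item Your ``basis change $M$ between the two encoders'' is undefined, because both relations use the same $f_\theta$. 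If they held, $W^{(\infty)}(h)$ and $W^*(\varphi(h))$ would simply agree on the feature span $U$, and the averaging would be vacuous.
\item The fallback fails. Even granting that $U$ is invariant, nothing constrains $W^{(\infty)}(h)$ on $U^\perp$ beyond the group axioms. Its restriction there can be trivial while that of $W^*(\varphi(h))$ is not. The characters on $\mathbb{R}^d$ then differ, and no $S\in O(d)$ satisfying \eqref{eq:conj} exists. Padding with the identity gives a conjugacy only when the two restrictions to $U^\perp$ already coincide.
\end{itemize}
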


\begin{proof}[Proof of Lemma~\ref{lem:conjugacy}]
By Lemma~\ref{lem:finite-group}, $\rho(H)$ is a finite group representation.  
By Lemma~\ref{lem:commute}, it commutes with environment dynamics, hence aligns with $Q^*$.  
Equivalence of finite group representations means there exists a conjugacy on the state side and relabeling on the generator set.  
Concretely, one finds $S\in O(d)$ and a relabeling $\varphi$ such that \eqref{eq:conj} holds.  
Note: on the action side, $\Pi$ is constrained to permutations with $\Pi_1=I$, so no conjugacy is allowed there.  
On the representation side, conjugacy $S$ corresponds to change of basis in feature space.  
Thus $\rho(H)$ and $\rho^*$ differ only by isomorphism/relabeling/conjugacy.
\end{proof}

By Lemmas~\ref{lem:limit}--\ref{lem:finite-group}, any nearly optimal sequence under $\gamma_{\mathrm{grp}}\!\to\!\infty$ admits a convergent subsequence, whose limit induces a representation $\rho(H)$ of a finite group $H$. By Lemma~\ref{lem:commute}, the limiting representation $\rho(H)$ commutes with the environment dynamics and enforces $Q_\theta$ to be equivariant and aligned with $Q^*$ (cf.\ Theorem~\ref{thm:eq-consistency}, the mechanism of the ``unique equivariant fixed point''). By Lemma~4, $\rho(H)$ is \emph{equivalent} to the true equivariant representation $\rho^*$ induced by $(T^*,\Pi^*)$, in the sense that there exists a relabeling of generators $\varphi$ and a conjugacy by some $S\!\in\!O(d)$ such that \eqref{eq:conj} holds. Therefore, when the group-sample regularization is sufficiently strong, $f_\theta$ has enough expressive power, and empirical-to-population consistency holds (cf.\ Theorem~\ref{thm:joint-consistency} on capacity and uniform convergence), the learned $\{(W_k,\Pi_k)\}$ \emph{converges} to a \emph{finite generated approximation} of $(T^*,\Pi^*)$, and is \emph{unique up to isomorphism, relabeling, and representation conjugacy}---which is precisely the strict meaning of the theorem’s claim on \emph{Identifiability}. \qedhere

\subsection{Proof of Theorem~\ref{thm:joint-consistency}}

In a $\mathcal{G}$-invariant MDP (\cref{def:G-invariance}), assume that $Q_\theta$ has sufficient expressive power and that $\{(W_k,\Pi_k)\}_{k=1}^K$ can approximate some finitely generated subgroup of $\mathcal{G}$; the data are i.i.d.\ (or uniformly mixing) off-policy samples; and the one-step ``push'' $\Delta_i$ satisfies Massart-type noise and margin conditions. Training minimizes the total objective (Eq.~\ref{eq:fullobj}), and we take the isotonic penalty $\mu\!\to\!\infty$ (Eq.~\ref{eq:L-iso}). We need to show that there exists a capacity term $\mathfrak{R}(N)$ and a constant $C>0$ such that for $N$ samples,
\begin{equation}\label{eq:goal}
\Pr\!\Big(
\underbrace{\max_{g\in\mathcal{G}}\ 
\|Q_\theta(T_g s,\Pi_g a)-Q_\theta(s,a)\|_2}_{\text{equivariance residual}}
+\underbrace{\frac{1}{|E(D)|}\sum_{(u\to v)\in D}\big[\delta+\hat V(v)-\hat V(u)\big]_+}_{\text{partial-order violation}}
\ \le\ C\sqrt{\frac{\mathfrak{R}(N)}{N}}
\Big)\ \xrightarrow[N\to\infty]{} 1,
\end{equation}
and further deduce that $\{(W_k,\Pi_k)\}$ (up to isomorphism/relabeling) converges to the true equivariances, while the greedy DAG edges (\S4.2) concentrate, with vanishing probability error, on an acyclic subset of the true partial order. This theorem is stated in the paper as Theorem~\ref{thm:joint-consistency} (with the exact form of assumptions and conclusion), together with a reasoning summary (capacity and uniform convergence, Massart noise for sign recovery, greedy DAG converging to the true partial-order subset, and group-sample regularization with equivariance loss selecting transformations commuting with the dynamics).

\medskip
\noindent\textbf{Proof Structure.}  
We divide the argument into four steps: (\emph{i}) define restricted function classes and establish population–empirical uniform convergence (via Rademacher complexity bounds);  
(\emph{ii}) use the penalty consistency as $\mu\!\to\!\infty$ to reduce the isotonic surrogate to the hard isotonic projection, thereby controlling the ``partial-order violation'';  
(\emph{iii}) Massart noise and margin assumptions ensure that the signs of $\Delta_i$ can be consistently recovered, which together with greedy acyclicity removal (\S4.2) concentrates the greedy DAG on an acyclic subset of the true partial order;  
(\emph{iv}) group-sample regularization forces $(W_k,\Pi_k)$ to contract to a ``near-group,'' while the equivariance loss selects those commuting with the dynamics; under statistical consistency this drives the equivariance residual to zero and (by Theorem~\ref{thm:ident}) identifies the true equivariances up to isomorphism/relabeling. Finally, combining the bounds yields \eqref{eq:goal}.

Assume the discounted return is bounded: $\|Q_\theta(s,\cdot)\|_\infty\le B_Q$ (e.g.\ $B_Q=R_{\max}/(1-\gamma)$).  
We define two types of sample losses:
\[
\ell_{\mathrm{eq}}(s;\theta,W,\Pi)\ :=\ \frac{1}{K}\sum_{k=1}^{K}\big\|Q_\theta(s,\cdot)-\Pi_k^\top Q_\theta(T_k s,\cdot)\big\|_2^2,
\]
\[
\ell_{\mathrm{iso}}(\mathcal{B};\theta,\hat V)\ :=\ \frac{1}{|E(D)|}\sum_{(u\to v)\in D}\big[\delta+\hat V(v)-\hat V(u)\big]_+^2
\ +\ \frac{1}{B}\sum_{i=1}^B\big(\hat V(i)-V_\theta(i)\big)^2,
\]
where $\hat V$ comes from the isotonic subproblem (Eq.~\ref{eq:L-iso}; see Step II), and $V_\theta(i)=\max_a Q_\theta(s_i,a)$.  
Under the group-sample regularization constraints (Eq.~\ref{eq:group-reg}), we introduce the restricted classes
\[
\mathcal{F}_{\mathrm{eq}}(\varepsilon)\ :=\ \Big\{(\theta,W,\Pi):\ \mathbb{E}\,\ell_{\mathrm{eq}}\le \varepsilon,\ \ (W_k,\Pi_k)\ \text{satisfy group-sample soft constraints of radius } \le r\Big\},
\]
\[
\mathcal{F}_{\mathrm{iso}}(\mu)\ :=\ \Big\{(\theta,\hat V):\ \text{minimizers/near-minimizers of the isotonic penalty (Eq.~\ref{eq:L-iso}) with weight }\mu\Big\}.
\]
We now state the unified generalization lemma, where the capacity term is denoted $\mathfrak{R}(N)$ (the Rademacher complexity of the joint class; see the capacity term in Theorem~\ref{thm:joint-consistency} of the paper).

\begin{lemma}[Uniform Convergence for Restricted Classes]\label{lem:uniform}
Suppose $\ell_{\mathrm{eq}}$ and $\ell_{\mathrm{iso}}$ are both $L$-Lipschitz in $Q_\theta$ and $(\hat V,V_\theta)$, respectively, and are bounded (controlled by $B_Q$). Then there exists a constant $c>0$ such that for any $\delta\in(0,1)$, with probability at least $1-\delta$,
\begin{align}
\sup_{(\theta,W,\Pi)\in\mathcal{F}_{\mathrm{eq}}(\varepsilon)}
\Big|\mathbb{E}\,\ell_{\mathrm{eq}}-\tfrac{1}{N}\sum_{i=1}^N\ell_{\mathrm{eq}}(s_i)\Big|
&\ \le\ c\,\sqrt{\frac{\mathfrak{R}(N)+\log(1/\delta)}{N}},\label{eq:uc-eq}\\
\sup_{(\theta,\hat V)\in\mathcal{F}_{\mathrm{iso}}(\mu)}
\Big|\mathbb{E}\,\ell_{\mathrm{iso}}-\tfrac{1}{N}\sum_{i=1}^N\ell_{\mathrm{iso}}(\mathcal{B}_i)\Big|
&\ \le\ c\,\sqrt{\frac{\mathfrak{R}(N)+\log(1/\delta)}{N}}.\label{eq:uc-iso}
\end{align}
\end{lemma}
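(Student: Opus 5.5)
This is a standard symmetrization argument applied separately to the two restricted classes, followed by a union bound. Write $\ell$ for either $\ell_{\mathrm{eq}}$ or $\ell_{\mathrm{iso}}$, and $\mathcal{F}$ for the corresponding class $\mathcal{F}_{\mathrm{eq}}(\varepsilon)$ or $\mathcal{F}_{\mathrm{iso}}(\mu)$. The plan is to use McDiarmid's bounded-differences inequality followed by Rademacher symmetrization and Talagrand's contraction lemma.

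\emph{Boundedness.} First I will check that each loss is uniformly bounded. Because $\|Q_\theta(s,\cdot)\|_\infty\le B_Q$ and $\Pi_k$ is doubly stochastic, $\|\Pi_k^\top Q_\theta(T_ks,\cdot)\|_\infty\le B_Q$. This gives $0\le \ell_{\mathrm{eq}}\le 4|\mathcal{A}|B_Q^2=:M$.

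For $\ell_{\mathrm{iso}}$ I will restrict $\hat V$ to the bounded set supplied by Lemma~\ref{lem:feasible-limit}, whose bound on $\|\hat V_\mu\|_2$ is uniform in $\mu$. I will also cap the hinge term using $\hat V\in[-C,C]^B$. Together these give an $M'$, independent of $\mu$, with $0\le\ell_{\mathrm{iso}}\le M'$.

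\emph{Concentration.} Consider $\Phi(z_{1:N})=\sup_{\mathcal{F}}\bigl(\mathbb{E}\ell-\frac1N\sum_i\ell(z_i)\bigr)$. Changing a single sample (a state $s_i$ or a minibatch $\mathcal{B}_i$) moves $\Phi$ by at most $M/N$. McDiarmid's inequality then gives
\[
\Phi\le\mathbb{E}\Phi+M\sqrt{\log(1/\delta)/(2N)}
\]
with probability $1-\delta/2$. The same argument applies to $-\Phi$. In the uniformly mixing case I will replace McDiarmid's inequality with its blocked version, which costs only a constant depending on the mixing coefficients.

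\emph{Symmetrization and contraction.} The standard ghost-sample argument gives $\mathbb{E}\Phi\le 2\,\mathfrak{R}_N(\ell\circ\mathcal{F})$. Each loss is $L$-Lipschitz in the network outputs:
\begin{itemize}
\item in $Q_\theta$ and $Q_\theta\circ T_k$ for $\ell_{\mathrm{eq}}$, where $\Pi_k$ is a contraction in $\ell_2$;
\item in $(\hat V,V_\theta)$ for $\ell_{\mathrm{iso}}$, since the squared hinge is $2M'$-Lipschitz on the bounded range and $x\mapsto\max_a$ is $1$-Lipschitz.
\end{itemize}
The vector-valued contraction inequality (Maurer's) then yields $\mathfrak{R}_N(\ell\circ\mathcal{F})\le \sqrt2\,L\,\mathfrak{R}_N(\mathcal{F})$. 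Here $\mathfrak{R}_N(\mathcal{F})$ is the complexity of the output class; I will define it so that $\mathfrak{R}_N(\mathcal{F})\le\sqrt{\mathfrak{R}(N)/N}$, i.e.\ $\mathfrak{R}(N)$ is the unnormalized (e.g.\ Dudley-entropy) capacity of the joint class.

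\emph{Conclusion.} Combining the three steps and using $\sqrt a+\sqrt b\le\sqrt{2(a+b)}$ gives \eqref{eq:uc-eq} and \eqref{eq:uc-iso}, each with probability $1-\delta/2$. A union bound over the two classes then gives probability $1-\delta$. The constant $c$ absorbs $M$, $M'$ and $L$.

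\emph{Main obstacle.} The hard step is $\ell_{\mathrm{iso}}$. The graph $D$ and $\hat V$ are themselves data-dependent, so the hypothesis depends on the sample. I would handle this by treating each batch $\mathcal{B}_i$ as one i.i.d.\ super-sample and defining the hypothesis as the deterministic map from a batch to the pair (greedy DAG, penalty minimizer), parameterized by $\theta$. The greedy DAG depends on $\theta$ only through the finitely many orderings of the $B$ scores, so the class is a finite union over at most $(B^2)!$ edge orders. Each piece is Lipschitz in $\theta$: $\hat V_\mu$ is the minimizer of a $2/B$-strongly convex objective and is therefore $O(1)$-Lipschitz in $V_\theta$. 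I expect this finite-union argument to add only a $\log$ factor, which is absorbed into $\mathfrak{R}(N)$.
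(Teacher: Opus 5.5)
Your main line (bounded losses, McDiarmid, ghost-sample symmetrization, vector contraction, and absorbing the output-class complexity into $\mathfrak{R}(N)$) is the route the paper takes. The paper's proof is a one-line appeal to ``symmetrization and Talagrand-type contraction,'' with everything else absorbed into $\mathfrak{R}(N)$. Several of your details are correct additions the paper leaves implicit:
\begin{itemize}
\item the bounded-differences step;
\item the $\mu$-uniform bound on $\hat V_\mu$ via Lemma~\ref{lem:feasible-limit};
\item the $1$-Lipschitzness of $V_\theta\mapsto\hat V_\mu$ for fixed $D$, which holds because it is a proximal map of a convex function.
\end{itemize}

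The step that fails as written is your treatment of the data-dependent DAG, which you rightly flag as the crux. The class $\{\mathcal{B}\mapsto\ell_{\mathrm{iso}}(\mathcal{B};\theta)\}_\theta$ is \emph{not} a finite union, over edge orders $\pi$, of classes $\{\mathcal{B}\mapsto\ell^{\pi}_{\mathrm{iso}}(\mathcal{B};\theta)\}_\theta$. Which order is active depends jointly on $(\theta,\mathcal{B})$. So does the candidate edge set, through $\mathrm{sign}(\Delta_i)$, the $k$-NN cosine comparisons on $f_\theta$, and the percentile cutoff. A single $\theta$ therefore uses different pieces on different batches.

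After conditioning on the sample, what you must union-bound over is the set of joint patterns $(D(\theta,\mathcal{B}_1),\dots,D(\theta,\mathcal{B}_N))$ as $\theta$ varies, i.e.\ a growth function $\Gamma_N$. With only the per-batch count you get $\log\Gamma_N\le N\log(\#\text{DAGs per batch})$. The Massart term $M\sqrt{2\log\Gamma_N/N}$ is then $O(1)$, not a log factor, and the bound is vacuous.

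Within a fixed pattern, Maurer's inequality does apply, because the per-sample maps $\ell^{D_i}$ are then hypothesis-independent. Across patterns, $\ell_{\mathrm{iso}}$ is discontinuous in the network outputs, which is exactly why a global contraction step cannot work. You have two ways to repair this:
\begin{itemize}
\item Show that $\Gamma_N$ grows only polynomially in $N$, and add $\log\Gamma_N$ to $\mathfrak{R}(N)$. This needs a VC or pseudo-dimension bound for the sign patterns of $w_i(\theta)-w_j(\theta)$, $\Delta_i(\theta)$, and the $k$-NN comparisons, e.g.\ Warren- or Goldberg--Jerrum-type bounds for the network class.
\item Do what the paper implicitly does: define $\mathfrak{R}(N)$ as the complexity of the composite loss class $\ell_{\mathrm{iso}}\circ\mathcal{F}_{\mathrm{iso}}(\mu)$, with $D$ regarded as part of the hypothesis. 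Then McDiarmid plus symmetrization suffice, and no contraction through $D$ is needed.
\end{itemize}

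A smaller issue of the same kind: $\Pi_k$ ranges over the hypothesis class. It should therefore be folded into the output map $s\mapsto\bigl(Q_\theta(s,\cdot),\Pi_k^\top Q_\theta(T_ks,\cdot)\bigr)_k$, not into the outer Lipschitz function, since Maurer's inequality requires those outer maps to be hypothesis-independent.
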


\begin{proof}[Proof of Lemma~\ref{lem:uniform}]
Standard symmetrization and Talagrand-type contraction inequalities yield Rademacher bounds for squared-norm (or squared hinge) losses; boundedness of the losses follows from $B_Q$ and the clipping parameter $\delta$. The complexity of the restricted classes $\mathcal{F}_{\mathrm{eq}}(\varepsilon)$ and $\mathcal{F}_{\mathrm{iso}}(\mu)$ is absorbed into the term $\mathfrak{R}(N)$.
\end{proof}

Consider the isotonic problem on the batch DAG $D$ (Eq.~\ref{eq:L-iso}):
\begin{equation}\label{eq:iso}
\hat V^\star\in\arg\min_{\hat V}\ \frac{1}{B}\sum_{i=1}^B\big(\hat V(i)-V_\theta(i)\big)^2
\ +\ \mu\cdot \frac{1}{|E(D)|}\sum_{(u\to v)\in D}\big[\delta+\hat V(v)-\hat V(u)\big]_+^2.
\end{equation}
Let $\mathrm{viol}(\hat V;D)=\tfrac{1}{|E(D)|}\sum_{(u\to v)\in D}[\delta+\hat V(v)-\hat V(u)]_+$ denote the average violation.  
By the standard penalty method (the paper’s explanation of Eq.~\ref{eq:L-iso}: ``the squared hinge penalty asymptotically approximates the indicator of the feasible cone''), as $\mu\to\infty$,  
$\hat V^\star$ converges to the \emph{exact} DAG isotonic projection, and the violation can be made arbitrarily small (see the discussion following Eq.~\ref{eq:L-iso}).

\begin{lemma}[Penalty Consistency]\label{lem:penalty}
Fix $D$ and finite $B$. Let $\hat V_\mu$ be a minimizer of \eqref{eq:iso}. Then as $\mu\to\infty$,  
any cluster point $\hat V_\infty$ is a solution of the \emph{exact isotonic projection}, and $\mathrm{viol}(\hat V_\mu;D)\to 0$.
\end{lemma}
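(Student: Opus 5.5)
The plan is to reuse the argument behind Theorem~\ref{thm:penalty}, now phrased for the average (unsquared) violation $\mathrm{viol}(\hat V;D)$. Everything takes place on the fixed DAG $D$ with a finite batch. We work with the feasible cone $\mathcal{C}=\{\hat V: \hat V(u)\ge \hat V(v)+\delta,\ \forall (u\to v)\in D\}$ and the two pieces of the objective in \eqref{eq:iso}: the fidelity term $F(\hat V)=\tfrac1B\|\hat V-V_\theta\|_2^2$ and the squared-hinge penalty $G(\hat V)=\tfrac1{|E(D)|}\sum_e[\delta+\hat V(v)-\hat V(u)]_+^2$.

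\textbf{Step 1 (feasibility and uniqueness).} Lemma~\ref{lem:non-empty} gives $\mathcal{C}\neq\emptyset$, using a topological-order construction, since $D$ is acyclic. The function $F$ is strictly convex and coercive, and $\mathcal{C}$ is closed and convex. Hence the exact projection $\hat V^\star=\arg\min_{\mathcal{C}}F$ exists and is unique (Lemma~\ref{lem:unique}).

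\textbf{Step 2 (comparison bound).} Comparing the penalty minimizer $\hat V_\mu$ with the feasible point $\hat V^\star$, which has $G(\hat V^\star)=0$, gives $F(\hat V_\mu)+\mu G(\hat V_\mu)\le F(\hat V^\star)$. This yields $F(\hat V_\mu)\le F(\hat V^\star)$ and $G(\hat V_\mu)\le F(\hat V^\star)/\mu$, as in Lemma~\ref{lem:violate-bound}.

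\textbf{Step 3 (boundedness and limit points).} The first inequality bounds $\|\hat V_\mu\|_2$ uniformly in $\mu$ by coercivity of $F$, so cluster points exist. Let $\overline V$ be a cluster point. Continuity of $G$ together with $G(\hat V_\mu)\to0$ gives $\overline V\in\mathcal{C}$. Continuity of $F$ gives $F(\overline V)\le F(\hat V^\star)$. By uniqueness, $\overline V=\hat V^\star$, which is the argument of Lemmas~\ref{lem:feasible-limit}--\ref{lem:lim-compare}. Since every cluster point is $\hat V^\star$, the whole family $\hat V_\mu$ converges to it.

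\textbf{Step 4 (converting squared-hinge control into average violation).} This is the only point where the present statement departs from Theorem~\ref{thm:penalty}. Apply Cauchy--Schwarz over edges:
\[
\mathrm{viol}(\hat V_\mu;D)=\frac{1}{|E(D)|}\sum_e[\cdot]_+\le\Bigl(\frac{1}{|E(D)|}\sum_e[\cdot]_+^2\Bigr)^{1/2}=\sqrt{G(\hat V_\mu)}\le\sqrt{F(\hat V^\star)/\mu}\to0 .
\]
This also gives an explicit rate $O(\mu^{-1/2})$, which is what Step~II of the proof of Theorem~\ref{thm:joint-consistency} will consume. The bound is uniform over minimizers, so it holds whether or not the penalty minimizer is unique; the penalty objective is convex, so it need not be strictly convex.

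I expect no real obstacle. The most delicate points are two. First, the bound must hold for \emph{every} minimizer $\hat V_\mu$, not a selected one; this holds because the comparison inequality in Step~2 only uses optimality. Second, $F(\hat V^\star)$ must be finite and independent of $\mu$; this holds because $D$, $B$, $V_\theta$, and $\delta$ are fixed.
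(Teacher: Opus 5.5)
Your proof is correct and follows essentially the paper's route. The paper's sketch argues that a non-vanishing violation would make the penalty term grow linearly in $\mu$, contradicting optimality against a feasible point; that is exactly your comparison $\mu G(\hat V_\mu)\le F(\hat V^\star)$ combined with $\mathrm{viol}(\hat V_\mu;D)\le\sqrt{G(\hat V_\mu)}$. Your Step 3 adds something the lemma's sketch lacks: the sketch concludes optimality of cluster points from feasibility alone, while you supply $F(\overline V)\le F(\hat V^\star)$ plus uniqueness, as in the proof of Theorem~\ref{thm:penalty}.

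One harmless slip: $F+\mu G$ is in fact $\tfrac{2}{B}$-strongly convex, so $\hat V_\mu$ is unique. Your argument is fine because it does not rely on this.
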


\begin{proof}[Proof of Lemma~\ref{lem:penalty}]
This follows from the standard quadratic penalty consistency argument. If there exists $\epsilon>0$ such that $\mathrm{viol}(\hat V_\mu;D)\not\to 0$, then the quadratic penalty term would grow at least linearly in $\mu$, contradicting the optimality of $\hat V_\mu$ (relative to any feasible $\hat V$). Any cluster point must satisfy all edge constraints $\hat V(u)\ge \hat V(v)+\delta$, and is therefore a solution of the exact problem.
\end{proof}
By Lemma~\ref{lem:penalty}, when training schedules $\mu\!\uparrow\!\infty$ as described in the paper (Eq.~\ref{eq:L-iso} and total objective Eq.~\ref{eq:fullobj}),  
the \emph{partial-order violation} of $\hat V$ converges to $0$ in expectation, and by Lemma~\ref{lem:uniform} this transfers from empirical to population error.
Assume there exists $\eta<\tfrac12$ and a margin $\gamma_0>0$ such that
\[
\Pr\big[\mathrm{sign}(\Delta_i)\neq \mathrm{sign}(\Delta_i^\star)\mid s_i\big]\le \eta,\qquad
\big|\mathbb{E}[\Delta_i\mid s_i]\big|\ge \gamma_0.
\]
This is the condition of Theorem~\ref{thm:joint-consistency} (Eq.~\ref{eq:joint-consistency1}).  
For each candidate edge $(u\to v)$ (from batch $k$NN and confidence-weight gating, \S4.2),  
the sign test is a binary classification problem under Massart noise. By Bernstein/Hoeffding bounds and $\eta<1/2$,  
for fixed $(u,v)$, as $N\to\infty$ the empirical sign $\widehat{\mathrm{sign}}(\Delta)$ converges exponentially fast to the true sign $\mathrm{sign}(\Delta^\star)$,  
with the margin $\gamma_0$ ensuring stability in local neighborhoods (avoiding ``fragile ties'').  
Since the greedy strategy is ``insert by weight order, discard on cycle'' (\S4.2), under asymptotically correct signs, all accepted edges remain acyclic,  
and the greedy DAG converges to an \emph{acyclic subset} of the true partial order (as explicitly stated in Theorem~\ref{thm:dagify}).  
Combined with Lemma~\ref{lem:penalty} on isotonic projection (\S4.3 ensures consistency with the true order once cycles are removed),  
the partial-order violation term
\[
\underbrace{\frac{1}{|E(D)|}\sum_{(u\to v)\in D}\big[\delta+\hat V(v)-\hat V(u)\big]_+}_{\text{second term in Eq.~\ref{eq:joint-consistency2}}}
\]
converges to $0$ with high probability as $N\to\infty$ (and by Lemma~\ref{lem:uniform} is bounded by $O(\sqrt{\mathfrak{R}(N)/N})$).

Group-sample regularization (Eq.~\ref{eq:group-reg}) with $\gamma_{\mathrm{grp}}>0$ forces $\{(W_k,\Pi_k)\}$ into a ``near-group'' set,  
while the equivariance loss (Eq.~\ref{eq:Leq-vector}) enforces
\[
Q_\theta(s,\cdot)\ \approx\ \Pi_k^\top Q_\theta(T_k s,\cdot)\quad (\forall k),
\]
i.e.\ ``transform-then-evaluate'' equals ``evaluate-then-permute.''  
Thus at the population minimizer, $Q_\theta$ is equivariant w.r.t.\ $\{(W_k,\Pi_k)\}$.  
The paper explains that group-sample regularization collapses the search space to near-groups, while the equivariance loss ``selects'' the part commuting with the environment dynamics (see Theorem~\ref{thm:ident} discussion).  
By \emph{uniform convergence} (Lemma~\ref{lem:uniform}), the empirical equivariance residual
\[
\max_{g\in\mathcal{G}}\|Q_\theta(T_g s,\Pi_g a)-Q_\theta(s,a)\|_2
\]
transfers to a population bound with rate $O(\sqrt{\mathfrak{R}(N)/N})$ (as stated at the end of Theorem~\ref{thm:joint-consistency}).  
At the same time, Theorem~\ref{thm:ident} (Identifiability) ensures that when $f_\theta$ is sufficiently expressive and group-sample regularization sufficiently strong, $\{(W_k,\Pi_k)\}$ converges (up to isomorphism/relabeling) to a finite generating approximation of the true equivariance $(T^\star,\Pi^\star)$;  
the equivariance loss selects those commuting with the dynamics, so the limit matches the ``correct'' symmetry (up to relabeling/conjugacy).  
Hence, as $N\to\infty$, the equivariance residual (the first term in Eq.~\ref{eq:joint-consistency2}) $\to 0$, and the transformations themselves are identifiable.

\paragraph{Union Bound and Probabilistic Limit.}
Combining Step I (uniform convergence, Lemma~\ref{lem:uniform}), Step II (penalty consistency, Lemma~\ref{lem:penalty}),  
Step III (Massart noise recovery and greedy DAG convergence), and Step IV (equivariance residual convergence and identifiability),  
by a union bound we obtain: for any $\delta\in(0,1)$, with probability at least $1-\delta$,
\[
\max_{g\in\mathcal{G}}\|Q_\theta(T_g s,\Pi_g a)-Q_\theta(s,a)\|_2
+\frac{1}{|E(D)|}\sum_{(u\to v)\in D}\big[\delta+\hat V(v)-\hat V(u)\big]_+
\ \le\ C\sqrt{\frac{\mathfrak{R}(N)+\log(1/\delta)}{N}},
\]
where $C>0$ depends on Lipschitz constants, $B_Q$, and regularization radii.  
Taking $\delta=e^{-\mathfrak{R}(N)}$ and absorbing into constants, we have
\[
\Pr\!\Big(\text{\eqref{eq:goal} holds}\Big)\ \ge\ 1-\exp\!\big(-\Omega(\mathfrak{R}(N))\big)\xrightarrow[N\to\infty]{}1.
\]
This matches the explicit statement of Theorem~\ref{thm:joint-consistency} (Eq.~\ref{eq:joint-consistency2}):  
\emph{both structural residuals} (equivariance residual $+$ partial-order violation) shrink to $0$ at rate $O(\sqrt{\mathfrak{R}(N)/N})$,  
and (by Theorem~\ref{thm:ident} and greedy DAG consistency) the transformations converge to the true equivariance (up to isomorphism),  
while the DAG edges concentrate on an acyclic subset of the true partial order.

\subsection{Proof of Theorem~\ref{thm:feasible-nonempty}}
Consider a batch $\mathcal{B}=\{s_1,\dots,s_B\}$ and let $V\in\mathbb{R}^B$ with $V(i)=V_\theta(s_i)$.  
Given a directed edge set $\mathcal{E}\subset [B]\times[B]$ (obtained by the ``greedy cycle-breaking'' procedure, i.e., discarding an edge whenever a cycle would be formed),  
for each edge $e=(u\!\to\! v)$ define the inequality constraint
\begin{equation}\label{eq:g-ineq}
g_e(V)\ :=\ \delta + V(v)-V(u)\ \le\ 0\qquad(\delta\ge 0),
\end{equation}
and stack them into a vector $g(V)\in\mathbb{R}^{|\mathcal{E}|}$.  

The equality constraints are given by
\begin{equation}\label{eq:h-eq}
h(V)\ :=\ \bigl(\,Q_\theta(T_g s,\Pi_g a)-Q_\theta(s,a)\,\bigr)_{g\in\mathcal{G}_\text{batch}}\ =\ 0,
\end{equation}
where $\mathcal{G}_\text{batch}$ denotes the finite set of near-equivariant transformations applied on the current batch.  

Finally, define the stacked constraints (with nonnegative slack variables) as
\begin{equation}\label{eq:c-stack}
c(V,\mu)\ :=\ \begin{bmatrix}g(V)+\mu\\[1pt] h(V)\end{bmatrix},\qquad \mu\in\mathbb{R}^{|\mathcal{E}|}_{\ge 0}.
\end{equation}

The theorem asserts two claims:  
(i) If $\mathcal{E}$ is obtained by the greedy ``cycle-breaking'' procedure and $\delta\ge 0$, then there exists $V$ such that $g(V)\le 0$.  
(ii) If there exists some $(\theta,\{T_g,\Pi_g\})$ that exactly satisfies the equivariance equalities \eqref{eq:h-eq} on this batch, then there exists $(V,\mu)$ with $\mu\ge 0$ such that $c(V,\mu)=0$.

\medskip
\noindent\textbf{Part I: Feasibility of $g(V)\le 0$ (DAG $\Rightarrow$ $\delta$-potential).}

\begin{lemma}[Greedy cycle-breaking $\Rightarrow$ Acyclicity]\label{lem:acyclic}
If the edge set $\mathcal{E}$ is obtained by the rule “insert edges in descending order of confidence score, discarding any edge that would form a cycle,” then $G=([B],\mathcal{E})$ is a DAG (acyclic).
\end{lemma}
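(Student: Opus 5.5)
The plan is an induction over the greedy insertion sequence, using the single-edge cycle criterion (Lemma~\ref{lem:cycle-crit}) as the only graph-theoretic input. Fix the processing order $e^{(1)},\dots,e^{(M)}$ of candidate edges, sorted by descending confidence score with ties broken deterministically. Let $\mathcal{E}^{(t)}$ be the accepted set after step $t$, with $\mathcal{E}^{(0)}=\varnothing$. The invariant to carry is that $([B],\mathcal{E}^{(t)})$ contains no directed cycle.

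The base case is immediate, since the empty graph is acyclic. For the inductive step, assume $([B],\mathcal{E}^{(t-1)})$ is a DAG and split into two cases according to the rule. If $e^{(t)}$ is discarded, then $\mathcal{E}^{(t)}=\mathcal{E}^{(t-1)}$ and there is nothing to prove. If $e^{(t)}=(u\to v)$ is inserted, the rule guarantees that its insertion does not close a cycle. By Lemma~\ref{lem:cycle-crit}, this is equivalent to the absence of a directed $v\leadsto u$ path in $\mathcal{E}^{(t-1)}$. The lemma also shows that any cycle in $\mathcal{E}^{(t-1)}\cup\{e^{(t)}\}$ would have to use the new edge, which is exactly the cycle just excluded. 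Hence $\mathcal{E}^{(t)}$ is acyclic, and at $t=M$ we get that $G=([B],\mathcal{E})$ is a DAG. This is the same argument as Conclusion~1 in the proof of Theorem~\ref{thm:dagify}, so one could simply cite that theorem. I would still spell out the induction here, because Part~I of Theorem~\ref{thm:feasible-nonempty} will then apply the topological-order potential of Lemma~\ref{lem:non-empty} to $G$.

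The only genuine subtlety is edge cases of the procedure rather than the induction itself. Self-loops $(u\to u)$ count as cycles and are therefore always discarded. Duplicate candidate edges are harmless, because a repeated edge $(u\to v)$ creates a cycle only if a $v\leadsto u$ path exists, which would already contradict the invariant. Finally, reciprocal proposals $u\to v$ and $v\to u$ arising from different batch scores are handled by the rule: whichever is processed second closes a $2$-cycle and is skipped. I would remark briefly on these cases so that ``acyclic'' is unambiguous for the constraint set $\mathcal{E}$ used in \eqref{eq:g-ineq}.
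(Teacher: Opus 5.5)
Your proof is correct and is essentially the paper's argument. The paper argues by contradiction: take a cycle in the final $\mathcal{E}$ and its last-inserted edge $v_j\to v_{j+1}$, whose insertion was forbidden because the rest of the cycle already gave a $v_{j+1}\leadsto v_j$ path; this is the contrapositive form of your forward induction, and your induction is exactly Conclusion~1 in the proof of Theorem~\ref{thm:dagify}. Your edge-case remarks are fine. The one inaccuracy is peripheral: Part~I of Theorem~\ref{thm:feasible-nonempty} uses the longest-path potential $V(i)=\delta L(i)$, not the topological-order potential of Lemma~\ref{lem:non-empty}, though either would serve.
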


\begin{proof}[Proof of Lemma~\ref{lem:acyclic}]
Suppose, for contradiction, that the final edge set $\mathcal{E}$ contains a directed cycle 
$v_1\!\to v_2\!\to\cdots\!\to v_m\!\to v_1$.  
Let $v_j\!\to v_{j+1}$ (indices mod $m$) be the last edge added among this cycle.  
At the time this edge was considered, all other edges of the cycle were already in $\mathcal{E}$,  
so $v_{j+1}$ was reachable from $v_j$. Adding $v_j\!\to v_{j+1}$ would therefore have closed a cycle, contradicting the rule that discards cycle-forming edges.  
\end{proof}

Thus $G$ is a finite DAG, and hence admits a topological order $\tau:[B]\to\{1,\dots,B\}$ (so that $u\!\to v$ implies $\tau(u)<\tau(v)$).  
Define the \emph{longest path length} from node $i$:
\begin{equation}\label{eq:height}
L(i)\ :=\ \max\bigl\{|p|:\ p \text{ is a directed path starting at } i,\ |p|=\text{number of edges}\bigr\}
\ \in\ \{0,1,\dots,B-1\}.
\end{equation}
Since $G$ is finite and acyclic, $L(i)$ is well-defined. For every edge $u\!\to v$, appending $u\!\to v$ in front of a longest path from $v$ shows that
\begin{equation}\label{eq:height-edge}
L(u)\ \ge\ 1+L(v)\quad \Longleftrightarrow\quad L(u)-L(v)\ \ge\ 1.
\end{equation}

\begin{lemma}[$\delta$-level potential function]\label{lem:delta-potential}
Let $V(i):=\delta\cdot L(i)$. Then for every edge $u\!\to v$, one has $V(u)-V(v)\ge \delta$, equivalently $g_{(u\to v)}(V)=\delta+V(v)-V(u)\le 0$.
\end{lemma}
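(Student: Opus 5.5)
The plan is to verify the edge inequality directly from the longest-path property \eqref{eq:height-edge}. Fix an arbitrary edge $u\!\to v$ in $\mathcal{E}$. Since $G$ is a finite DAG (Lemma~\ref{lem:acyclic}), the quantity $L(\cdot)$ in \eqref{eq:height} is a well-defined nonnegative integer at every node. Consequently $V(i):=\delta L(i)$ is a well-defined real vector in $\mathbb{R}^B$.

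The first step is to justify \eqref{eq:height-edge} carefully. Let $p$ be a longest directed path starting at $v$, so $|p|=L(v)$. Prepend the edge $u\!\to v$ to obtain the walk $u\to p$. This walk is a genuine directed path, i.e.\ it does not revisit $u$. Indeed, if $u$ appeared on $p$, then $u$ would be reachable from $v$, and together with the edge $u\!\to v$ this would give a directed cycle, contradicting acyclicity. This is exactly the single-edge cycle criterion of Lemma~\ref{lem:cycle-crit}. Hence $L(u)\ge 1+L(v)$.

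The second step multiplies this inequality by $\delta\ge 0$, which preserves its direction:
\[
V(u)-V(v)=\delta\bigl(L(u)-L(v)\bigr)\ge \delta .
\]
Rearranging gives $g_{(u\to v)}(V)=\delta+V(v)-V(u)\le 0$. The case $\delta=0$ is trivial, since then $V\equiv 0$ and every $g_e$ equals $0$. Because the edge was arbitrary, we obtain $g(V)\le 0$ componentwise.

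The only step needing real care is the path-extension argument in the first step, namely ensuring that prepending $u$ produces a simple path. That is precisely where acyclicity enters. Everything after it is arithmetic. I would also note that an additive constant can be added to $V$ without changing any $g_e$, which is useful later when combining with the equality constraints in Part~II.
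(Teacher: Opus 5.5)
Your proof is correct and follows the paper's argument: the paper likewise obtains $L(u)\ge 1+L(v)$ by prepending $u\to v$ to a longest path from $v$, multiplies by $\delta\ge 0$, and handles $\delta=0$ with $V\equiv 0$. Your extra check that the extended walk is simple is a harmless refinement. In a DAG every directed walk is automatically a path. Also, the cycle you exhibit lies in $G$ itself, so you do not actually need Lemma~\ref{lem:cycle-crit}, which concerns adding an edge not already in $E$.
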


\begin{proof}[Proof of Lemma~\ref{lem:delta-potential}]
From \eqref{eq:height-edge},
\[
V(u)-V(v)=\delta\bigl(L(u)-L(v)\bigr)\ \ge\ \delta,
\]
equivalent to $\delta+V(v)-V(u)\le 0$. For $\delta=0$, any constant potential $V\equiv 0$ suffices.
\end{proof}

Hence, by Lemma~\ref{lem:delta-potential}, choosing $V_\star(i)=\delta\,L(i)$ yields $g(V_\star)\le 0$, and the inequality system \eqref{eq:g-ineq} is feasible.  
This completes the first part of the theorem.

\medskip
\noindent\textbf{Part II: Existence of $(V,\mu)$ with $c(V,\mu)=0$ and $\mu\ge 0$.}

Assume there exists a parameter–transformation pair $(\theta^\dagger,\{T^\dagger_g,\Pi^\dagger_g\})$ that exactly satisfies the equivariance equalities \eqref{eq:h-eq} on the current batch:
\begin{equation}\label{eq:realizable}
Q_{\theta^\dagger}(T^\dagger_g s,\Pi^\dagger_g a)\ -\ Q_{\theta^\dagger}(s,a)\ =\ 0,
\qquad \forall\, g\in\mathcal{G}_\text{batch},\ \forall(s,a)\text{ in the batch}.
\end{equation}
From Part I, take $V_\star$ constructed by \eqref{eq:height}–\eqref{lem:delta-potential}, so that $g(V_\star)\le 0$.  
Define the slack variables
\begin{equation}\label{eq:mu-choose}
\mu_\star\ :=\ -\,g(V_\star)\ \ \in\ \mathbb{R}^{|\mathcal{E}|}_{\ge 0}.
\end{equation}
Then
\[
g(V_\star)+\mu_\star\ =\ 0,
\qquad
h(V_\star)\ =\ 0\ \ \text{(by \eqref{eq:realizable})}.
\]
Stacking as in \eqref{eq:c-stack},
\[
c(V_\star,\mu_\star)\ =\ \begin{bmatrix}0\\[1pt] 0\end{bmatrix},\qquad \mu_\star\ \ge\ 0.
\]
Thus we have constructed a pair $(V,\mu)$ with $c(V,\mu)=0$ and $\mu\ge 0$, proving the second part.

\paragraph{Remarks (completeness and numerical stability).}
(1) If $\mathcal{E}$ is empty, the construction degenerates to arbitrary $V$ and empty $\mu$, and only the equivariance constraint \eqref{eq:realizable} remains.  
(2) The definition of $L(i)$ is equivalent to using a topological ``layer index’’: $H(i)=\max\{\text{length of a path from $i$ to a sink}\}$, with $V(i)=\delta H(i)$.  
(3) For $\delta>0$, the margin $V(u)-V(v)\ge \delta$ enforces a strict gap, improving numerical stability of the isotonic projection.  
(4) Equivariance feasibility is only \emph{batch-wise}: we require \eqref{eq:realizable} only for pairs $(s,a)$ and $g$ in the batch, without conflicting with global consistency.  

\paragraph{Conclusion.}
Greedy cycle-breaking guarantees $\mathcal{E}$ is acyclic, so there exists a $\delta$–potential $V$ with $g(V)\le 0$;  
if the equivariance equalities are realizable on the batch, then with $\mu=-g(V)\ge 0$ we obtain $(V,\mu)$ such that $c(V,\mu)=0$.  
\qedhere
\medskip

\subsection{Proof of Theorem~\ref{thm:tangent-normal}}
On a given batch of samples, define
\[
c(V,\mu)\ :=\
\begin{bmatrix}
g(V)+\mu\\[2pt]
h(V)
\end{bmatrix}\in\mathbb{R}^{m_1+m_2},\qquad
J_c(V,\mu)\ :=\
\begin{bmatrix}
J_g(V) & I_{m_1}\\[2pt]
J_h(V) & 0
\end{bmatrix},
\]
where $g(V)=\bigl(\delta+V(v)-V(u)\bigr)_{(u\to v)\in \mathcal{E}}$ is the stack of monotonicity
inequalities with dimension $m_1=|\mathcal{E}|$, and $h(V)$ is the stack of equivariance equalities with dimension $m_2$.  
Here $J_g,J_h$ denote their Jacobians with respect to $V$.  
By definition,
\[
J_V:=\frac{\partial [g;h]}{\partial V}=\begin{bmatrix}J_g\\ J_h\end{bmatrix},
\]
and hence
\[
J_c=[\,J_V\ \ I_{m_1}\oplus 0_{m_2}\,].
\]

We further define the \emph{violation energy} as
\[
\Phi(V,\mu)\ :=\ \tfrac12\,\|c(V,\mu)\|_2^2.
\]

\paragraph{Update and Direction Decomposition.}
Given the current $(V,\mu)$, define
\[
\Delta_{\rm tan}\ :=\ B_uB_u^\top(-g_V),\qquad
\Delta_{\rm nor}\ :=\ -\,\lambda\,J_c^\dagger\,c(V,\mu),
\]
where $g_V=\nabla_V\mathcal{L}_{\rm TD}$, $B_u\in\mathbb{R}^{B\times d_u}$ is an orthonormal basis of $\mathrm{Null}(J_V)$ ($B_u^\top B_u=I$ and $J_VB_u=0$), and $J_c^\dagger$ denotes the Moore--Penrose pseudoinverse of $J_c$.  
The update is carried out in two substeps:
\begin{align}
\text{(A)}\quad (V,\mu)\ \mapsto\ (V_A,\mu_A)
&:=\Bigl(V+\eta\bigl(\Delta_{\rm tan}+\Delta_{\rm nor}^{(V)}\bigr),\ \mu+\eta\,\Delta_{\rm nor}^{(\mu)}\Bigr),\label{eq:subA}\\
\text{(B)}\quad (V_A,\mu_A)\ \mapsto\ (V^+,\mu^+)
&:=\Bigl(V_A,\ [\,\mu_A-\eta_\mu\bigl(g(V_A)+\mu_A\bigr)\,]_+\Bigr),\label{eq:subB}
\end{align}
where $\Delta_{\rm nor}^{(V)}$ and $\Delta_{\rm nor}^{(\mu)}$ denote the $V$- and $\mu$-components of $\Delta_{\rm nor}$, respectively, and $[\cdot]_+$ is the coordinate-wise projection onto $\mathbb{R}_{\ge 0}$. The constants $\eta,\eta_\mu,\lambda>0$ are step sizes (contraction coefficients).

\medskip
\noindent\textbf{Part I: Tangential safety $J_V\Delta_{\rm tan}=0$.}  
By construction, $\Delta_{\rm tan}\in\mathrm{Null}(J_V)$, i.e.\ $J_V\Delta_{\rm tan}=0$. Therefore, the first-order perturbation along $\Delta_{\rm tan}$ does not affect the linearization of any \emph{equality} or \emph{inequality} constraints with respect to $V$ (both $g$ and $h$ have zero first-order change). This is exactly the meaning of “tangential safety.” Hence the first statement of the theorem holds.

\medskip
\noindent\textbf{Part II: Normal contraction leads to violation energy descent (linearization and quadratic remainder).}  
For $\Phi=\tfrac12\|c\|^2$, we apply a first–second order expansion.  
Let $L>0$ be a local Lipschitz constant of the Jacobian $J_c$, i.e.
\[
\|\,c(x+\Delta)-c(x)-J_c(x)\Delta\,\|\ \le\ \tfrac{L}{2}\,\|\Delta\|^2
\qquad\bigl(x=(V,\mu),\ \Delta=(\Delta_V,\Delta_\mu)\bigr).
\]
(In a finite batch and bounded neighborhood, $J_g,J_h$ are continuously differentiable, hence such an $L$ exists.)
Then the standard \emph{Descent Lemma} yields
\begin{equation}\label{eq:descent-lemma}
\Phi(x+\Delta)\ \le\
\Phi(x)\ +\ \underbrace{\big\langle J_c(x)^\top c(x),\ \Delta\big\rangle}_{\text{first-order term}}\ 
+\ \tfrac{L}{2}\,\|\Delta\|^2.
\end{equation}

We first analyze the \emph{non-projected} change in substep (A), i.e.\
$\Delta_A:=(\eta\Delta_{\rm tan}+\eta\Delta_{\rm nor}^{(V)},\ \eta\Delta_{\rm nor}^{(\mu)})$.
Observe that
\[
\big\langle J_c^\top c,\ \Delta_A\big\rangle
=\ \eta\,\underbrace{\big\langle J_c^\top c,\ (\Delta_{\rm tan},0)\big\rangle}_{\mathrm{(i)}}\
+\ \eta\,\underbrace{\big\langle J_c^\top c,\ \Delta_{\rm nor}\big\rangle}_{\mathrm{(ii)}}.
\]
For (i), we have
\[
\big\langle J_c^\top c,\ (\Delta_{\rm tan},0)\big\rangle
=\ \big\langle c,\ J_c(\Delta_{\rm tan},0)\big\rangle
=\ \big\langle c,\ \begin{bmatrix}J_V\Delta_{\rm tan}\\ 0\end{bmatrix}\big\rangle
=\ 0,
\]
since $J_V\Delta_{\rm tan}=0$.  

For (ii), using $\Delta_{\rm nor}=-\lambda J_c^\dagger c$ and the self-adjoint idempotence of the projection $P:=J_cJ_c^\dagger$, we obtain
\begin{align}
\big\langle J_c^\top c,\ \Delta_{\rm nor}\big\rangle
&=\ -\,\lambda\,\big\langle J_c^\top c,\ J_c^\dagger c\big\rangle
=\ -\,\lambda\,\big\langle c,\ J_cJ_c^\dagger c\big\rangle\notag\\
&=\ -\,\lambda\,\big\langle c,\ Pc\big\rangle
=\ -\,\lambda\,\|Pc\|_2^2.\label{eq:1st-nor}
\end{align}
On the other hand, $\|\Delta_A\|^2\le 2\eta^2\|\Delta_{\rm tan}\|^2+2\eta^2\|\Delta_{\rm nor}\|^2$, and
$\|\Delta_{\rm nor}\|=\lambda\|J_c^\dagger c\|$.  
By the lower bound of the smallest nonzero singular value $\sigma_{\min}=\sigma_{\min}(J_c)$ (positive in a local neighborhood), we have
\begin{equation}\label{eq:proj-lower}
\|Pc\|=\|J_cJ_c^\dagger c\|\ \ge\ \sigma_{\min}\,\|J_c^\dagger c\|.
\end{equation}

Combining \eqref{eq:1st-nor}, \eqref{eq:proj-lower}, and \eqref{eq:descent-lemma}, we obtain
\begin{align}
\Phi(V_A,\mu_A)
&\le\ \Phi(V,\mu)\ -\eta\lambda\,\|Pc\|^2\ +\ \tfrac{L}{2}\,\|\Delta_A\|^2\notag\\
&\le\ \Phi(V,\mu)\ -\eta\lambda\,\|Pc\|^2\ +\ L\eta^2\|\Delta_{\rm tan}\|^2\ +\ L\eta^2\lambda^2\|J_c^\dagger c\|^2\notag\\
&\le\ \Phi(V,\mu)\ -\eta\lambda\,\|Pc\|^2\ +\ L\eta^2\|\Delta_{\rm tan}\|^2\ +\ L\eta^2\lambda^2\sigma_{\min}^{-2}\|Pc\|^2.\label{eq:after-A}
\end{align}

Choosing $\eta>0$ sufficiently small such that
\begin{equation}\label{eq:eta-small}
L\eta^2\lambda^2\sigma_{\min}^{-2}\ \le\ \tfrac12\,\eta\lambda
\quad\Longleftrightarrow\quad
\eta\ \le\ \tfrac{\sigma_{\min}^2}{2L\lambda},
\end{equation}
we deduce from \eqref{eq:after-A} that
\begin{equation}\label{eq:after-A2}
\Phi(V_A,\mu_A)
\ \le\ \Phi(V,\mu)\ -\tfrac12\,\eta\lambda\,\|Pc\|^2\ +\ L\eta^2\|\Delta_{\rm tan}\|^2.
\end{equation}

Note that $L\eta^2\|\Delta_{\rm tan}\|^2$ is the quadratic remainder from “tangential movement.”  
It does \emph{not} affect first-order feasibility, and only contributes an $O(\eta^2)$ term in the energy.  
We absorb it into the final contraction constant.

\paragraph{Part III: Non-expansiveness of the $\mu$-projection substep (B) and further descent.}
Fix $V_A$, and consider $u:=g(V_A)+\mu_A$.  
The update of $\mu$ in substep (B) is
\[
\mu^+\ =\ \Pi_{\mathbb{R}^{m_1}_{\ge 0}}\bigl(\mu_A-\eta_\mu u\bigr),\qquad
\Pi_{C}\ \text{denotes the Euclidean projection onto a convex set $C$}.
\]
Hence
\[
g(V_A)+\mu^+\ =\ u\ +\ \bigl(\mu^-\!-\mu_A\bigr),
\quad \mu^-:=\Pi_{\mathbb{R}^{m_1}_{\ge 0}}\bigl(\mu_A-\eta_\mu u\bigr).
\]
By the \emph{non-expansiveness} and \emph{firm non-expansiveness} of projections, for any $y$ one has
$\|\Pi_C(x)-\Pi_C(y)\|\le\|x-y\|$ and
$\langle \Pi_C(x)-\Pi_C(y),\ x-y\rangle \ge \|\Pi_C(x)-\Pi_C(y)\|^2$.
Taking $y=\mu_A$ and $x=\mu_A-\eta_\mu u$, we obtain
\[
\|\,\mu^- - \mu_A\,\|\ \le\ \eta_\mu\,\|u\|\qquad\Rightarrow\qquad
\|\,g(V_A)+\mu^+\,\|\ \le\ \|\,u-\eta_\mu u\,\|\ =\ |1-\eta_\mu|\,\|u\|.
\]
Therefore, when $0<\eta_\mu\le 1$, we have
\begin{equation}\label{eq:mu-decrease}
\|\,g(V_A)+\mu^+\,\|^2\ \le\ \|\,g(V_A)+\mu_A\,\|^2.
\end{equation}
Since the equality block $h(V)$ remains unchanged in substep (B) (as $V$ is not updated), it follows that
\begin{equation}\label{eq:after-B}
\Phi(V^+,\mu^+)\ =\ \tfrac12\|g(V_A)+\mu^+\|^2+\tfrac12\|h(V_A)\|^2
\ \le\ \tfrac12\|g(V_A)+\mu_A\|^2+\tfrac12\|h(V_A)\|^2\ =\ \Phi(V_A,\mu_A).
\end{equation}

\paragraph{Part IV: Combining the two substeps and deriving a quantitative contraction constant.}
From \eqref{eq:after-B} and \eqref{eq:after-A2}, we obtain
\[
\Phi(V^+,\mu^+)\ \le\ \Phi(V_A,\mu_A)\ \le\
\Phi(V,\mu)\ -\tfrac12\,\eta\lambda\,\|Pc\|^2\ +\ L\eta^2\|\Delta_{\rm tan}\|^2.
\]
Using \eqref{eq:proj-lower}, $\|Pc\|\ge \sigma_{\min}\|J_c^\dagger c\|$, we deduce
\begin{equation}\label{eq:key-contraction}
\Phi(V^+,\mu^+)\ \le\ \Phi(V,\mu)\ -\tfrac12\,\eta\lambda\,\sigma_{\min}^2\,\|J_c^\dagger c\|^2\ +\ L\eta^2\|\Delta_{\rm tan}\|^2.
\end{equation}
Finally, by choosing $\eta>0$ sufficiently small (in addition to satisfying \eqref{eq:eta-small}), for example ensuring
\[
L\eta^2\|\Delta_{\rm tan}\|^2\ \le\ \tfrac14\,\eta\lambda\,\sigma_{\min}^2\,\|J_c^\dagger c\|^2,
\]
we can rewrite \eqref{eq:key-contraction} as
\[
\Phi(V^+,\mu^+)\ \le\ \Phi(V,\mu)\ -\ \underbrace{\tfrac14\,\eta\lambda\,\sigma_{\min}^2}_{=:\ \kappa}\ \|J_c^\dagger c(V,\mu)\|^2.
\]
Since $\Phi=\tfrac12\|c\|^2$, this matches exactly the form stated in the theorem:
\[
\tfrac12\|c(V^+,\mu^+)\|^2\ \le\ \tfrac12\|c(V,\mu)\|^2\ -\ \kappa\,\|J_c^\dagger c(V,\mu)\|^2,
\qquad
\kappa=\tfrac14\,\eta\lambda\,\sigma_{\min}^2(J_c)>0.
\]

\subsection{Proof of Theorem~\ref{thm:restricted-descent}}
In the current batch, stack the equality/inequality constraints as $c(V,\mu)=0$ (see the definition of the geometric feasible set in the main text),  
with the Jacobian with respect to $V$ denoted in block form as
\[
J_V=\frac{\partial [g;h]}{\partial V}\in\mathbb{R}^{(m_1+m_2)\times B}.
\]
At a point $(V,\mu)$ in the feasible set (or within a small neighborhood thereof), define
\[
\mathcal{T}\ :=\ \mathrm{Null}(J_V)\ \subset\ \mathbb{R}^{B}
\]
as the \emph{tangent space}, and let $P_{\mathcal{T}}:\mathbb{R}^{B}\to\mathcal{T}$ denote the \emph{orthogonal projection} onto $\mathcal{T}$,  
with $P_{\mathcal{T}}=P_{\mathcal{T}}^\top=P_{\mathcal{T}}^2$.  
Write the gradient of the TD objective with respect to $V$ as $g_V:=\nabla_V \mathcal{L}_{\mathrm{TD}}(V)$.  
The tangent gradient step is defined as
\begin{equation}\label{eq:tan-step}
d_{\mathrm{tan}}\ :=\ -\,P_{\mathcal{T}}\,g_V,\qquad
V^+\ :=\ V+\eta\,d_{\mathrm{tan}}\ =\ V-\eta\,P_{\mathcal{T}}\,g_V,
\end{equation}
where $\eta>0$ is the step size.  
Since $d_{\mathrm{tan}}\in \mathcal{T}=\mathrm{Null}(J_V)$, the first-order perturbation along \eqref{eq:tan-step} leaves all constraints unchanged to first order (tangent safety; see also the first statement of Theorem~\ref{thm:tangent-normal} above).

\paragraph{Restricted Smoothness \& Restricted Strong Convexity (or PL) Assumptions.}
There exists a constant $L_T>0$ such that for all tangent increments $d\in\mathcal{T}$, the \emph{restricted smoothness} (RS) condition holds:
\begin{equation}\label{eq:RS}
\mathcal{L}_{\mathrm{TD}}(V+d)\ \le\ \mathcal{L}_{\mathrm{TD}}(V)\ +\ \langle g_V,\ d\rangle\ +\ \tfrac{L_T}{2}\,\|d\|_2^2.
\end{equation}
Moreover, there exists a constant $\mu_T>0$ such that one of the following holds:
\begin{itemize}
\item \emph{Restricted Strong Convexity} (RSC):  
\(
\langle \nabla \mathcal{L}_{\mathrm{TD}}(V+d)-\nabla \mathcal{L}_{\mathrm{TD}}(V),\ d\rangle
\ \ge\ \mu_T \|d\|_2^2,\ \forall d\in\mathcal{T}.
\)

\item \emph{Restricted Polyak--Łojasiewicz (PL) Condition}:  
\begin{equation}\label{eq:RPL}
\frac{1}{2}\,\|P_{\mathcal{T}} g_V\|_2^2\ \ge\ \mu_T\Big(\mathcal{L}_{\mathrm{TD}}(V)-\mathcal{L}_{\mathrm{TD}}(V^\star)\Big),
\end{equation}
where $V^\star$ denotes a stationary point/minimizer on $\mathcal{T}$ (in the sense of the tangent space to the feasible manifold).
\end{itemize}
Note: \eqref{eq:RS} only requires smoothness along tangent directions; RSC/PL further connect the tangent gradient energy with geometric quantities (distance or suboptimality), and are used in convergence rate analysis. For the \emph{descent inequality in this theorem}, only \eqref{eq:RS} is needed; RSC/PL will be used for subsequent corollaries.

\paragraph{Core Step: Tangent RS $+$ Projected Gradient Step $\Rightarrow$ One-Step Descent Bound.}
Applying \eqref{eq:RS} to $d=d_{\mathrm{tan}}=-P_{\mathcal{T}}g_V$, we obtain
\begin{align}
\mathcal{L}_{\mathrm{TD}}(V^+)
&\le\ \mathcal{L}_{\mathrm{TD}}(V)\ +\ \langle g_V,\ -\eta\,P_{\mathcal{T}}g_V\rangle\ +\ \frac{L_T}{2}\,\|\eta\,P_{\mathcal{T}}g_V\|_2^2\notag\\
&=\ \mathcal{L}_{\mathrm{TD}}(V)\ -\eta\,\langle g_V,\ P_{\mathcal{T}}g_V\rangle\ +\ \frac{L_T\eta^2}{2}\,\|P_{\mathcal{T}}g_V\|_2^2.\label{eq:RS-apply}
\end{align}
Using the self-adjointness and idempotence of $P_{\mathcal{T}}$ ($P_{\mathcal{T}}^\top=P_{\mathcal{T}}$, $P_{\mathcal{T}}^2=P_{\mathcal{T}}$), we have
\[
\langle g_V,\ P_{\mathcal{T}}g_V\rangle\ =\ \langle P_{\mathcal{T}}g_V,\ P_{\mathcal{T}}g_V\rangle\ =\ \|P_{\mathcal{T}}g_V\|_2^2,
\]
so that \eqref{eq:RS-apply} becomes
\begin{equation}\label{eq:descent-core}
\mathcal{L}_{\mathrm{TD}}(V^+)\ \le\ \mathcal{L}_{\mathrm{TD}}(V)\ -\Big(\eta-\tfrac{L_T}{2}\eta^2\Big)\,\|P_{\mathcal{T}}g_V\|_2^2.
\end{equation}
Define
\begin{equation}\label{eq:gamma-choice}
\gamma(\eta)\ :=\ \eta-\tfrac{L_T}{2}\eta^2.
\end{equation}
When $0<\eta<\eta_0:=\tfrac{2}{L_T}$, we have $\gamma(\eta)>0$, so from \eqref{eq:descent-core} we obtain
\begin{equation}\label{eq:main-descent}
\mathcal{L}_{\mathrm{TD}}(V^+)\ \le\ \mathcal{L}_{\mathrm{TD}}(V)\ -\ \gamma(\eta)\,\|P_{\mathcal{T}}g_V\|_2^2.
\end{equation}
Writing $P_{\mathcal{T}}$ explicitly as $\mathrm{Proj}_{\mathrm{Null}(J_V)}$ yields the form stated in the theorem:
\[
\mathcal{L}_{\mathrm{TD}}(V^+)\ \le\ \mathcal{L}_{\mathrm{TD}}(V)\ -\ \gamma(\eta)\,\bigl\|\mathrm{Proj}_{\mathrm{Null}(J_V)}(g_V)\bigr\|_2^2,
\quad \text{where } \gamma(\eta)=\eta-\tfrac{L_T}{2}\eta^2>0.
\]

\paragraph{Remarks on RSC/PL Assumptions (Not Required but Provide Rates).}
Although \eqref{eq:main-descent} holds without assuming RSC/PL, under these conditions one can relate
$\|P_{\mathcal{T}}g_V\|^2$ to the ``distance/suboptimality'' and thereby derive convergence rates:
\begin{itemize}
\item If the restricted PL condition \eqref{eq:RPL} holds, then
\(
\|P_{\mathcal{T}}g_V\|_2^2\ \ge\ 2\mu_T\big(\mathcal{L}_{\mathrm{TD}}(V)-\mathcal{L}_{\mathrm{TD}}(V^\star)\big),
\)
plugging into \eqref{eq:main-descent} yields
\[
\mathcal{L}_{\mathrm{TD}}(V^+)-\mathcal{L}_{\mathrm{TD}}(V^\star)\ \le\
\bigl(1-2\mu_T\gamma(\eta)\bigr)\,\bigl(\mathcal{L}_{\mathrm{TD}}(V)-\mathcal{L}_{\mathrm{TD}}(V^\star)\bigr),
\]
which gives \emph{linear convergence} whenever $0<\eta<2/L_T$.
\item If restricted strong convexity holds (together with restricted smoothness), one obtains a similar linear rate controlled by the restricted condition number $\kappa_T:=L_T/\mu_T$.
\end{itemize}
These strengthen the conclusion of the theorem but are not needed for the validity of \eqref{eq:main-descent}.

\paragraph{Handling Tangential Feasibility and Normal Perturbations (Consistent with the Main Text).}
Since $d_{\mathrm{tan}}\in\mathcal{T}=\mathrm{Null}(J_V)$, the tangential step \eqref{eq:tan-step} leaves the first-order effect on the constraints unchanged (tangential safety).
In practice, if an additional ``normal contraction'' is employed for correction (see Theorem~\ref{thm:tangent-normal}),
its impact on $\mathcal{L}_{\mathrm{TD}}$ only appears at quadratic (or higher) order,
which can be absorbed into the remainder of $\gamma(\eta)$ by choosing sufficiently small $\eta$ and contraction coefficients.
(For a rigorous energy estimate, see the second-order remainder control in Theorem~\ref{thm:tangent-normal}.)

\paragraph{Conclusion.}
Let $\eta_0=\frac{2}{L_T}$. For any $0<\eta<\eta_0$, we obtain from \eqref{eq:main-descent} that
\[
\mathcal{L}_{\mathrm{TD}}(V^+)\ \le\ \mathcal{L}_{\mathrm{TD}}(V)\ -\ \gamma\,\bigl\|\mathrm{Proj}_{\mathrm{Null}(J_V)}(g_V)\bigr\|_2^2,
\qquad \gamma=\eta-\tfrac{L_T}{2}\eta^2>0,
\]
which is exactly the statement of the theorem. \qedhere

\subsection{Proof of Theorem~\ref{thm:closure-consistency-full}}
Assume the environment satisfies the $\mathcal{G}$-invariance in \S\ref{sec:Preliminaries}, 
and that the parameterization has sufficient expressive power (to be defined precisely below).  
We need to prove: after a single ``closure--alignment'' step,  
the batch-wise equivariance residual
\(
\|h(V)\|_2
\)
can be compressed below any pre-specified threshold $\epsilon>0$;  
at the same time, the mapping $g\mapsto (T_g,\Pi_g)$ preserves a ``near-group'' structure on the finitely generated set.

\paragraph{Setup and Notation.}
\begin{itemize}
\item Let the batch be $\mathcal{B}=\{s_i\}_{i=1}^B$, with shared representations $\phi_\theta(s)\in\mathbb{R}^d$,  
and write the matrix form
\(
\Phi=[\,\phi_\theta(s_1),\ldots,\phi_\theta(s_B)\,]\in\mathbb{R}^{d\times B}.
\)
For each $g\in\mathcal{G}_{\rm batch}$, denote
\(
\Phi_g=[\,\phi_\theta(T_g s_1),\ldots,\phi_\theta(T_g s_B)\,]\in\mathbb{R}^{d\times B}.
\)

\item The action–value head is taken to be linear (sufficiently general, as nonlinear heads can be absorbed into the representation layer):
\(
Q_\theta(s,a)=w_a^\top \phi_\theta(s),
\)
where $W=[\,w_1,\ldots,w_{|\mathcal{A}|}\,]\in\mathbb{R}^{d\times |\mathcal{A}|}$.
The corresponding $Q$-output matrices for the batch are
\(
Y:=W^\top\Phi\in\mathbb{R}^{|\mathcal{A}|\times B},\quad 
Y_g:=W^\top\Phi_g\in\mathbb{R}^{|\mathcal{A}|\times B}.
\)

\item For the action relabeling $\Pi_g\in\mathcal{P}_{|\mathcal{A}|}$ (a permutation matrix),  
its left multiplication acts on rows (the action dimension).  
Thus the batch-wise equivariance residual (vectorized Frobenius norm) can be written as
\begin{equation}\label{eq:res-def}
\|h(V)\|_2^2\ =\ \sum_{g\in\mathcal{G}_{\rm batch}}\big\|\,Y_g-\Pi_g^\top Y\,\big\|_F^2.
\end{equation}
\end{itemize}

\paragraph{Assumptions (Expressiveness \& True Equivariance).}
There exist a ``true equivariance'' $(T_g^\star,\Pi_g^\star)$ and parameters $\theta^\star$ such that, 
on the \emph{same batch}, we have\footnote{For nonlinear heads, the preceding layer can be absorbed into the representation $\phi_\theta$, so that the head becomes linear. Here ``same batch'' suffices to meet the ``in-batch'' condition in the theorem statement.}
\begin{equation}\label{eq:true-eq}
\Phi_g^\star\ :=\ [\,\phi_{\theta^\star}(T_g^\star s_i)\,]_{i=1}^B\ =\ R_g^\star\,\Phi^\star,\qquad
Y_g^\star=W^{\star\top}\Phi_g^\star=\Pi_g^{\star\top}Y^\star,
\end{equation}
where $\Phi^\star=[\phi_{\theta^\star}(s_i)]_i$, 
and $R_g^\star\in O(d)$ are a set of orthogonal transformations.  
Equation \eqref{eq:true-eq} means: \emph{on the representation side}, the true transformation acts orthogonally;  
\emph{on the action head side}, the true relabeling acts as a permutation.  
This is consistent with the $\mathcal{G}$-invariance in \S\ref{sec:Preliminaries} (see the main text discussion of $Q^\star$ as an equivariant fixed point).

\paragraph{The Three Operators in the Closure--Alignment Step.}
Given the current $(\theta,\{T_g,\Pi_g\})$, define a single step:
\begin{enumerate}
\item[\textbf{(S1)}] \textbf{Orthogonal--Permutation Projection.}  
Set
\(
\widetilde T_g\leftarrow \mathrm{Polar}(T_g)\in O(d)
\)
as the closest point to $T_g$ on $O(d)$ (in Frobenius norm),  
and
\(
\widetilde\Pi_g\leftarrow \mathrm{ProjPerm}(\Pi_g)
\)
as the closest point to $\Pi_g$ in the permutation set (implemented by Sinkhorn normalization followed by Hungarian projection to a vertex).

\item[\textbf{(S2)}] \textbf{Closure Contraction.}  
For finite generator pairs $(g,h)$, set
\(
\widetilde T_{gh}\leftarrow \mathrm{Polar}(\widetilde T_g\widetilde T_h),\quad
\widetilde\Pi_{gh}\leftarrow \mathrm{ProjPerm}(\widetilde\Pi_g\widetilde\Pi_h),
\)
and enforce the identity and inverse relations
$\widetilde T_e=I,\ \widetilde T_{g^{-1}}=\widetilde T_g^\top$,  
$\widetilde\Pi_e=I,\ \widetilde\Pi_{g^{-1}}=\widetilde\Pi_g^\top$.

\item[\textbf{(S3)}] \textbf{Closed-form Alignment (Procrustes/Hungarian).}  
For each $g$, solve the \emph{orthogonal Procrustes} problem:
\begin{equation}\label{eq:proc}
T_g^{\rm new}\ :=\ \arg\min_{R\in O(d)}\ \|\,R\Phi-\Phi_g\,\|_F^2
\ =\ UV^\top,\quad \text{where}\ \ \Phi_g\Phi^\top=U\Sigma V^\top.
\end{equation}
On the action side, construct the linear assignment cost matrix
\(
M_g(a,b)=\|\,Y_g(a,\cdot)-Y(b,\cdot)\,\|_2^2,
\)
and let
\(
\Pi_g^{\rm new}
\)
be the permutation solving
\(
\min_{\Pi\in\mathcal{P}_{|\mathcal{A}|}}\langle M_g,\Pi\rangle
\)
(Hungarian algorithm in closed form).
\end{enumerate}

Finally, set
\(
T_g\gets T_g^{\rm new},\quad \Pi_g\gets \Pi_g^{\rm new}
\)
(or alternate once with (S1)(S2); the present proof is given for the ``one-step composite'' case of non-increasingness and approximability).

\begin{lemma}[Optimality of Orthogonal Procrustes]\label{lem:proc}
For any given matrix pair $(X,Y)\in\mathbb{R}^{d\times B}$, the problem
\[
\min_{R\in O(d)}\|RX-Y\|_F^2
\]
has a unique minimizer (if $\mathrm{rank}(YX^\top)=d$) given by
\[
R^\star=UV^\top,
\]
where $YX^\top=U\Sigma V^\top$ is the SVD. The minimum value is
\[
\|Y\|_F^2+\|X\|_F^2-2\|\,\Sigma\,\|_\ast.
\]
In particular,
\begin{equation}\label{eq:proc-min}
\|\,T_g^{\rm new}\Phi-\Phi_g\,\|_F\ \le\ \|\,R_g^\star\Phi-\Phi_g\,\|_F
\quad(\forall g).
\end{equation}
\end{lemma}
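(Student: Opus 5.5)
The proof reduces the problem to maximizing a linear functional over the orthogonal group and then uses the SVD to bound that functional entrywise. First, expand the objective. For $R\in O(d)$ we have $\|RX\|_F^2=\mathrm{tr}(X^\top R^\top R X)=\|X\|_F^2$, so
\[
\|RX-Y\|_F^2=\|X\|_F^2+\|Y\|_F^2-2\,\mathrm{tr}(Y^\top R X)=\|X\|_F^2+\|Y\|_F^2-2\,\mathrm{tr}\bigl(R\,XY^\top\bigr).
\]
Minimizing the objective over $O(d)$ is therefore equivalent to maximizing $\mathrm{tr}(R\,XY^\top)$.

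Next, take the SVD $YX^\top=U\Sigma V^\top$, so that $XY^\top=V\Sigma U^\top$. Cyclicity of the trace gives $\mathrm{tr}(R V\Sigma U^\top)=\mathrm{tr}(Z\Sigma)$ with $Z:=U^\top R V\in O(d)$. Since $\Sigma$ is diagonal with entries $\sigma_i\ge 0$, we get $\mathrm{tr}(Z\Sigma)=\sum_i Z_{ii}\sigma_i$. Every entry of an orthogonal matrix satisfies $|Z_{ii}|\le 1$, because each row has unit norm. Hence $\mathrm{tr}(Z\Sigma)\le\sum_i\sigma_i=\|\Sigma\|_\ast$, with equality at $Z=I$, i.e.\ at $R=UV^\top$. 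Substituting back yields the minimum value $\|X\|_F^2+\|Y\|_F^2-2\|\Sigma\|_\ast$.

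Uniqueness is the delicate step. When $\mathrm{rank}(YX^\top)=d$, all $\sigma_i>0$, and equality in the bound then forces $Z_{ii}=1$ for every $i$. Since each row of $Z$ has unit Euclidean norm and already has diagonal entry $1$, all its off-diagonal entries must vanish. Thus $Z=I$, and $R^\star=UV^\top$ is the unique maximizer. Without full rank, only the columns of $Z$ associated with positive $\sigma_i$ are pinned down, which is why the hypothesis is needed.

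Finally, \eqref{eq:proc-min} follows by specializing to $X=\Phi$ and $Y=\Phi_g$. Then $YX^\top=\Phi_g\Phi^\top$, which matches the SVD used in \eqref{eq:proc}, so $T_g^{\rm new}=UV^\top$ is the global minimizer over $O(d)$. Since $R_g^\star\in O(d)$ is a feasible competitor, its objective value is at least the minimum, which gives the stated inequality for every $g$. The only bookkeeping risk in the whole argument is keeping the orientation of $XY^\top$ versus $YX^\top$ consistent, so that the minimizer comes out as $UV^\top$ rather than $VU^\top$.
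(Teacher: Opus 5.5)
Your proof is correct: it is the classical trace-maximization argument (reduce to maximizing $\mathrm{tr}(Z\Sigma)$ with $Z=U^\top RV\in O(d)$ and use $|Z_{ii}|\le 1$), which is exactly ``the classical orthogonal Procrustes result'' the paper cites while omitting the proof. Your uniqueness argument under $\mathrm{rank}(YX^\top)=d$ is sound, and so is the final inequality, which needs only that $T_g^{\rm new}$ is a global minimizer and that $R_g^\star\in O(d)$ is feasible.
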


\begin{proof}[Proof of Lemma~\ref{lem:proc}]
This is the classical orthogonal Procrustes result. The detailed proof is standard and omitted here.
\end{proof}

\begin{lemma}[Optimality of Permutation Projection]\label{lem:hung}
Let $M\in\mathbb{R}_+^{|\mathcal{A}|\times|\mathcal{A}|}$ be a cost matrix.  
The Hungarian algorithm yields
\(
\Pi^{\rm new}\in\arg\min_{\Pi\in\mathcal{P}_{|\mathcal{A}|}}\langle M,\Pi\rangle.
\)
Define $M_g(a,b)=\|Y_g(a,\cdot)-Y(b,\cdot)\|_2^2$.  
Then
\begin{equation}\label{eq:hung-min}
\|\,Y_g-\Pi_g^{\rm new\top}Y\,\|_F^2
=\min_{\Pi\in\mathcal{P}_{|\mathcal{A}|}}\|\,Y_g-\Pi^\top Y\,\|_F^2
\ \le\ \|\,Y_g-\Pi_g^{\star\top}Y\,\|_F^2.
\end{equation}
\end{lemma}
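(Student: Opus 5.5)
The plan is to reduce the matrix-level claim to the classical linear assignment formulation by expanding the Frobenius norm row by row. For any permutation matrix $\Pi\in\mathcal{P}_{|\mathcal{A}|}$ with associated bijection $\pi$, the rows of $\Pi^\top Y$ are a reordering of the rows of $Y$. Concretely, $(\Pi^\top Y)(a,\cdot)=Y(\pi(a),\cdot)$ for the appropriate convention $\Pi_{b,a}=\mathbf{1}\{b=\pi(a)\}$.

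Summing the squared row differences then gives
\[
\|Y_g-\Pi^\top Y\|_F^2=\sum_{a}\|Y_g(a,\cdot)-Y(\pi(a),\cdot)\|_2^2=\sum_{a,b}\Pi_{b,a}M_g(a,b)=\langle M_g,\Pi^\top\rangle .
\]
Hence minimizing $\|Y_g-\Pi^\top Y\|_F^2$ over permutations is exactly the linear assignment problem with cost $M_g$, up to the transpose convention. I will fix the convention so that it matches how $\Pi_g^{\rm new}$ is read off from the Hungarian output. Since $\Pi\mapsto\Pi^\top$ is a bijection of $\mathcal{P}_{|\mathcal{A}|}$, either convention gives the same optimal value.

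Next I will invoke the correctness of the Hungarian algorithm. It returns an exact minimizer of $\langle M,\Pi\rangle$ over the finite set $\mathcal{P}_{|\mathcal{A}|}$, equivalently over the Birkhoff polytope, whose vertices are permutations by Birkhoff--von Neumann. A minimizer exists because the set is finite. Since $M_g\ge 0$ entrywise, the lemma's hypothesis $M\in\mathbb{R}_+^{|\mathcal{A}|\times|\mathcal{A}|}$ holds. Combining this with the identity above yields the equality in \eqref{eq:hung-min}.

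The inequality in \eqref{eq:hung-min} is immediate. The true relabeling $\Pi_g^\star$ is itself a permutation matrix, so it is a feasible point, and the minimum over $\mathcal{P}_{|\mathcal{A}|}$ is at most its value.

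The only delicate point is bookkeeping of the transpose and index conventions: whether $\langle M_g,\Pi\rangle$ or $\langle M_g,\Pi^\top\rangle$ corresponds to $\|Y_g-\Pi^\top Y\|_F^2$. I will verify this explicitly on a single row. If the conventions disagree, I will define $\Pi_g^{\rm new}$ as the transpose of the Hungarian output, which leaves the argument unchanged.
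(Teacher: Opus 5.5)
Your proposal is correct and follows essentially the same route as the paper. The paper likewise expands $\|Y_g-\Pi^\top Y\|_F^2$ row by row into $\sum_a\|Y_g(a,\cdot)-Y(\pi(a),\cdot)\|_2^2$, reduces to linear assignment, and invokes exactness of the Hungarian algorithm; the inequality follows from $\Pi_g^\star$ being a feasible permutation. Your transpose bookkeeping is actually necessary, not cosmetic: since the identity is $\|Y_g-\Pi^\top Y\|_F^2=\langle M_g,\Pi^\top\rangle$, you must take $\Pi_g^{\rm new}$ to be the transpose of the minimizer of $\langle M_g,\Pi\rangle$ (equivalently, run the Hungarian algorithm on $M_g^\top$), a point the paper's one-line proof glosses over.
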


\begin{proof}[Proof of Lemma~\ref{lem:hung}]
Since $\|A-\Pi^\top B\|_F^2=\sum_a\|A(a,\cdot)-B(\pi(a),\cdot)\|_2^2$,  
the problem is equivalent to a linear assignment formulation. The Hungarian algorithm solves this exactly. 
\end{proof}

\begin{lemma}[Closure Contraction as Nearest Group Projection]\label{lem:closure}
For any $A\in\mathbb{R}^{d\times d}$, its orthogonal factor in the polar decomposition $\mathrm{Polar}(A)$ satisfies
\[
\|\mathrm{Polar}(A)-A\|_F=\min_{R\in O(d)}\|R-A\|_F.
\]
Similarly, for any $B\in\mathbb{R}^{|\mathcal{A}|\times|\mathcal{A}|}$,  
$\mathrm{ProjPerm}(B)$ is its nearest point in the permutation set.  
Define closure errors
\[
\Delta^{(T)}_{g,h}:=T_{gh}-T_gT_h,\qquad
\Delta^{(\Pi)}_{g,h}:=\Pi_{gh}-\Pi_g\Pi_h.
\]
Then after step (S2),
\begin{equation}\label{eq:closure-min}
\|\Delta^{(T)}_{g,h}\|_F\ \le\ \min_{R\in O(d)}\|R-T_gT_h\|_F,\qquad
\|\Delta^{(\Pi)}_{g,h}\|_F\ \le\ \min_{\Pi\in\mathcal{P}}\|\Pi-\Pi_g\Pi_h\|_F.
\end{equation}
\end{lemma}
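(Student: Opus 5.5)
The plan is to prove the two nearest-point characterizations first and then read off \eqref{eq:closure-min} by direct substitution of the assignments made in step (S2). Lemma~\ref{lem:polar} and Lemma~\ref{lem:hungarian} are already stated earlier, so they could simply be invoked. I will nevertheless sketch self-contained arguments so that the equality case is explicit.

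\textbf{Polar part.} For any $R\in O(d)$, expand
\[
\|R-A\|_F^2=\|R\|_F^2+\|A\|_F^2-2\,\mathrm{tr}(R^\top A)=d+\|A\|_F^2-2\,\mathrm{tr}(R^\top A).
\]
Minimizing over $O(d)$ is therefore the same as maximizing $\mathrm{tr}(R^\top A)$. Write the SVD $A=U\Sigma V^\top$. Then
\[
\mathrm{tr}(R^\top U\Sigma V^\top)=\mathrm{tr}(Z\Sigma),\qquad Z:=V^\top R^\top U\in O(d).
\]
Every diagonal entry of an orthogonal matrix satisfies $|Z_{ii}|\le 1$, so $\mathrm{tr}(Z\Sigma)=\sum_i Z_{ii}\sigma_i\le\sum_i\sigma_i$. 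Equality holds at $Z=I$, which corresponds to $R=UV^\top=\mathrm{Polar}(A)$. This gives $\|\mathrm{Polar}(A)-A\|_F=\min_{R\in O(d)}\|R-A\|_F$.

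\textbf{Permutation part.} For a permutation matrix $\Pi$ of size $m=|\mathcal{A}|$, the same expansion gives
\[
\|\Pi-B\|_F^2=m+\|B\|_F^2-2\langle\Pi,B\rangle .
\]
The nearest-permutation problem is thus the linear assignment $\max_{\Pi\in\mathcal{P}_m}\langle\Pi,B\rangle$. Because $\mathcal{P}_m$ is finite, a minimizer exists, and by Lemma~\ref{lem:hungarian} it is exactly what $\mathrm{ProjPerm}$ returns.

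\textbf{Closure bound.} Step (S2) sets $T_{gh}=\mathrm{Polar}(T_gT_h)$. Hence
\[
\Delta^{(T)}_{g,h}=\mathrm{Polar}(T_gT_h)-T_gT_h,
\]
and the Polar part gives $\|\Delta^{(T)}_{g,h}\|_F=\min_{R\in O(d)}\|R-T_gT_h\|_F$. Equality holds, which is stronger than the claimed inequality. The same substitution with $\Pi_{gh}=\mathrm{ProjPerm}(\Pi_g\Pi_h)$ handles $\Delta^{(\Pi)}_{g,h}$. In fact, step (S1) has already placed $T_g,T_h\in O(d)$ and $\Pi_g,\Pi_h\in\mathcal{P}$. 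Since $O(d)$ and $\mathcal{P}$ are groups, both products already lie in their respective sets, so the right-hand minima are $0$ and both closure residuals vanish exactly.

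\textbf{Main obstacle.} The delicate point is consistency of the bookkeeping in (S2). The identity/inverse enforcement ($T_e=I$, $T_{g^{-1}}=T_g^\top$, and likewise for $\Pi$) may overwrite an entry $T_{gh}$ when $gh$ coincides with $e$ or with some $g'^{-1}$. I would handle this by evaluating the closure errors immediately after each closure assignment. I would then use the exactness just noted to check that no conflict arises: since $T_gT_h\in O(d)$, the Polar step returns $T_gT_h$ itself, so any overwrite assigns the same value that group multiplication already dictates. Products of exact orthogonal or permutation matrices respect identities and inverses, so the bound survives the overwrites.
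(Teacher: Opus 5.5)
Your nearest-point arguments are correct and are the paper's proof. These are the trace/SVD bound for $\mathrm{Polar}$, the expansion $\|\Pi-B\|_F^2=m+\|B\|_F^2-2\langle\Pi,B\rangle$ for $\mathrm{ProjPerm}$, and the substitution $T_{gh}=\mathrm{Polar}(T_gT_h)$; the paper's proof simply cites the nearest-point property. Your observation that after (S1) the right-hand minima are $0$, so the residuals vanish with equality, is also right for the value written by the closure line.

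The flaw is in your last paragraph. From $T_gT_h\in O(d)$ you conclude that ``any overwrite assigns the same value that group multiplication already dictates.'' This conflates two things: the \emph{set} $O(d)$ (or $\mathcal{P}$) being closed under products and inverses, and the \emph{labelling} $g\mapsto T_g$ respecting the group's relations. Step (S1) projects each generator independently and never imposes those relations.

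Suppose three generators satisfy $ghk=e$, so the slot $gh$ is the slot $k^{-1}$. Closure writes $T_gT_h$ into that slot, and the subsequent inverse enforcement writes $T_k^\top$ there. Nothing forces $T_k^\top=T_gT_h$. After (S2), $\|\Delta^{(T)}_{g,h}\|_F=\|T_k^\top-T_gT_h\|_F$ can therefore be positive while $\min_{R\in O(d)}\|R-T_gT_h\|_F=0$.

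A similar conflict arises when a closure product lands on another generator's slot ($gh=g'$). Pairs $(g',h')$ processed earlier were built from the old $T_{g'}$, so their residuals become $\|\mathrm{Polar}(T_{g'}^{\mathrm{old}}T_{h'})-T_{g'}^{\mathrm{new}}T_{h'}\|_F$ against a right-hand side of $0$.

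So the bound does not survive overwrites in general. It holds under the first reading you offer: evaluate $\Delta_{g,h}$ at the value produced by the closure assignment, which the paper's one-line proof tacitly assumes. It also holds under an added hypothesis that the projected generators already satisfy the group's defining relations, in which case all writes agree. State one of these explicitly and drop the claim that no conflict arises.
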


\begin{proof}[Proof of Lemma~\ref{lem:closure}]
This follows directly from the definition of nearest-point projection onto $O(d)$ or onto the permutation set. 
\end{proof}

\paragraph{Decomposable Upper Bound of Equivariance Residual.}
From \eqref{eq:res-def},
\[
\|h(V)\|_2^2=\sum_g \|\,Y_g-\Pi_g^\top Y\,\|_F^2
= \sum_g \|\,W^\top\Phi_g-\Pi_g^\top W^\top\Phi\,\|_F^2.
\]
For any temporary choice $R_g\in O(d)$ and permutation $\Pi_g$, by adding and subtracting $W^\top R_g\Phi$ and applying the triangle inequality, we obtain
\begin{align}
\|\,Y_g-\Pi_g^\top Y\,\|_F
&= \|\,W^\top(\Phi_g-R_g\Phi)+\big(W^\top R_g\Phi-\Pi_g^\top W^\top\Phi\big)\|_F \notag\\
&\le \underbrace{\|\,W^\top(\Phi_g-R_g\Phi)\|_F}_{\text{representation alignment error}}
+\underbrace{\|\,W^\top R_g\Phi-\Pi_g^\top W^\top\Phi\|_F}_{\text{action relabeling error}}.\label{eq:split}
\end{align}
For the first term,
\(
\|W^\top(\Phi_g-R_g\Phi)\|_F \le \|W\|_2\cdot \|\Phi_g-R_g\Phi\|_F.
\)
For the second term, if we define $Y:=W^\top\Phi$ and $Y_{g,R}:=W^\top R_g\Phi$,
then
\(
\min_{\Pi\in\mathcal{P}}\|Y_{g,R}-\Pi^\top Y\|_F
\)
can be minimized in closed form via the Hungarian algorithm (see Lemma~\ref{lem:hung}).
Therefore, by \eqref{eq:split} and choosing the optimal alignment
\(
R_g=T_g^{\rm new}\ (\text{from Lemma~\ref{lem:proc}}),\ 
\Pi_g=\Pi_g^{\rm new}\ (\text{from Lemma~\ref{lem:hung}}),
\)
we obtain
\begin{align}
\|\,Y_g-\Pi_g^{\rm new\top}Y\,\|_F
&\le \|W\|_2\cdot \min_{R\in O(d)}\|\Phi_g-R\Phi\|_F
+ \min_{\Pi\in\mathcal{P}}\|\,W^\top R_g\Phi-\Pi^\top W^\top\Phi\,\|_F.\label{eq:key-bound-1}
\end{align}

\paragraph{Equivariance Realizability $\Rightarrow$ Arbitrarily Small Error Upper Bound.}
From \eqref{eq:true-eq}, there exist $R_g^\star\in O(d)$ and $\Pi_g^\star$ such that
\(
\Phi_g^\star=R_g^\star\Phi^\star
\)
and
\(
W^{\star\top}R_g^\star\Phi^\star=\Pi_g^{\star\top}W^{\star\top}\Phi^\star.
\)
If the parametrization is sufficiently expressive (universal approximation), then for any $\varepsilon_1,\varepsilon_2>0$,
we can place the current $(\theta,W)$ within a neighborhood such that
\begin{equation}\label{eq:approx}
\|\Phi-\Phi^\star\|_F\le \varepsilon_1,\qquad \|W-W^\star\|_F\le \varepsilon_2,
\end{equation}
and (by $\mathcal{G}$-invariance and Lemma~\ref{lem:proc}) the Procrustes alignment objective
\(
\min_{R\in O(d)}\|\Phi_g-R\Phi\|_F
\)
can be driven down to
\(
\|\Phi_g^\star-R_g^\star\Phi^\star\|_F+o(1)=0+o(1).
\)
Similarly, the action relabeling assignment objective
\(
\min_{\Pi\in\mathcal{P}}\|\,W^\top R\Phi-\Pi^\top W^\top\Phi\,\|_F
\)
is guaranteed by Lemma~\ref{lem:hung}; when $(\theta,W)$ is close to $(\theta^\star,W^\star)$ and $R$ is close to $R_g^\star$,
its minimum also approaches
\(
\|W^{\star\top} R_g^\star \Phi^\star-\Pi_g^{\star\top}W^{\star\top}\Phi^\star\|_F=0.
\)
Formally, combining \eqref{eq:key-bound-1}, \eqref{eq:approx}, and continuity, we obtain: for any $\epsilon>0$,
there exist sufficiently small $(\varepsilon_1,\varepsilon_2)$ and corresponding closure–alignment choices $(T_g^{\rm new},\Pi_g^{\rm new})$
such that
\begin{equation}\label{eq:epsilon-single}
\sum_{g\in\mathcal{G}_{\rm batch}}\|\,Y_g-\Pi_g^{\rm new\top}Y\,\|_F^2\ \le\ \epsilon^2/2.
\end{equation}

\paragraph{Closure Error and Near-Group Preservation.}
Define the \emph{near-group metric} as
\(
\Gamma(T,\Pi):=\sum_{g,h}\big(\|\Delta^{(T)}_{g,h}\|_F^2+\|\Delta^{(\Pi)}_{g,h}\|_F^2\big).
\)
By Lemma~\ref{lem:closure}, step (S2) is a “pairwise nearest projection” of $\Gamma$, hence
\(
\Gamma(\widetilde T,\widetilde\Pi)\le \Gamma(T,\Pi).
\)
Step (S3), Procrustes/Hungarian (guaranteed optimal by Lemmas~\ref{lem:proc},~\ref{lem:hung}), minimizes the equivariance residual in the data alignment sense,
without breaking the unit/ inverse/ closure relations enforced by (S2).
Therefore the closure–alignment composition satisfies: \emph{equivariance residual decreases and the near-group metric does not increase}.
More specifically, if there exists a true group $(R_g^\star,\Pi_g^\star)$ (as in \eqref{eq:true-eq}),
then in its neighborhood
\(
\Gamma
\)
can be compressed arbitrarily small, thus
\begin{equation}\label{eq:near-group}
\Gamma(T^{\rm new},\Pi^{\rm new})\ \le\ \epsilon^2/2.
\end{equation}

\paragraph{Main Conclusion by Combining Both Sides.}
From \eqref{eq:res-def}, \eqref{eq:epsilon-single}, and \eqref{eq:near-group},
after one closure–alignment step we obtain
\(
\|h(V)\|_2\le \epsilon
\)
and
\(
\Gamma(T^{\rm new},\Pi^{\rm new})\le \epsilon^2/2,
\)
namely the in-batch equivariance residual can be reduced below any threshold $\epsilon>0$, while $g\mapsto(T_g,\Pi_g)$ maintains the “near-group” structure (closure error arbitrarily small).

\paragraph{Remark: Stability and Iterability After One Step.}
By the \emph{global optimality} of Lemmas~\ref{lem:proc} and~\ref{lem:hung}, and the \emph{nearest projection property} of Lemma~\ref{lem:closure},
the closure–alignment step is a \emph{non-expansive} operator (with respect to the corresponding objectives).
Under $\mathcal{G}$-invariance and expressivity, one can iterate it to further reduce $\|h(V)\|_2$ and $\Gamma$;
but for the theorem it suffices to show that “a single step can reduce them below any given threshold”—
because we can first place $(\theta,W)$ in a sufficiently small neighborhood (by \eqref{eq:approx}),
then apply one closure–alignment step to obtain the desired conclusion.

Under $\mathcal{G}$-invariance and sufficient expressivity,
choosing the closure–alignment step (Polar/ProjPerm + Procrustes + Hungarian) ensures that the in-batch equivariance residual $\|h(V)\|_2$ can be reduced below any $\epsilon>0$, while preserving and shrinking the near-group metric.
Thus $g\mapsto(T_g,\Pi_g)$ maintains a near-group structure on the finite generating set. The theorem is proved. \qedhere

\subsection{Proof of Theorem~\ref{thm:restricted-linear}}
In a neighborhood of the feasible set (see the geometric feasible set in the main text,
\(\mathcal{M}=\{(V,\mu):c(V,\mu)=0,\ \mu\ge 0\}\)),
denote the Jacobian of the constraints with respect to \(V\) as
\(J_V=\tfrac{\partial[g;h]}{\partial V}\in\mathbb{R}^{(m_1+m_2)\times B}\),
and define the tangent space
\[
\mathcal{T}\ :=\ \mathrm{Null}(J_V)\subset\mathbb{R}^B,\qquad
P\ :=\ \mathrm{Proj}_{\mathcal{T}}\ \ \text{(orthogonal projection onto $\mathcal{T}$, $P^\top=P=P^2$)}.
\]
Let the TD objective be \(\mathcal{L}_{\mathrm{TD}}(V)\) with gradient \(g_V=\nabla_V\mathcal{L}_{\mathrm{TD}}(V)\).
The main update (value layer) is given by
\begin{equation}\label{eq:update-again}
V^+\ =\ V+\eta\bigl(\Delta_{\mathrm{tan}}+\Delta_{\mathrm{nor}}\bigr),\qquad
\Delta_{\mathrm{tan}}=\underbrace{-\,P\,g_V}_{\text{tangent step}},\quad
\Delta_{\mathrm{nor}}=\underbrace{-\,\lambda\,J_c^\dagger c(V,\mu)}_{\text{normal contraction}},
\end{equation}
and \(\mu\) is updated by projected gradient descent on \(g(V)+\mu\) to preserve non-negativity
(see Theorem~\ref{thm:tangent-normal} in the main text).
Near the feasible set, \(c(V,\mu)\) is sufficiently small and monotonically decreases according to Theorem~\ref{thm:tangent-normal};
hence the main convergence rate of this theorem is dominated by the tangent component
(the normal component only serves to push violations back into \(\{c=0\}\), without contributing positively to first-order descent).

\paragraph{Restricted Smoothness and Restricted Strong Convexity (or PL) Assumptions.}
There exists a constant \(L_T>0\) such that for any tangent increment \(d\in\mathcal{T}\),
\begin{equation}\label{eq:RS-here}
\mathcal{L}_{\mathrm{TD}}(V+d)\ \le\ \mathcal{L}_{\mathrm{TD}}(V)\ +\ \langle g_V,\ d\rangle\ +\ \tfrac{L_T}{2}\|d\|_2^2.
\end{equation}
Moreover, there exists \(\mu_T>0\) such that either Restricted Strong Convexity (RSC) or Restricted PL holds on the tangent space:
\begin{equation}\label{eq:RPL-here}
\frac{1}{2}\,\|P\,g_V\|_2^2\ \ge\ \mu_T\Bigl(\mathcal{L}_{\mathrm{TD}}(V)-\mathcal{L}_{\mathrm{TD}}(V^\star)\Bigr),
\end{equation}
where \(V^\star\) is a stationary point/minimizer in the tangent-space sense (i.e., \(P\,g_{V^\star}=0\)).

Applying \eqref{eq:RS-here} to $d=-\eta\,P\,g_V$ and using $P^\top=P=P^2$, we obtain
\begin{align}
\mathcal{L}_{\mathrm{TD}}(V-\eta P g_V)
&\le \mathcal{L}_{\mathrm{TD}}(V) - \eta\,\langle g_V, P g_V\rangle + \frac{L_T\eta^2}{2}\,\|P g_V\|_2^2 \notag\\
&= \mathcal{L}_{\mathrm{TD}}(V) - \underbrace{\Bigl(\eta-\frac{L_T\eta^2}{2}\Bigr)}_{:=\ \gamma(\eta)}\,\|P g_V\|_2^2. \label{eq:one-step-descent}
\end{align}
When $0<\eta<\eta_0:=\tfrac{2}{L_T}$, we have $\gamma(\eta)>0$.

By Theorem~\ref{thm:tangent-normal} in the main text (the descent inequality of the violation energy $\tfrac12\|c\|^2$),
there exists $\kappa>0$ such that for sufficiently small $(\eta,\eta_\mu,\lambda)$,
\[
\tfrac12\|c(V^+,\mu^+)\|_2^2 \ \le\ \tfrac12\|c(V,\mu)\|_2^2 - \kappa\,\|J_c^\dagger c(V,\mu)\|_2^2,
\]
i.e., the normal component does not increase $c$ but decreases its squared $\ell_2$-norm by at least $\Theta(\|J_c^\dagger c\|^2)$.
This implies that the difference between $V^+$ and $V-\eta P g_V$, namely $\eta\,\Delta_{\mathrm{nor}}$,
is of order $O(\|J_c^\dagger c\|)$ in the first-order magnitude of $\|c\|$.
Its effect on $\mathcal{L}_{\mathrm{TD}}$ under \eqref{eq:RS-here} is thus of higher-order,
$O(\eta^2\|\Delta_{\mathrm{nor}}\|^2)=O(\eta^2\|J_c^\dagger c\|^2)$.
In the neighborhood of the feasible set (where $c$ is small) and with sufficiently small step parameters,
this part can be absorbed into the remainder of \eqref{eq:one-step-descent}, leaving the main descent term unchanged.\footnote{Strictly,
for $V^+=V-\eta P g_V+\eta\Delta_{\mathrm{nor}}$, one can re-apply \eqref{eq:RS-here} and treat $\eta\Delta_{\mathrm{nor}}$ as a perturbation,
which brings an additional term $\langle g_V,-\eta\Delta_{\mathrm{nor}}\rangle+O(\eta^2\|\Delta_{\mathrm{nor}}\|^2)$.
Since $\Delta_{\mathrm{nor}}\in \mathrm{Range}(J_c^\dagger)$ and $J_c \Delta_{\mathrm{nor}}=-\lambda c$,
this first-order term is controllable near the feasible set. Combining it with the contraction of the violation energy yields the claimed absorbability.}

Combining \eqref{eq:one-step-descent} with the restricted PL inequality \eqref{eq:RPL-here}, we get
\begin{align}
\mathcal{L}_{\mathrm{TD}}(V^+) - \mathcal{L}_{\mathrm{TD}}(V^\star)
&\le \Bigl(\mathcal{L}_{\mathrm{TD}}(V) - \gamma(\eta)\,\|P g_V\|_2^2\Bigr) - \mathcal{L}_{\mathrm{TD}}(V^\star) \notag\\
&\le \Bigl(1 - 2\mu_T\,\gamma(\eta)\Bigr)\,\bigl(\mathcal{L}_{\mathrm{TD}}(V)-\mathcal{L}_{\mathrm{TD}}(V^\star)\bigr).\label{eq:linear-rate}
\end{align}
When $0<\eta<\tfrac{2}{L_T}$, we have $1-2\mu_T\gamma(\eta)\in(0,1)$, and thus
\[
\mathcal{L}_{\mathrm{TD}}(V_t)-\mathcal{L}_{\mathrm{TD}}(V^\star)
\ \le\ \Bigl(1-2\mu_T\gamma(\eta)\Bigr)^t\,\bigl(\mathcal{L}_{\mathrm{TD}}(V_0)-\mathcal{L}_{\mathrm{TD}}(V^\star)\bigr),
\]
i.e., under restricted PL we obtain \emph{linear convergence}, with a convergence factor depending only on the \emph{restricted condition number} $\kappa_T=L_T/\mu_T$.
Choosing the optimal step size $\eta=1/L_T$ yields a factor $1-\mu_T/L_T$.
If PL does not hold but only restricted smoothness (and summability of subgradients), then we obtain the standard \emph{sublinear} $O(1/t)$ convergence (omitted here; see the classical argument for projected gradient methods without PL).

Let $G$ denote the local (unprojected) TD update mapping in a neighborhood of the feasible set, whose Jacobian (Fr\'echet derivative) at some point $V$ is $J:=DG(V)$.
With tangent projection, the local linearization becomes
\[
\delta V^+\ \approx\ PJ\,\delta V,
\]
where $P$ is the orthogonal projection (both $P$ and $J$ are viewed as constant linear operators at the current point).
Since $P$ is non-expansive ($\|P\|_2=1$), we have
\begin{equation}\label{eq:op-norm}
\|PJ\|_2\ \le\ \|P\|_2\,\|J\|_2\ =\ \|J\|_2.
\end{equation}
The spectral radius satisfies $\rho(X)\le \|X\|_2$, hence
\begin{equation}\label{eq:rho-ineq}
\rho(PJ)\ \le\ \|PJ\|_2\ \le\ \|J\|_2,\qquad\text{and thus}\quad \rho(PJ)\ \le\ \rho(J)\ \ \text{(in general $\rho(J)\le\|J\|_2$)}.
\end{equation}
A sufficient condition for the \emph{strict inequality} is that $J$ has nonzero components along $\mathcal{T}^\perp$,
in which case $\|PJ\|_2<\|J\|_2$ and therefore $\rho(PJ)<\rho(J)$.

\begin{lemma}[Strict contraction condition of orthogonal projection]\label{lem:proj-contract}
Let $P$ be the orthogonal projection onto some subspace.  
If $\mathrm{Range}(J)\nsubseteq \mathrm{Range}(P)$, then
\[
\|PJ\|_2<\|J\|_2.
\]
\end{lemma}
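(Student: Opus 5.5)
The non-strict half follows from submultiplicativity, and the strict half should come from characterizing the equality case. First, $\|PJ\|_2\le\|P\|_2\|J\|_2=\|J\|_2$, because an orthogonal projection has norm $1$ (or $0$ if $P=0$). For strictness, argue by compactness. The unit sphere is compact and $x\mapsto\|PJx\|_2$ is continuous, so there is a unit vector $x^\star$ with $\|PJx^\star\|_2=\|PJ\|_2$. Suppose $\|PJ\|_2=\|J\|_2$. Then
\[
\|J\|_2=\|PJx^\star\|_2\le\|Jx^\star\|_2\le\|J\|_2 ,
\]
so both inequalities are equalities. From the Pythagorean identity $\|Jx^\star\|_2^2=\|PJx^\star\|_2^2+\|(I-P)Jx^\star\|_2^2$, equality in the first step forces $(I-P)Jx^\star=0$. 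Hence $Jx^\star\in\mathrm{Range}(P)$. Equality in the second step says $x^\star$ is a top right singular vector of $J$, i.e.\ $x^\star\in E_{\max}:=\mathrm{Null}(J^\top J-\|J\|_2^2 I)$.

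This gives the exact criterion
\[
\|PJ\|_2<\|J\|_2 \iff J(E_{\max})\cap \mathrm{Range}(P)=\{0\}.
\]
The "if" direction is the contradiction above. The "only if" direction holds because any $x\in E_{\max}$ with $Jx\in\mathrm{Range}(P)$ attains $\|PJx\|_2=\|Jx\|_2=\|J\|_2$.

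The main obstacle is that the hypothesis as stated, $\mathrm{Range}(J)\nsubseteq\mathrm{Range}(P)$, is strictly weaker than this criterion and does not imply the conclusion. My first step would be to record a counterexample. Take $J=\mathrm{diag}(1,\tfrac12)$ on $\mathbb{R}^2$ and let $P$ project onto $e_1$. Then $\mathrm{Range}(J)=\mathbb{R}^2\nsubseteq\mathrm{Range}(P)$, yet $\|PJ\|_2=1=\|J\|_2$. The hypothesis only says that some image direction leaves $\mathrm{Range}(P)$, while strict contraction needs every maximal-stretch image direction $JE_{\max}$ to leave it.

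I would therefore prove the lemma under the corrected hypothesis $J(E_{\max})\cap\mathrm{Range}(P)=\{0\}$, using the characterization above. The originally stated hypothesis is enough in two special cases:
\begin{itemize}
\item $J$ is a nonzero multiple of an orthogonal matrix. Then $E_{\max}$ is the whole space and $JE_{\max}=\mathrm{Range}(J)$, so the two conditions coincide.
\item $\|J\|_2$ is a simple singular value whose top left singular vector lies outside $\mathrm{Range}(P)$.
\end{itemize}
For use in Theorem~\ref{thm:restricted-linear}, I would then pass from norms to spectral radii only through the chain $\rho(PJ)\le\|PJ\|_2<\|J\|_2$. This does not give $\rho(PJ)<\rho(J)$ unless $\rho(J)=\|J\|_2$, which holds when $J$ is normal, e.g.\ $J=I-\eta H$ with $H$ symmetric as in \eqref{eq:lin}. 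I would state that normality assumption explicitly.
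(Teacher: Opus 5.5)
\paragraph{Assessment of the counterexample and the corrected criterion.} Your counterexample is correct. For $J=\mathrm{diag}(1,\tfrac12)$ and $P=e_1e_1^\top$ one has $\mathrm{Range}(J)\nsubseteq\mathrm{Range}(P)$, yet $\|PJ\|_2=\|J\|_2=1$. So the lemma is false for the operator norm, and the paper's own argument fails in two places.
\begin{enumerate}
\item It picks $x^\star$ maximizing $\|Jx\|_2$ and then asserts $\|PJx^\star\|_2=\|Jx^\star\|_2$. That equality is only guaranteed for a maximizer of $\|PJx\|_2$; with $J=I$, $P=e_1e_1^\top$, $x^\star=e_2$ it is false.
\item Its closing perturbation along $x_t=\cos t\,x^\star+\sin t\,y$ with $Jy\notin\mathrm{Range}(P)$ cannot produce a contradiction, since your example contains exactly such a $y$, namely $e_2$.
\end{enumerate}
Choosing $x^\star$ as a maximizer of $\|PJx\|_2$ is the right repair. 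Your equivalence $\|PJ\|_2<\|J\|_2\iff J(E_{\max})\cap\mathrm{Range}(P)=\{0\}$ is correct provided $J\neq0$. For $J=0$ the right-hand side holds trivially while $0<0$ fails, because your ``if'' direction silently uses $Jx^\star\neq0$. (The stated hypothesis is exactly the right one for the Frobenius norm, where $\|J\|_F^2=\|PJ\|_F^2+\|(I-P)J\|_F^2$, but not for $\|\cdot\|_2$.)

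\paragraph{The error in your first special case.} Suppose $J=cU$ with $c\neq0$ and $U$ orthogonal. Then $J(E_{\max})=\mathrm{Range}(J)=\mathbb{R}^n$, so your criterion becomes $\mathrm{Range}(P)=\{0\}$, i.e.\ $P=0$. The stated hypothesis only says $P\neq I$. You have conflated ``$\mathrm{Range}(J)\nsubseteq\mathrm{Range}(P)$'' with ``$\mathrm{Range}(J)\cap\mathrm{Range}(P)=\{0\}$''. Concretely, $J=I$ and $P=e_1e_1^\top$ give $\|PJ\|_2=\|P\|_2=1=\|J\|_2$.

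\paragraph{Your second special case.} It is a valid sufficient condition, but not one in which the stated hypothesis suffices; your own counterexample has a simple top singular value. The extra requirement that the top left singular vector avoid $\mathrm{Range}(P)$ is just your corrected criterion specialized.

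The accurate statement is that the stated hypothesis forces the conclusion for every $P$ exactly when $\mathrm{rank}\,J\le1$.
\begin{itemize}
\item For $J=\sigma uv^\top$ (unit $u,v$, $\sigma>0$) one gets $\|PJ\|_2=\sigma\|Pu\|_2$, which is $<\sigma$ iff $u\notin\mathrm{Range}(P)$.
\item For $\mathrm{rank}\,J\ge2$, the projection onto the span of a top left singular vector violates the lemma.
\end{itemize}

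\paragraph{Spectral radii.} Your closing caveat is correct and genuinely needed. Without normality, the paper's inference from $\rho(PJ)\le\|J\|_2$ to $\rho(PJ)\le\rho(J)$ can fail. For example, $J=e_1e_2^\top$ and $P$ the projection onto $\mathrm{span}\{e_1+e_2\}$ give $\rho(J)=0$ but $\rho(PJ)=\tfrac12$.
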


\begin{proof}[Proof of Lemma~\ref{lem:proj-contract}]
Assume, for contradiction, that $\|PJ\|_2=\|J\|_2$.  
Take a unit vector $x^\star$ such that $\|Jx^\star\|_2=\|J\|_2$.  
By the non-expansiveness of $P$ and the necessary and sufficient condition for equality, we have
\[
\|Jx^\star\|_2\ =\ \|PJx^\star\|_2\ \Rightarrow\ Jx^\star\in \mathrm{Range}(P).
\]
Considering the sequence of maximizing singular vectors (whose existence is guaranteed by compactness), 
we conclude that all outputs in the \emph{maximizing} directions lie within $\mathrm{Range}(P)$.  
If there exists some $y$ such that $Jy\notin \mathrm{Range}(P)$, then analyzing $x_t=\cos t\,x^\star+\sin t\,y$ 
and the extremal behavior of $\|Jx_t\|$ leads to a contradiction with $\|PJ\|_2=\|J\|_2$: 
near the extremal direction, the output necessarily contains components in $\mathrm{Range}(P)^\perp$, 
which are strictly reduced after projection.  
Hence it must hold that $\mathrm{Range}(J)\subseteq \mathrm{Range}(P)$, contradicting the assumption.  
This completes the proof.
\end{proof}
Substituting “$\mathrm{Range}(P)=\mathcal{T}$” back into Lemma~\ref{lem:proj-contract},  
we obtain: as long as the image space of $J$ contains some component lying in $\mathcal{T}^\perp$  
(that is, the linearization of the update mapping has a nonzero component in the normal direction),  
we have a strict contraction in operator norm $\|PJ\|_2<\|J\|_2$, and consequently, from \eqref{eq:rho-ineq},  
\[
\rho(PJ)\ <\ \rho(J).
\]
This shows that \emph{tangential projection} further reduces the spectral radius of the unconstrained linearization,  
and the reduction is \emph{strict} whenever a nonzero normal component is present.  

In the feasible region ($c=0$) and near the tangential optimum $V^\star$,  
the linearization of one (approximate) unprojected gradient update can be written as $J\approx I-\eta H$,  
where $H=\nabla^2\mathcal{L}_{\mathrm{TD}}(V^\star)$.  
After tangential projection, $PJ\approx I-\eta\,PH$, which on $\mathcal{T}$ is equivalent to $I-\eta H_T$, where $H_T:=P H P$.  
Restricted strong convexity and restricted smoothness provide spectral bounds for $H_T$:  
\[
\mu_T\,I\ \preceq\ H_T\ \preceq\ L_T\,I \qquad (\text{acting on }\mathcal{T}).
\]
Hence, the eigenvalues of $I-\eta H_T$ lie in $[1-\eta L_T,\ 1-\eta\mu_T]$.  
Choosing $0<\eta<2/L_T$, we obtain $\rho(PJ)=\rho(I-\eta H_T)\le 1-\eta\mu_T<1$,  
and the precise convergence factor depends only on $(L_T,\mu_T)$,  
that is, the restricted condition number $\kappa_T=L_T/\mu_T$.  
This is fully consistent with the function-value linear rate derived in \eqref{eq:linear-rate}.  

(i) Under restricted smoothness and restricted PL (or RSC), choosing $0<\eta<2/L_T$,  
the tangential part of the update in \eqref{eq:update-again} satisfies
\[
\mathcal{L}_{\mathrm{TD}}(V^+)\ \le\ \mathcal{L}_{\mathrm{TD}}(V)\ -\bigl(\eta-\tfrac{L_T\eta^2}{2}\bigr)\,\|P g_V\|_2^2.
\]
Combined with the restricted PL condition, this yields a linear convergence rate  
with factor $1-2\mu_T(\eta-\tfrac{L_T\eta^2}{2})$;  
in the absence of PL but with smoothness, the standard sublinear $O(1/t)$ decrease follows.  
The normal contraction only affects higher-order terms and ensures feasibility restoration,  
without altering the dominant convergence rate.  

(ii) For local linearization, the tangential projection changes the Jacobian from $J$ to $PJ$,  
with $\rho(PJ)\le \rho(J)$, and the inequality is strict whenever $J$ has a nonzero component in the normal direction.  
The convergence constant and spectral contraction depend only on the restricted condition number $\kappa_T$  
and the spectral properties of $P$ on the tangent space.  
This completes the proof. \qedhere

\subsection{Proof of Theorem~\ref{thm:var-bias}}
We aim to rigorously establish two facts within a strict mathematical framework:  
(i) when the empirical distribution has a coverage rate $\rho$ of “symmetry pockets,” and joint training is performed with equivariant alignment across $K$ views, the effective sample size is amplified from $N$ to  
\[
N_{\mathrm{eff}}\ \approx\ N\cdot\bigl(1+(K-1)\rho\bigr),
\]  
so that the variance bound is reduced accordingly by the same order;  
(ii) on top of this, when incorporating DAG monotonicity constraints via an \emph{isometric projection} (or its smooth approximation), the prediction error  
\[
\mathbb{E}\bigl[(V-V^*)^2\bigr]\ =\ \underbrace{\mathrm{Var}[V]}_{\text{statistical variance}}\ +\ \underbrace{\bigl(\mathbb{E}[V]-V^*\bigr)^2}_{\text{estimation bias}}
\]  
further benefits from variance reduction due to “multi-view sharing + geometric constraints,” while the bias term is determined by the \emph{mismatch} between the imposed structure (equivariance/partial order) and the true environment.  
This bias can be kept at a small magnitude through closure--alignment and confidence gating mechanisms.  

\paragraph{Modeling symmetry pockets and the sampling structure of multi-view sharing.}
Let the empirical distribution $\mathcal{D}$ be a mixture of two components:
\[
\mathcal{D}\ =\ (1-\rho)\,\mathcal{D}_{\mathrm{gen}}\ +\ \rho\,\mathcal{D}_{\mathrm{sym}},
\]
where $\mathcal{D}_{\mathrm{gen}}$ corresponds to the general sample distribution without symmetry pockets,  
and $\mathcal{D}_{\mathrm{sym}}$ corresponds to the distribution of symmetry pockets: each pocket consists of $K$ views generated by (approximate) symmetry transformations.  
If the total sample size is $N$, then the expected number of samples from $\mathcal{D}_{\mathrm{gen}}$ is $(1-\rho)N$,  
and the expected number of \emph{pockets} from $\mathcal{D}_{\mathrm{sym}}$ is $\rho N$ (for convenience, we treat the “unit” as a pocket, each containing $K$ views).  

Consider an unbiased estimator of a scalar target (e.g., a particular coordinate of $V(s)$ at some fixed state $s$, or a linear functional of batch-level quantities), denoted as  
\(
\widehat{Z},
\)
which in the absence of structure has the noise decomposition
\begin{equation}\label{eq:noise-model}
\widehat{Z}\ =\ Z^\star\ +\ \xi,
\qquad \mathbb{E}[\xi]=0,\ \ \mathrm{Var}[\xi]=\sigma^2,
\end{equation}
where $Z^\star$ is the true underlying quantity (e.g., the coordinate of $V^*(s)$ or a linear functional of the batch “energy”).  

When the sample falls into a symmetry pocket, we have $K$ \emph{views}:
\[
\widehat{Z}^{(1)}=Z^\star+\xi^{(1)},\quad \ldots,\quad \widehat{Z}^{(K)}=Z^\star+\xi^{(K)},
\]
and assume (after equivariant alignment) that the noises satisfy independence or \emph{weak correlation with equal variance}:
\begin{equation}\label{eq:noise-k}
\mathbb{E}[\xi^{(k)}]=0,\quad \mathrm{Var}[\xi^{(k)}]=\sigma^2,\quad 
\mathrm{Cov}(\xi^{(k)},\xi^{(\ell)})=\rho_{k\ell}\sigma^2,\ \ |\rho_{k\ell}|<1,\ \ k\neq \ell.
\end{equation}

Equivariance consistency (see $\mathcal{L}_{\mathrm{eq}}$ in the main text) forces all $K$ views, after “state transformation + action relabeling,” to be \emph{aligned to the same target}.  
During optimization, this is equivalent to applying \emph{average fusion} across the $K$ views (see Lemma below).  
That is, the within-pocket estimator becomes
\begin{equation}\label{eq:K-avg}
\overline{Z}\ :=\ \frac{1}{K}\sum_{k=1}^K \widehat{Z}^{(k)}\ =\ Z^\star\ +\ \overline{\xi},\qquad
\overline{\xi}\ :=\ \frac{1}{K}\sum_{k=1}^K \xi^{(k)}.
\end{equation}

\begin{lemma}[Equivariance consistency $\Rightarrow$ multi-view averaging and variance reduction]
\label{lem:equiv-avg}
Under \eqref{eq:noise-k}, we have
\begin{equation}\label{eq:var-k}
\mathrm{Var}[\overline{\xi}]
\ =\ \frac{1}{K^2}\sum_{k=1}^K\sum_{\ell=1}^K \mathrm{Cov}(\xi^{(k)},\xi^{(\ell)})
\ \le\ \frac{\sigma^2}{K}\cdot\Bigl(1+(K-1)\bar\rho\Bigr),
\end{equation}
where $\bar\rho:=\max_{k\neq \ell}|\rho_{k\ell}|<1$.  
In the case of independent view noises ($\rho_{k\ell}=0$), we obtain $\mathrm{Var}[\overline{\xi}]=\sigma^2/K$.
\end{lemma}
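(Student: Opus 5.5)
The plan is to compute $\mathrm{Var}[\overline{\xi}]$ directly from bilinearity of covariance and then bound the off-diagonal terms uniformly. First, since $\overline{\xi}=\frac1K\sum_{k=1}^K\xi^{(k)}$ and each $\xi^{(k)}$ has mean zero by \eqref{eq:noise-k}, we have $\mathbb{E}[\overline{\xi}]=0$. Expanding $\mathrm{Var}[\overline{\xi}]=\mathbb{E}[\overline{\xi}^{\,2}]$ as a double sum then gives exactly the equality in \eqref{eq:var-k}:
\[
\mathrm{Var}[\overline{\xi}]=\frac{1}{K^2}\sum_{k=1}^K\sum_{\ell=1}^K \mathrm{Cov}(\xi^{(k)},\xi^{(\ell)}).
\]
This step only uses linearity of expectation and the finite second moments assumed in \eqref{eq:noise-k}.

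Next I will split the double sum into diagonal and off-diagonal parts. The $K$ diagonal terms are $\mathrm{Var}[\xi^{(k)}]=\sigma^2$ and contribute $K\sigma^2$. Each of the $K(K-1)$ off-diagonal terms equals $\rho_{k\ell}\sigma^2$, and I will bound it by $|\rho_{k\ell}|\sigma^2\le\bar\rho\,\sigma^2$, where $\bar\rho=\max_{k\neq\ell}|\rho_{k\ell}|$. Summing the two parts gives
\[
\mathrm{Var}[\overline{\xi}]\le\frac{1}{K^2}\bigl(K\sigma^2+K(K-1)\bar\rho\,\sigma^2\bigr)=\frac{\sigma^2}{K}\bigl(1+(K-1)\bar\rho\bigr),
\]
which is the claimed inequality. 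For the independent case, every off-diagonal covariance vanishes, so only the diagonal survives and the equality $\mathrm{Var}[\overline{\xi}]=\sigma^2/K$ follows.

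I also plan to add a remark that $\bar\rho<1$ implies $\frac{1+(K-1)\bar\rho}{K}<1$ for $K\ge2$, so the bound is a strict improvement over the single-view variance $\sigma^2$. I will observe as well that replacing $\rho_{k\ell}$ by $|\rho_{k\ell}|$ loses nothing for positive correlations and only loosens the bound for negative ones.

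There is no substantive obstacle here; the argument is the standard equicorrelation variance bound. The one point that needs care is to state the symmetry of the covariance matrix and to count the off-diagonal pairs correctly as ordered pairs, $K(K-1)$ of them, rather than unordered pairs, $K(K-1)/2$. Counting unordered pairs would wrongly halve the correlation term.
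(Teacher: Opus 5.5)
Your proposal is correct and matches the paper's proof. Both expand $\mathrm{Var}[\overline{\xi}]$ into the $K$ diagonal terms $\sigma^2$ and the $K(K-1)$ off-diagonal terms $\rho_{k\ell}\sigma^2$, then bound each off-diagonal term by $\bar\rho\,\sigma^2$; your extra remark that $\bar\rho<1$ makes $(1+(K-1)\bar\rho)/K<1$ for $K\ge 2$ is a correct addition that the paper leaves implicit.
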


\begin{proof}[Proof of Lemma~\ref{lem:equiv-avg}]
By quadratic expansion,
\[
\mathrm{Var}[\overline{\xi}]=\frac{1}{K^2}\Bigl(K\sigma^2+\sum_{k\neq \ell}\rho_{k\ell}\sigma^2\Bigr)
\le \frac{1}{K^2}\bigl(K+(K^2-K)\bar\rho\bigr)\sigma^2,
\]
which yields \eqref{eq:var-k}.
\end{proof}

\paragraph{Effective sample size amplification factor $1+(K-1)\rho$.}
For the entire set of $N$ samples (pockets),  
\begin{itemize}
\item Non-symmetric part $(1-\rho)N$: each sample provides only one observation, variance contribution $\propto \sigma^2$;  
\item Symmetric part $\rho N$: each “pocket” allows averaging over $K$ views, variance contribution $\propto \mathrm{Var}[\overline{\xi}]\le \sigma^2/K$ (with equality in the independent case).  
\end{itemize}
Thus, in terms of \emph{information content}, each of the $\rho N$ pockets effectively increases its sample count from $1$ to $K$.  
The \emph{effective sample size} is
\begin{equation}\label{eq:Neff}
N_{\mathrm{eff}}\ \approx\ (1-\rho)N\cdot 1\ +\ \rho N\cdot K\ =\ N\bigl(1+(K-1)\rho\bigr).
\end{equation}
By classical statistical theory (stability, Rademacher complexity, CRLB, strongly convex ERM convergence rate, etc.), the “variance upper bound $\propto 1/N$” can be replaced with $N_{\mathrm{eff}}$, giving the conclusion that variance shrinks by the same order.  
For example, under $\mu$-strong convexity and $L$-smoothness in ERM, the parameter estimator satisfies
\[
\mathbb{E}\|\widehat{\theta}-\theta^\star\|^2\ \lesssim\ \frac{\sigma^2}{\mu^2 N_{\mathrm{eff}}},
\]
and through an $L$-Lipschitz prediction mapping (e.g., $V=V_\theta$) we obtain
\begin{equation}\label{eq:var-bound-eff}
\mathbb{E}\|V-\!V^*\|^2\ \lesssim\ \frac{C}{N_{\mathrm{eff}}}\ =\ \frac{C}{N\bigl(1+(K-1)\rho\bigr)}\quad(\text{constant $C$ depends on the problem}).
\end{equation}

\paragraph{Projection under DAG monotonicity constraints and non-expansiveness.}
Let the batch value vector be $V\in\mathbb{R}^B$, and let DAG $D$ define the \emph{closed convex} monotone cone
\[
\mathcal{C}_D=\{V:\ V(u)\ge V(v)+\delta,\ \forall (u\!\to\!v)\in D\}
\]
(with $\delta\ge 0$ possibly equal to $0$ or a small margin).  
Let $\Pi_D:\mathbb{R}^B\to\mathcal{C}_D$ denote the Euclidean projection.  
Then $\Pi_D$ is a \emph{non-expansive mapping} in Hilbert space:
\begin{equation}\label{eq:nonexp}
\|\Pi_D(x)-\Pi_D(y)\|_2\ \le\ \|x-y\|_2,\qquad \forall x,y\in\mathbb{R}^B.
\end{equation}
It also satisfies the \emph{optimality characterization (Pythagoras property)}: for any $x\in\mathbb{R}^B$ and $z\in\mathcal{C}_D$,
\begin{equation}\label{eq:proj-pty}
\big\langle x-\Pi_D(x),\ z-\Pi_D(x)\big\rangle\ \le\ 0.
\end{equation}
The equivariant subspace $\mathcal{S}_{\mathcal{G}}=\{V:\ V \text{ is equivariant}\}$ is a linear closed subspace, and the orthogonal projection $\Pi_{\mathcal{G}}$ is likewise non-expansive.  
We now analyze in two steps how “equivariant sharing (producing $\widetilde{V}$), followed by monotone projection (yielding $V=\Pi_D(\widetilde{V})$)” affects variance and bias.

\begin{lemma}[Projection does not increase variance]
\label{lem:var-noninc}
Let $X$ be a square-integrable random vector, and let $P$ be the Euclidean projection onto a closed convex set (or the orthogonal projection onto a closed subspace).  
Then
\begin{equation}\label{eq:var-noninc}
\mathrm{Var}\big[P(X)\big]\ \le\ \mathrm{Var}[X].
\end{equation}
\end{lemma}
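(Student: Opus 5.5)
The plan is to reduce the claim to two facts about the projection $P$: it is $1$-Lipschitz (nonexpansive), and the mean of a square-integrable random vector minimizes expected squared distance. Throughout, $\mathrm{Var}[Y]$ means the total variance $\mathbb{E}\|Y-\mathbb{E}Y\|_2^2$, matching the convention in the proof of Theorem~\ref{thm:bv}.

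\textbf{Step 1: $P(X)$ is square-integrable.} Fix any $z$ in the closed convex set. Nonexpansiveness \eqref{eq:nonexp} gives
\[
\|P(X)-z\|_2 = \|P(X)-P(z)\|_2 \le \|X-z\|_2 .
\]
Hence $P(X)$ is square-integrable, so $\mathbb{E}[P(X)]$ and $\mathrm{Var}[P(X)]$ are finite.

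\textbf{Step 2: variational characterization of the variance.} For any square-integrable $Y$ and any deterministic vector $c$, expanding the square gives
\[
\mathbb{E}\|Y-c\|_2^2=\mathrm{Var}[Y]+\|\mathbb{E}Y-c\|_2^2 .
\]
The cross term vanishes because $\mathbb{E}[Y-\mathbb{E}Y]=0$. Therefore $\mathrm{Var}[Y]\le \mathbb{E}\|Y-c\|_2^2$ for every $c$.

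\textbf{Step 3: choose $c$ well and apply nonexpansiveness.} Apply Step 2 with $Y=P(X)$ and $c=P(m)$, where $m=\mathbb{E}[X]$. Then
\[
\mathrm{Var}[P(X)]\le \mathbb{E}\|P(X)-P(m)\|_2^2\le \mathbb{E}\|X-m\|_2^2=\mathrm{Var}[X].
\]
The second inequality is pointwise nonexpansiveness \eqref{eq:nonexp}, followed by taking expectations.

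\textbf{Covering both cases.} The only property of $P$ used is \eqref{eq:nonexp}. That property holds for the Euclidean projection onto a nonempty closed convex set, by firm nonexpansiveness (Lemma~\ref{lem:proj}). It also holds for an orthogonal projection onto a closed subspace, which is linear with operator norm at most $1$. So one argument handles both cases, e.g.\ $\Pi_D$ onto $\mathcal{C}_D$ and $\Pi_{\mathcal{G}}$ onto $\mathcal{S}_{\mathcal{G}}$.

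For the subspace case there is an optional sharper identity. Linearity gives $\mathbb{E}[P(X)]=P(m)$ exactly, and then
\[
\mathrm{Var}[X]=\mathrm{Var}[PX]+\mathrm{Var}[(I-P)X],
\]
which follows from Pythagoras since $PX-Pm$ and $(I-P)(X-m)$ are orthogonal.

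\textbf{Main obstacle.} The only delicate point is that in the convex (nonlinear) case $\mathbb{E}[P(X)]\ne P(m)$ in general. One therefore cannot simply push the expectation through $P$. Step 2 is what circumvents this: centering at the suboptimal point $P(m)$ instead of the true mean can only increase the second moment.

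Nonemptiness of $\mathcal{C}_D$, which is needed for $\Pi_D$ to be defined, is guaranteed by Lemma~\ref{lem:non-empty} whenever $D$ is a DAG.
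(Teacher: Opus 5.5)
Your proof is correct and follows the paper's argument: bound $\mathrm{Var}[P(X)]$ by $\mathbb{E}\|P(X)-P(m)\|_2^2$ using that the mean minimizes expected squared distance, then apply pointwise nonexpansiveness of $P$. Your additional remarks on the square-integrability of $P(X)$ and on the exact Pythagorean split in the subspace case are valid refinements that the paper omits.
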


\begin{proof}[Proof of Lemma~\ref{lem:var-noninc}]
Let $m=\mathbb{E}[X]$. By non-expansiveness \eqref{eq:nonexp} and Jensen’s inequality,
\(
\mathbb{E}\|P(X)-P(m)\|_2^2\le \mathbb{E}\|X-m\|_2^2.
\)
Since $P(m)$ is constant, we have $\mathrm{Var}[P(X)]=\inf_{c}\mathbb{E}\|P(X)-c\|^2\le \mathbb{E}\|P(X)-P(m)\|^2$,  
while $\mathrm{Var}[X]=\inf_{c}\mathbb{E}\|X-c\|^2=\mathbb{E}\|X-m\|^2$.  
Combining these yields \eqref{eq:var-noninc}.
\end{proof}

\begin{lemma}[Bias after projection can be bounded by structural mismatch]
\label{lem:bias-up}
Let $x^\star\in\mathbb{R}^B$ be the ground truth (e.g., the batch vector of $V^*$), $X$ a random estimator, and $P$ a projection onto a closed convex set.  
Then
\begin{equation}\label{eq:bias-up}
\mathbb{E}\|P(X)-x^\star\|_2^2\ \le\ \mathbb{E}\|X-x^\star\|_2^2\ +\ \underbrace{\|P(x^\star)-x^\star\|_2^2}_{\text{structural bias (mismatch)}}\!.
\end{equation}
\end{lemma}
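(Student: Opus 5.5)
The inequality will be proved pointwise, for every realization of $X$, and then integrated. The tools are the projection optimality characterization \eqref{eq:proj-pty} and Young's inequality. Write $y:=P(x^\star)$, $d:=\|y-x^\star\|_2$, and for a fixed realization $x$ of $X$ set $a:=\|x-P(x)\|_2$.

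The first step expands the squared distance of $x$ from the truth around $P(x)$:
\[
\|x-x^\star\|_2^2=\|x-P(x)\|_2^2+\|P(x)-x^\star\|_2^2+2\langle x-P(x),\,P(x)-x^\star\rangle .
\]
The cross term splits as $\langle x-P(x),P(x)-y\rangle+\langle x-P(x),y-x^\star\rangle$. Since $y$ lies in the closed convex set, \eqref{eq:proj-pty} with $z=y$ gives $\langle x-P(x),\,y-P(x)\rangle\le 0$, so the first piece is nonnegative. Cauchy--Schwarz bounds the second piece below by $-a\,d$.

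This yields
\[
\|P(x)-x^\star\|_2^2\le\|x-x^\star\|_2^2-a^2+2ad .
\]
By Young's inequality, $2ad-a^2\le d^2$. Hence $\|P(x)-x^\star\|_2^2\le\|x-x^\star\|_2^2+\|P(x^\star)-x^\star\|_2^2$ holds for every $x$.

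Taking expectations over $X$ (both sides are integrable because $X$ is square-integrable and $P$ is $1$-Lipschitz by \eqref{eq:nonexp}) gives \eqref{eq:bias-up}. For an orthogonal projection onto a subspace the same argument applies verbatim, since a subspace is a closed convex set.

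The main obstacle is the sign of the cross term. The naive route, $\|P(X)-x^\star\|\le\|P(X)-P(x^\star)\|+\|P(x^\star)-x^\star\|$, loses a factor $2$ after squaring. Expanding instead around $P(x^\star)$ gives a cross term $\langle P(X)-y,\,y-x^\star\rangle$ which is nonnegative by the variational inequality, i.e. it has the wrong sign. This is why the expansion is taken around $P(x)$ rather than around $P(x^\star)$: the nonexpansive gap $a^2$ then absorbs the only term that cannot be signed.
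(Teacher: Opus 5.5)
Your proof is correct, and it departs from the paper's at the step that matters. The paper expands around $P(x^\star)$:
\[
\|P(X)-x^\star\|_2^2=\|P(X)-P(x^\star)\|_2^2+\|P(x^\star)-x^\star\|_2^2+2\langle P(X)-P(x^\star),\,P(x^\star)-x^\star\rangle .
\]
It then claims that the variational inequality at $x^\star$, with $z=P(X)$, makes the cross term nonpositive, and finishes with nonexpansiveness $\|P(X)-P(x^\star)\|_2\le\|X-x^\star\|_2$.

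As you note, that variational inequality actually gives $\langle x^\star-P(x^\star),\,P(X)-P(x^\star)\rangle\le 0$. So the cross term is \emph{nonnegative}. The paper's intermediate bound $\|P(X)-x^\star\|_2^2\le\|P(X)-P(x^\star)\|_2^2+\|P(x^\star)-x^\star\|_2^2$ therefore holds only in the reverse direction; it is the standard obtuse-angle lower bound for projections.

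It genuinely fails. Take the closed convex set $(-\infty,0]$, $x^\star=1$ and $X\equiv -1$. The left side is $4$ and the right side is $2$. The lemma itself still holds there, since $4\le 5$. So the paper's argument does not establish the statement.

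Your route applies the variational inequality at the realization $x$ instead, with $z=P(x^\star)$, where the sign is favorable. You then pay for the one unsigned term $\langle x-P(x),\,P(x^\star)-x^\star\rangle$ with the slack $\|x-P(x)\|_2^2$, via Cauchy--Schwarz and Young. This is what actually proves the lemma. Kept unexpanded, your computation gives a slightly sharper pointwise bound:
\[
\|P(x)-x^\star\|_2^2\le\|x-x^\star\|_2^2+d^2-(a-d)^2 .
\]
This bound is tight in the example above ($a=0$, $d=1$, giving $4$).

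One small remark: you do not need square-integrability of $X$, which the statement does not assume. The pointwise inequality is between nonnegative random variables, so taking expectations is valid in $[0,\infty]$. Measurability of $P(X)$ follows from continuity of $P$.
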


\begin{proof}[Proof of Lemma~\ref{lem:bias-up}]
For any $\omega$, expand
\[
\|P(X)-x^\star\|^2=\|P(X)-P(x^\star)\|^2+\|P(x^\star)-x^\star\|^2+2\langle P(X)-P(x^\star),P(x^\star)-x^\star\rangle.
\]
By the projection optimality condition \eqref{eq:proj-pty} (taking $x=x^\star$, $z=P(X)\in\mathcal{C}$), the last term is $\le 0$.  
By non-expansiveness, $\|P(X)-P(x^\star)\|\le \|X-x^\star\|$.  
Taking expectations yields \eqref{eq:bias-up}.
\end{proof}

\paragraph{Connecting the two-step structure (equivariance \& monotonicity).}
Let $\widetilde{V}$ denote the estimate after ``equivariance sharing,'' and let $V=\Pi_D(\widetilde{V})$ be the result after monotone projection.  
For the variance term, by Lemma~\ref{lem:var-noninc}, first applying the equivariant projection (viewed as the orthogonal projection $\Pi_{\mathcal{G}}$ onto the subspace $\mathcal{S}_{\mathcal{G}}$, or equivalently the linear averaging of multiple views), and then the monotone projection $\Pi_D$, the two non-expansiveness steps yield
\begin{equation}\label{eq:var-ineq-chain}
\mathrm{Var}[V]\ =\ \mathrm{Var}\big[\Pi_D(\widetilde{V})\big]
\ \le\ \mathrm{Var}[\widetilde{V}]
\ \le\ \frac{C}{N\bigl(1+(K-1)\rho\bigr)},
\end{equation}
where the last inequality comes from \eqref{eq:Neff}--\eqref{eq:var-bound-eff} (replacing the unstructured $1/N$ variance bound by $1/N_{\mathrm{eff}}$).

For the bias term, define the feasible set with both structures as $\mathcal{C}:=\mathcal{S}_{\mathcal{G}}\cap\mathcal{C}_D$, and let $P_{\mathcal{C}}$ be its projection.  
For any estimator $X$, Lemma~\ref{lem:bias-up} gives
\begin{equation}\label{eq:bias-struct}
\mathbb{E}\|P_{\mathcal{C}}(X)-V^*\|^2\ \le\ \mathbb{E}\|X-V^*\|^2\ +\ \underbrace{\|P_{\mathcal{C}}(V^*)-V^*\|^2}_{:=\ \mathrm{Bias}_{\mathrm{struct}}^2}.
\end{equation}
Taking $X$ to be the unstructured estimate, and $P_{\mathcal{C}}$ to be the composite projection ``equivariance then monotonicity'' (also $1$-Lipschitz), we obtain
\[
\mathbb{E}\|V-V^*\|^2\ \le\ \underbrace{\mathbb{E}\|\widetilde{V}-V^*\|^2}_{\text{variance-dominated term}}\ +\ \mathrm{Bias}_{\mathrm{struct}}^2.
\]
Further decomposing into variance and bias,
\begin{align}
\mathbb{E}\|V-V^*\|^2
&=\mathrm{Var}[V]\ +\ \|\mathbb{E}[V]-V^*\|^2\notag\\
&\le \frac{C}{N\bigl(1+(K-1)\rho\bigr)}\ +\ \mathrm{Bias}_{\mathrm{struct}}^2,\label{eq:final-bound}
\end{align}
where $\mathrm{Bias}_{\mathrm{struct}}^2=\|P_{\mathcal{C}}(V^*)-V^*\|^2$ is the \emph{structural mismatch error}:  
when the ground truth $V^*$ is already equivariant and satisfies the DAG monotonicity (or approximately so), this term is $0$ (or small).

\paragraph{Quantitative interpretation of ``controllable structural bias.''}
\begin{itemize}
\item \emph{Closure--alignment.}  
Group-parameter side projections (Polar/ProjPerm with closure shrinking and Procrustes/Hungarian alignment) ensure the learned $(T_g,\Pi_g)$ approximate true equivariance, so the distance $\|P_{\mathcal{G}}(V^*)-V^*\|$ to the equivariant subspace $\mathcal{S}_{\mathcal{G}}$ can be compressed below any small threshold (see theorems on closure consistency and equivariant approximation in the main text).
\item \emph{Confidence gating.}  
By imposing DAG constraints only on high-confidence edges, we avoid forcing incorrect edges, so the distance from $V^*$ to the monotone cone $\mathcal{C}_D$ also remains small (since high-confidence edges are likely consistent with the true partial order; mismatched edges are rare and suppressed by the threshold).
\end{itemize}
Hence $\mathrm{Bias}_{\mathrm{struct}}^2$ can be jointly controlled to be small by these two mechanisms, ensuring it does not dominate the variance reduction term in \eqref{eq:final-bound}.

\paragraph{Aligning the conclusion with theorem statement.}
From \eqref{eq:Neff}, we obtain the effective sample size amplification factor
\[
\boxed{\quad N_{\mathrm{eff}}\ \approx\ N\cdot\bigl(1+(K-1)\rho\bigr)\quad}
\]
—this is the statistical characterization that ``equivariant alignment is equivalent to averaging $K$ views within each pocket of coverage $\rho$.''  

Therefore, in the quadratic decomposition of the test error
\[
\mathbb{E}(V-V^*)^2\ =\ \mathrm{Var}[V]\ +\ \bigl(\mathbb{E}[V]-V^*\bigr)^2,
\]
the variance term is reduced according to \eqref{eq:var-ineq-chain} and \eqref{eq:var-bound-eff} ($\propto 1/N_{\mathrm{eff}}$), while the monotone projection does not increase variance; the bias term is determined by $\mathrm{Bias}_{\mathrm{struct}}^2$ in \eqref{eq:bias-struct}, i.e., the mismatch distance between equivariance/partial-order structure and the ground truth.  
This bias can be kept small through closure--alignment and confidence gating.  
The theorem follows. \qedhere

\subsection{Proof of Theorem~\ref{thm:oracle-speedups}}

When the equality/inequality constraints in \S\ref{sec:geo-constraint-set} are \emph{exactly satisfied} by the oracle structures (the true symmetry group $\mathcal{G}$ and the true partial order $D^\star$), training takes place on the feasible manifold.  
Suppose the coverage rate of ``symmetric pockets'' in the data is $\rho\in[0,1]$, with each pocket containing $|\mathcal{G}|$ orbit views.  
Let $W(D^\star)$ denote the \emph{width} of $D^\star$ (the size of the largest antichain).  
Each batch contains $N_{\text{batch}}$ comparison points.  
We will prove:

(i) The statistical sample complexity satisfies
\[
N_{\mathrm{oracle}}(\varepsilon)\ \lesssim\ \frac{W(D^\star)}{\,1+(|\mathcal{G}|-1)\rho\,}\cdot\frac{C_T}{\varepsilon},
\]
and hence yields a statistical speedup factor relative to the unstructured baseline;

(ii) The optimization linear rate improves from $1-\mu/L$ to $1-\mu_T/L_T$;

(iii) Multiplying the two effects gives an end-to-end multiplicative reduction in budget.

\paragraph{(i) Statistical Sample Complexity: Symmetry Sharing \& Partial Order Collapse.}
We estimate the function class capacity in two steps and derive a generalization bound (or expected residual bound):

\medskip\noindent
\emph{Step A (Symmetry sharing $\Rightarrow$ effective sample size amplification).}  
Cluster samples covered by $\rho$ into orbits: the asymmetric part, proportion $(1-\rho)$, contributes $1$ observation each;  
the symmetric part, proportion $\rho$, contributes $|\mathcal{G}|$ averaged views each.  
Equivariant consistency alignment is equivalent (cf. the previous argument on ``$K$-view averaging''), yielding
\begin{equation}\label{eq:Neff-sym}
N_{\mathrm{eff}}^{\rm (sym)}\ \approx\ N\bigl(1+(|\mathcal{G}|-1)\rho\bigr).
\end{equation}
Thus, any statistical bound of the form ``$\propto 1/N$’’ can be improved by replacing $N$ with $N_{\mathrm{eff}}^{\rm (sym)}$.  
We denote this amplification factor as
\[
\boxed{\quad \text{symmetry sharing gain}\ =\ 1+(|\mathcal{G}|-1)\rho\quad }.
\]

\medskip\noindent
\emph{Step B (DAG monotonicity $\Rightarrow$ collapse of degrees of freedom to $W(D^\star)$).}  
For a fixed batch, let the width of $D^\star$ be $W=W(D^\star)$.  
By Dilworth’s theorem, there exist $W$ chains $C_1,\dots,C_W$ covering all $N_{\text{batch}}$ points, and $W$ is the \emph{minimum} number of chains required.  
In the oracle case, the partial order constraints are
\[
V(u)\ \ge\ V(v)+\delta,\quad \forall (u\to v)\in D^\star,\qquad \delta\ge 0.
\]
For each chain $C_j$ (totally ordered), monotonic constraints shrink the $n_j=|C_j|$ free parameters to the set of \emph{monotone sequences};  
in the ideal (noise-free) limit, isotonic blocks collapse further to \emph{a constant number} of free values.  
Conservatively, we only use the \emph{upper bound}: each chain is treated as one-dimensional and contributes at most order-$1$ independent information.  
Formally, using Rademacher complexity (or pseudodimension/Natarajan dimension) as a capacity measure,
\begin{equation}\label{eq:Rad-comp}
\mathfrak{R}_N\bigl(\mathcal{F}_{\text{mono}}(D^\star)\bigr)
\ \lesssim\ \sqrt{\frac{W}{\,N\,}},
\end{equation}
whereas for the \emph{unstructured} class $\mathcal{F}_{\text{free}}=\{V\in\mathbb{R}^{N_{\text{batch}}}\}$ we have
\begin{equation}\label{eq:Rad-free}
\mathfrak{R}_N\bigl(\mathcal{F}_{\text{free}}\bigr)
\ \lesssim\ \sqrt{\frac{N_{\text{batch}}}{\,N\,}}.
\end{equation}
(\eqref{eq:Rad-comp} follows from ``chain decomposition + entropy bound for 1D monotone classes,'' summed over $W$ chains; \eqref{eq:Rad-free} follows from the $N_{\text{batch}}$ independent free coordinates. Constants are absorbed into $C_T$.)

When the empirical risk involves a \emph{strongly convex/PL} loss (e.g., squared residuals, quadratic Bellman residuals), standard generalization bounds give
\begin{equation}\label{eq:gen-bound}
\mathbb{E}\!\left[\,\mathcal{E}_{\mathrm{Bell}}(\theta)-\mathcal{E}_{\mathrm{Bell}}(\theta^\star)\,\right]
\ \lesssim\ C_T\,\mathfrak{R}_N\bigl(\mathcal{F}\bigr),
\end{equation}
where $C_T$ depends on restricted Lipschitz/strong convexity constants.  
Plugging $\mathcal{F}=\mathcal{F}_{\text{mono}}(D^\star)$ and replacing $N$ by $N_{\mathrm{eff}}^{\rm (sym)}$ from \eqref{eq:Neff-sym}, we obtain
\[
\mathbb{E}\bigl[\mathcal{E}_{\mathrm{Bell}}(\theta)\bigr]
\ \lesssim\ C_T\,\sqrt{\frac{W}{\,N_{\mathrm{eff}}^{\rm (sym)}\,}}
\ \approx\ C_T\,\sqrt{\frac{W}{\,N\bigl(1+(|\mathcal{G}|-1)\rho\bigr)\,}}.
\]
If we characterize the required $N$ by the usual ``$1/N$-type’’ rate such that the expected squared residual $\le \varepsilon$, we obtain
\begin{equation}\label{eq:N-oracle}
N_{\mathrm{oracle}}(\varepsilon)\ \lesssim\
\frac{W}{\,1+(|\mathcal{G}|-1)\rho\,}\cdot \frac{C_T}{\varepsilon}.
\end{equation}
This is precisely the bound stated in part (i).  
Compared to the unstructured baseline $N_{\mathrm{base}}(\varepsilon)\sim C/\varepsilon$,  
if the dominant term of $C$ is $N_{\text{batch}}$ (corresponding to the capacity scale in \eqref{eq:Rad-free}),  
then the multiplicative statistical speedup factor is at least
\[
\boxed{\quad \underbrace{1+(|\mathcal{G}|-1)\rho}_{\text{symmetry sharing}}\ \times\ 
\underbrace{\frac{N_{\text{batch}}}{W(D^\star)}}_{\text{partial-order collapse}}\quad }.
\]
Intuitively, this means: $N_{\text{batch}}$ \emph{free} comparison points per batch degenerate, under the oracle partial order, into aggregated statistics across only $W(D^\star)$ chains.

\paragraph{(ii) Improvement of Optimization Linear Rate and Spectral Contraction.}
Under hard constraints, updates are performed on the \emph{tangent space} $\mathcal{T}=\mathrm{Null}(J_V)$.  
Let $P$ denote the orthogonal projection onto $\mathcal{T}$.  
In a neighborhood of the optimum, the linearization of the TD mapping is denoted $J$, and the restricted one-step update is approximated as
\[
\delta V^+\ \approx\ PJ\,\delta V\ \approx\ (I-\eta\,P\nabla^2 L)\,\delta V.
\]
By restricted smoothness/strong convexity (or PL) together with the restricted linear convergence theorem (Theorem~\ref{thm:restricted-linear}),  
there exist constants $L_T,\mu_T>0$ (restricted constants) such that, for suitable $\eta$,
\[
\|\delta V_{t}\|\ \le\ \bigl(1-\mu_T/L_T\bigr)^t\,\|\delta V_0\|.
\]
In contrast, the unconstrained rate is $1-\mu/L$.  
Thus, the ratio of iteration complexity (the number of steps to reach a fixed accuracy) is approximately
\[
\boxed{\quad \frac{\kappa}{\kappa_T}\ =\ \frac{L/\mu}{\,L_T/\mu_T\,}\ \ge\ 1\quad }.
\]
On the other hand, the spectral radius satisfies $\rho(PJ)\le \rho(J)$, and is strictly smaller whenever $J$ has nonzero components in the normal direction:
\[
\rho(PJ)\ <\ \rho(J).
\]
This provides a \emph{dynamical} perspective of ``stronger contraction.’’

\paragraph{(iii) Multiplicative Reduction of End-to-End Budget.}
Multiplying the \emph{statistical} gains of (i) with the \emph{optimization} gains of (ii):  
if we take the ``sample size $\times$ iteration count’’ as the wall-clock budget metric, then
\[
\text{Budget}_{\rm oracle}\ \lesssim\ \underbrace{\frac{1}{1+(|\mathcal{G}|-1)\rho}}_{\text{symmetry sharing}}
\ \times\ \underbrace{\frac{W(D^\star)}{N_{\text{batch}}}}_{\text{partial-order collapse}}
\ \times\ \underbrace{\frac{\kappa_T}{\kappa}}_{\text{restricted condition number}}
\ \times\ \text{Budget}_{\rm base}.
\]
Equivalently,
\[
\boxed{\ \Big(1+(|\mathcal{G}|-1)\rho\Big)\cdot \frac{N_{\text{batch}}}{W(D^\star)}\cdot \frac{\kappa}{\kappa_T}\ }^{-1}
\quad \text{times the budget (or equivalently, taking the reciprocal gives the \emph{speedup factor})}.
\]
Thus, written as a speedup factor, the theorem states:
\[
\boxed{\ \Big(1+(|\mathcal{G}|-1)\rho\Big)\cdot \frac{N_{\text{batch}}}{W(D^\star)}\cdot \frac{\kappa}{\kappa_T}\ }.
\]
When $\rho\!\approx\!1$, we recover an orbit-level amplification of order $|\mathcal{G}|$;  
when $W(D^\star)\!\ll\! N_{\text{batch}}$, we approach ``full-chain collapse.’’

\paragraph{Remark on the Constant $C_T$.}
The constant $C_T$ collects the (restricted) Lipschitz constants, noise variance, and restricted complexity factors (such as the constant term in \eqref{eq:Rad-comp} and possible logarithmic terms).  
Since we compare \emph{ratios} (multiplicative factors), the ratio $C_T/C$ between oracle and baseline can be treated as of the same order, and does not affect the dominant three-fold multiplicative structure.

\paragraph{Conclusion.}
We have thus established statements (i)–(iii):  
statistical sample complexity scales as $W/(1+(|\mathcal{G}|-1)\rho)$;  
the optimization linear rate improves from $\kappa$ to $\kappa_T$;  
and the end-to-end budget contracts multiplicatively by these three factors.  
This completes the proof. \qedhere

\section{Algorithm}

Algorithm~\ref{alg:learned-sym} summarizes a single training step; it is fully differentiable except for the greedy DAGification step, which only selects the edge set \(D\) used by the differentiable isotonic surrogate.

\begin{algorithm}[t]
\caption{One training step with learned symmetry \& logic-order regularization}
\label{alg:learned-sym}
\begin{small}
\begin{enumerate}
\item Sample minibatch \(\mathcal{B}=\{(s_i,a_i,r_i,s'_i)\}_{i=1}^B\). Compute TD targets \(y_i\) and nudges \(\Delta_i\) via~\eqref{eq:td-target}.
\item Build weighted candidate comparisons from \(\{\Delta_i\}\) and construct an acyclic preference skeleton \(D\) by greedy add-unless-cycle (Section~\ref{sec:pref-graph}).
\item Run \(T_{\mathrm{iso}}\) inner gradient steps on the auxiliary variables \(\hat V\) to minimize the isotonic surrogate (Section~\ref{sec:isotonic}), yielding \(\hat V^\star\).
\item Compute \(\mathcal{L}_{\mathrm{total}}\) and backpropagate through the value network and both regularizer branches.
\end{enumerate}
\end{small}
\end{algorithm}

Geometrically, the symmetry loss keeps updates close to \(\mathsf{Eq}(\mathcal{G})\), while the isotonic surrogate
keeps values close to a batchwise approximation of \(\mathsf{Mono}(D)\). We next state concise guarantees on residual reduction,
stability, and large-sample consistency.

\end{document}